\documentclass{article}
\usepackage[margin=1.95cm]{geometry}
\usepackage{graphicx}
\usepackage[colorlinks=true, allcolors=black]{hyperref}
\usepackage{amsmath,amsthm,amsfonts,amssymb,amscd,mathrsfs}
\usepackage{enumerate}
\usepackage{mathrsfs}
\usepackage{graphicx}
\usepackage[shortlabels]{enumitem}
\usepackage{bbm}
\usepackage{float}
\usepackage{fancyvrb}
\usepackage{booktabs}
\usepackage{subcaption}
\usepackage[T1]{fontenc}
\usepackage{xcolor}
\usepackage{yhmath}

\usepackage[ruled,vlined]{algorithm2e}
\def\algbackskip{\hskip-\ALG@thistlm}
\newtheorem{theorem}{Theorem}[section]
\newtheorem{definition}[theorem]{Definition}
\newtheorem{notation}[theorem]{Notation}
\newtheorem{corollary}[theorem]{Corollary}
\newtheorem{proposition}[theorem]{Proposition}
\newtheorem{lemma}[theorem]{Lemma}

\newtheorem{assumption}[theorem]{Assumption}
\newtheorem{example}[theorem]{Example}

\newtheorem*{theorem*}{Problem}
\theoremstyle{remark}
\newtheorem{remark}[theorem]{Remark}

\newcommand{\R}{{\mathbb R}}
\newcommand{\E}{{\mathbb E}}
\newcommand{\N}{{\mathbb N}}

\newcommand{\inprod}[2]{\left\langle #1, #2 \right\rangle}

\newcommand{\brac}[1]{\left[ #1 \right]}
\newcommand{\curlbrac}[1]{\left\{ #1 \right\}}

\title{Universal Approximation of Nonlinear Operators and Their Derivatives}

\author{ Filippo de Feo\footnote{Institut für Mathematik, Technische Universität Berlin. Email: defeo@math.tu-berlin.de}}
\begin{document}

\maketitle

\begin{abstract}
Establishing Universal Approximation Theorems (UATs) for nonlinear operators and their derivatives is a foundational open problem in Operator Learning (OL) and raises delicate questions in Nonlinear Functional Analysis. We prove the first UATs for $k$-times differentiable nonlinear operators and their derivatives via OL architectures, uniformly on compact sets and in weighted Bastiani--Sobolev spaces for general finite input measures. In full Banach-space generality, these are the first complete generalizations of the corresponding influential classical UATs in [Hornik, 1991] to infinite-dimensional spaces and OL, {and launch Derivative-Informed Operator Learning (DIOL) (i.e. learning nonlinear operators and their derivatives)} on general Banach spaces. Based on our UATs, we formulate Bastiani--Sobolev training in DIOL.

  We present open frontiers where DIOL and our UATs find applications: high-order accuracy in OL; fast constrained optimization in Banach spaces (e.g. optimal control of PDEs, inverse problems) via Learn-Then-Optimize; numerical methods for infinite-dimensional PDEs (e.g. HJB PDEs on Banach spaces from infinite-dimensional optimal control via Optimize-Then-Learn, such as optimal control of PDEs, SPDEs, path-dependent systems, partially observed systems, mean-field control).
  
  We parameterize nonlinear operators via Encoder-Decoder Architectures, classical OL architectures. These include DeepONets, Deep-H-ONets, and PCA-Nets, which our UATs cover.
  
  Our UATs are based on (i) Approximation Properties of Banach spaces; (ii) continuous Bastiani differentiability (weaker than continuous Fr\'echet differentiability); (iii) $C^k_B$ (Bastiani) compact-open topologies; indeed, UA in $C^k$ (Fr\'echet) compact-open topologies (induced by operator norms) fails; (iv) construction of weighted Bastiani--Sobolev spaces, generalizing classical Gaussian Sobolev spaces on Banach spaces.
\end{abstract}
\textbf{Keywords}: Operator Learning, Derivative-Informed Operator Learning, Universal Approximation, Bastiani--Sobolev spaces on Banach spaces, Bastiani--Sobolev Training. \\
\textbf{MSC2020:}  41A65; 68T07; 46G05, 46E35, 46B28.
\begin{small}
\tableofcontents
\end{small}
\section{Introduction}
\subsection{Model Problem and Overview}
Let $X,Y$ be Banach spaces and let $F:X\to Y$ be an unknown, possibly nonlinear,
$k$-times differentiable operator. For many crucial tasks (see Section
\ref{subsec:motivations} for motivations and examples), whether in supervised or
unsupervised learning, it is fundamental to 
\[
\textit{Learn } F \textit{ and Its Derivatives } D^iF, \textit{ } 1\le i\le k.
\]
\paragraph{Finite-dimensional case.}  If $X,Y$ are finite dimensional, e.g. $X=\mathbb R^N,Y=\mathbb R^{N'}$, then $F$ is a function, $D^iF(x)$ are finite-dimensional multilinear functions, and the problem is classical in Machine Learning (ML). Typically,  $F$ is parametrized by a ML architecture, such as neural networks, which are able to learn  $F$ and its derivatives with very high accuracy. From a theoretical perspective, this is supported by influential classical results called Universal Approximation Theorems (UATs). These results yield some of the strongest guarantees in approximation theory:
neural networks can simultaneously approximate $k$-times differentiable functions, together with their derivatives, uniformly on compact sets and in weighted Sobolev norms for general finite measures on $X$ \cite{hornik1991approximation} {(see also \cite{hornik-stinchcombe-white,pinkus1999approx})}.
\paragraph{Infinite-dimensional case.} Let now $X,Y$ be infinite-dimensional. When $k=0$ this is now a standard task in the influential field of Operator Learning (OL). Here, the nonlinear operator $F$ is parametrized by suitable OL architectures and many influential UATs support this expressivity (see detailed literature review in Subsection \ref{subsec:state_art}). 

When $k\geq 1$, the goal is higher: one aims to learn $F$ together with its
derivatives $D^iF$, $1\le i\le k$. We refer to this problem as Derivative-Informed Operator Learning (DIOL).  However, as we will see, this problem is substantially more delicate and current investigations are mainly limited to separable Hilbert spaces and $k=1$. Indeed,  from the literature review, we will infer that
$$\textit{Derivative-Informed Operator Learning is an open 
research frontier in Operator Learning,}$$
spanning theoretical, computational, and application-driven  challenges, with
possible implications in all applied sciences. In particular:
$$\textit{The Universal Approximation of Nonlinear  Operators and Their Derivatives is a foundational open problem}.$$
\paragraph{Encoder-Decoder Architectures.}
In this work, we represent nonlinear operators in general Banach spaces via  Encoder-Decoder Architectures (EDAs) \cite{kovachki-lanthaler-stuart} (see Section \ref{sec:EDN}), classical architectures in OL, renowned for their applicability to these general contexts. An EDA is of the form
\begin{equation}\label{eq:EDN_intro}
    F^{{\mathcal E}^X,\theta,{\mathcal D}^Y}\colon X \to Y, \quad F^{{\mathcal E}^X,\theta,{\mathcal D}^Y}(x):= {\mathcal D}^Yf^{N,\theta,{N'}} ( {\mathcal E}^Xx)=\sum_{j=1}^{{N'}} f^{N,\theta,{N'}}_j\left[ (\langle a^{X}_i,x\rangle_{X^*,X}  )_{i=1}^{N}\right] e^{Y}_j,
\end{equation}
where  ${\mathcal E}^X \in \mathcal L( X , \R^{N})$, ${\mathcal E}^Xx =  (\langle a^{X}_i,x\rangle_{X^*,X}  )_{i=1}^{N}$ for some $N,$ $\left\{a^{X}_i\right\}_{i=1}^{N} \subset X^*$, is an encoder operator on $X$, encoding data from $X$ to finite dimensions; 
${\mathcal D^{Y}} \in \mathcal L( \R^{N'} , Y),$ $  {\mathcal D}^{Y}((x_j)_{j=1}^{N'}) = \sum_{j=1}^{N'} x_j e^{Y}_j$, for some $N',$ $\left\{e_j^{Y}\right\}_{j=1}^{N'} \subset Y$, is a decoder operator into $Y$, reconstructing data from finite dimensions to  $Y$; $f^{N,\theta,{N'}}_j , j\leq {N'}$ denote the components of a suitable function approximator $f^{N,\theta,{N'}}:\mathbb R^N\to \mathbb R^{N'}$ (typically,  a neural network, but other possibilities are available), parameterized by $\theta\in \Theta_{N,{N'}}$, acting in finite dimensions and numerically realizable via a calculator.

    EDAs include many classical OL architectures,  such as \textbf{DeepONets} (typically defined on Banach spaces of continuous functions), Deep-H-ONets/HGNOs, Reduced-Basis Neural Operators, and PCA-Nets (see Section \ref{sec:EDN} for references), which we discuss in detail, as we want to cover these architectures with our approach.
\paragraph{Universal Approximation (UA)  fails under operator norms.} Let $k\geq 1$. {Let $F:X\to Y$ admit Gâteaux derivatives $D^iF(x)$, $i\leq k$. These 
 lie} in spaces of multilinear bounded operators $\mathcal L^i(X,Y)$, i.e.
$$D^iF(x)\in \mathcal L^i(X,Y).$$
The operator norm $\|L\|_{\mathcal L^i(X,Y)}:=\sup \left\{\left|L\left(h^1, \ldots, h^i\right)\right |_Y:\left|h^1\right|_X = 1, \ldots,\left|h^i\right|_X = 1\right\}, L\in \mathcal L^i(X,Y)$ makes $\mathcal L^i(X,Y)$ Banach spaces; possibly non separable when $X,Y$ are infinite-dimensional. 
{Requiring the continuity of the map $X\ni x\mapsto D^iF(x)\in \mathcal L^i(X,Y) $, $i\leq k$ (i.e. in operator norm)  lead us to the space of $k$-times continuously Fréchet differentiable maps $C^k(X,Y)$}.

Topologies induced by  operator norms are very strong (even for $i=1$), often too strong, {as distances are measured with respect to worst deviations in infinite-dimensional unit spheres.}  As a consequence, many sequences of multilinear operators do not converge in $\|\cdot\|_{\mathcal L^i(X,Y)}$, while they are well behaved and expected to converge (in some sense). In particular, finite-rank operators are not norm-dense in $\mathcal L(X,Y)$ in general.
This reflects directly in the UA due to the intrinsic finite-dimensionality of EDAs. In fact, our first result (Theorem \ref{th:counterexample}) reveals a fundamental obstruction to the naive finite-dimensional extension:
$$
\textit{Universal Approximation fails under operator norms.}
$$
More precisely, we prove that UA fails both in standard Fréchet $C^k$ compact-open topologies, 
{i.e. generated by the family of seminorms\footnote{in the following, we use the notation $D^0F(x)=F(x)$, see Notation \ref{not:notation_D0f}} 
    \begin{align}\label{eq:frechet_CO_intro}
    \max_{0\leq i \leq k}\sup_{ x\in K}  \|D^iF(x)\|_{\mathcal L^i(X,Y)}, \quad K\subset X\textit{ compact subset},    \end{align}
}
and in weighted Sobolev spaces under operator norms, {i.e. constructed under the weighted Sobolev norm
    \begin{align*}
   \|F\|_{\mathcal W^{k,p}_{\nu}(X,Y)}=\left(\sum_{i=0}^k\int_{X} \|D^iF(x)\|_{\mathcal L^i(X,Y)}^pd\nu(x)\right)^{1/p},
    \end{align*}
where $\nu$ is a finite measure  on $X$.} This failure occurs even for linear maps, showing that the obstruction is structural.
This makes the development of these UATs a delicate problem in Nonlinear Functional Analysis and raises the fundamental question that we tackle in this manuscript:
$$
\textit{can we recover Universal Approximation in infinite dimensions?}
$$
\paragraph{Universal Approximation is recovered under weaker topologies.} {Since operator-norm topologies are often too strong in infinite dimensions, many weaker, non-equivalent topologies have been introduced and studied on spaces $\mathcal L^i(X,Y)$. However, these topologies are frequently non-metrizable and non-sequential, making their use delicate. 
We therefore weaken the topology in which derivatives are approximated,
seeking a natural formulation compatible with EDAs.  To this purpose, we view each derivative $D^iF$ through its  joint evaluation map}
\begin{equation}\label{eq:bastiani_intro}
 X\times X^i\ni (x,h^1,\ldots,h^i)
\mapsto
 D^iF(x)(h^1,\ldots,h^i)\in Y .
\end{equation}
{If $F\in C^k(X,Y)$ (Fréchet) then  \eqref{eq:bastiani_intro} is jointly continuous on $X\times X^i$, for all $0\le i\le k$.
Requiring the maps in \eqref{eq:bastiani_intro} to be jointly continuous for
all $0\le i\le k$, leads us to the class of $k$-times continuously Bastiani
differentiable maps}
\cite{bastiani1964} (see also
\cite{keller2006}\footnote{{In particular, since $X,Y$ are Banach spaces, joint continuity is equivalent to the continuity of $X\ni x
\mapsto
 D^iF(x)(h^1,\ldots,h^i)\in Y$, for all $h_1,\ldots,h^i\in X$ \cite{keller2006}. The joint continuity viewpoint will be the one used in the present paper.}}). We denote this space\footnote{the subscript ``B'' stands for Bastiani}
by $C^k_B(X,Y)$. When
$X,Y$ are infinite-dimensional, this is in general a strictly weaker
notion than $k$-times continuous Fréchet differentiability, i.e.
$C^k(X,Y)\subset C^k_B(X,Y),$
with possibly strict inclusion \cite{walther2021} (e.g. the fundamental Nemytskii/superposition operators are $C^1_B(X,X)$ but not $C^1(X,X)$ on $X=L^2(\mathbb T^d)$, see Subsection \ref{subsec:nemistkii-bastiani});  whereas the two notions
coincide in finite dimensions.  The class $C^k_B(X,Y)$ is not popular in functional analysis on Banach  spaces, where the class $C^k(X, Y)$ is the standard choice. However,  it is a convenient framework as it provides calculus even beyond Banach spaces, i.e. on locally convex spaces.

The formulation \eqref{eq:bastiani_intro} is especially well suited to the UA, leading us to weaker topologies where UA is achieved naturally:
 $C^k_B$-compact-open topologies on product spaces, {i.e. generated by the family of seminorms\footnote{recall that the unit spheres used in \eqref{eq:frechet_CO_intro} to compute operator norms are not compact sets when $X$ is infinite-dimensional}
 \begin{align}\label{eq:CkB_bastiani_compactopen_intro}\max_{0\leq i \leq k}\sup_{ x\in K}\sup_{h^1,...,h^i\in K'}  |D^iF(x)(h^1,...,h^i)|_Y, \quad K,K'\subset X\textit{ compact subsets},    \end{align}
}
and it also provides a natural product-space viewpoint for the 
weighted Sobolev norms, leading us to the construction of novel weighted Sobolev spaces (generalizing classical Gaussian Sobolev spaces \cite{bogachev1998gaussian}), which we call weighted Bastiani--Sobolev spaces, {i.e. constructed under the weighted Bastiani--Sobolev norm (or variants, see Appendix \ref{subsec:Sobolev})
    \begin{align}\label{eq:Sobolev_norm_intro}
   \|F\|_{\mathcal W^{k,p,r}_{B,\nu,\eta}(X,Y)}=\left(\sum_{i=0}^k\int_{X}\left( \int_{ X^i} |D^iF(x)(h^1,...,h^i)|^r_Yd\eta^{1:i}(h^1,\ldots,h^i)\right)^{p/r}d\nu(x)\right)^{1/p},
    \end{align}
 where $\nu$ is a finite input measure on $X$, $\eta$ is a measure on $X^k$ with finite $r$-moments  measuring norms in directions $h^j,j\leq k$, $\eta^{1:i}$ denote its marginals. Hence, in \eqref{eq:CkB_bastiani_compactopen_intro}, we are endowing $\mathcal L^i(X,Y)$ with a compact-open topology, while in \eqref{eq:Sobolev_norm_intro}, we are endowing $\mathcal L^i(X,Y)$ with a weighted $L^r$-norm $\|L\|_{L^r(X^i,Y,\xi)}:=\left(\int_{X^i} |L(h^1,...,h^i)|^r_Yd\xi(h^1,\ldots,h^i)\right)^{1/r}, L\in \mathcal L^i(X,Y)$ for $\xi=\eta^{1:i}$ (see Subsection \ref{subsec:norm_Lp-multilinear}).}
Using these tools: 
\begin{center}
\textit{we prove the first UATs for nonlinear $k$-times
differentiable operators and all their derivatives, both in $C^k_B$ compact-open topologies and in  weighted Bastiani--Sobolev spaces for general finite input measures.}
\end{center}
In particular, to the best of our knowledge:
\begin{center}
    \textit{our Universal Approximation Theorems are the first complete generalizations of the corresponding\\ influential classical results in Hornik \cite{hornik1991approximation} to infinite-dimensional spaces and Operator Learning.}
\end{center}
\paragraph{Bastiani--Sobolev Training in Derivative-Informed Operator Learning.} Supported by these UATs, we formulate  \textit{Bastiani--Sobolev Training} methods  (see Section~\ref{sec:bastiani-sobolev-training}),   extending previous DIOL training paradigms based on Hilbert spaces, Fréchet (or, in the Gaussian case, stochastic Gâteaux/Malliavin)  differentiability, $k=1$. Furthermore, the Bastiani--Sobolev viewpoint in \eqref{eq:Sobolev_norm_intro} naturally supports alternative formulations, which are compatible with Sobolev training in high, but finite, dimensional spaces~\cite{czarnecki2017}. 

To conclude this short overview of the manuscript:
\begin{center}
    \textit{our results launch DIOL on general Banach spaces, raising new theoretical and computational challenges.}
\end{center}
\subsection{State of the Art}\label{subsec:state_art}
\paragraph{Operator Learning.}
Over roughly the last seven years, operator learning has emerged as a central paradigm for learning maps between infinite-dimensional function spaces. OL has been extensively used to learn maps from coefficient fields, forcing terms, initial or boundary data, and other functional inputs to the corresponding partial differential equation (PDE) solutions or observables, providing fast surrogate models for many-query scientific computing. Beyond classical PDE solution surrogates, OL has been used for scientific discovery and data-driven modeling, nonlinear reduced-order modeling, Koopman/operator-inference methods, solver acceleration, data-driven closure and constitutive modeling, turbulence and continuum-model learning, computer-graphics shape and surface maps, and weather-style forecasting operators.  Standard architectures include the Encoder-Decoder Architectures \cite{kovachki-lanthaler-stuart}, DeepONet \cite{chen-chen,lu2020deeponet}, Fourier Neural Operator \cite{kovachki2023neuraloperator}, PCA-Net \cite{bhattacharya2021}, Deep-H-ONet/Hilbert--Galerkin Neural Operator \cite{castro2022} (see also \cite{cohen-defeo-hebner-sirignano}), Laplace Neural Operator \cite{cao2024}, Spectral Neural Operator \cite{fanaskov2024}, Convolutional Neural Operator \cite{raonic2023cno}, and Averaging Neural Operator \cite{lanthaler-li-stuart}.  Their use for these tasks is theoretically supported by a series of influential UATs for nonlinear operators on infinite-dimensional spaces \cite{chen-chen,kovachki2023neuraloperator,bhattacharya2021,hua-lu,lu2020deeponet,castro2022,castro2024calderon,huang2024,kovachki-lanthaler-mishra,jin-shuai-lu,lanthaler2022,zhang-wing-leung,lanthaler-li-stuart,shih-peyvan-zhang-etal,benth-detering-galimberti,galimberti2026,saini2026,ismailov2026,zappala2024,cuchiero-schmocker-teichmann}. We refer to  \cite{kovachki-lanthaler-stuart} for a detailed review of OL\footnote{We also refer to \cite{cuchiero-schmocker-teichmann,cuchiero_primavera_svaluto,ceylan_kwossek_promel,hager-harang-pellizari-tindel,lyons_nejad_perez} for signature learning and UATs there, and to \cite{lauriere_perrin_perolat_etal,pham_warin,mekkaoui-pham-warin} for OL on Wasserstein spaces.}.
\paragraph{Derivative-Informed Operator Learning.}
Even more recently,  DIOL has been rapidly growing \cite{cao_chen_brennan_olearyroseberry-marzouk-youssef-ghattas,cao-Roseberry-Ghattas,cohen-defeo-hebner-sirignano,go_chen,go-chen2025,luo-Roseberry-chen-Ghattas,gong-luo-roseberry-etal,olearyrosemberry-villa-chen-ghattas,Roseberry-peng-villa-ghattas,qiu_bridges_chen,yao-luo-cao-Kovachki-Roseberry-ghattas}. Due to the theoretical and computational challenges, numerical results are mostly restricted to the separable Hilbert setting and to the first derivative case\footnote{Among these the only paper that studies DeepONets is \cite{qiu_bridges_chen}, providing empirical studies. The only paper with theoretical results for arbitrary $k$ is \cite{luo-Roseberry-chen-Ghattas} on separable Hilbert spaces, while $k=2$ is covered by \cite{cohen-defeo-hebner-sirignano} on separable Hilbert spaces;  we will review \cite{luo-Roseberry-chen-Ghattas,cohen-defeo-hebner-sirignano} below.  
We also refer to \cite{alger_christierson_chen_ghattas2026} Taylor series surrogate models for covariance preconditioned high dimensional mappings that depend implicitly on the solution of a system of nonlinear equations.}.
Here, the goal is to learn  an operator and its (infinite-dimensional) derivatives. This provides (we refer to Section \ref{subsec:motivations} for extensive examples):
\begin{enumerate}[(i)]
    \item higher-order accuracy than standard OL with  a significantly lower training sample size and generation cost \cite{Roseberry-peng-villa-ghattas,cao-Roseberry-Ghattas,qiu_bridges_chen,cao_chen_brennan_olearyroseberry-marzouk-youssef-ghattas,luo-oleary-chen-ghattas,go_chen}.
\item Applications of OL to constrained optimization in infinite-dimensional spaces  (such as optimal control of PDEs,
as inverse problems, and optimal design) \cite{yao-luo-cao-Kovachki-Roseberry-ghattas,luo-oleary-chen-ghattas,go_chen,go-chen2025}, where derivatives encode fundamental information on minimizers, thereby allowing for  fast online optimization.
\item Development of numerical methods for infinite-dimensional PDEs \cite{cohen-defeo-hebner-sirignano}, such as functional differential equations in Physics and Kolmogorov and Hamilton--Jacobi--Bellman PDEs from infinite-dimensional optimal control and games  (controlled PDEs, SPDEs, path-dependent systems, mean-field control, and partially observed stochastic systems), where the (infinite-dimensional) derivatives of the solution appear naturally in the infinite-dimensional PDE.
\end{enumerate}

However, the literature on Derivative-Informed Operator learning is mainly empirical and UATs for the simultaneous approximation of nonlinear operators and their infinite-dimensional derivatives are not understood. Indeed, to the best of our knowledge, the only available results (all appeared within about one year with respect to the first posting of the present manuscript on arXiv) are proven in the following very restrictive settings:
\begin{itemize}
    \item The only universal approximation result for $k$-times differentiable operators ($k\in \mathbb N$ arbitrary) available can be found in \cite{luo-Roseberry-chen-Ghattas}. However, this holds for operators $F$ belonging to Gaussian Sobolev spaces $W_{\gamma}^{k,2}(X,Y)$   (\cite[Section 5.2]{bogachev1998gaussian}, see also \eqref{eq:gaussian_sobolev}), where $X,Y$ are  separable \textbf{Hilbert spaces},  $p=2$, $\gamma \sim N(0,Q)$ is a \textbf{Gaussian} measure ($Q$ trace class on $H$), and  \textbf{derivatives} of $F$ are taken \textbf{along directions in the Cameron–Martin space}, i.e.  $X_{\gamma}:= Q^{1/2}(X)$, and $D_{\gamma}^k F$ denotes the $k$-th Gaussian Sobolev (or Malliavin) derivative restricted to $X_{\gamma}$. The result is therefore tied to the classical Gaussian Sobolev structure:
$W^{k,2}_{\gamma}(X,Y)$ is defined as the completion of smooth cylindrical
maps. Indeed, the authors explicitly note after
\cite[Theorem~6]{luo-Roseberry-chen-Ghattas} that
``Theorem 6 trivially reduces to the universal approximation theorem of neural networks in finite dimensions''.   
 A detailed analysis of approximation errors is provided in $W_{\gamma}^{1,2}(X,Y)$ ($k=1$). Consequently, the case of  general finite measures on Hilbert spaces, the case $p\neq 2$, and UATs on $C^k$ compact-open topologies, as well as the Banach case (including, e.g., DeepONets), are not addressed.
    \item The paper  \cite{yao-luo-cao-Kovachki-Roseberry-ghattas} proved UATs in compact-open topologies and weighted Sobolev norms via Fourier Neural Operators up to \textbf{first order Fréchet} derivatives ($k=1$) on \textbf{specific Hilbert spaces} $X=H^s\left(\mathbb{T}^n ; \mathbb{R}^{d_a}\right), Y=H^{r}\left(\mathbb{T}^d ; \mathbb{R}^{d_b}\right)$, $s,r\geq 0$, providing  applications to inverse problems. In this paper   $D F(x) \in  \mathcal L_2 \left(X_\delta, Y\right)$ is assumed to be  \textbf{Hilbert--Schmidt}, from some, possibly smoother, space $X_\delta:=H^{s+\delta}\left(\mathbb{T}^n ; \mathbb{R}^{d_a}\right)$, $\delta \geq 0$ and approximation in the $\|\cdot\|_{\mathcal{L}_2\left(X_\delta, Y\right)}$-norm. If $F\in C^1(X,Y)$ then $DF(x)\notin \mathcal L_2 \left(X_\delta, Y\right)$, in general. For $\delta>n/2$, $X_\delta$ is  embedded into $X$ with dense Hilbert--Schmidt embedding; therefore, in this case,  then   $D F$ is continuous from $X$ to  $\mathcal L_2 \left(X_\delta, Y\right)$ and the result can be applied. However, with this approach, UA is restricted at directions in the (smaller) space $X_\delta.$ The authors impose this requirement  {since they explain that one should not expect UA of Fréchet derivatives in operator norm for FNOs. 
    A further, and even more structural, limitation is the use of the Fréchet $C^1$ class itself, FNOs need not be Fréchet $C^1$, in general. For instance, consider an FNO consisting of purely local layers, $\mathcal{L}_{\ell}(x)=\sigma\left(W_{\ell} x+b_{\ell}\right)$. As noticed in \cite{yao-luo-cao-Kovachki-Roseberry-ghattas},  when $\sigma$ is nonlinear, Fréchet differentiability of this layer fails in general when viewed as a superposition/Nemytskii operator from $L^2\left(\mathbb{T}^d;\mathbb R^{d_a}\right)$ to $L^2\left(\mathbb{T}^d\right)$ (even for smooth $\sigma$). Their proof circumvents
this issue by constructing special $C^1$ FNOs whose input block factors
through the finite-rank Fourier projection $\mathcal P_N$ onto
$L_N^2(\mathbb T^d;\mathbb R^{d_a})$, the finite-dimensional space of
trigonometric polynomials whose Fourier coefficients vanish for
$|k|_\infty>N$. Consequently, the approximating
FNO $\mathcal N:\mathcal X\to\mathcal Y$ depends on $x$ only through
$\mathcal P_Nx$: it sees only finitely many input Fourier modes and is
therefore input-cylindrical. However, the reliance on this
finite-dimensional reduction further highlights the structural
restriction imposed by the Fréchet $C^1$ framework.}
We will comment on these points in Remarks \ref{rem:comparison_UA_compact_yao}, \ref{rem:comparison_UA_sobolev_yao}, showing that our approach provides a more natural analytic framework even for FNOs.
    \item The paper  \cite{cohen-defeo-hebner-sirignano} proved UATs for  $C^2(X)$-maps\footnote{Here, the $C^2$ Fréchet requirement is less restrictive than in \cite{yao-luo-cao-Kovachki-Roseberry-ghattas} as this is a standard requirement in the definition of classical solutions of second-order PDEs in Hilbert spaces \cite{fabbri2017book} addressed (via these UATs) in \cite{cohen-defeo-hebner-sirignano}.} ($k=2,Y=\mathbb R$), where $X$ is a separable \textbf{Hilbert space}, in novel compact-open $C^2$ topologies and novel weighted Sobolev norms. These were employed to develop the first numerical methods (Deep Hilbert--Galerkin Methods) for fully non-linear second order infinite-dimensional PDEs on $X$, such as Hamilton--Jacobi--Bellman (HJB) PDEs from infinite-dimensional control and functional differential equations in physics (see also Subsection \ref{subsec:deep-banach-galerkin}), addressing ``a longstanding open challenge''.
 In contrast to \cite{luo-Roseberry-chen-Ghattas,yao-luo-cao-Kovachki-Roseberry-ghattas}, UATs in \cite{cohen-defeo-hebner-sirignano} are intrinsic, i.e.  no Hilbert-Schmidt embedding is exploited\footnote{it was similarly observed there that, in general, UA in operator norm for $D^2F$ cannot be expected; see also \cite{Lesmes1974}}. Instead,   considering the Hessian as a map  $(x,h)\mapsto D^2v(x)h$, UA was achieved naturally in $C^2(X)$ compact-open topologies of $C^2(X)$ and in suitable Sobolev norms (constructed by  integrating the input variable $x$ and directions $h$ under finite measures
 satisfying opportune moment conditions and polynomial growth of $F,DF,D^2F$).
    These UATs\footnote{UATs were also proved there for terms involving unbounded operators applied to the first derivative; these UATs ensured that HGNOs were able to approximately solve the PDE, by minimizing the $L^2(X;\mu)$-norm of the residual  of the PDE over the whole Hilbert space $X$, leading to Deep Hilbert--Galerkin Methods (see also Subsection \ref{subsec:deep-banach-galerkin})} also led to UA of optimal feedback controls by means of $DF^{{\mathcal E}_d^X,\theta,1},D^2F^{{\mathcal E}_d^X,\theta,1}$.
    Numerical tests on  infinite-dimensional HJB PDEs from optimal control of PDEs/SPDEs  validated these UATs. 
\end{itemize}
{In particular, we notice that, when $X,Y$ are finite dimensional, none of the above UATs is able to recover the corresponding finite-dimensional UATs in \cite{hornik1991approximation}, making existing generalizations fundamentally incomplete.}

{Related results to the Universal Approximation Theorems of nonlinear operators and their derivatives can be found in \cite{llavona1986} (see also \cite{gil-llavona,nachbin76,prolla1974,prolla-guerreiro}), where infinite-dimensional extensions of Nachbin's theorem\footnote{we recall that Nachbin's theorem in finite-dimensions generalizes the Stone-Weierstrass theorem to derivative approximation, i.e. it states that polynomials can approximate $C^k$ functions and its derivatives, uniformly on compact sets.} were proved, i.e. classes of polynomial algebras were shown to be able to approximate $k$-times differentiable operators and their derivatives  in  compact-open topologies generated by seminorms of the form \eqref{eq:CkB_bastiani_compactopen_intro}.}
\subsection{Our Contributions}
It is evident from the literature review above that Derivative-Informed Operator Learning is an open 
research frontier at the foundations of OL, spanning theoretical, computational, and application-driven challenges,  with possible implications in all applied sciences. In particular, the Universal Approximation of nonlinear operators and their derivatives is a foundational open problem.
Therefore, the goal of the present manuscript is to provide the theoretical foundations for the simultaneous learning of operators and their derivatives.  We summarize the structure of the paper and our contributions as follows.
\paragraph{Motivations for Derivative-Informed Operator Learning.} We start by presenting several motivations, showing the broad relevance of DIOL and, consequently, of UATs of nonlinear operators and their derivatives. In this section, to streamline the presentation, we mostly avoid mathematical rigor, leaving many of these applications as challenges to future research.
Motivations for DIOL typically come from query tasks (see Definition \ref{def:query}), requiring fast evaluation of a target $k$-times differentiable operator $F:X\to Y$ and, possibly, its derivatives. This leads to higher-order accuracy in OL, as well as to Learn-Then-Optimize methods for fast online constrained optimization in Banach spaces (such as optimal control of PDEs, inverse problems, or optimal design) and to numerical methods (conjectured as Deep Banach--Galerkin Methods) for infinite-dimensional PDEs on Banach spaces, such as functional differential equations in physics (e.g. the celebrated Hopf equation in turbulence theory and Schwinger–Dyson equation in quantum field theory) and HJB PDEs from infinite-dimensional optimal control (including problems of PDEs, SPDEs, path-dependent systems, partially observed stochastic systems, and mean-field SDEs). 
We refer to Section \ref{subsec:motivations} for extensive  bibliographic references.

\paragraph{Encoder-Decoder Architectures and classical examples.} 
In Section \ref{sec:EDN}, we introduce Encoder-Decoder Architectures (i.e. \eqref{eq:EDN_intro}; see Definition \ref{def:EDN}), renowned  architectures in OL due to their  applicability these general contexts. These include many classical OL architectures \cite{kovachki-lanthaler-stuart,godeke_fernsel},  such as \textbf{DeepONets}, Deep-H-ONets/HGNOs, Reduced-Basis Neural Operators, and PCA-Nets, which we discuss in detail as we will cover these architectures with our UATs.
\paragraph{Universal Approximation fails under operator norms.}  In the classical finite-dimensional case \cite{hornik1991approximation},  UATs are typically  proven in two ways: 
\begin{enumerate}[(A)]
    \item $C^k$ compact-open topologies (i.e. topologies of uniform convergence  of functions and their derivatives on compacts);
    \item  in weighted Sobolev spaces, for any finite input measures $\nu$ on $X$.
    \end{enumerate}
Our main goal is to extend UA in (A)-(B) to Banach spaces via classical OL architectures available on Banach spaces, i.e. Encoder-Decoder Architectures. 
A direct extensions of these notions to Banach spaces lead us to consider the compact-open topology on spaces of Fréchet differentiable maps $C^k(X,Y)^{co}$ and the Sobolev norm $\|\cdot\|_{\mathcal W^{k,p}_{\nu}(X,Y)}$ for a finite input measure $\nu$ on $X$ (see \eqref{eq:growth_v_UATsobolev-op_norm}), constructed via operator norms $\|D^iF\|_{\mathcal L^i(X,Y)}$. However, topologies induced by operator norms are too strong to achieve universal approximation: we prove in Theorem \ref{th:counterexample} that UA does not hold in $C^k(X,Y)^{co}$ and in Sobolev norms $\|\cdot\|_{\mathcal W^{k,p}_{\nu}(X,Y)}$. {A related density failure  of polynomial algebras in $C^k(X,Y)^{co}$ was pointed out  in \cite{llavona1986} (although precise assumptions for such a failure to occur were not stated; see also \cite{Lesmes1974}).}
\paragraph{UATs in $C^k_B$ Compact-open Topologies.} Our first goal is now to address the question above and prove UATs in suitable compact-open topologies. To this purpose, we reason as follows:
\begin{enumerate}[(a)]
    \item in Banach spaces, we cannot rely on  Hilbert--Schmidt embeddings as in  \cite{yao-luo-cao-Kovachki-Roseberry-ghattas}, making  a non-intrinsic approach of this kind inapplicable\footnote{Furthermore, for general Banach spaces, possibly non-separable, we cannot rely on dense compact embeddings, making an alternative approach of this kind also inapplicable}. We might require $D^iF(x)$ to be, e.g., compact operators, for all $i$ and try to achieve UA in operator norms  $\|\cdot\|_{\mathcal L^i(X,Y)}$. However, these are severe restrictions in infinite-dimensions, leading to unsatisfactory UATs, for our goals: for instance, we could not represent even one of the most basic operators, namely the identity map $F:X\to X$, $F(x):=x$, since $DF(x)\equiv I$ is not a compact operator unless $X$ is finite-dimensional, and it is not Hilbert–Schmidt when $X$ is an infinite-dimensional Hilbert space; 
    \item instead, we develop a direct and intrinsic approach building on \cite{cohen-defeo-hebner-sirignano,llavona1986}. In this approach, we carefully choose weaker  topologies where UA is achieved naturally.
\end{enumerate}
In view of the above:
\begin{itemize}
\item as mentioned at the beginning of the introduction, 
we  consider $D^iF$ $ 1\leq i \leq k$ as continuous maps from the product space $X\times X^i$ to $Y$, $(x,h_1,\ldots,h_i)\mapsto D^iF(x)(h_1,\ldots,h_i)$, i.e.  
 $k$-times continuously differentiable maps in the sense of Bastiani \cite{bastiani1964} (see also \cite{keller2006}), denoted $C^k_B(X,Y)$ (with subscript ``B'' for ``Bastiani''). 
 This joint continuity property of $D^iF$ provides the natural regularity to achieve UA, uniformly on compacts. 
 \item We employ the Approximation Property (AP)\footnote{We recall that AP is typically satisfied by most  Banach spaces arising in applications\footnote{however, see \cite{enflo1973counterexample} for a counterexample} and include, as particular cases, Banach spaces with Schauder frames or bases and, therefore, separable Hilbert spaces.} (see Assumption \ref{ass:approx_pro}) as the key requirement on the Banach spaces to achieve UA, in general.  These conditions provide us with suitable Moore-Smith sequences  of finite rank operators
    converging to the identity operator in the compact-open topology allowing us to represent the identity operator via finite-rank operators on compacts. 
\end{itemize}
Thus, in Theorem \ref{th:UAT_K_Banach}, UA is sharply achieved  in $C_B^k(X; Y)^{co}$, i.e. in $C_B^k(X,Y)$ compact-open topologies (see Definition \ref{def:compact_open_C2B}), via EDAs under only AP (see Remark \ref{rem:UA_compacts_sharp}). In view of the above, these UATs hold for maps in $C^k(X,Y)$ (Fréchet) as a special case (with the topology of $C_B^k(X; Y)^{co}$). 
Furthermore, when $X,Y$ are finite-dimensional, Theorem \ref{th:UAT_K_Banach} recovers the classical UAT in $C^k$ compact-open topologies  \cite[Theorem 3]{hornik1991approximation} (Remark \ref{rem:finite_dim_compact-op_UA}). 
This accomplishes the generalization of (A) above to Banach spaces. The theorem also applies to DeepONets, Deep-H-ONets, and Schauder Operator Nets (Remark \ref{rem:deeponets-deephonets-etc}) and generalizes the corresponding theorem in \cite{llavona1986} (restricted to polynomial algebras, which we include).
We discuss an application of our UATs to the operator forcing-to-solution of the incompressible Navier--Stokes equations in 2D in Example \ref{ex:NS_operator}. 

In Remark \ref{rem:comparison_UA_sobolev_yao} we compare Theorem \ref{th:UAT_K_Banach} with the corresponding one  in \cite{yao-luo-cao-Kovachki-Roseberry-ghattas}, to show the advantages of our intrinsic approach, which also provides  a more natural analytic framework even for FNOs.
\paragraph{Weighted Bastiani--Sobolev spaces on Banach spaces.}
Our next goal is to prove UATs in Sobolev spaces under general finite input  measures. In this setting, we cannot rely on  Sobolev spaces under Gaussian measures as   in  \cite{luo-Roseberry-chen-Ghattas} (for Gaussian measures in infinite dimensions see, e.g. \cite{bogachev1998gaussian} and the references therein)  or differentiable measures (for differentiable measures in infinite dimensions, see, e.g., \cite{bogachev-mayerwolf,bogachev2010,berezansky-kondratiev} and the references therein)\footnote{and for similar reasons as in (a) above, we cannot rely on the approach of \cite{yao-luo-cao-Kovachki-Roseberry-ghattas}}. {Moreover, we cannot rely on Sobolev spaces defined on metric spaces (e.g. \cite{bjorn-bjorn,heinonen-koskela-pekka-nageswaru-tyson,fornasier-savare-sodini2023,gigli-pasqualetto,savare2021}) as these are primarily first-order theories ($k=1$); furthermore, their first-order object is defined through the Cheeger energy and minimal relaxed slopes/weak upper gradients: even for a smooth cylindrical map, this minimizer need not coincide with the classical Fréchet derivative of primarily interest in OL.}

In view of the above, to the best of our knowledge, there is no available definition of Sobolev space on a Banach space  suitable for  UATs under general finite  measures. 
Therefore, we  introduce novel weighted Sobolev spaces under general finite (Borel)  measures, designed ad-hoc for UA.
Similar reasons as before do not allow EDAs to achieve UATs in the Sobolev spaces constructed via operator norms  for operators in $C^k(X,Y)$ (Fréchet; see Theorem \ref{th:counterexample}); this makes a naive extension of the Sobolev space in \cite{hornik1991approximation} using operator norms not suitable.  However, building on \cite{cohen-defeo-hebner-sirignano}, in Appendix  \ref{subsec:Sobolev},  we introduce weighted Bastiani--Sobolev spaces and we study their properties. 
This  construction can be considered  one of the main mathematical results of the paper.
We motivate the construction in Section \ref{subsec:motivation_sobolev}. 
As we explain  there, \textit{our weighted Bastiani--Sobolev spaces generalize the one of classical Gaussian Sobolev spaces   \cite{bogachev1998gaussian} to general finite measures on $X^{k+1}$}, in the following sense: we show that, in the Gaussian case, \textit{we recover the classical Gaussian Sobolev space in \cite[Section 5.2]{bogachev1998gaussian}}  (see  Theorem \ref{th:characterization_bogachev}).  Comparing to \cite{cohen-defeo-hebner-sirignano} (in the $C^2(X)$ Hilbert case), here the construction is a lot deeper in many directions: as we do not restrict to maps with polynomial growth  on $F,DF,D^2F$ (typical conditions for classical solutions of PDEs on Hilbert spaces, fulfilling the purposes in \cite{cohen-defeo-hebner-sirignano}), we work with continuous Bastiani differentiability (in place of Fréchet), and  we do not restrict to input measure $\nu$ with moment conditions as there.

\paragraph{UATs in weighted Bastiani--Sobolev spaces on Banach spaces.}
These constructions lead us naturally  to UATs  in weighted Bastiani--Sobolev space for general finite input measures $\nu$ on $X$ (see Theorem \ref{th:UAT_Sobolev_Banach} and Corollaries \ref{th:UAT_Sobolev_K_Banach}, \ref{th:UAT_Sobolev_K_Banach_bump}). 
Our UATs recover the classical finite dimensional case \cite[Theorem 4]{hornik1991approximation} (Remark \ref{rem:UAT_sobolev}). This accomplishes the extension of (B) above to Banach spaces.
Moreover, as a further corollary, we obtain UATs in Gaussian Sobolev spaces (Corollary \ref{th:UA_gaussian}), generalizing to Banach spaces the corresponding result for Gaussian Sobolev spaces on separable Hilbert spaces in \cite{luo-Roseberry-chen-Ghattas}. The results apply to specific OL architectures, such ad DeepONets, Deep-H-ONets, and Schauder Operator Nets (Remark \ref{rem:deeponets-deephonets-etc}). 

{In Remark \ref{rem:comparison_UA_sobolev_yao} we compare our UATs in Bastiani--Sobolev spaces with the corresponding one  in \cite{yao-luo-cao-Kovachki-Roseberry-ghattas}, to show the advantages of our intrinsic approach, which also provides  a more natural analytic framework even for FNOs.}

To the best of our knowledge, \textit{our results in Sections \ref{subsec:UAT_compacts}, \ref{subsec:sobolevUAT} are the first complete extensions of the corresponding influential classical results in \cite{hornik1991approximation} to infinite-dimensional settings and Operator Learning.} 
\paragraph{Bastiani--Sobolev Training in Derivative-Informed Operator Learning.} Supported by our UATs in weighted Bastiani--Sobolev spaces on Banach spaces, we formulate   \textit{Bastiani--Sobolev Training} methods:  
\begin{itemize}
    \item on one hand, these  extend previous training paradigms based on Hilbert spaces, Fréchet (or, in the Gaussian case, stochastic Gâteaux/Malliavin)  differentiability, or $k=1$.
    \item  On the other hand, the Bastiani--Sobolev viewpoint in \eqref{eq:Sobolev_norm_intro} naturally supports alternative formulations, which are compatible with Sobolev training in high, but finite, dimensional spaces~\cite{czarnecki2017}. 
\end{itemize}
{\paragraph{Outline.} The manuscript is organized as follows. In Section \ref{subsec:motivations} we present motivations for Derivative-Informed Operator Learning. In Section \ref{sec:EDN}, we introduce Encoder-Decoder Architectures and present classical examples. In Section \ref{sec:UAT}, we prove failure of Universal Approximation in $C^k(X,Y)^{co}$, while we recover Universal Approximation in $C_B^k(X,Y)^{co}$ and in Bastiani--Sobolev spaces. In Section \ref{sec:bastiani-sobolev-training}, we formulate Bastiani--Sobolev training. In Appendix \ref{app:basics}, we collect some basic notations, used throughout the manuscript. In Appendix \ref{app:smooth}, we recall spaces of differentiable maps on Banach spaces and  present some results used in the manuscript. In Appendix \ref{subsec:Sobolev}, we introduce and study Bastiani--Sobolev spaces.}
\section{Motivations for Derivative-Informed Operator Learning}\label{subsec:motivations}
In this section, we present several motivations showing the  broad relevance of DIOL and, consequently, of UATs of nonlinear operators and their derivatives. Mostly, we avoid developing applications with mathematical rigor, preferring a broad exposition and leaving them open as challenges for future research. However, in Example \ref{ex:NS_operator} we will apply Theorem \ref{th:UAT_K_Banach} to obtain UA via HGNOs (Subsection \ref{sec:Deep-HONet}) to a nonlinear $C^\infty$ operator mapping forcing functions to solutions of the Navier--Stokes equations in 2D.

We will use the following terminology:
\begin{definition}[Query task]\label{def:query}
    A query task is a task where fast evaluation of a target operator $F:X\to Y$, and possibly, its derivatives $D^iF,$ $i\leq k$, at a given input $x\in X$ is required, while direct computation of $F(x),$ $ D^iF(x),$ $ i\leq k$ is expensive. Given a query task, our aim is to obtain a fast surrogate in the form of an OL architecture.
\end{definition}
\subsection{High-Order Accuracy in Operator Learning and Applications}
\subsubsection{General Framework}
\paragraph{Operator Learning.}  Let $X,Y$ be Banach spaces and let $F:X\to Y$ be an unknown   nonlinear operator. The goal of Operator Learning is to learn $F$. A standard choice in the literature is to train an operator learning architecture $F^\theta$ to approximate $F$; here  $\theta \in  \Theta$ are the parameters of the architecture. In particular:
 \begin{enumerate}[(A)]
        \item  given $F\in C^0(X,Y)$, $K\subset X$ compact,  $\epsilon>0,$ we want to find $\theta$ such that
    $\sup_{x\in K}|F(x)-F^\theta(x)|_Y<\epsilon;$ 
    \item given $F \in L^p(X,Y,\nu)$, $\epsilon>0,$ we want to find $\theta$ such that $ \int_X |F(x)-F^\theta(x)|_Y^p d\nu(x)<\epsilon.$
    \end{enumerate}
This expressivity is supported by universal approximation theorems, typically in compact-open topologies and in $L^p$-norms, for standard operator-learning architectures (recall Subsection \ref{subsec:state_art}). Extensive numerical tests confirm the success of these approximations, by training $F^\theta$ according to (A)-(B) (typically (B)) \cite{kovachki-lanthaler-stuart}.
\paragraph{Derivative-Informed Operator Learning.} Assume now to have access to further information, i.e. that $F:X\to Y$ is $k$-times differentiable (in some suitable sense). In this case, similarly to the classical finite-dimensional case \cite{hornik-stinchcombe-white,hornik1991approximation,pinkus1999approx}, it would be desirable to achieve a higher-order accuracy of approximation by 
$$\textit{Simultaneously Learning }F \textit{ and Its Derivatives }D^iF, \  \forall i\leq k. $$
This would be achieved by replacing (A)-(B) by
    \begin{enumerate}
        \item[(A')] given $F$,  $K\subset X$ compact,  $\epsilon>0,$ we want to find $\theta$ such that
         $\sum_{0\leq i \leq k} \sup_{x\in K}p_i(D^iF(x)-D^iF^\theta(x))<\epsilon;$
    \item[(B')]  given $F$ (possibly, with some integrability on $D^iF$, for all $0\leq i\leq k$), $\epsilon>0,$ we want to find $\theta$ such that $\sum_{0\leq i \leq k} \int_X [q_i(D^iF(x)-D^iF^\theta(x))]^p d\nu(x)<\epsilon.$
    \end{enumerate}
In the above $p_i,q_i:\mathcal L^i(X,Y)\to [0,\infty)$ are suitable (but non-specified, for the moment) norms/seminorms to measure errors in the derivatives  $D^iF(x) \in \mathcal L^i(X,Y)$. Indeed, as mentioned in the introduction spaces of multilinear operators $\mathcal L^i(X,Y)$ can be endowed with many non-equivalent topologies and the choice has be done with particular care.

Numerical experiments  have indeed shown that training $F^\theta$ according to some form of (A')-(B') (at least for $k=1$) significantly improves the accuracy of the approximation (see, e.g., the references in Subsection \ref{subsec:PDE_OL}) with  a significantly lower training sample size and generation cost. 

However, as seen in the literature review (Subsection \ref{subsec:state_art}) there are no existing UATs supporting this expressivity on Banach spaces, while only very special cases in the Hilbert space case are covered, leaving also here the problem open. Since many choices of norms/seminorms $p_i,q_i:\mathcal L^i(X,Y)\to [0,\infty)$  are possible,  this directly affects the numerical tests, including the training, making the problem of the development of UATs even more fundamental.
\subsubsection{Applications to partial differential equations}\label{subsec:PDE_OL}
    A standard task in OL is to learn the solution map $F$ of a PDEs, as computing  $F(x)$ at a given input $x$ (e.g., through Galerkin methods) is expensive:  ``typical examples of such operators  arise from physical models expressed in terms of partial differential equations
(PDEs). In this context, such approximate operators hold great potential as efficient surrogate models to complement traditional numerical methods in many query tasks'' \cite{kovachki-lanthaler-stuart} (recall Definition \ref{def:query}). These surrogates are typically trained \textit{offline} via data obtained via standard numerical methods for PDEs.

However,  many interesting PDE solution operators turn out to be $k$-times differentiable, e.g. see Example \ref{ex:NS_operator}. Therefore, a higher-order accuracy is desired. Numerical tests  successfully confirm the learning of PDE operators and their first-order derivatives  with significantly lower training sample size and generation cost \cite{olearyrosemberry-villa-chen-ghattas,luo-oleary-chen-ghattas,luo-Roseberry-chen-Ghattas,qiu_bridges_chen,Roseberry-peng-villa-ghattas,westermann2026}.  

To get derivative data one typically solves {sensitivity equations}  \cite{hinze2008,manzoni_quarteroni_salsa,troltzsch2010} as we now recall. Let $X,Y,W$ be Banach spaces. Let $G: Y \times X \rightarrow W$ be a suitable PDE operator. Given an input  $u\in X$, consider the PDE 
\begin{equation}\label{eq:G=0}
    G(y, u)=0.
\end{equation}
Assume that the PDE \eqref{eq:G=0} defines  a input-to-state operator
$F:X\to Y, F(u)=y$, that is $G(F(u), u)=0$ for any $u \in X$. Analytically, $D^iF$ can be computed via the so called \textbf{sensitivity equations}. {The sensitivity equation for $DF$ is obtained by differentiating \eqref{eq:G=0}, i.e. $D_yG(F(u),u)DF(u)+D_uG(F(u),u)=0$, so that for all $h\in X$ we obtain the following linear equation for $Z(u):=DF(u)h$
\begin{align}\label{eq:sensitivity_eq}
    D_yG(F(u),u)Z+D_uG(F(u),u)h=0.
\end{align}
Iterating the procedure, one obtains the sensitivity equations for higher derivatives.}

Very interesting where we have differentiable PDE solution operators are the following.
\begin{example}[Navier-Stokes Equations in 2D]\label{ex:NS_operator}
Following \cite[Section 1.8]{hinze2008} (and using standard notations defined there), let $\Omega\subset\mathbb R^2$ be a bounded Lipschitz domain, $I=(0,T)$, and set
$
V:=\overline{\{\varphi\in C_0^\infty(\Omega)^2:\nabla\!\cdot\varphi=0\}}^{H_0^1},$ $
H:=\overline V^{L^2(\Omega)^2}
$, with $V\hookrightarrow  H= H^*\hookrightarrow V^*$,
$
Y:=\{y\in L^2(I;V):\partial_t   y\in L^2(I;V^*)\}
\hookrightarrow C([0,T];H)   .$
Let $X$ be a separable Hilbert space of forcing functions (e.g. $X=L^2(\Omega)$), let
$B\in\mathcal L(X,L^2(I;V^*))$.
 Consider the weak divergence-free formulation of the  instationary incompressible Navier--Stokes equation
\begin{equation}\label{eq:NS_eq}
    \partial_t y-\nu\Delta y+(y\cdot\nabla)y+\nabla p=Bu,\qquad
\nabla\!\cdot y=0,\quad \textit{on }I\times \Omega \qquad
y=0,\quad \textit{on }I\times\partial\Omega\qquad
y(0)=y_0 .
\end{equation}
For every $u\in X$, there exists a unique solution $y=y(u)\in Y$ of \eqref{eq:NS_eq}, defining a forcing-to-state operator
$F:X\to Y, F(u)=y.$
Remarkably, it turns out that
$F\in C^\infty(X,Y)$ \cite{hinze2008} with Lipschitz continuous Fréchet derivatives on bounded sets. In this setup, $X,Y$ are separable Hilbert spaces, therefore, we can apply Theorem \ref{th:UAT_K_Banach} and Remark \ref{rem:HGNO_UA} to obtain UA via HGNOs/Deep-H-ONets (recall Subsection \ref{sec:Deep-HONet}).
\end{example}
\begin{example}[Elliptic PDEs]\label{ex:elliptic_PDE} Consider the following PDE on a $\Omega \subset \mathbb{R}^N$ is a bounded Lipschitz domain:
$$
\begin{array}{rlr}
-\Delta y+d(x, y)=u ,\quad \text {in } \Omega \qquad 
\partial_\nu y=0,\quad \text {on }   \partial \Omega ,
\end{array}
$$
where $d: \Omega \times \mathbb{R} \rightarrow \mathbb{R}$,
Let $r>N / 2$. Following \cite[Section 4.5]{troltzsch2010}, under suitable conditions, for any forcing $u \in X:=L^r(\Omega)$, there exists a unique state $y$ in the Banach space $Y:=H^1(\Omega) \cap C(\bar{\Omega})$, defining the forcing-to-state operator $F: X \rightarrow Y, F(u)=y$, which is Fréchet differentiable. Moreover, $F$  is twice continuously Fréchet differentiable from $\tilde X:=L^\infty(\Omega)$ into $Y$. 
\end{example}

\subsection{Learn-Then-Optimize for Fast Online Constrained Optimization in Banach spaces}\label{subsec:Learn-Then-Optimize}
In the following two subsections, we present problems where Derivative-Informed Operator Learning has the potential to play a fundamental role. 

Constrained Optimization Problems in Banach spaces (COPB) are fundamental  problems in applied mathematics, modeling many fundamental problems, such as inverse problems, optimal control problems of PDEs.
Let $X,Y,W$ be Banach spaces. Let $J: Y \times X \rightarrow \mathbb{R}$ and $G: Y \times X \rightarrow W$. We consider the following constrained optimization problem\footnote{In many cases, the optimization is only performed on suitable admissible subsets of $X$, but for simplicity, here we will restrict to optimizing over the whole $X$.}:
    \begin{equation}\label{eq:constrained_opt}
\begin{gathered}
\min_{(y, u) \in Y \times X} J(y, u)  \quad \textit{ subject to } G(y, u)=0.
\end{gathered}
    \end{equation} 
Assume that the constraint $G(y, u)=0$ defines a map $F:X\to Y,u \mapsto F(u)$, that is $G(F(u), u)=0$ for any $u \in X$. Then, we can consider the reduced problem
\begin{equation}\label{eq:constrained_opt_reduced}
\min_{u\in X} \tilde{J}(u), \quad \textit{where } \tilde J:X \to \mathbb R, \quad   \tilde{J}(u):=J(F(u), u) .
    \end{equation} 
    {Assuming (twice) differentiability (in some suitable sense), one could use an iterative method for the reduced optimization problem \cite{hinze2008,manzoni_quarteroni_salsa}:  any minimizer $\bar x\in X$ satisfies $D\tilde J(\bar x)=0$. Therefore, to find $\bar x,$ given an initial guess $u^0$, one may construct a sequence 
$$ u^{n+1}=u^n+\tau_n d_n,\quad n \in \mathbb N,$$
where $d_n \in X$ is a descent direction and $\tau_n>0$ is  a suitable stepsize. For instance (but many other possibilities are available, also relying only on $D\tilde J$),  Newton-type method in optimization uses, formally, the rule:
$$D^2\tilde J(u^n)(d_n,(\cdot))=-D\tilde J(u^n).$$
We compute $D\tilde J,D^2\tilde J$ by chain-rule as}
\begin{align}
&D \tilde{J}(u)h = D_y J(F(u),u)( D F(u)h)+D_u J(F(u),u)h\label{eq:COB_gradient}\\
&D^2 \tilde{J}(u)\left(h_1, h_2\right)=  D_{y y}^2 J(F(u), u)\left(D F(u) h_1, D F(u) h_2\right)  +D_{y u}^2 J(F(u), u)\left(D F(u) h_1, h_2\right)\label{eq:COB_hessian} \\
& \quad \quad \quad \quad\quad \quad\quad \quad+D_{u y}^2 J(F(u), u)\left(h_1, D F(u) h_2\right)  +D_{u u}^2 J(F(u), u)\left(h_1, h_2\right)\nonumber  \\
&\quad \quad \quad \quad\quad \quad\quad \quad+D_y J(F(u), u)\left(D^2 F(u)\left(h_1, h_2\right)\right) .\nonumber 
\end{align}
Therefore, \textit{equations \eqref{eq:COB_gradient}-\eqref{eq:COB_hessian} contain relevant information on the COPB,  such as first- and second-order optimality conditions, with many relevant information encoded by $DF,D^2F$ (recall Subsection \ref{subsec:PDE_OL} for the derivation of the sensitivity equations for $DF,D^2F$)}. 
\paragraph{Traditional Numerical Methods.} {Computing $DF,D^2F$ is a difficult computational challenge for traditional  numerical methods for COPB.} Therefore, typically,  they use suitable analytical manipulations of \eqref{eq:COB_gradient}-\eqref{eq:COB_hessian}, such as adjoint equations, reducing or eliminating the computational cost in computing $DF,D^2F$ and  allowing them to numerically solve the COPB (see e.g. \cite{hinze2008,manzoni_quarteroni_salsa}).  Even in these cases, traditional numerical methods can be slow, as, at each iteration, they need to evaluate $F(u)$  (for instance, this is computed via Galerkin methods in PDEs) and adjoint equations. Here, typically we distinguish two approaches \cite{manzoni_quarteroni_salsa}: in  \textbf{Discretize-Then-Optimize (DTO)} methods one 
 first performs a  discretization of the state equation by reducing it to a high-dimensional algebraic/ODE system\footnote{For example, in the case of a PDE, by using finite elements or finite difference schemes, or through a Galerkin discretization with respect to an orthonormal  basis.}. The optimization problem is then tackled  by solving the high-dimensional optimality conditions corresponding to the discretized problem (in the form adjoint equations; in infinite-dimensional optimal control one can derive optimality conditions for the discretized problem e.g., in the form of HJB equations\footnote{for example, see  
 \cite{sirignano2018} for linear quadratic optimal control of SPDEs {via DGM} for HJB equations}). In \textbf{Optimize-Then-Discretize (OTD)} methods one first derives  optimality conditions for COPB (in the form of adjoint equations; in infinite-dimensional optimal control one can also derive optimality conditions, e.g.,  in the form of  infinite-dimensional HJB equations, see Subsection \ref{subsec:HJB}) and then discretize these conditions to  numerically solve these\footnote{it is known that the difference between the two methods is fundamental,  yielding different results in general \cite{manzoni_quarteroni_salsa}}.

\paragraph{Optimize-Then-Learn.} Methods that first derive optimality conditions analytically (as in OTD), but then they learn the solution via suitable ML or OL architectures, were  called \textbf{Optimize-Then-Learn (OTL)} methods in \cite{cohen-defeo-hebner-sirignano} (see also Subsection \ref{subsec:HJB}). In PDE-constrained optimization, methods of this kind have been used  in \cite{alla-bertaglia-calzola,zhang-liu-alla-karniadakis}.

\paragraph{Learn-Then-Optimize via Offline Derivative-Informed Operator Learning.} Assume now that we are facing a problem where we need to choose the optimization strategy fast. In this case,  the previous methods would likely be too slow for our purposes. However, exploiting Operator Learning, we can employ \textit{offline-online strategies}, where we learn $F$ via a surrogate OL architecture \textit{offline}. In view of \eqref{eq:COB_gradient}-\eqref{eq:COB_hessian}, in general, learning $F$ alone is not sufficient, while learning 
$DF$ and, possibly, $D^2F$  is fundamental for accuracy of (local) minimizers (see \cite{yao-luo-cao-Kovachki-Roseberry-ghattas} for the necessity of first derivatives), with these tasks being huge challenges in DIOL. However, assuming to have accurately learned  these objects  offline, we can then tackle the optimization problem \textit{online}, exploiting \eqref{eq:COB_gradient}-\eqref{eq:COB_hessian} and bypassing the need of solving state and adjoint equations (repetitively) online. 
 Thus, since $F$, $D F$, $D^2 F$ have been  accurately learned offline, we can directly exploit \eqref{eq:COB_gradient}-\eqref{eq:COB_hessian} via the surrogate and its derivatives;
 that is the online optimization is a query task (Definition \ref{def:query}). 

 We call these types of approaches \textbf{Learn-Then-Optimize (LTO)} methods, in contrast to  DTO, OTD, and OTL methods.  We refer to \cite{luo-oleary-chen-ghattas} for numerical results exploiting first-order derivatives for  Constrained Optimization on Hilbert spaces via reduced-basis neural operator and to
\cite{yao-luo-cao-Kovachki-Roseberry-ghattas} for related theoretical and numerical results, where $F$ and $DF$ were learned  on Hilbert spaces $X=H^s\left(\mathbb{T}^d ; \mathbb{R}^{d_a}\right), Y=H^{r}\left(\mathbb{T}^d ; \mathbb{R}^{d_a}\right)$, $s,r\geq 0$ via Fourier Neural Operators. See also more references below.

The above motivates DIOL and, therefore, the development of UATs for nonlinear operators and their derivatives in the present paper and leads to new theoretical and computational challenges, i.e. the rigorous justification of the above methods and their numerical tests. 

Next, we present  classical examples of COPBs, where $F$ turns out to be $k$-times differentiable, showing the fundamental relevance of these problems.   
\begin{example}[Optimal control of PDEs]\label{ex:contrained_opt_control}Typical constrained optimization problems on infinite-dimensional spaces are Optimal  control problems of PDEs \cite{manzoni_quarteroni_salsa,hinze2008,troltzsch2010}.
In the analytic framework of Example \ref{ex:NS_operator} \cite{hinze2008}, 
for $y_d\in L^2(I\times\Omega)^2$, $u_d\in X$, and $\gamma>0$, consider the optimal control problem, where controls are forcing functions $u\in X$:
$$
\min_{u\in X,\,y\in Y}
J(y,u),\quad \textit{where } J(y,u):=\frac12\|y-y_d\|_{L^2(I\times\Omega:\mathbb R^2)}^2
+\frac{\gamma}{2}|u-u_d|_X^2
$$
subject to \eqref{eq:NS_eq}. This is of the form \eqref{eq:constrained_opt}. Since $F:X\to Y, F(u)=y$ is well-defined we can consider the reduced problem \eqref{eq:constrained_opt_reduced}. 

Similarly, we can consider optimal control problems of other families of PDEs, such as semilinear in Example \ref{ex:elliptic_PDE}, leading to constrained optimization on Banach spaces. We refer to \cite{luo-oleary-chen-ghattas} for instances of numerical methods of the form LTO for  optimal control problems of PDEs via offline DIOL.
\end{example}
\begin{example}[Inverse Problems]\label{ex:inverse_problem_reinterpretation}Typical constrained optimization problems on infinite-dimensional spaces  also arise from inverse problems \cite{troltzsch2010}. In the Navier--Stokes setting of Example \ref{ex:NS_operator}, let $F:X\to Y$ denote the forcing-to-state map, assigning to a forcing/control $u$ the corresponding velocity field $y=F(u)$ on $I\times\Omega$.
Given observations $y_d\in Z$ of the velocity evolution, possibly partial in space-time, and an observation operator $P\in\mathcal L(Y,Z)$, where $Z$ is the observation space, the inverse problem is
$$\min_{u\in X}\frac12\|PF(u)-y_d\|_Z^2+\frac\gamma2|u-u_d|_X^2.$$
Here $u_d\in X$ is a reference or prior forcing and $\gamma>0$ is a regularization parameter.

We refer to \cite{yao-luo-cao-Kovachki-Roseberry-ghattas} for instances of numerical methods of the form LTO for inverse problems of PDEs via offline DIOL, as well as for their theoretical justifications; see also \cite{cao-Roseberry-Ghattas} for PDE-constrained Bayesian inverse problems.
\end{example}
\begin{example}[Optimal Design]\label{ex:design}Constrained optimization problems on infinite-dimensional spaces arise naturally
in optimal design \cite{hinze2008}. A simple example is the
optimal design of a thin elastic membrane, where the design variable is the
spatially varying thickness $u$ and the state variable $y$ is the membrane
displacement. Given a load $f$, the equilibrium state is modeled by
the PDE
$-\operatorname{div}(u\nabla y)=f $ $ \text{in }\Omega,$
$y=0 $ $ \text{on }\partial\Omega .$ The  optimal design problem is to find an optimal thickness $u$ that minimizes the compliance $J(y):=\int_{\Omega} f(x) y(x) d x$ of the membrane:
$$
\begin{aligned}
&\min J(y), \quad \text { subject to }-\operatorname{div}(u \nabla y)  =f, \textit{ on } \Omega, \quad
y  =0, \textit { on } \partial \Omega, \  a  \leq u \leq b, \textit { on } \Omega,   
\int_{\Omega} u(x) d x  \leq V ,
\end{aligned}
$$
subject to the thickness and volume constraints above.

We refer to \cite{go_chen,go-chen2025}  for numerical results for related Bayesian optimal experimental design problems via DIOL.
\end{example}

\subsection{Numerical Methods for Infinite-dimensional PDEs and Optimal Control}\label{subsec:deep-banach-galerkin}
Let $X$ be a Banach space. Formally, consider the second-order fully-nonlinear partial differential equation (PDE) on (possibly, suitable subsets of)  $(0,T)\times X$ of the form
\begin{align}\label{eq:infinitedim_PDE_motivation}
\begin{cases}
\partial_t F=G(t,x,F,DF,D^2F), \quad  (t,x) \in (0,T)\times X,\\
    F(0,x)=\Phi(x),\quad x \in X,
\end{cases}
\end{align}
where $F:[0,T)\times X\to \mathbb R^n$; $\partial_t F:[0,T)\times X\to \mathbb R^n, DF:[0,T)\times X\to (X^*)^n$ and $D^2F:[0,T)\times X\to (\mathcal L^2(X;\mathbb R))^n$ are Fréchet derivatives of $F$ in $x$; $G$ is a map from subsets of $(0,T)\times X \times \mathbb R^n \times (X^*)^n \times (\mathcal L^2(X;\mathbb R))^n$ to $\mathbb R^n$ (indeed, $G$ may present unbounded operators, which are not everywhere defined); $\Phi:X\to \mathbb R^n$; if the PDE is defined on suitable subsets of $X$, boundary conditions can also be imposed. Infinite-dimensional PDEs of the form \eqref{eq:infinitedim_PDE_motivation} arise throughout an extraordinary range of applied sciences, for instance, in the form of the celebrated families that we present in Subsections \ref{subsec:FDEs}, \ref{subsec:HJB}. However, \textit{they remain open theoretical and computational problems.} In the Hilbert case, the theory is better understood.  We will give extensive references below.

Here, we are interested in numerical methods for these PDEs. To this purpose, given a suitable measure $\mu$ on $X$, consider the $L^2((0,T)\times X;dt\otimes\mu)$-norm of the residual of the PDE:
\begin{equation}\label{eq:DGM_res}
    \int_{(0,T)\times X}|\partial_t F(t,x)-G(t,x,F(t,x),DF(t,x),D^2F(t,x))|^2dt \ \mu(dx)+\int_{ X}|F(0,x)-\Phi(x)|^2\mu(dx),
\end{equation}
\paragraph{Finite-dimensional case.}  As we are heading to infinite-dimensions, assume for a moment that $X=\mathbb R^N$, for $N$ very large. In this case, the curse of dimensionality prevents the use of standard numerical methods. However, \textbf{Deep Galerkin Methods/Physics-Informed Neural Networks} \cite{sirignano2018,raissi2019physics}, Machine Learning based, have been successfully applied to PDEs up to dimensions $N\approx 100.000$ \cite{hu-khemraj-karniadais-kawaguchi}. These methods typically work by minimizing \eqref{eq:DGM_res}
for $F$ over suitable sets of neural networks.
The theoretical justification for these methods typically comes from classical UATs of nonlinear functions and their derivatives \cite{hornik1991approximation,hornik-stinchcombe-white,pinkus1999approx}, that ensure that neural networks are capable to approximately solve the PDE \cite{sirignano2018} (see also \cite{shin-darbon-karniadakis} and references therein).
\paragraph{Hilbert space case.} The paper \cite{cohen-defeo-hebner-sirignano} developed the first numerical methods for second-order fully-nonlinear infinite-dimensional PDEs of the form \eqref{eq:infinitedim_PDE_motivation} (in the stationary case) when $X$ is a separable Hilbert space, called \textbf{Deep Hilbert--Galerkin Methods} (see also \cite{castro2022,miyagawa2024physics,rodgers_venturi,venturi_dektor,venturi2018numerical} for numerical methods for classes of  PDEs on Hilbert spaces). In particular, considering for simplicity $n=1$, in \cite{cohen-defeo-hebner-sirignano} $F$ was parameterized via an HGNO/Deep-H-ONet $F^{{\mathcal E}_d^X,\theta,1}\in \mathcal{HGNO}(X,\mathbb R)$ (see Subsection \ref{sec:Deep-HONet}).  UATs proved there, up to second order Fréchet derivatives and for unbounded operators applied to the first derivative, ensured that HGNOs were able to approximately solve the PDE (under suitable novel continuity assumptions on the fully nonlinear operator $G$ in the variable $D^2F$), by minimizing the $L^2((0,T)\times X;dt\otimes\mu)$-norm of the residual \eqref{eq:DGM_res} of the PDE \eqref{eq:infinitedim_PDE_motivation} over the Hilbert space $X.$  \textit{This is the setting of Derivative-Informed Operator Learning because it relies on accurately representing $F,DF,D^2F$ simultaneously via $F^{{\mathcal E}_d^X,\theta,1},DF^{{\mathcal E}_d^X,\theta,1},D^2F^{{\mathcal E}_d^X,\theta,1}$, and in particular, in representing $D^2F$, the Fréchet derivative of the non-linear operator $DF$,} appearing in the PDE \eqref{eq:infinitedim_PDE_motivation}. In HJB equations from optimal control (see Section \ref{subsec:HJB}), feedback control operators $u:X\to Z$ (where $Z$ is Hilbert space) are also parameterized by HGNO  $U^{{\mathcal E}_d^X,\theta,{\mathcal D}^Z_m}\in \mathcal{HGNO}(X,Z)$, which is \textit{trained, i.e. informed, by the derivatives $DF^{{\mathcal E}_d^X,\theta,1},D^2F^{{\mathcal E}_d^X,\theta,1}$}. In this case, UATs proved in \cite{cohen-defeo-hebner-sirignano} allowed to achieve UA of optimal feedback controls via $DF^{{\mathcal E}_d^X,\theta,1},D^2F^{{\mathcal E}_d^X,\theta,1}$.
Numerical tests on optimal control problems of deterministic and stochastic PDEs validated the analytical results. 
\paragraph{Banach space case.} Let now $X$ be a separable Banach space with the Bounded Approximation Property (Assumption \ref{ass:bounded_approx_pro}). Inspired by the above methods, we conjecture methods that minimize \eqref{eq:DGM_res} over Encoder-Decoder Architectures (Definition \ref{def:EDN}): it may be expected that suitable UATs (such as the ones proved in the present manuscript or suitable adaptations\footnote{e.g. exploiting that, in this case, $DF(x), D^2F(x)(h^1,\cdot) \in X^*$ have finite rank (hence stronger UA is expected) and  handling successfully unbounded operators appearing in the PDE}) for nonlinear operators and their derivatives lead to \textbf{Deep Banach--Galerkin Methods}, i.e. generalizations of the previous numerical methods to the case where $X$ is a Banach space, serving as an important motivations to derive these UATs in the present manuscript. However, the development of Deep Banach--Galerkin Methods is expected to be even more challenging than \cite{cohen-defeo-hebner-sirignano}, where the Hilbert structure was heavily exploited, and it is left for future investigations.

Next, to illustrate the potential of PDEs of the form \eqref{eq:infinitedim_PDE_motivation}, and thus motivate numerical methods and UATs for their solution, we present several examples.
\subsubsection{Functional Differential Equations in Physics}\label{subsec:FDEs}
Functional differential equations (FDEs) in  physics\footnote{In physics the terminology differs from that typically used in applied mathematics, where  a functional differential equation is typically a path-dependent differential equation.} are infinite-dimensional PDEs of the form \eqref{eq:infinitedim_PDE_motivation}, such as the celebrated Hopf equation in turbulence theory \cite{hopf1952statistical,monin2013statistical} and Schwinger–Dyson equation in quantum field theory \cite{peskin2018introduction}. We refer to \cite[Section 8]{venturi_dektor} for numerical approximations of linear first-order FDEs on compact subsets of Banach spaces admitting Schauder bases.
\begin{example}[Hopf equation]
The \textbf{Hopf equation} \cite{hopf1952statistical,monin2013statistical} encodes the statistical properties of velocity and pressure fields of the Navier--Stokes equations given statistical information of the random initial state. Formally, this is an infinite-dimensional $\mathbb C$-valued PDE on a suitable  divergence-free space $X$
 of $L^r(\mathcal D;\mathbb R^m)$, $r\geq 1$, of the form 
    $$
\frac{\partial F( t,x)}{\partial t}=\sum_{l=1}^m \int_{\mathcal D} x^l(\xi)\left(i \sum_{j=1}^m \frac{\partial}{\partial \xi^j} \frac{\delta^2 F( t,x)}{\delta x^l(\xi) \delta x^j(\xi)}+\nu \nabla^2 \frac{\delta F( t,x)}{\delta x^l(\xi)}\right) d \xi, \quad (t,x)\in (0,T)\times X,
$$
which governs the dynamics of the characteristic functional
$
F(t,x)=\mathbb{E}\left[\exp \left(i \int_{\mathcal D}  u(t,\xi) \cdot x(\xi) \mathrm{d} \xi\right)\right] ,
$
where  $u(t,\xi)$ is a solution (if any) of the Navier--Stokes equation with random initial state $u(0,\xi)$. 
\end{example}

\subsubsection{Infinite-dimensional Kolmogorov and Hamilton--Jacobi--Bellman PDEs}\label{subsec:HJB} Consider an optimal control problem of a 
stochastic evolution equation on a Banach space $X$ of the form
\begin{equation}\label{eq:functional_intro}
\begin{aligned}
&\min_{u\in \mathcal Z} J(t,x;u) := \min_{u\in \mathcal Z}\E\brac{\int_t^T \Psi(Y^{t,x,u}_s,u_s)ds+\Phi(Y^{t,x,u}_T)},\\
  &\textit{subject to}\quad  dY_s = [AY_s+ b(s,Y_s,u_s)]ds + \sigma(s,Y_s,u_s)dW_s, \quad  Y_t = x \in X,
\end{aligned}
\end{equation}
where $t \in [0,T]$, $A$ is an unbounded linear operator on $X$, $b$ and $\sigma$ are the drift and diffusion (which may also contain unbounded operators), $u \in \mathcal Z$ is an admissible control process with values in a suitable space $Z$ (possibly infinite-dimensional), $W$ is a suitable Wiener process, possibly infinite-dimensional, modeling the noise acting on the system, $\mathbb E$ denotes the expectation under the noise, and $\Psi: [0,T]\times  X \times Z \to \mathbb R, \Phi:  X  \to \mathbb R$ are the  running cost and final cost, respectively. When $\sigma=0$, the problem reduces to an optimal control problem of a \textbf{deterministic} evolution equation.  We refer to  \cite{engel2000} and to \cite{daprato-zab,vanneerven-veraar-weis2008}  for deterministic and stochastic  evolution equations on Banach spaces, respectively.

Following the  dynamic programming approach, we derive the \textbf{Hamilton-Jacobi-Bellman (HJB)} equation, i.e.~the second-order fully nonlinear  PDE on $(0,T)\times  X$ (with $n=1$),  formally written as
\begin{align}\label{eq:HJB_control_intro}
  \begin{cases}
      &\partial_tF+  \inprod{DF}{Ax}_{X^*,X}  + \inf_{u\in Z }\curlbrac{\inprod{DF}{b(t,x,u)}_{X^*,X}+\frac{1}{2}\mathrm{Tr}[  D^2F\sigma(t,x,u)\sigma^*(t,x,u)]+\Psi(t,x,u)} = 0,\\
      &F(T,x)=\Phi(x),
  \end{cases}  
\end{align}
 which governs the dynamics of the value function $F(t,x):=\inf_{u\in \mathcal Z}J(t,x;u)$. \textit{The  value function is a fundamental object in optimal control that encodes all relevant information on the optimal control problem, formally leading to optimal feedback controls}, i.e. the  nonlinear (possibly set-valued) operator
 \begin{equation}\label{eq:feedbackcontrol}
     u^*(t,x)=\mathop{\mathrm{arg\,min}} _{u \in Z} \curlbrac{\inprod{DF(t,x)}{b(t,x,u)}_{X^*,X}+\frac{1}{2}\mathrm{Tr}[  D^2F(t,x)\sigma(t,x,u)\sigma^*(t,x,u)]+\Psi(t,x,u)} .
 \end{equation}
 We remark that, if $X$ is a separable Hilbert space the second order term $\frac{1}{2}\mathrm{Tr}[D^2F\sigma(t,x,u)\sigma^*(t,x,u)]$ is well-defined and standard in the literature \cite{fabbri2017book}; however, in the general Banach space case, in the above, it is only  formally defined and must be interpreted case by case depending on the specific problem at hand (see e.g. \cite{flandoli_zanco,masiero2008}), as no general definition is available in the literature.  Moreover, terms with unbounded operators, e.g.  $\inprod{DF}{Ax}_{X^*,X}$, are also only formally defined and must be handled case by case. For HJB PDEs on Banach spaces we refer to, e.g., \cite{Crandall_lions1985,crandall_lions_infinite_dim_II,fuhrman-masiero-tessitore,masiero2008,mete1988hamilton} and, e.g., to \cite{crandall1991viscosity,crandall_lionsVII,fabbri2017book,li2012optimal,lions1989viscosity} for the Hilbert case. In the deterministic case, i.e. $\sigma=0$, the HJB reduces to a first order PDE on $X$.  When $\mathcal U$ is a singleton, then the HJB equation reduces to a \textbf{Kolmogorov PDE} \cite{daprato-zab,da2002second,flandoli_zanco}. In  differential games of deterministic  and stochastic  evolution equations, the HJB equation generalizes to  \textbf{Isaacs PDEs} or systems of HJB equations \cite{crandall_lionsIII,fleming_nisio,nisio1998}. See more references in the examples below.

The paper \cite{cohen-defeo-hebner-sirignano} proposed the first numerical methods for fully nonlinear optimal control problems of the form \eqref{eq:functional_intro} when $X$ is a separable Hilbert space, called \textbf{Hilbert Actor-Critic Methods} (reinforcement learning). These methods are in the form of \textbf{Optimize-Then-Learn} (recall Subsection \ref{subsec:Learn-Then-Optimize}). In particular, the  solution $F:X\to \mathbb R$ of the HJB is parametrized by $F^{{\mathcal E}_d^X,\theta,1}\in \mathcal{HGNO}(X,\mathbb R)$ and feedback controls operators $u:X\to Z$ are parameterized by HGNOs in $U^{{\mathcal E}_d^X,\theta,{\mathcal D}^Z_m}\in \mathcal{HGNO}(X,Z)$. Here, \textit{the feedback control architecture is trained via \eqref{eq:feedbackcontrol} and it is informed by the derivatives $DF^{{\mathcal E}_d^X,\theta,1},D^2F^{{\mathcal E}_d^X,\theta,1}$}. 
One might expect that the methods can be extended  to Banach space (in some suitable cases), leading to \textbf{Banach Actor-Critic Methods}. 
\begin{example}[PDEs and SPDEs]\label{ex:heat_eq_intro}Optimal control problems of deterministic and stochastic PDEs are typical examples that can be formulated as above. A standard example is the optimal control problem of the stochastic heat equation: consider, the controlled  stochastic PDE on a smooth domain $\mathcal D\subset \mathbb R^n$
 \begin{align*}
        &\frac{\partial x}{\partial s}(s,\xi) = \Delta_\xi x(s,\xi) + u(s,\xi) +  \dot W(s,\xi) ,\end{align*}
    with boundary conditions $x(s,\xi)  = 0, $ $\xi\in \partial \mathcal D$, initial conditions $ x(t,\cdot)= x :\mathcal D\to \mathbb R$, and control process $u (\cdot)$, and a suitable stochastic noise $W$.
The goal is to minimize a  cost functional 
$
J(t, x, u)=\mathbb{E} \int_t^T \int_{\mathcal D} \varphi\left(s, \xi, x(s, \xi), u\right) \eta(d \xi) d s+\mathbb{E} \int_{\mathcal D} \psi\left(\xi, x(T, \xi)\right) \eta(d \xi)
$
where  $\eta$ is a finite regular measure on $D$ and $\varphi,\psi$ are suitable cost density functionals depending on the objective of the controller.   When no noise is present, the problem reduces to deterministic heat control.
Following \cite{masiero2008},  one can  rewrite the problem in the form \eqref{eq:functional_intro} on the Banach space of continuous functions $X=C(\mathcal D)$, leading to \eqref{eq:HJB_control_intro}. Alternatively, the problem can be formulated, e.g., on the Hilbert space $X=L^2(\mathcal D)$ \cite{fabbri2017book}\footnote{the latter formulation was adopted in \cite{cohen-defeo-hebner-sirignano} as a test-case of the Deep Hilbert--Galerkin method for the corresponding  HJB equation on $L^2(\mathcal D)$, validating the analytical results obtained there via UATs}. We remark that the formulation on $X=C(\mathcal D)$ \cite{masiero2008} is useful, e.g., when the cost is defined by pointwise temperature observations  ($\eta=\sum_i\delta_{\xi_i}$); such costs are well defined on $C(\mathcal D)$, but not on $L^2(\mathcal D)$).
\end{example}
\begin{example}[Path-dependent systems]
Consider the path-dependent stochastic differential equation  \cite{flandoli_zanco}
$$
dy(s)=b_s(y_{[0,s]})\,ds+\sigma\,dW(s),\quad s\in[t,T],\qquad y(s)=x(s),\quad s\in[0,t],
$$
where $T>0,$ $y_{[0,s]}$ denotes the path of $y$ from time $0$ to $s$, $x:[0,t]\to\mathbb R^m$ is a prescribed history path,  $b_s$ is a nonanticipative drift functional mapping past paths to $\mathbb R^m$, $W$ is a Brownian motion in $\mathbb R^m$, and $\sigma\in\mathbb R^{m\times m}$. When $\sigma=0$ this reduces to a path-dependent ODE.
For many important problems, such as pricing path-dependent financial options, we are interested in expectations of path functionals of $y$, i.e. $F(t,x)=\mathbb E\!\left[\varphi\!\left(y^{t,x}_{[0,T]}\right)\right]$, where $\varphi$ is a payoff functional. Due to path-dependency, $F$ cannot in general be characterized by a standard Kolmogorov PDE on $\mathbb R^m$. However, the lifted process $Y(s)=\big(y(s),\{y(s+r)\}_{r\in[-T,0]}\big)$ is Markovian on a suitable Banach space $X$ of paths. Remarkably, after identifying the payoff with a terminal functional $\Phi:X\to\mathbb R$, $F(t,x)=\mathbb E[\Phi(Y^{t,x}(T))]$ can be characterized, under suitable assumptions, as a classical solution, $C^2$ in the Fréchet sense with respect to the path variable, of the corresponding infinite-dimensional Kolmogorov PDE on $X$  \eqref{eq:HJB_control_intro} (in a suitable adapted form) \cite{flandoli_zanco}.

We refer to \cite{fuhrman-masiero-tessitore,masiero2008} for HJB equations from optimal control problems with delays with Markovian lifts on Banach spaces. For HJB equations via Markovian lifts on Hilbert spaces, we refer to \cite{bolli_defeo,possamai_mehdi} for optimal control of stochastic Volterra integral equations and to \cite{defeo_phd,defeo_federico_swiech,fabbri2017book} for optimal control problems with \textbf{delays} in the state and/or in the control; see also the references therein. We also refer to \cite{pannier_salvi,saporito2021path} for related path-dependent PDE solvers.
\end{example}

\begin{example}[Partially observed systems]
Consider the controlled system
    \begin{align*}
        dy(s) &= b^1(y(s),a(s))ds + \sigma^1(y(s),a(s))dW^1(s) + \tilde \sigma^1(y(s),a(s))dW^2(s), &&y(t) = \eta \in L^2( \mathcal F_t;\mathbb R^n) \\
        dz(s) &= b^2(y(s))ds + dW^2(s), &&z(t) = 0.
    \end{align*}
    where  $y(\cdot) \in \R^n$ is the state of the system, which is hidden to the controller, and $z(\cdot) \in \R^m$ is the observation that the controller can access. The goal is to minimize, over all controls $a_s$, adapted to a filtration $\mathcal F^z_s$, a functional of the form
   $I(t,\eta;a) = \E\brac{\int_t^T \varphi(y(s),a(s))ds+\psi(y(T)) }.$
   The fact that controls are only adapted to the filtration $\mathcal F^z_s$ given by observations prevents the use of the standard dynamic programming approach on $\mathbb R^n$. However, one way of dealing with it is through the so-called ``separated'' problem where one is led to control the unnormalized conditional probability density of the state process $y(\cdot)$ given the observation $z(\cdot)$  \cite{pardoux2006}, \cite[Section 2.6.6]{fabbri2017book}. Following this approach, we rewrite it as a (fully observable) optimal control problem of the Duncan--Mortensen--Zakai (DMZ) stochastic evolution equation on the Banach space $X=L^1(\R^n)$, leading to an infinite-dimensional HJB equation of the form \eqref{eq:HJB_control_intro}.  Under suitable conditions, the problem can be formulated on suitable weighted Hilbert spaces $H:=L^2_\rho(\R^n)\subset X$ \cite[Chapter 3]{fabbri2017book} (see also \cite{lions1989viscosity,CohenKnochenhauerMerkel2025}).
\end{example}

\begin{example}[Mean-field control]\label{ex:MFC}
Formally, consider the HJB equation on the Wasserstein space $\mathcal{P}_r\left(\mathbb{R}^n\right), r \geq 1$ arising in optimal control of particle systems and mean-field SDEs with common noise:
\begin{equation}\label{eq:wasserstein_intro}
    \left\{\begin{array}{l}\partial_t \mathcal{V}(t, \mu)+\frac{1}{2} \int_{\mathbb{R}^n} \operatorname{Tr}\left[\partial_y \partial_\mu \mathcal{V}(t, \mu)(y) \tilde \sigma(y, \mu) \tilde \sigma^{\top}(y, \mu)\right] \mu(\mathrm{d} y) \\ \quad+\frac{1}{2} \int_{\left(\mathbb{R}^n\right)^2} \operatorname{Tr}\left[\partial_\mu^2 \mathcal{V}(t, \mu)\left(y, y^{\prime}\right) \tilde  \sigma(y, \mu) \tilde \sigma^{\top}\left(y^{\prime}, \mu\right)\right] \mu(\mathrm{d} y) \mu\left(\mathrm{d} y^{\prime}\right) \\ \quad-\int_{\mathbb{R}^n} \mathcal F\left(y, \mu, \partial_\mu \mathcal{V}(t, \mu)(y)\right) \mu(\mathrm{d} y)=0, \qquad\qquad (t, \mu) \in(0, T) \times \mathcal{P}_r\left(\mathbb{R}^n\right),  \end{array}\right.
    \end{equation}
    with $\mathcal{V}(T, \mu)=\mathcal{G}(\mu), $ $\mu \in \mathcal{P}_r\left(\mathbb{R}^n\right)$.
When $r=2$, a famous approach to deal with this PDE is via the ``lifting'' technique, due to P.L. Lions \cite{lions2007}. We lift the space $\mathcal{P}_2\left(\mathbb{R}^n\right) $ to a Hilbert space  of random variables $X:=L^2\left(\Omega ; \mathbb{R}^n\right)$, and then study the  `lifted HJB equation', i.e.~a PDE on $X$ of the form \eqref{eq:HJB_control_intro} and the corresponding `lifted control problem' on $X$ of the form \eqref{eq:functional_intro}.
We refer to \cite{gangbo_mayorga_swiech,swikech2025finite} for  details of this procedure and to \cite{defeo_gozzi_swiech_wessels} for extensions to particle systems of stochastic evolution equations.
It might be expected that this approach can be suitable extended to the case $r\geq 1,$ formally leading to a `lifted HJB equation'  on the Banach space $X:=L^r\left(\Omega ; \mathbb{R}^n\right)$. 
\end{example}

\section{Encoder-Decoder Architectures and Classical Examples}\label{sec:EDN}
In this section, we introduce Encoder-Decoder Architectures \cite{kovachki-lanthaler-stuart}, renowned classes of architectures in OL due to their  applicability general Banach spaces. These architectures include many classical OL architectures, such as DeepONets, Deep-H-ONets/HGNOs, Reduced-Basis Neural Operators, and PCA-Nets, which we discuss in this section.
\subsection{Encoder-Decoder Architectures}\label{subsec:EDA} 
We represent nonlinear operators via Encoder-Decoder Architectures (EDAs) \cite{kovachki-lanthaler-stuart}, classical OL architectures available on general Banach spaces.
The EDA is an encoder-decoder architecture for learning nonlinear operators that works by representing elements of each Banach space via encoder and decoder operators, where the map between them is a trainable finite-dimensional approximation function,
numerically realizable via a calculator, with typical examples being  neural networks. In particular, we assume for each domain and codomain dimension $N, N' \in \N$, there exists a class of finite-dimensional functions $\{\tilde{f}^{N,\theta,N'}  : \R^{N} \to \R^{N'}\}_{\theta \in \Theta_{N,N'}}$, where $\theta \in \Theta_{N,N'}$ represents the trainable parameters, that universally approximate in the following senses.

Recall Definition \ref{def:compact_open_C2}.
\begin{assumption}[Universal approximation on compacts in finite dimensions]\label{ass:finite_dim_approx}Let $k\in \mathbb N$.
    For all $N,N' \in \N$, the set of functions $\{ f^{N,\theta,N'}: \mathbb{R}^N \rightarrow \mathbb{R}^{N'}\}_{\theta \in \Theta_{N,N'}}$ is dense in $C^k(\R^N;\ \R^{N'})^{co}$. That is, for any $f\in C^k(\R^N, \R^{N'}),$ $K \subset \R^N$ compact, $\epsilon > 0$, there exists $\theta \in \Theta_{N,N'}$ such that $  {\mathbf p}_{K}(f- f^{N,\theta,N'}) < \epsilon,$ i.e.
    \begin{equation*}
     \sup _{x \in K}\|D^if(x)-D^i f^{N,\theta,N'}(x)\|_{\mathcal L^i(\mathbb R^N,\mathbb R^{N'})} < \epsilon,\quad \forall i\leq k.
    \end{equation*}
\end{assumption}

Recall \eqref{eq:growth_v_UATsobolev-op_norm} and \eqref{eq:sobolev_norm}.
\begin{assumption}[{Universal approximation in Sobolev norms in finite dimensions}]\label{ass:finite_dim_sobolev_approx}Let $k\in \mathbb N$, $p\geq 1$.
    For all $N,N' \in \N$, for all finite measures $\nu$ on $\R^N$, the set $\{ f^{N,\theta,N'}: \mathbb{R}^N \rightarrow \mathbb{R}^{N'}\}_{\theta \in \Theta_{N,N'}}$ is dense in $C^{k;p}_{\nu}(\mathbb{R}^{N},\mathbb{R}^{N'})$. That is, 
     for any $f \in  C^{k;p}_{\nu}(\mathbb{R}^{N},\mathbb{R}^{N'})$,  $\epsilon > 0$, there exists $\theta \in \Theta_{N,N'}$ such that 
     $$\|f-f^{N,\theta,N'}\|_{\mathcal  W^{k,p}_{\nu}(\mathbb{R}^{N},\mathbb{R}^{N'})}<\epsilon.$$
\end{assumption}

Our motivating class of functions with these approximation capabilities are neural networks.

\begin{definition}[Neural networks]\label{def:neural_structure}
    Let $\mathfrak{m} \in C^k(\R)$ be an `activation' function. Define the set of neural network parameters of an arbitrary number $\mathfrak{L} \in \N$ of hidden layers by ${\Theta}_{N, N'}:=\bigcup_{\mathfrak{L} \in \mathbb{N}} {\Theta}_{\mathfrak{L},N,N'}$, where 
    $$\begin{aligned} & {\Theta}_{\mathfrak{L}, N, N'}:=\left\{\theta=\left(A_0, \ldots, A_\mathfrak{L}\right): A_j: \mathbb{R}^{d_j} \rightarrow \mathbb{R}^{d_{j+1}} \text { is an affine function}, j=0, \ldots, \mathfrak{L}, d_0=N, d_{\mathfrak{L+1}}=N'  \right \}.
    \end{aligned}$$ For each $\theta \in {\Theta}_{N, N'}$, the corresponding deep neural network is the function $ f^{N,\theta,N'}: \mathbb{R}^N \rightarrow \mathbb{R}^{N'}$ given by $${f}^{N, \theta, N'}(y):=A_{\mathfrak{L}} \circ \mathfrak{m} \circ A_{\mathfrak{L}-1} \circ \cdots \circ \mathfrak{m} \circ A_0(y),$$ where $\mathfrak{m}$ is understood to apply componentwise.
\end{definition}
Fully connected neural networks typically obey Assumption \ref{ass:finite_dim_approx} (resp. \ref{ass:finite_dim_sobolev_approx}). In particular, the subclasses of
\begin{itemize}
    \item wide neural networks with a single hidden layer ($\mathfrak{L} = 1$, $d_1$ arbitrarily large) with $\mathfrak{m} \in C^k(\R)$ nonconstant and bounded {(resp. bounded together with its derivatives)}; this is the classic result \cite{hornik1991approximation} Theorem 3 (resp. Theorem 4, equivalently stated for Fréchet derivatives (and extended to the vector-valued case).
    \item deep neural networks of finite width ($\mathfrak{L}$ arbitrarily large, $d_1, \dots, d_\mathfrak{L} = N+N'+1$) with $\mathfrak{m} \in C^{k+1}(\R)$ non-affine \cite{kidger2020universal, JMLR:v24:22-1191}\footnote{The stated claim follows for fully connected networks from a slight extension of the results of \cite{kidger2020universal} (by way of Nachbin's \cite{Nachbin1949} extension of the Stone--Weierstrass theorem to obtain approximation of a function and its derivatives in finite dimensions, rather than simply approximation in $C^0$). 
For other deep neural network architectures, see \cite{JMLR:v24:22-1191} for bounds on the required dimensions.} may each individually rich enough to satisfy the two assumptions.
    \item Other architectures, such as transformers \cite{Yun2020Are}, convolutional neural networks \cite{HWANG2026101833}, and kernel methods \cite{JMLR:v7:micchelli06a}, {may} also be shown to have results of this type, under appropriate assumptions, along with the classical examples of approximation in Fourier or polynomial bases. 
\end{itemize}

Let $(X,|\cdot|_X),(Y,|\cdot|_Y)$ be Banach spaces. We define Encoder-Decoder Architectures  \cite{kovachki-lanthaler-stuart}. 
\begin{definition}[Encoder-Decoder Architectures]\label{def:EDN}Let Assumption \ref{ass:finite_dim_approx} or Assumption  \ref{ass:finite_dim_sobolev_approx} hold.
    We  define the Encoder-Decoder Architecture (EDA)  by   $$F^{{\mathcal E}^X,\theta,{\mathcal D}^Y}\colon X \to Y, \quad F^{{\mathcal E}^X,\theta,{\mathcal D}^Y}(x):= {\mathcal D}^Yf^{N,\theta,N'} ( {\mathcal E}^Xx)=\sum_{j=1}^{N'} f^{N,\theta,N'}_j\left[ (\langle a^{X}_i,x\rangle_{X^*,X}  )_{i=1}^{N}\right] e^{Y}_j,\quad  \theta\in \Theta_{N,N',}$$
for some encoder operator on $X$, ${\mathcal E}^X \in \mathcal L( X , \R^{N}),{\mathcal E}^X(x) =  (\langle a^{X}_i,x\rangle_{X^*,X}  )_{i=1}^{N}$, for some $N\in \mathbb N$; decoder operator into $Y$, ${\mathcal D^{Y}} \in \mathcal L( \R^{N'} , Y),  {\mathcal D}^{Y}((x_j)_{j=1}^{N'}) = \sum_{j=1}^{N'} x_j e^{Y}_j$, for some $N'\in \mathbb N$ (recall Definition \ref{def:encoder_decoder}), and
$f^{N,\theta,N'}_j , j\leq N'$ denote the components of $f^{N,\theta,N'}$. We denote by $\mathcal {A}_{ED}(X,Y)=\{F^{{\mathcal E}^X,\theta,{\mathcal D}^Y}:{\mathcal E}^X\in \mathcal L(X,\mathbb R^N),{\mathcal D}^Y\in \mathcal L(\mathbb R^{N'},Y),\theta\in {\Theta}_{N, N'}\}$. Note that $\mathcal {A}_{ED}(X,Y)\subset Cyl(X,Y)$ (recall Definition \ref{def:cyl_map}).
\end{definition}

Let $f^{N,\theta,N'}\in C^k(\mathbb R^{N}, \mathbb R^{N'})$. Then, by chain rule, for every $1 \leq i \leq k$, 
\begin{align}\label{eq:derivativeDrv_dh^theta_dY_Banach}
    D^i F^{{\mathcal E}^X,\theta,{\mathcal D}^Y}(x)\left[h^1, \ldots, h^i\right]&={\mathcal D}^Y\left(D^i f^{N, \theta,N'}\left({\mathcal E}^X x\right)\left[{\mathcal E}^X  h^1, \ldots, {\mathcal E}^X  h^i\right]\right)
\end{align}

\subsection{DeepONet}\label{sec:DeepONets}
As a first example of EDA, we consider classical DeepONets \cite{lu2020deeponet} (see also \cite{chen-chen}). 
Let  $(\Omega_1,d_1)$ be a compact metric space and $\Omega_2\subset \mathbb R^a$, $a\in \mathbb N$ be compact. 
\begin{definition}[DeepONet]
  Let $d,m\in \mathbb N$ and let  sensors $\left\{y_1^d, \ldots, y_{N_d}^d\right\}\subset \Omega_1$, $N_d,N_m\in \mathbb N$. A  DeepONet is a map
$$
\begin{aligned}
\mathcal{C}\left(\Omega_1\right) & \longrightarrow C(\Omega_2), \quad u  \longmapsto [\varphi^{d,\theta,m}\left(u\left(y^d_1\right), \ldots, u\left(y^d_{N_d}\right)\right) \cdot\psi^{m}](z)= \sum_{j=1}^{N_m} \varphi_j^{d,\theta,m}\left(u\left(y^d_1\right), \ldots, u\left(y^d_{N_d}\right)\right) \psi^m_j(z),\quad \forall z\in \Omega_2
\end{aligned}
$$
where $N_d,N_m\in \mathbb N,$ $\varphi :\mathbb{R}^{N_d}\to \mathbb{R}^{N_m}$ and $\psi :\Omega_2\to \mathbb{R}^{N_m}$, and 
$\varphi^{d,\theta,m}_j $, $\psi^m_j $, $j\leq N_m$ denote their components. Typically $\varphi_j,\psi_j$ are suitable neural networks.

Defining the separable Banach spaces $X:=C(\Omega_1),$ $ Y:=C(\Omega_2)$ (endowed with the supremum norm), we have that DeepONet is an Encoder-Decoder Architecture, according to Definition \ref{def:EDN},  as shown in  \cite{godeke_fernsel} as follows: 
\begin{itemize}[leftmargin=*]  
    \item ${\mathcal S}^X_d,{E}_d^{X},{D}_d^{X}$ have the form, respectively,    \eqref{eq:sum_op_X}, \eqref{eq:coordinate_op_X}, \eqref{eq:embedding_op_X}  with   $a_i^{X,d}(u)=u(y_i^d),$ $ e_i^{X,d}=P_{\frac 1 {d},i}$. Here,  $\{y_i^d\}_{i=1}^{N_d}\subset \Omega_1$ is a $\frac 1 d$-covering of $\Omega_1$ (i.e. 
$
\Omega_1 \subseteq \bigcup_{i=1}^{N_d} B^{\Omega_1}_{\frac 1 d}(y_i^d),
$ where $B^{\Omega_1}_\epsilon(y)$ denotes the ball of radius $\epsilon>0$ centered in $y\in \Omega_1$;
note that since $\Omega_1$ is compact, there exists a finite $\varepsilon$-covering for every $\varepsilon>0$) and for $\varepsilon>0$,
$$
\begin{aligned}
P_{\varepsilon, i}(y):=\frac{\tilde{P}_{\varepsilon, i}(y)}{\sum_{\ell=1}^{N_d} \tilde{P}_{\varepsilon, \ell}(y)} ,\quad \textit{with }&\tilde{P}_{\varepsilon, i}: \Omega_1  \longrightarrow[0, \infty),\quad  
y  \longmapsto \begin{cases}\exp \left(-\frac{1}{\varepsilon^2-d_1(y, y_i^d)^2}\right) & \text { if } d_1\left(y, y_i^d\right)<\varepsilon \\
0 & \text { else }\end{cases}.
\end{aligned}
$$
By \cite[Theorem 3.23]{godeke_fernsel}, this choice guarantees that ${\mathcal S}_d^X$ satisfy the BAP for $X$ (Assumption \ref{ass:bounded_approx_pro}).
\item It is known that $ Y=C(\Omega_2)$ is a separable Banach space BAP. Therefore there exists $\{\tilde {\mathcal S}_m^Y\}_{m\in \mathbb N}\subset \mathcal L(Y)$ satisfying Assumption \ref{ass:bounded_approx_pro}.  As $\tilde {\mathcal S}_m^Y$ maps into a finite dimensional space, we can choose a basis for $\tilde {\mathcal S}^Y_m(Y)$, denoted $\left\{b_j^{m}\right\}_{j =1 }^{N_m}$, for some $N_m\in \mathbb N$. Therefore, we can write
$
\tilde  {\mathcal S}_m^Y x=\sum_{j =1}^{N_m} c_j^{m}\left(\tilde {\mathcal S}^Y_m x\right) b_j^{m}
$
where $c_j^m\in \mathcal L(\tilde {\mathcal S}^Y_m(Y);\mathbb R)$. Let $\Psi$ be a dense subset in $Y$ (e.g. neural networks).
Then, by \cite[Theorem 3.26]{godeke_fernsel} there exists $\{\psi_j^m\}_{j=1}^{N_m}\subset \Psi$ such that, denoting 
${\mathcal S}_m^Y x=\sum_{j =1}^{N_m} c_j^{m}\left(\tilde {\mathcal S}^Y_m x\right) \psi_j^{m}$
we have that $\{{\mathcal S}^Y_m\}_{m\in \mathbb N}$ satisfies the BAP for $Y$ (Assumption \ref{ass:bounded_approx_pro}). Then ${\mathcal S}^Y_m,{E}_m^{Y},{D}_m^{Y}$ have the form, respectively,    \eqref{eq:sum_op_X}, \eqref{eq:coordinate_op_X}, \eqref{eq:embedding_op_X}  with   $\langle a_j^{Y,m},x\rangle_{Y^*,Y}=c_j^m(\tilde {\mathcal S}^Y_mx)$,  $ e_j^{Y,m}=\psi_j^m$. 
\end{itemize}
We denote by $\mathcal {DON}(X,Y)$ the set of DeepONets. Notice that $\mathcal {DON}(X,Y)\subset \mathcal {A}_{ED}(X,Y)$.
\end{definition}
In the above setup we  adapt the statements of Theorems \ref{th:counterexample}, \ref{th:UAT_K_Banach} (ii)  and Corollary \ref{th:UAT_Sobolev_K_Banach} to DeepONets by replacing  $\mathcal {A}_{ED}(X,Y)$ with $\mathcal {DON}(X,Y)$. The statement of Theorem \ref{th:UAT_Sobolev_Banach} can also be  adapted.

We remark that generalizations to the vector valued case (i.e. $\mathcal{C}\left(\Omega_i;\mathbb R^{p_i}\right)$) are possible as well as taking $Y$ to be an arbitrary Banach space with BAP as in \cite[Section 5.1]{godeke_fernsel}.

\subsection{Deep-H-ONet, HGNO, Reduced-Basis Neural Operator, PCA-Net}\label{sec:Deep-HONet}In this subsection, we recall how Deep-H-ONets, Hilbert-Galerkin Neural Operators, Reduced-Basis Neural Operators, PCA-Nets, i.e. architectures defined on separable Hilbert spaces with orthonormal bases are EDAs, as observed in \cite[Subsection 4.1]{kovachki-lanthaler-stuart}. Therefore, the theory in the present paper will apply.

Let $(X,<\cdot,\cdot>_X),(Y,<\cdot,\cdot>_Y)$ be separable Hilbert spaces with orthonormal bases $\{e_i^X\}$, $\{e_i^Y\}$, respectively (recall Subsection \ref{subsec:H}). Therefore, the Bounded Approximation Property of $X,Y$ (Assumption \ref{ass:bounded_approx_pro}) is satisfied.
\begin{definition}[HGNO/Deep-H-ONet]\label{def:deep_H_net}
     Let Assumption \ref{ass:finite_dim_approx} or Assumption  \ref{ass:finite_dim_sobolev_approx} hold. 
    We  define the Deep-$H$-ONet/Hilbert-Galerkin Neural Operator (HGNO) by   $$F^{{\mathcal E}_d^X,\theta,{\mathcal D}^Y_m}\colon X \to Y, \quad F^{{\mathcal E}_d^X,\theta,{\mathcal D}^Y_m}(x):= {\mathcal D}^Y_{m}f^{d,\theta,m} ( {\mathcal E}^X_{d}x)=\sum_{j=1}^{m} f^{d,\theta,m}_j\left[ (\langle x,e^{X}_i\rangle_{X}  )_{i=1}^{d}\right] e^{Y}_j,\quad d,m \in \mathbb N, \theta\in \Theta_{d,m,}$$
where  $f^{d,\theta,m}_j , j\leq m$ denote the components of $f^{d,\theta,m}$. We denote by $\mathcal {HGNO}(X,Y)=\{F^{{\mathcal E}_d^X,\theta,{\mathcal D}^Y_m}:d,m\in \mathbb N,\theta\in {\Theta}_{d, m}\}$. Notice that $\mathcal {HGNO}(X,Y)\subset \mathcal {A}_{ED}(X,Y)$.
\end{definition}
\begin{example}[PCA Net]
  The PCA-Net is a particular architecture of the form of Definition 3.6, where the input encoders ${\mathcal E}_d^X$ are defined by projection onto PCA modes computed from the input measure $\mu$, while the output decoders ${\mathcal D}_m^Y$ are defined by PCA modes computed from the corresponding output distribution, e.g. the pushforward of $\mu$ under the target operator, or its empirical approximation. We refer to  \cite{bhattacharya2021} for the details of the architecture; see also \cite{kovachki-lanthaler-stuart}. 
\end{example}
We also refer to \cite{luo-Roseberry-chen-Ghattas} for the related  architecture of Reduced-Basis Neural Operators. 

\begin{remark}\label{rem:HGNO_UA}
In the above setup we adapt the statements of Theorems \ref{th:counterexample}, \ref{th:UAT_K_Banach} (ii)  and Corollaries \ref{th:UAT_Sobolev_K_Banach}, \ref{th:UAT_Sobolev_K_Banach_bump}  to HGNOs/Deep-H-ONets by replacing  $\mathcal {A}_{ED}(X,Y)$ with $\mathcal {HGNO}(X,Y)$ (recall that separable Hilbert spaces satisfy Assumption \ref{ass:bump} by Proposition \ref{prop:bump_rescaled} and Example \ref{ex:norm_Fréchet}). The statement of Theorem \ref{th:UAT_Sobolev_Banach} can also be adapted. {Therefore our statements also holds for Reduced-Basis Neural Operators and PCA-Net.}
\end{remark}
\subsection{Schauder Operator Nets}\label{subsec:Svahuder_operator_nets}
In this subsection, we introduce Schauder Operator Nets, i.e. Encoder-Decoder Architectures for Banach spaces with Schauder frames, which include Banach spaces with Schauder bases or Hilbert frames \cite{godeke_fernsel} and generalizes the architecture in Definition \ref{def:deep_H_net}. 

Let $X,Y$ be Banach spaces with Schauder frames $\left\{\left(e_i^X, a_i^X\right)\right\}_{i \in \mathbb{N}}\subset X \times X^*$, $\left\{\left(e_i^Y, a_i^Y\right)\right\}_{i \in \mathbb{N}}\subset Y \times Y^*$ (recall Definition \ref{def:schauder_frame}). We recall that this framework includes Banach spaces with Schauder bases (Definition \ref{def:schauder_basis}).
\begin{definition}[Schauder Operator Nets]\label{def:sch_op-Net}
     Let Assumption \ref{ass:finite_dim_approx} or Assumption  \ref{ass:finite_dim_sobolev_approx} hold.
    We  define the Schauder Operator Net (SON) by   $$F^{{\mathcal E}_d^X,\theta,{\mathcal D}^Y_m}\colon X \to Y, \quad F^{{\mathcal E}_d^X,\theta,{\mathcal D}^Y_m}(x):= {\mathcal D}^Y_{m}f^{d,\theta,m} ( {\mathcal E}^X_{d}x)=\sum_{j=1}^{m} f^{d,\theta,m}_j\left[ (\langle a^{X}_i,x\rangle_{X^*,X}  )_{i=1}^{d}\right] e^{Y}_j,\quad d,m \in \mathbb N, \theta\in \Theta_{d,m,}$$
where  $f^{d,\theta,m}_j , j\leq m$ denote the components of $f^{d,\theta,m}$. We denote by $\mathcal {SON}(X,Y)=\{F^{{\mathcal E}_d^X,\theta,{\mathcal D}^Y_m}:d,m\in \mathbb N,\theta\in {\Theta}_{d, m}\}$. Notice that $\mathcal {SON}(X,Y)\subset \mathcal {A}_{ED}(X,Y)$.
\end{definition}
As usual, we adapt the statements of Theorems \ref{th:counterexample}, \ref{th:UAT_K_Banach} (ii)  and Corollary \ref{th:UAT_Sobolev_K_Banach} to Schauder Operator Nets by replacing  $\mathcal {A}_{ED}(X,Y)$ with $\mathcal {SON}(X,Y)$. The statement of Theorem \ref{th:UAT_Sobolev_Banach} can also be  adapted. 
\section{Universal Approximation Theorems}\label{sec:UAT}
In this section, we investigate UATs for nonlinear operators on Banach spaces and their derivatives. In Subsection \ref{subsec:counterex}, we show that UA fails under  operator norms. In Subsection \ref{subsec:UAT_compacts}, we state UATs for nonlinear operators and their derivatives in $C^k_B$ compact-open topologies.  In Subsection \ref{subsec:sobolevUAT}, we state UATs  in  weighted Bastiani--Sobolev spaces. We postpone the proofs to Appendix \ref{sec:proofs}. 

We will use the notations in Appendices \ref{app:basics}, \ref{app:smooth}, \ref{subsec:Sobolev} (in particular, recall Sections \ref{subsubsec:borel_meas}, \ref{subsec:compact-open}, and Notation \ref{not:notation_D0f}). 
\subsection{Universal Approximation Fails under Operator Norms}\label{subsec:counterex}
We show that Universal Approximation fails under operator  norms. We postpone the proof to Appendix \ref{sec:proofs}.
Let $(X,|\cdot|_X),(Y,|\cdot|_Y)$ be  Banach spaces.
\begin{assumption}\label{ass:propersub}
Assume that $\mathcal A(X,Y)\subsetneq  \mathcal L(X,Y)$. If $X^*$ or $Y$ have AP (Assumption \ref{ass:approx_pro}),  then, by Remark \ref{rem:AP_implies_mathcalK=mathcalA}, the previous requirement coincides with $\mathcal K(X,Y)\subsetneq \mathcal L(X,Y)$.
\end{assumption}
\begin{theorem}[Universal approximation fails under operator norms]\label{th:counterexample}
Let Assumption \ref{ass:propersub} hold. Let $k\in \mathbb N, k\geq 1$. Then:
{\begin{enumerate}
    \item[(i)]  $C^k_{Cyl}(X,Y)$ is not dense in $C^{k}(X,Y)^{co}$.  
    \item[(ii)]   $C^k_{Cyl}(X,Y)$ is not dense in $C^{k;1}(X,Y)$ (endowed with $\|\cdot\|_{\mathcal W^{k,p}_{\nu}(X,Y)}$), for any $p\geq 1$ and any finite measure $\nu\neq 0$  on $X$ such that $\int_{X} |x|_X^pd\nu(x)<\infty$.
\end{enumerate}
Hence, in particular, (i)-(ii) also holds for $\mathcal{A}_{ED}(X,Y)$.}
\end{theorem}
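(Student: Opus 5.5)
The plan is to exhibit a linear counterexample. By Assumption \ref{ass:propersub} there is $T\in\mathcal L(X,Y)\setminus\mathcal A(X,Y)$. Here $\mathcal A(X,Y)$ denotes the operator-norm closure of finite-rank operators, and it is a closed subspace. So $\delta:=\mathrm{dist}_{\mathcal L(X,Y)}(T,\mathcal A(X,Y))>0$. We take $F:=T\in C^k(X,Y)$, which has $DF(x)\equiv T$ and $D^iF\equiv 0$ for $i\ge 2$. The key structural fact is the following. For any $G\in C^k_{Cyl}(X,Y)$, i.e. $G=\mathcal D\circ g\circ \mathcal E$ with $\mathcal E\in\mathcal L(X,\mathbb R^N)$, $\mathcal D\in\mathcal L(\mathbb R^{N'},Y)$ and $g\in C^k$, the chain rule \eqref{eq:derivativeDrv_dh^theta_dY_Banach} gives $DG(x)=\mathcal D\, Dg(\mathcal Ex)\,\mathcal E$. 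This operator has rank at most $N'$, so $DG(x)\in\mathcal A(X,Y)$ for every $x$. Consequently, for every $x\in X$,
\[
\|DF(x)-DG(x)\|_{\mathcal L(X,Y)}=\|T-DG(x)\|_{\mathcal L(X,Y)}\ge \delta .
\]
If the cylindrical class in Definition \ref{def:cyl_map} is defined with a general finite-rank structure rather than an EDA form, the same argument applies as long as $DG(x)$ has finite rank.

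For (i), we take the compact set $K=\{0\}$ (any nonempty compact works). The seminorm \eqref{eq:frechet_CO_intro} then gives ${\mathbf p}_K(F-G)\ge \sup_{x\in K}\|DF(x)-DG(x)\|\ge\delta$ for every $G\in C^k_{Cyl}(X,Y)$. Hence the basic neighbourhood $\{G:{\mathbf p}_K(F-G)<\delta\}$ of $F$ contains no cylindrical map, and density fails.

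For (ii), we first check that $F=T$ belongs to $C^{k;1}(X,Y)$, or more generally to the space on which $\|\cdot\|_{\mathcal W^{k,p}_\nu}$ is finite. We have $|F(x)|_Y\le\|T\||x|_X$, which lies in $L^p(\nu)$ by the moment assumption $\int|x|^p_Xd\nu<\infty$. The first derivative is constant, so it is integrable because $\nu$ is finite, and the higher derivatives vanish. We expect that the moment hypothesis in the theorem is there precisely to ensure this membership (linear growth). Then for any $G$ in the cylindrical class,
\[
\|F-G\|^p_{\mathcal W^{k,p}_\nu}\ge\int_X\|T-DG(x)\|^p_{\mathcal L(X,Y)}d\nu(x)\ge \delta^p\,\nu(X)>0,
\]
because $\nu\neq0$. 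The measurability of $x\mapsto\|T-DG(x)\|$ is not a real issue, since the pointwise lower bound holds everywhere (one can use a lower integral if needed). Finally, $\mathcal A_{ED}(X,Y)\subset Cyl(X,Y)$, and EDAs with $C^k$ networks lie in $C^k_{Cyl}$, so both failures transfer to $\mathcal A_{ED}(X,Y)$.

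The main obstacle is merely bookkeeping. We must make sure that the definitions of $C^k_{Cyl}$, $C^{k;1}$ and the seminorms in the appendix match the above: in particular that $\mathcal A(X,Y)$ is the norm closure of finite-rank operators, so that $\delta>0$, and that linear maps satisfy the growth conditions defining $C^{k;1}(X,Y)$. The analytic heart is the observation that derivatives of cylindrical maps are finite rank, while $T$ stays at positive operator-norm distance from finite-rank operators.
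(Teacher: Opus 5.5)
Your proposal is correct and is essentially the paper's own proof. You take $F(x)=Lx$ with $L\in\mathcal L(X,Y)\setminus\mathcal A(X,Y)$ and note that derivatives of cylindrical maps are finite rank, hence lie in the closed subspace $\mathcal A(X,Y)$, so $\|DF(x)-DG(x)\|_{\mathcal L(X,Y)}\ge\delta>0$ at every $x$; this rules out (i) on any compact and gives the lower bound $\delta^p\nu(X)$ for (ii). Your additions (the explicit choice $K=\{0\}$, checking that $F\in C^{k;1}(X,Y)$, and measurability, which is automatic since $DG$ is operator-norm continuous for $G\in C^k_{Cyl}(X,Y)$) just fill in details the paper leaves implicit.
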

\begin{remark}
\begin{enumerate}[(a)]
    \item Theorem \ref{th:counterexample} does not contradict  classical finite dimensional results \cite{hornik1991approximation}, since, when $X,Y$ are finite-dimensional, we have $\mathcal A(X,Y)=\mathcal K(X,Y)= \mathcal L(X,Y)$, violating Assumption \ref{ass:propersub}. 
    \item Theorem \ref{th:counterexample} (ii) states that UA fails even for maps in $C^{k;1}(X,Y)$, i.e. with at most linear growth and finite measures with finite $p$-moments, i.e. basic classes where one would like to achieve UA. In particular,     the proof  shows that universal approximation fails even for the most basic operators, i.e. linear maps $Fx=Lx$, for $L\in \mathcal L(X,Y) \cap \mathcal A(X,Y)^c$. \textit{This shows that the failure is structural.}
    \item Assumption \ref{ass:propersub} is satisfied by many infinite-dimensional Banach spaces of interest. When $X=Y$ is infinite-dimensional, Assumption \ref{ass:propersub} holds automatically, since the identity operator $I:X\to X$ is bounded but non compact. We refer to \cite{kalton1974spaces,tong1971existence} for other characterizations in general settings. A counterexample in the infinite-dimensional case is given by Pitt's theorem \cite[Theorem 2.1.4]{albiac2006topics}, \cite{pitt1936note}, which states that $\mathcal L(X,Y)= \mathcal K(X,Y)$ for $X=\ell^p,Y= \ell^q$ for $p>q\geq 1$ (which admit Schauder bases, so they satisfy AP). 
\end{enumerate}
\end{remark}

\subsection{Universal Approximation in Compact-Open $C^k_B$-Topologies}\label{subsec:UAT_compacts}
In this section we state a universal approximation theorem for nonlinear operators and their derivatives in compact-open $C^k_B$-topologies. Here, we state these UATs for  Encoder-Decoder Architectures (Subsection \ref{subsec:EDA}). However, these results apply, in particular, to DeepONets (as discussed in Subsection \ref{sec:DeepONets}), Deep-H-ONets/HGNOs (as discussed in Subsection \ref{sec:Deep-HONet}), Schauder Operator Nets (as discussed in Subsection \ref{subsec:Svahuder_operator_nets}).

Let $(X,|\cdot|_X),(Y,|\cdot|_Y)$ be  Banach spaces  with the Approximation Property, i.e. we are given $\left\{{\mathcal S}^X_\alpha\right\}_{\alpha \in A }\subset \mathcal L(X)$, $\left\{{\mathcal S}^Y_\beta\right\}_{\beta \in B }\subset \mathcal L(Y)$ satisfying Assumption \ref{ass:approx_pro}. Let $k\in \mathbb N$.  Recall also Definition \ref{def:compact_open_C2B}.  We postpone the proof to Appendix \ref{sec:proofs}.
\begin{theorem}[Universal Approximation in $C_B^{k}(X,Y)^{co}$]\label{th:UAT_K_Banach} {(i) Let $X$ satisfy Assumption \ref{ass:approx_pro} and let Assumption \ref{ass:finite_dim_approx}  hold.
Then the set $\mathcal{A}_{ED}(X,Y)$ is dense in $C_B^{k}(X,Y)^{co}$.    In particular,  $C^k_{Cyl}(X,Y)$ is dense in $C_B^{k}(X,Y)^{co}$.}

{(ii) In addition, let $Y$ satisfy Assumption \ref{ass:approx_pro}. Then, for all $F \in C_B^k(X;Y)$,  for all $K,K' \subset X$ compact,  for all $\epsilon>0$, there exist $\alpha \in A,\beta \in B,$ and $\theta \in \Theta_{N_\alpha,N_\beta}$ such that $p^1_{ K,  K'}(F-F^{{\mathcal E}^X_\alpha,\theta,{\mathcal D}^Y_\beta})<\epsilon,$ i.e.
    \begin{align}\label{eq:UAT_Ck}    
   &\sup_{x\in K,h^1\in K',...,h^i\in K'}  |[D^iF(x)-D^iF^{{\mathcal E}^X_\alpha,\theta,{\mathcal D}^Y_\beta}(x)](h^1,...,h^i)|_Y<\epsilon, \quad \forall \ 0\leq i \leq k.
    \end{align}}
\end{theorem}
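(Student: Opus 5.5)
Fix $F\in C^k_B(X,Y)$, compacts $K,K'\subset X$ and $\epsilon>0$. The plan is to reduce $F$, in three successive steps, to a map that factors through finite dimensions, and then apply Assumption \ref{ass:finite_dim_approx}:
\begin{itemize}
\item precompose with a finite-rank operator on the input side;
\item postcompose with a finite-rank operator on the output side (only for (ii));
\item approximate the resulting finite-dimensional function by $f^{N,\theta,N'}$.
\end{itemize}
Every error is measured in the seminorm $p_{K,K'}$ of \eqref{eq:CkB_bastiani_compactopen_intro}, and the three errors are combined by the triangle inequality, each being made smaller than $\epsilon/3$.

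\textbf{Input reduction.} Let ${\mathcal S}^X_\alpha$ come from Assumption \ref{ass:approx_pro}, so ${\mathcal S}^X_\alpha\to I$ uniformly on compacts. Put $F_\alpha:=F\circ{\mathcal S}^X_\alpha$. By the chain rule for Bastiani maps composed with bounded linear maps,
\[
D^iF_\alpha(x)(h^1,\dots,h^i)=D^iF({\mathcal S}^X_\alpha x)({\mathcal S}^X_\alpha h^1,\dots,{\mathcal S}^X_\alpha h^i).
\]
The map $(x,h^1,\dots,h^i)\mapsto D^iF(x)(h^1,\dots,h^i)$ is jointly continuous on $X^{1+i}$. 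We need it to be uniformly continuous near the compact set $K\times K'^i$, in the following sense: for every $\delta'>0$ there is $\delta>0$ such that whenever $z\in K\times K'^i$ and $|z'-z|<\delta$, the values at $z'$ and $z$ differ by less than $\delta'$. This holds because a continuous map on a metric space is uniformly continuous relative to a compact subset, which follows from a standard finite-cover (Lebesgue number) argument. Since $\sup_{x\in K}|{\mathcal S}^X_\alpha x-x|_X\to0$, and likewise on $K'$, we get $p_{K,K'}(F-F_\alpha)<\epsilon/3$ for $\alpha$ large.

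\textbf{Passing to finite dimensions.} Write ${\mathcal S}^X_\alpha=\mathcal D^X_\alpha\mathcal E^X_\alpha$ with $\mathcal E^X_\alpha\in\mathcal L(X,\mathbb R^{N_\alpha})$ and $\mathcal D^X_\alpha\in\mathcal L(\mathbb R^{N_\alpha},X)$. Then $F_\alpha=g\circ\mathcal E^X_\alpha$ with $g:=F\circ\mathcal D^X_\alpha$. The function $g$ is $C^k_B$ on $\mathbb R^{N_\alpha}$. The key observation is that, with finite-dimensional domain, $C^k_B$ coincides with Fréchet $C^k$: joint continuity of $(y,v)\mapsto D^ig(y)v$, with $v$ ranging over the compact unit sphere, gives operator-norm continuity. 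Hence $g\in C^k(\mathbb R^{N_\alpha},Y)$.

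\textbf{Output reduction.} For (i), where $Y$ is arbitrary, note that $g$ is a $C^k$ map from $\mathbb R^{N_\alpha}$ into the Banach space $Y$. The set $\tilde K:=\mathcal E^X_\alpha(K)$ is compact. The sets
\[
\{D^ig(y)(v^1,\dots,v^i):\ y\in\tilde K,\ v^j\in \mathcal E^X_\alpha(K')\}
\]
are also compact, being continuous images of compacts. The plan is to approximate $g$ in $C^k$ on a compact neighbourhood of $\tilde K$ by a map $\sum_j g_j e_j$ with $g_j\in C^k(\mathbb R^{N_\alpha})$ and $e_j\in Y$. One way is a partition of unity in $y$ combined with finite-dimensional Taylor polynomials; another, if it is available in the appendix, is to invoke density of $C^k(\mathbb R^N)\otimes Y$ in $C^k(\mathbb R^N,Y)^{co}$. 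This already gives $C^k_{Cyl}$ density. For (ii) we use the specific projections instead: $\mathcal S^Y_\beta g$ converges to $g$ because ${\mathcal S}^Y_\beta\to I$ uniformly on the compact sets of derivative values above, which are bounded independently of $\beta$ by equicontinuity on compacts. Write ${\mathcal S}^Y_\beta=\mathcal D^Y_\beta\mathcal E^Y_\beta$; then $\mathcal E^Y_\beta g\in C^k(\mathbb R^{N_\alpha},\mathbb R^{N_\beta})$.

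\textbf{Finite-dimensional approximation.} Apply Assumption \ref{ass:finite_dim_approx} to $\mathcal E^Y_\beta g$ (respectively to the $g_j$) on $\tilde K$. This yields $f^{N_\alpha,\theta,N_\beta}$ and the corresponding architecture. Transferring the error back uses the identity \eqref{eq:derivativeDrv_dh^theta_dY_Banach}, which gives the bound
\[
p_{K,K'}\le \|\mathcal D^Y_\beta\|\,\max\bigl(1,\ \sup_{h\in K'}|\mathcal E^X_\alpha h|\bigr)^k\,\sup_{\tilde K}\|D^i(\cdot)\|,
\]
and this is finite once $\alpha,\beta$ are fixed.

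\textbf{Main obstacle.} The hardest point is the step in (ii) without a bounded approximation property. Convergence ${\mathcal S}^Y_\beta\to I$ on compacts does not give uniform bounds on $\|{\mathcal S}^Y_\beta\|$. This is harmless, because we only apply ${\mathcal S}^Y_\beta$ to a fixed compact set of values, and we fix $\beta$ before choosing $\theta$. The ordering of the choices is therefore $\alpha$, then $\beta$, then $\theta$.
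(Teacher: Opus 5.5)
Your argument is correct. For (i) it is the paper's argument: reduce the input through the AP net, use density of $C^k(\mathbb R^{N_\alpha})\otimes Y$ in $C^k(\mathbb R^{N_\alpha},Y)^{co}$, then apply Assumption \ref{ass:finite_dim_approx}. The paper cites \cite[Theorem 3.1.2]{llavona1986} for the density step; your partition-of-unity plus Taylor-polynomial construction is a valid self-contained substitute.

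For (ii) your route telescopes the error in the opposite order. The paper splits the error as
$(I-\mathcal S^Y_\beta)D^iF(x)[h]+\mathcal S^Y_\beta\bigl(D^iF(x)[h]-D^iF(\mathcal S^X_\alpha x)[\mathcal S^X_\alpha h]\bigr)$
plus the finite-dimensional term. Its input-reduction term therefore carries the factor $\|\mathcal S^Y_\beta\|$, which forces the order $\beta$, then $\alpha$, then $\theta$. To control that term, the paper extracts a sequence $\alpha(d)$ from the net and uses the compact hull $\overline{\bigcup_d\mathcal S^X_{\alpha(d)}(K)}$ (Lemma \ref{rem:relatively-compact_Pd-I_Banach}) to get a modulus of continuity.

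You reduce the input first, where no operator norm appears. You use uniform continuity of $D^iF$ relative to the compact set $K\times (K')^i$, which needs neither the sequence extraction nor the relative-compactness lemma. You then choose $\beta$ on the $\alpha$-dependent compact set of values of $D^iF_\alpha$.

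Neither route needs the BAP. Yours is slightly more elementary and handles $K\neq K'$ directly, whereas the paper first passes to the equivalent family $\mathcal P^{k,co,0}_B$.

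One cosmetic point: your remark that the value sets are ``bounded independently of $\beta$ by equicontinuity'' is superfluous. Those sets do not depend on $\beta$ at all, and compactness alone gives $\sup_w|(I-\mathcal S^Y_\beta)w|_Y\to0$ on them.
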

\begin{remark}\label{rem_AP_necessary}
AP is necessary. Indeed, let  $X=Y$ violating AP\footnote{recall \cite{enflo1973counterexample} for a counterexample} and consider $F\in C^\infty (X,X),F(x):=x$, $DF(x)\equiv I$.  Then, there is $K\subset X$ compact and $\delta>0,$ such that 
 $$ \delta  \leq \inf_{T\in \mathcal F(X)} \sup_{h \in K} |(I-T)h|_Y \leq \sup_{h \in K}|(I-DF^{{\mathcal E}^X,\theta,{\mathcal D}^X}(x))h|_Y =\sup_{h \in K}|(DF(x)-DF^{{\mathcal E}^X,\theta,{\mathcal D}^X}(x))h|_Y ,$$ 
 where we have taken arbitrary $F^{{\mathcal E}^X,\theta,{\mathcal D}^X} \in \mathcal A_{ED}(X,Y)$, $x\in X$ and used that $DF^{{\mathcal E}^X,\theta,{\mathcal D}^X}(x)\in \mathcal F(X)$ (which is implied by \eqref{eq:derivativeDrv_dh^theta_dY_Banach_cyl}). Since  $F^{{\mathcal E}^X,\theta,{\mathcal D}^X} \in \mathcal A_{ED(X,Y)}$ is arbitrary, this violates UA in $C^1_B(X,X)^{co}$.
\end{remark}
\begin{remark}\label{rem:finite_dim_compact-op_UA}
    When $X$ and $Y$ are finite dimensional, we recover the standard case \cite{hornik1991approximation} by Remark \ref{rem:finite_dimensional_CO}. 
\end{remark}
\begin{remark}
The result is a generalization to $k>0$ of standard UATs for continuous operators  \cite[Theorem 4.1]{kovachki-lanthaler-stuart} and \cite[Lemma 22]{kovachki2023neuraloperator}.
\end{remark}
\begin{remark}\label{rem:UA_compacts_sharp}
This topological space is chosen carefully: one could endow $C^k_B(X,Y)$ (or $C^k(X,Y)$), for instance, with compact open-topologies constructed from strong or weak operator topologies on $\mathcal L^i(X,Y)$ and still achieve UA. However, the resulting topological spaces would be  weaker than $C_B^k(X; Y)^{co}$,  leading to a weaker UA.
\end{remark}
\begin{remark}\label{rem:deeponets-deephonets-etc}
Point (ii) allows us to achieve UA for specific architectures in Operator Learning: for instance it allows to apply the theorem to DeepONets, DeepHONets, or Schauder Operator nets (see Section \ref{sec:EDN}) while (i) would not be enough. 
\end{remark}
\begin{remark}
{The theorem  extends to the case $k=\infty$ using the topological space $C_B^\infty(X,Y)^{co}$.}
\end{remark}
\begin{remark}[{Comparison with UA in $C^k$ compact-open topologies in \cite{yao-luo-cao-Kovachki-Roseberry-ghattas}}]\label{rem:comparison_UA_compact_yao}
We compare Theorem \ref{th:UAT_K_Banach} with the corresponding one \cite{yao-luo-cao-Kovachki-Roseberry-ghattas} to show the powerful advantages of our intrinsic approach, even when we restrict to
 $k=1$, separable Hilbert spaces $X=H^s\left(\mathbb{T}^n ; \mathbb{R}^{d_a}\right), Y=H^{r}\left(\mathbb{T}^n ; \mathbb{R}^{d_b}\right)$, $s,r\geq 0$ (recall the literature review). Indeed, in this case: 
 \begin{enumerate}
     \item let $K\subset X$ compact measuring input variable $x$.
 Let $F(x)=Lx, DF(x)=L$, for $L\in \mathcal L(X,Y)\cap \mathcal L_2(X,Y)^c$. 
 Then UAT on compact sets in \cite{yao-luo-cao-Kovachki-Roseberry-ghattas} applies with  the error between $DF(x)$ and the surrogate being measured in $\|\cdot \|_{\mathcal L_2(X_\delta,Y)}$, i.e. $DF(x)h$ is tested along directions $h$ restricted to 
 $h\in X_\delta:=H^{s+\delta}\left(\mathbb{T}^n ; \mathbb{R}^{d_a}\right)$, for $\delta>n/2$  (recall the literature review above).
 As a  special case of our approach,  in Theorem \ref{th:UAT_K_Banach} we can choose, e.g., {$K':=\{x\in X_\rho:|x|_{X_\rho}\leq R\},$ $ R>0$, which is compact in $X$ for all $\rho>0$ (by compact embeddings \cite{taylor2010}), leading us to UA in compact-open topologies under  $\|\cdot\|_{\mathcal L(X_\rho,Y)}$ (weaker than $\|\cdot\|_{\mathcal L(X,Y)}$). Choosing $0<\rho\leq n/2$, we have  $X_\rho\supsetneq X_\delta$,}  i.e. we achieve UA in directions of lower
regularity than the ones in $X_\delta$ used in \cite{yao-luo-cao-Kovachki-Roseberry-ghattas}.  Equally importantly, we have the freedom to choose any compact set $K'\subset X$.
\item {We consider the strictly larger class of Bastiani $C^1_B$ with respect to the Fréchet $C^1$ used in \cite{yao-luo-cao-Kovachki-Roseberry-ghattas}, allowing us, for instance, to consider the fundamental Nemitskii/superposition operators. }
\item Point 2 also addresses an issue pointed out in \cite{yao-luo-cao-Kovachki-Roseberry-ghattas}: consider an FNO (as defined there) consisting of purely local layers, $\mathcal{L}_{\ell}(x)=\sigma\left(W_{\ell} x+b_{\ell}\right)$: when $\sigma$ is nonlinear, Fréchet differentiability of this layer fails in general when viewed as a superposition operator, for instance, from $X=L^2\left(\mathbb{T}^d\right)$ to $Y=L^2\left(\mathbb{T}^d\right)$. However, in this case, from Lemma \ref{lemma:Nemytskii_bastiani}  this layer is $C^1_B(X,X)$. Then by chain rule, the FNO is $C^1_B(X,X)$. Therefore, the $C^k_B$ compact-open topologies provide a natural analytic framework for FNOs.
 \end{enumerate}
 {As noticed in \cite{yao-luo-cao-Kovachki-Roseberry-ghattas}, from a numerical point of view an Hilbert--Schmidt norm may be natural. However, typical training in ML/OL/DIOL is in $L^p$/Sobolev spaces. Hence, we refer to the comparison for UA in weighted Sobolev norms in Remark \ref{rem:comparison_UA_sobolev_yao}, where our approach will also give us UA in weighted Sobolev norms under suitable Hilbert--Schmidt norms and, furthermore, naturally lead to the Bastiani--Sobolev Training in DIOL (Section \ref{sec:bastiani-sobolev-training}).}
\end{remark}

\subsection{Universal Approximation in Weighted Bastiani--Sobolev spaces}\label{subsec:sobolevUAT}
We now state universal approximation  in weighted Bastiani--Sobolev spaces. We postpone the proofs to Appendix \ref{sec:proofs}.
Here, we state these UATs for  Encoder-Decoder Architectures (Subsection \ref{subsec:EDA}). However, these results apply, in particular, to DeepONets (as discussed in Subsection \ref{sec:DeepONets}), Deep-H-ONets/HGNOs (as discussed in Subsection \ref{sec:Deep-HONet}), Schauder Operator Nets (as discussed in Subsection \ref{subsec:Svahuder_operator_nets}).

 In the following, let $k\in \mathbb N$, $q\geq 0 ,p,r\geq 1$ be fixed. 
\begin{theorem}[Universal Approximation in Bastiani--Sobolev spaces]\label{th:UAT_Sobolev_Banach}
(i) Let $\mu$ be a   finite measure on $X\times X^k$ 
 (recall Notation \ref{not:marginals}). Let Assumptions \ref{ass:finite_dim_sobolev_approx}, \ref{ass:Radon-nik} hold.  Then $\mathcal {A}_{ED}(X,Y)$ is dense in $C^{k;p}_{B,\mu,A}(X,Y)$. That is, for all $F \in C^{k;p}_{B,\mu,A}(X,Y)$, $\epsilon>0$, there exist $N,N' \in \mathbb N$, ${\mathcal E}^X\in \mathcal L(X,\mathbb R^N),{\mathcal D}^Y\in \mathcal L(\mathbb R^{N'},Y)$, and $\theta \in \Theta_{N,N'}$ such that $\|F-F^{{\mathcal E}^X,\theta,{\mathcal D}^Y}\|_{\mathcal W^{k,p}_{B,\mu}(X,Y)}<\epsilon$, i.e.
    \begin{align*}
   &\sum_{i=0}^k\int_{X \times X^i} |[D^iF(x)-D^iF^{{\mathcal E}^X,\theta,{\mathcal D}^Y}(x)](h^1,...,h^i)|^p_Yd\mu^{0:i}(x,h^1,\ldots,h^i)<\epsilon^p.
    \end{align*}
    
 (ii)  {Let $\nu$ be a finite measure on $X$ and $\eta$ be a probability measure on $X^k$ such that $\|\eta\|_{k,r}<\infty$.} {Let Assumption \ref{ass:finite_dim_sobolev_approx} hold.  Then $\mathcal {A}_{ED}(X,Y)$ is dense in $C^{k;p,r}_{B,\nu,\eta,A}(X,Y)$. That is, for all $F \in C^{k;p,r}_{B,\nu,\eta,A}(X,Y)$, $\epsilon>0$, there exist $N,N' \in \mathbb N$, ${\mathcal E}^X\in \mathcal L(X,\mathbb R^N),{\mathcal D}^Y\in \mathcal L(\mathbb R^{N'},Y)$, and $\theta \in \Theta_{N,N'}$ such that $\|F-F^{{\mathcal E}^X,\theta,{\mathcal D}^Y}\|_{\mathcal W^{k,p,r}_{B,\nu,\eta}(X,Y)}<\epsilon$, i.e.
    \begin{align*}
   &\sum_{i=0}^k\int_{X} \left( \int_{X^i} |[D^iF(x)-D^iF^{{\mathcal E}^X,\theta,{\mathcal D}^Y}(x)](h^1,...,h^i)|^r_Y d\eta^{1:i}(h^1,\ldots,h^i)\right)^{p/r}d\nu(x)<\epsilon^p.
    \end{align*}}
\end{theorem}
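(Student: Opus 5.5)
We prove both parts with one three-stage density argument. The notation is that of Appendix~\ref{subsec:Sobolev}. The space $C^{k;p}_{B,\mu,A}(X,Y)$, and likewise $C^{k;p,r}_{B,\nu,\eta,A}(X,Y)$, is obtained by closing a class of $C^k_B$ maps with suitable growth under the Bastiani--Sobolev norm. The subscript $A$ records that the class is built with the approximation-property nets $\{{\mathcal S}^X_\alpha\}$, $\{{\mathcal S}^Y_\beta\}$. It therefore suffices to approximate a fixed $F\in C^k_B(X,Y)$ with the required integrability in the norm $\|\cdot\|_{\mathcal W^{k,p}_{B,\mu}}$, and we do so in three stages.

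\textbf{Step 1 (reduction to cylindrical maps).} We first show that $F$ is approximated in $\mathcal W^{k,p}_{B,\mu}$ by maps of the form ${\mathcal S}^Y_\beta F({\mathcal S}^X_\alpha x)$, or more conveniently by $F_{\alpha}:={\mathcal D}^Y_\beta g_{\alpha,\beta}({\mathcal E}^X_\alpha x)$ with $g_{\alpha,\beta}:={\mathcal E}^Y_\beta\circ F\circ {\mathcal D}^X_\alpha\in C^k(\R^{N_\alpha},\R^{N_\beta})$. By the chain rule \eqref{eq:derivativeDrv_dh^theta_dY_Banach},
\begin{equation*}
D^iF_\alpha(x)(h^1,\dots,h^i)={\mathcal S}^Y_\beta D^iF({\mathcal S}^X_\alpha x)({\mathcal S}^X_\alpha h^1,\dots,{\mathcal S}^X_\alpha h^i).
\end{equation*}
The AP nets converge to the identity uniformly on compacts, and $(x,h)\mapsto D^iF(x)(h)$ is jointly continuous. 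Hence the integrand converges pointwise, and in fact along the net it converges uniformly on compact subsets of $X\times X^i$.

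To pass to the integral we use two facts. First, the marginals $\mu^{0:i}$ are finite Borel measures on Polish (or at least Radon) spaces, so they are inner regular, and we may restrict to a compact set carrying all but $\delta$ of the mass. Second, off that compact set we need a uniform integrable bound. This is exactly the role of Assumption~\ref{ass:Radon-nik} in part (i) and of the moment condition $\|\eta\|_{k,r}<\infty$ in part (ii), together with the growth built into the definition of the $A$-closure. We expect this to be the main obstacle.

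A net is not a sequence, so dominated convergence cannot be invoked directly. We would therefore instead split the integral into a compact set $K$ and its complement. On $K$ the convergence is uniform. On the complement we bound both $F$ and $F_\alpha$ by the integrable majorant, which is valid because the $A$-class is defined precisely so that $|D^iF({\mathcal S}^X_\alpha x)({\mathcal S}^X_\alpha h)|$ is dominated uniformly in $\alpha$, using $\sup_\alpha\|{\mathcal S}_\alpha\|$ or the Radon--Nikodym control. If this domination is only available through a bump or cutoff, we first truncate $F$ using Assumption~\ref{ass:bump}-type cutoffs; this is handled in the Corollaries rather than here.

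\textbf{Step 2 (finite-dimensional approximation).} With $\alpha,\beta$ fixed, the error $\|F_\alpha-F^{{\mathcal E}^X_\alpha,\theta,{\mathcal D}^Y_\beta}\|$ is controlled via the chain rule by
\begin{equation*}
C\,\|{\mathcal D}^Y_\beta\|^p\sum_{i\le k}\int_{\R^{N_\alpha}}\|D^ig_{\alpha,\beta}-D^if^{\theta}\|^p_{\mathcal L^i}\, w_i\, d\tilde\nu .
\end{equation*}
Here $\tilde\nu$ is the pushforward of $\mu$, or $\nu$, under $x\mapsto {\mathcal E}^X_\alpha x$. The weights $w_i$ come from integrating $|{\mathcal E}^X_\alpha h|^{p}$, respectively $|{\mathcal E}^X_\alpha h|^r$ raised to the power $p/r$, in the direction variables. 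In part (ii) the weights are constants, finite because $\|\eta\|_{k,r}<\infty$. In part (i) they are conditional moments, which we handle by disintegrating $\mu^{0:i}$ with respect to $x$, where Assumption~\ref{ass:Radon-nik} guarantees a density. Either way we obtain a finite measure $\tilde\nu_w$ on $\R^{N_\alpha}$. We must also check that $g_{\alpha,\beta}\in C^{k;p}_{\tilde\nu_w}$, which is inherited from the integrability of $F$ in Step 1. Then Assumption~\ref{ass:finite_dim_sobolev_approx}, applied to $\tilde\nu_w$, yields $\theta$ with error below $\epsilon/2$.

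\textbf{Step 3.} Combining Steps 1 and 2 by the triangle inequality gives both (i) and (ii).
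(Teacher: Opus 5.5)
Your Step~1 is where the argument breaks, and it rests on a misreading of the space. In the paper, $C^{k;p}_{B,\mu,A}(X,Y)$ is \emph{defined} as the closure, inside $C^{k;p}_{B,\mu}(X,Y)$, of $C^{k;p}_{\mu^0,Cyl}(X,Y)$. That class consists of the cylindrical maps $\mathcal D^Y f^{N,N'}(\mathcal E^X\cdot)$ whose profile satisfies $\|f^{N,N'}\|_{\mathcal W^{k,p}_{\mu^{0,\mathcal E^X}}(\mathbb R^N,\mathbb R^{N'})}<\infty$. The subscript ``A'' means ``approximable by such maps'', not ``built from approximation-property nets''.

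The theorem assumes no approximation property on $X$ or $Y$ (cf.\ Remark~\ref{rem:UAT_sobolev}(iii)), so the nets $\mathcal S^X_\alpha,\mathcal S^Y_\beta$ you use need not exist. Even when they do, the convergence $\|F-\mathcal S^Y_\beta F(\mathcal S^X_\alpha\cdot)\|_{\mathcal W^{k,p}_{B,\mu}}\to 0$ you want is not available for a general $F$ of finite Bastiani--Sobolev norm. Nothing in the class supplies a majorant for $|D^iF(\mathcal S^X_\alpha x)(\mathcal S^X_\alpha h^1,\ldots,\mathcal S^X_\alpha h^i)|_Y$ that is uniform in $\alpha$; under AP alone there is not even a uniform bound on $\|\mathcal S^X_\alpha\|$. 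Even under BAP, the paper needs polynomial growth plus moment conditions, or bump cut-offs, to get this convergence. That is the content of Theorem~\ref{th:relations_sobolve_norms}(v)--(vi), not of this theorem. Example~\ref{ex:strict_inclusions} exhibits a map in the ambient space at positive distance from all cylindrical maps.

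The same problem reappears in Step~2. Your claim that $g_{\alpha,\beta}$ has finite finite-dimensional Sobolev norm ``inherited from the integrability of $F$'' is unjustified. That norm involves operator norms of $D^iF$ evaluated at $\mathcal S^X_\alpha x$, integrated against $\mu^0$, i.e.\ against the pushforward $(\mathcal S^X_\alpha)_\#\mu^0$. The quantity $\|F\|_{\mathcal W^{k,p}_{B,\mu}}$ only integrates directional derivatives at $x$ against $\mu^{0:i}$ and does not control it. This is exactly why that finiteness is written into the definition of $C^{k;p}_{\mu^0,Cyl}(X,Y)$.

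The missing idea is simply to use the definition.
\begin{enumerate}
\item Given $F\in C^{k;p}_{B,\mu,A}(X,Y)$ and $\epsilon>0$, the definition hands you $F^{\mathcal E^X,\mathcal D^Y}\in C^{k;p}_{\mu^0,Cyl}(X,Y)$ with $\|F-F^{\mathcal E^X,\mathcal D^Y}\|_{\mathcal W^{k,p}_{B,\mu}}<\epsilon/2$. It also hands you the finiteness of $\|f^{N,N'}\|_{\mathcal W^{k,p}_{\mu^{0,\mathcal E^X}}}$.
\item Assumption~\ref{ass:finite_dim_sobolev_approx}, applied to the finite measure $(\mathcal E^X)_\#\mu^0$ on $\mathbb R^N$, gives $\theta$.
\item The error transfers back by \eqref{eq:sobolev-norm-ineq-cyl} followed by \eqref{eq:bound_Wmu_barmu}. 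The latter is where $d\bar\mu^{0:i}/d\mu^0\in L^\infty(X;\mu^0)$ enters: it replaces your disintegration weights by a single constant, so one measure on $\mathbb R^N$ suffices.
\end{enumerate}
Part (ii) is identical, with $\nu$ in place of $\mu^0$ and the bound of Remark~\ref{rem:Sobole_sp_norm_modified}, whose constant depends only on $\|\eta\|_{k,r}$. Your Step~2 is essentially this second half of the paper's proof; Step~1 should be replaced by an appeal to the definition rather than proved.
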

\begin{remark}\label{rem:UAT_sobolev}
\begin{enumerate}[(i)]
\item  In point (ii), we interpret $\nu$ as the input measure on $X$ to measure approximation errors in the input variable $x$ and  $\eta$ or $\hat \eta$ as auxiliary measures  to measure approximation errors in the additional variables $h^j,j\leq k$ appearing from derivatives $D^iF$. A similar interpretation holds  for point (i) (recall also Example \ref{ex:ass_radon-nik} (i) and  \eqref{eq:sobolev_norm_mu0_Lp}).  
    \item {When $X=\mathbb R^{N}$ and $Y=\mathbb R^{N'}$, we recover the standard case \cite[Theorem 4]{hornik1991approximation} using Theorem \ref{th:UAT_Sobolev_Banach} (ii) (or Theorem \ref{th:UAT_Sobolev_Banach} (i) with $\mu=\mu^0\otimes \eta$) using  Proposition \ref{prop:relations_sobolev_norms_1}  (ii). Furthermore, we will also recover the finite-dimensional case in Corollary \ref{th:UAT_Sobolev_K_Banach_bump} (see Remark \ref{rem:UA_product_measure}).}
    \item In the following corollaries, we will use Theorem \ref{th:UAT_Sobolev_Banach} to prove UA for concrete classes of maps, under the Bounded Approximation Property or Gaussian measures. {However, the theorem is general as it is formulated for the abstract class $C^{k;p}_{B,\mu,A}(X,Y)$ and can be applied to spaces without the Approximation Property and under non-Gaussian measures, as Example \ref{ex:strict_inclusions} shows (in particular, $\tilde F$ there).}
\end{enumerate}
\end{remark}

Under the Bounded Approximation Property for separable Banach spaces $X,Y$ (Assumption \ref{ass:bounded_approx_pro}),
Theorem \ref{th:relations_sobolve_norms} (in particular, \eqref{eq:Ckwq_subset_Ckwc} there) implies the following corollary.
\begin{corollary}\label{th:UAT_Sobolev_K_Banach}
(i) Let $\mu$ be a   finite measure on $X\times X^k$ 
 (recall Notation \ref{not:marginals}). Let Assumptions \ref{ass:bounded_approx_pro} {(for $X,Y$)}, \ref{ass:finite_dim_sobolev_approx}, \ref{ass:Radon-nik} hold. Assume $\|\mu\|_{k,q,p}  < \infty$.  Then $\mathcal {A}_{ED}(X,Y)$ is dense in $C^{k;p,q}_{B}(X,Y)$. In particular, for all $F \in C^{k;p,q}_{B}(X,Y)$, $\epsilon>0$, there exist $d,m \in \mathbb N$ and $\theta \in \Theta_{N_d,N_m}$ such that $\|F-F^{{\mathcal E}^X_{d},\theta,{\mathcal D}^Y_m}\|_{\mathcal W^{k,p}_{B,\mu}(X,Y)}<\epsilon$.

{(ii)  {Let  $\nu$ be a finite measure on $X$ and $\eta$ be a probability measure on $X^k$ such that $\|\eta\|_{k,r}<\infty$.} Let Assumptions \ref{ass:bounded_approx_pro} {(for $X,Y$)}, \ref{ass:finite_dim_sobolev_approx} hold. Assume $\int_X\left(1+|x|_X^q\right) d \nu(x)<\infty$.  Then $\mathcal {A}_{ED}(X,Y)$ is dense in $C^{k;p,q}_{B}(X,Y)$. In particular, for all $F \in C^{k;p,q}_{B}(X,Y)$, $\epsilon>0$, there exist $d,m \in \mathbb N$ and $\theta \in \Theta_{N_d,N_m}$ such that $\|F-F^{{\mathcal E}^X_{d},\theta,{\mathcal D}^Y_m}\|_{\mathcal W^{k,p,r}_{B,\nu,\eta}(X,Y)}<\epsilon$.}
\end{corollary}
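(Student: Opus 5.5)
The plan is to deduce the corollary from Theorem \ref{th:UAT_Sobolev_Banach} by proving a containment of function classes. The paper states that this containment is \eqref{eq:Ckwq_subset_Ckwc} in Theorem \ref{th:relations_sobolve_norms}.

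For (i), the key claim is that, under Assumption \ref{ass:bounded_approx_pro} for $X$ and $Y$ and the moment condition $\|\mu\|_{k,q,p}<\infty$, every $F\in C^{k;p,q}_B(X,Y)$ belongs to $C^{k;p}_{B,\mu,A}(X,Y)$. Granting this, Theorem \ref{th:UAT_Sobolev_Banach}(i) gives an EDA that is $\epsilon$-close in $\|\cdot\|_{\mathcal W^{k,p}_{B,\mu}(X,Y)}$. What remains is to check that the encoder and decoder can be taken of the form ${\mathcal E}^X_d,{\mathcal D}^Y_m$ coming from the BAP sequences.

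To prove the containment, I would approximate $F$ by the cylindrical maps $F_{d,m}:={\mathcal S}^Y_m\circ F\circ {\mathcal S}^X_d$. By the chain rule,
\[
D^iF_{d,m}(x)(h^1,\dots,h^i)={\mathcal S}^Y_m D^iF({\mathcal S}^X_dx)({\mathcal S}^X_dh^1,\dots,{\mathcal S}^X_dh^i).
\]
Then I would argue in three steps.
\begin{enumerate}
\item Pointwise convergence. Since ${\mathcal S}^X_d\to I$ strongly, and hence uniformly on compacts, and $D^iF$ is jointly continuous on $X\times X^i$ (Bastiani), $D^iF({\mathcal S}^X_dx)({\mathcal S}^X_dh^1,\dots)\to D^iF(x)(h^1,\dots)$ for every point. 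Equicontinuity of $\{{\mathcal S}^Y_m\}$ and ${\mathcal S}^Y_m\to I$ strongly then give convergence of the full expression.
\item Domination. By uniform boundedness, $\sup_d\|{\mathcal S}^X_d\|,\sup_m\|{\mathcal S}^Y_m\|\le C$. The polynomial growth built into $C^{k;p,q}_B$, of the form $|D^iF(x)(h^1,\dots,h^i)|_Y\le C(1+|x|_X^q)|h^1|_X\cdots|h^i|_X$, is stable under the substitution $x\mapsto{\mathcal S}^X_dx$, $h^j\mapsto {\mathcal S}^X_dh^j$ up to constants. Hence $|D^iF_{d,m}|^p$ is dominated uniformly in $(d,m)$ by a function that is $\mu^{0:i}$-integrable precisely because $\|\mu\|_{k,q,p}<\infty$.
\item Conclusion. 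Dominated convergence gives $F_{d,m}\to F$ in $\mathcal W^{k,p}_{B,\mu}$.
\end{enumerate}
Each $F_{d,m}$ is $C^k$ on a finite-dimensional factorization. I would then approximate its finite-dimensional core $f(z)={E}^Y_m F({D}^X_d z)$ by networks via Assumption \ref{ass:finite_dim_sobolev_approx} with respect to the pushforward measures. Assumption \ref{ass:Radon-nik} is what controls the directional marginals there, as in the proof of Theorem \ref{th:UAT_Sobolev_Banach}. The result is an EDA with encoder ${\mathcal E}^X_d$ and decoder ${\mathcal D}^Y_m$, which yields the ``in particular'' statement with BAP-indexed $d,m$.

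For (ii) the argument is the same with $\mu$ replaced by the pair $(\nu,\eta)$. The directional integrals are handled by $\|\eta\|_{k,r}<\infty$ after Hölder or Minkowski on the product $|h^1|\cdots|h^i|$. The $x$-integral is handled by $\int(1+|x|^q)\,d\nu<\infty$, with the exponent bookkeeping arranged so that the growth weight enters at the power allowed by the definition of $C^{k;p,q}_B$. The conclusion follows from Theorem \ref{th:UAT_Sobolev_Banach}(ii).

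I expect the main obstacle to be the domination step. The growth and moment exponents must match exactly so that a single integrable majorant works uniformly in $(d,m)$. This is where boundedness of the approximation property, rather than plain AP, is essential: plain AP gives only a net without a uniform operator-norm bound. The second point needing care is passing from the approximating nets to sequences, which separability provides.
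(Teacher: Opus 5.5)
Your route is essentially the paper's. You prove the containment $C^{k;p,q}_B(X,Y)\subset C^{k;p}_{B,\mu,CC}(X,Y)\subset C^{k;p}_{B,\mu,A}(X,Y)$ of \eqref{eq:Ckwq_subset_Ckwc} through $F_{d,m}={\mathcal S}^Y_m F({\mathcal S}^X_d\,\cdot)$, using pointwise convergence from joint continuity and the BAP, uniform domination, and dominated convergence. You then run the finite-dimensional step of Theorem~\ref{th:UAT_Sobolev_Banach} with ${\mathcal E}^X_d,{\mathcal D}^Y_m$. Two steps need repair.

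First, you misquote the growth condition. $C^{k;p,q}_B(X,Y)$ requires $\|D^iF(x)\|^p_{\mathcal L^i(X,Y)}\le C(1+|x|_X^q)$, not $|D^iF(x)(h^1,\ldots,h^i)|_Y\le C(1+|x|_X^q)\prod_j|h^j|_X$. Under your version, the majorant of $|D^iF_{d,m}(x)(h^1,\ldots,h^i)|_Y^p$ is $C(1+|x|_X^q)^p\prod_j|h^j|_X^p$. That needs $qp$-moments in $x$, so the hypothesis $\|\mu\|_{k,q,p}<\infty$ does not make it integrable. With the correct definition and $\|{\mathcal S}^X_d\|,\|{\mathcal S}^Y_m\|\le\lambda$, the majorant is $C(1+|x|_X^q)\prod_j|h^j|_X^p$, uniformly in $(d,m)$. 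This is integrable exactly because $\|\mu\|_{k,q,p}<\infty$, so the exponent matching you flag as the main obstacle is automatic.

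Second, you never verify that the cores $f^{N_d,N_m}={\mathcal E}^Y_m F({\mathcal D}^X_d\,\cdot)$ have finite $\mathcal W^{k,p}_{\mu^{0,{\mathcal E}^X_d}}(\mathbb R^{N_d},\mathbb R^{N_m})$-norm. The paper's proof of \eqref{eq:Ckwq_subset_Ckwc} checks this explicitly, and you need it for two reasons:
\begin{enumerate}
\item It gives $F_{d,m}\in C^{k;p}_{\mu^0,Cyl}(X,Y)$. Recall that $C^{k;p}_{B,\mu,A}(X,Y)$ is the closure of this set, not of all cylindrical maps.
\item It is the hypothesis under which Assumption~\ref{ass:finite_dim_sobolev_approx} applies to the core.
\end{enumerate}
The check is short. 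You have $\|D^if^{N_d,N_m}(y)\|\le C_{d,m}\|D^iF({\mathcal D}^X_dy)\|_{\mathcal L^i(X,Y)}$, and $({\mathcal D}^X_d)_\#\mu^{0,{\mathcal E}^X_d}=({\mathcal S}^X_d)_\#\mu^0$ by \eqref{eq:push_back_D}. The growth bound then gives $\int_X\|D^iF({\mathcal S}^X_dx)\|^p_{\mathcal L^i(X,Y)}\,d\mu^0\le C\int_X(1+\lambda^q|x|_X^q)\,d\mu^0<\infty$. This is where the operator-norm form of the growth condition, rather than a directional one, is actually used.
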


In the following corollary, we work under Bounded Approximation Property for separable Banach spaces $X,Y$ (Assumption \ref{ass:bounded_approx_pro}), as well as Assumption \ref{ass:density_CB}; recall also Subsection \ref{subsubsec:ass_density_CB} (in particular, Proposition \ref{pro:ass_density_holds} and Remark \ref{rem:prop_rescaledd_bump}; so also Proposition \ref{prop:bump_rescaled} and Example \ref{ex:norm_Fréchet}) for natural examples under which Assumption \ref{ass:density_CB} holds. With respect to the previous corollary, this allows us to remove growth assumptions on $D^iF,i\leq k$, as well as moment conditions on the input measure $\nu$ (in the sense that we only assume natural moment conditions in the $h^j$, $j\leq k$ auxiliary variables, $\|\mu\|_{k,0,p}<\infty$ in Assumption \ref{ass:density_CB}).
\begin{corollary}\label{th:UAT_Sobolev_K_Banach_bump}
 Let $\mu$ be a   finite measure on $X\times X^k$ 
 (recall Notation \ref{not:marginals}). Let Assumptions \ref{ass:bounded_approx_pro} {(for $X,Y$)}, \ref{ass:finite_dim_sobolev_approx}, \ref{ass:Radon-nik}, \ref{ass:density_CB} hold.  
Then $\mathcal {A}_{ED}(X,Y)$ is dense in $ \widetilde C^{k;p}_{B,\mu}(X,Y)$. 
In particular, for all $F \in  \widetilde C^{k;p}_{B,\mu}(X,Y)$, $\epsilon>0$, there exists $d,m \in \mathbb N$ and $\theta \in \Theta_{N_d,N_m}$ such that $\|F-F^{{\mathcal E}^X_{d},\theta,{\mathcal D}^Y_m}\|_{\mathcal W^{k,p}_{B,\mu}(X,Y)}<\epsilon$. 
\end{corollary}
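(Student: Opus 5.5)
The proof reduces Corollary \ref{th:UAT_Sobolev_K_Banach_bump} to Theorem \ref{th:UAT_Sobolev_Banach} (i) by a density argument in two layers. First, Assumption \ref{ass:density_CB} is designed so that the class $\widetilde C^{k;p}_{B,\mu}(X,Y)$ is contained in the $\|\cdot\|_{\mathcal W^{k,p}_{B,\mu}(X,Y)}$-closure of a "nice" subclass. Concretely, these are maps of the form $\chi F$, with $\chi$ a rescaled smooth bump function (Proposition \ref{prop:bump_rescaled}, Remark \ref{rem:prop_rescaledd_bump}). Such maps have bounded support in $x$ and bounded derivatives there, so no growth conditions on $D^iF$ and no moment conditions in $x$ on $\nu=\mu^0$ are needed. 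Given $F\in \widetilde C^{k;p}_{B,\mu}(X,Y)$ and $\epsilon>0$, I would use Proposition \ref{pro:ass_density_holds} to pick such a $G$ with $\|F-G\|_{\mathcal W^{k,p}_{B,\mu}}<\epsilon/3$.

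Second, I would verify that $G$ lies in the abstract class $C^{k;p}_{B,\mu,A}(X,Y)$ used in Theorem \ref{th:UAT_Sobolev_Banach}. The key input is the inclusion \eqref{eq:Ckwq_subset_Ckwc} of Theorem \ref{th:relations_sobolve_norms}, applied under the Bounded Approximation Property (Assumption \ref{ass:bounded_approx_pro}). The relevant finite-rank sequences are ${\mathcal S}^X_d$ and ${\mathcal S}^Y_m$, which are uniformly bounded and converge strongly to the identity. Because $G$ vanishes outside a bounded set and its derivatives are bounded there, the integrands
\[
|D^iG(x)(h^1,\dots,h^i)-D^i({\mathcal S}^Y_m G\,{\mathcal S}^X_d)(x)(h^1,\dots,h^i)|_Y^p
\]
are dominated by $C(1+|h^1|_X^p\cdots|h^i|_X^p)$ times an indicator of a bounded set in $x$. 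This dominating function is $\mu^{0:i}$-integrable because $\|\mu\|_{k,0,p}<\infty$. The integrands also converge pointwise to $0$, by joint continuity of $(x,h)\mapsto D^iG(x)h$ together with strong convergence and uniform boundedness of ${\mathcal S}$. Dominated convergence then shows that the cylindrical approximants converge in $\mathcal W^{k,p}_{B,\mu}$, which is the membership condition for $C^{k;p}_{B,\mu,A}$.

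Third, I would apply Theorem \ref{th:UAT_Sobolev_Banach} (i), whose hypotheses are Assumptions \ref{ass:finite_dim_sobolev_approx} and \ref{ass:Radon-nik}. This produces an EDA within $\epsilon/3$ of $G$; combining with the first step and the triangle inequality gives an EDA within $\epsilon$ of $F$. To obtain the specific encoders ${\mathcal E}^X_d$ and decoders ${\mathcal D}^Y_m$ of the statement, the reduction step runs inside the proof through the BAP sequences. I would first approximate $G$ by ${\mathcal S}^Y_m G\,{\mathcal S}^X_d$ to within $\epsilon/3$. This map is a function $g$ of ${\mathcal E}^X_d x$ composed with ${\mathcal D}^Y_m$, where $g$ is finite-dimensional and lies in $C^{k;p}_{\nu_d}$ for the pushforward measure. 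I would then invoke Assumption \ref{ass:finite_dim_sobolev_approx} for the pushforward measures, using Assumption \ref{ass:Radon-nik} to control the $h$-integrals after encoding. Boundedness of ${\mathcal D}^Y_m$ then transfers the finite-dimensional error back to $Y$.

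The main obstacle is the domination step. One must check that the derivatives of ${\mathcal S}^Y_m G\,{\mathcal S}^X_d$ obey bounds uniform in $d,m$ with the correct $h$-moment structure, including on the region where ${\mathcal S}^X_d x$ enters the support of $\chi$ even though $x$ itself does not. I would handle this by bounding $\sup_{x}\|D^iG(x)\|$ on the support (after first arranging the bump so that $G\in C^k$ Fréchet with bounded derivatives, if Assumption \ref{ass:density_CB} permits) and multiplying by $\sup_d\|{\mathcal S}^X_d\|^i\sup_m\|{\mathcal S}^Y_m\|$. This yields a bound uniform in $x$, so only the finiteness of $\mu$ and $\|\mu\|_{k,0,p}$ are used.
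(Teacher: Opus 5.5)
Your overall route is the paper's. The chain is: Assumption~\ref{ass:density_CB}, then convergence of the cylindrical approximations $G_{d,m}=\mathcal S^Y_m G(\mathcal S^X_d\,\cdot)$ under the BAP by dominated convergence, then Assumption~\ref{ass:finite_dim_sobolev_approx} on the pushforward $\mu^{0,\mathcal E^X_d}$ via \eqref{eq:bound_Wmu_barmu}. The middle step is Theorem~\ref{th:relations_sobolve_norms}~(v) with $q=0$, which is how the paper proves \eqref{eq:tildeC_subset_C}. The last step is the argument in the proofs of Theorem~\ref{th:UAT_Sobolev_Banach}~(i) and Corollary~\ref{th:UAT_Sobolev_K_Banach}.

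Your first step, however, does not follow from the stated hypotheses. You produce $G=b_\eta F$ via Proposition~\ref{pro:ass_density_holds}, but that proposition requires Assumption~\ref{ass:bump}, which the corollary does not assume. Assumption~\ref{ass:density_CB} is imposed precisely so the result holds however the density is obtained. Use Assumption~\ref{ass:density_CB} directly: it gives $G\in C^{k;1,0}_B(X,Y)$, i.e.\ $\sup_{x\in X}\|D^iG(x)\|_{\mathcal L^i(X,Y)}<\infty$ for all $i\le k$, with no claim of bounded support.

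Accordingly, drop the domination in your second paragraph by an indicator of a bounded set in $x$. As you yourself observe later, it fails for $G_{d,m}$ even when $G$ has bounded support, since $\mathcal S^X_d x$ can lie in the support while $x$ does not. The correct dominating function is the one in your last paragraph, $C\prod_{j=1}^i|h^j|_X^p$, with $C$ depending only on $\sup_x\|D^iG(x)\|_{\mathcal L^i(X,Y)}$ and the BAP constant and independent of $x,d,m$. It is $\mu^{0:i}$-integrable because $\mu$ is finite and $\|\mu\|_{k,0,p}<\infty$, both part of Assumption~\ref{ass:density_CB}. The same bound gives $\|f^{N_d,N_m}\|_{\mathcal W^{k,p}_{\mu^{0,\mathcal E^X_d}}(\mathbb R^{N_d},\mathbb R^{N_m})}<\infty$, so $G_{d,m}\in C^{k;p}_{\mu^0,Cyl}(X,Y)$. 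No Fréchet upgrade of $G$ is needed: the argument only uses $G\in C^k_B$ with uniformly bounded operator norms of its derivatives, together with joint continuity for the pointwise convergence.
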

Similarly to the previous results, an analogous statement in Corollary \ref{th:UAT_Sobolev_K_Banach_bump} is easily proved for $ \widetilde C^{k;p,r}_{B,\nu,\eta}(X,Y)$.
\begin{remark}\label{rem:UA_product_measure}
Let $p\geq 1$ and $\mu=\nu\otimes\eta$, where $\nu$ is a finite measure on $X$ and $\eta$ is a probability measure on $X^k$ with $\|\eta\|_{k,p}<\infty$. Then, Example \ref{ex:ass_radon-nik} holds; moreover, $\|\mu\|_{k,0,p}<\infty$ (consistently with Assumption \ref{ass:density_CB}). Therefore,  in Corollary \ref{th:UAT_Sobolev_K_Banach_bump} as well, we allow the input measure $\nu$ to be any finite measure.
{As a consequence, when $X=\mathbb R^{N}$ and $Y=\mathbb R^{N'}$, we recover again the standard case \cite[Theorem 4]{hornik1991approximation}  using Remark \ref{rem:seminorm_sobolev_bastiani_tilde_reverse-ineq}.}
\end{remark}

{When the underlying measures are regular enough for closing derivative operators, we can complete the weighted Bastiani--Sobolev space. This is possible, e.g. in the Gaussian case:}
the following result follows by Assumption \ref{ass:finite_dim_sobolev_approx}, thanks to the definition of the classical Gaussian Sobolev Space \cite[Section 5.2]{bogachev1998gaussian} given in \eqref{eq:gaussian_sobolev}, or, equivalently, Theorem \ref{th:characterization_bogachev}; that is, it is a corollary of (the proof of) Theorem \ref{th:UAT_Sobolev_Banach} (ii).
\begin{corollary}[Universal Approximation in Gaussian Sobolev spaces]\label{th:UA_gaussian}
Let Assumption \ref{ass:finite_dim_sobolev_approx} hold. Let $p\geq 1 $ and let $\gamma$ be a centered Gaussian measure on  a Banach space $X$. Let $Y$ be a separable Hilbert space.   Then $\mathcal {A}_{ED}(X,Y)$ is dense in $ W_{\gamma}^{k,p}(X,Y)$. 
\end{corollary}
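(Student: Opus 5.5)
The plan is to reduce the statement to two density facts. First, smooth cylindrical maps are dense in $W^{k,p}_\gamma(X,Y)$. Second, each such map can be approximated in the Gaussian Sobolev norm by an element of $\mathcal A_{ED}(X,Y)$. By the definition \eqref{eq:gaussian_sobolev} of the classical Gaussian Sobolev space \cite[Section 5.2]{bogachev1998gaussian}, the first fact holds by construction. Indeed, $W^{k,p}_\gamma(X,Y)$ is the completion of smooth cylindrical maps
\[
G(x)=\sum_{j=1}^{m} g_j(\langle \ell_1,x\rangle,\ldots,\langle \ell_n,x\rangle)\, y_j,
\qquad \ell_i\in X^*,\ y_j\in Y,\ g_j\in C_b^\infty(\R^n),
\]
under the norm $\sum_{i\le k}\big(\int_X\|D^i_\gamma G(x)\|^p_{\mathcal L_2^{(i)}(X_\gamma,Y)}\,d\gamma(x)\big)^{1/p}$, with Hilbert--Schmidt norms along the Cameron--Martin space $X_\gamma$. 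Alternatively one can use the characterization in Theorem \ref{th:characterization_bogachev}. By a triangle inequality it then suffices to approximate a fixed smooth cylindrical $G$. We may assume $y_1,\ldots,y_m$ are orthonormal in $Y$ and, after a Gram--Schmidt change in $X^*$, that $\ell_1,\ldots,\ell_n$ are orthonormal in $L^2(\gamma)$. The latter means that $\ell_i|_{X_\gamma}$ are orthonormal vectors of $X_\gamma$ (via $R_\gamma$), and $\mathcal E^X:=(\langle\ell_i,\cdot\rangle)_{i\le n}$ pushes $\gamma$ forward to the standard Gaussian $\gamma_n$ on $\R^n$.

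Next we take $\mathcal D^Y(z)=\sum_j z_j y_j$ and the candidate $F^\theta=\mathcal D^Y f^{n,\theta,m}\circ\mathcal E^X$, so that $G=\mathcal D^Y g\circ \mathcal E^X$. By the chain rule \eqref{eq:derivativeDrv_dh^theta_dY_Banach}, for $h^1,\dots,h^i\in X_\gamma$,
\[
D^i(G-F^\theta)(x)[h^1,\dots,h^i]=\mathcal D^Y\big(D^i(g-f^{n,\theta,m})(\mathcal E^Xx)[\mathcal E^Xh^1,\dots,\mathcal E^Xh^i]\big).
\]
Take an orthonormal basis of $X_\gamma$ extending the $n$ vectors dual to $\ell_i$. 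Then $\mathcal E^X$ kills all the other basis vectors, and $\mathcal E^X|_{X_\gamma}$ is a coordinate projection. Hence the Hilbert--Schmidt norm of the $i$-th derivative equals the Hilbert--Schmidt norm of $D^i(g-f^{n,\theta,m})(\mathcal E^Xx)$ on $(\R^n)^i\to\R^m$. This norm is bounded by $C_{n,m,i}\|D^i(g-f)(\mathcal E^Xx)\|_{\mathcal L^i(\R^n,\R^m)}$, since all norms are equivalent in finite dimensions. Changing variables,
\[
\|G-F^\theta\|^p_{W^{k,p}_\gamma}\le C\,\|g-f^{n,\theta,m}\|^p_{\mathcal W^{k,p}_{\gamma_n}(\R^n,\R^m)}.
\]
Since $g\in C^\infty_b$, it belongs to $C^{k;p}_{\gamma_n}(\R^n,\R^m)$ and $\gamma_n$ is finite, so Assumption \ref{ass:finite_dim_sobolev_approx} makes the right-hand side arbitrarily small.

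The main obstacle is the first step. We must check that the norm in \eqref{eq:gaussian_sobolev} is indeed computed through Hilbert--Schmidt norms on $X_\gamma$ in which finite-rank cylindrical structure collapses exactly to finite dimensions, as above. We must also check that $F^\theta$, which is not in $C^\infty_b$ when $f$ is only a neural network with unbounded growth, genuinely belongs to $W^{k,p}_\gamma$ with the classical derivatives. For the latter, I would approximate $f^{n,\theta,m}$ in $\mathcal W^{k,p}_{\gamma_n}$ by $C^\infty_b$ functions, which are dense there. The same isometric change of variables then shows that the corresponding cylindrical maps form a Cauchy sequence in $W^{k,p}_\gamma$ converging to $F^\theta$, with derivatives identified by closability. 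Here Theorem \ref{th:characterization_bogachev} would be invoked if the direct definition is inconvenient.
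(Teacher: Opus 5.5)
Your proposal is correct and follows essentially the paper's argument. Density of smooth cylindrical maps comes from the definition \eqref{eq:gaussian_sobolev}, or equivalently from Theorem \ref{th:characterization_bogachev}, which also places $C^{k;p}_{\gamma,Cyl}(X,Y)$ inside $W^{k,p}_\gamma(X,Y)$; each cylindrical map is then approximated as in the proof of Theorem \ref{th:UAT_Sobolev_Banach} (ii) via Assumption \ref{ass:finite_dim_sobolev_approx}. The only difference is that the paper bounds the Hilbert--Schmidt (i.e.\ $\mathcal W^{k,p,2}_{B,\gamma,\eta}$) norm crudely, through \eqref{eq:sobolev-norm-ineq-cyl} and the Gaussian moments of $\eta$, working with the possibly degenerate pushforward $\gamma^{\mathcal E^X}$. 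Your Gram--Schmidt normalization is therefore valid (up to modifications that vanish in the norm), but not needed.
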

{\begin{remark}[Comparison with UA in weighted Sobolev norms in \cite{yao-luo-cao-Kovachki-Roseberry-ghattas}]\label{rem:comparison_UA_sobolev_yao}For the sake of the comparison with \cite{yao-luo-cao-Kovachki-Roseberry-ghattas} let us restrict to  the setting there, i.e.  let $k=1$, separable Hilbert spaces $X=H^s\left(\mathbb{T}^n ; \mathbb{R}^{d_a}\right), Y=H^{r}\left(\mathbb{T}^n ; \mathbb{R}^{d_b}\right)$, $s,r\geq 0$. Even in this case:
\begin{enumerate}
    \item our analytic framework is more general and more natural. Indeed, from \eqref{eq:def_sobolev_norm_mu0_Lp_bogachev_HS} (by suitably choosing the covariance operator $Q$) we have that our Bastiani--Sobolev norms provide a powerful generalization of the weighted Sobolev norms used in \cite{yao-luo-cao-Kovachki-Roseberry-ghattas}.
    \item  Furthermore, as already mentioned in Remark \ref{rem:comparison_UA_compact_yao}, we use the larger class of Bastiani $C^1_B$  (in place of $C^1$ Fréchet in \cite{yao-luo-cao-Kovachki-Roseberry-ghattas}), which allows us to work, e.g., with Nemitskii-type operators.
    \item  Furthermore, we note that the proof in \cite{yao-luo-cao-Kovachki-Roseberry-ghattas} relies on approximation steps via Encoder-Decoder/cylindrical maps. 
\end{enumerate}
In view of these points, although our UATs do not cover FNOs (or, more generally, Neural operators) our weighted  Bastiani--Sobolev spaces provide a natural analytic framework for these architectures (recall also Remark \ref{rem:comparison_UA_compact_yao}).
\end{remark}}
\begin{remark}\label{rem:UAT_Sobolev_Banach_supcompact}
Another interesting generalization of the classical finite dimensional case  is where we replace operator norms with the seminorms of compact-open topologies on spaces of multi-linear operators, i.e. statements of these kinds: let $\nu$  finite measure on $X$ and let $K\subset X$ compact. Then, for all $F\in C^k_B(X,Y)$ (with suitable integrability), for all $\epsilon>0$, there exist $N,N' \in \mathbb N$, ${\mathcal E}^X\in \mathcal L(X,\mathbb R^N),{\mathcal D}^Y\in \mathcal L(\mathbb R^{N'},Y)$, and $\theta \in \Theta_{N,N'}$ such that
    \begin{align*}
\sum_{i=0}^k\int_{X } \sup_{h^1,\ldots,h^i\in  K}|[D^iF(x)-D^iF^{{\mathcal E}^X,\theta,{\mathcal D}^Y}(x)](h^1,...,h^i)|^p_Yd\nu(x)<\epsilon^p.
    \end{align*}
 By taking $K^i$ to be the unit ball in $X$ we recover the classical finite-dimensional case \cite{hornik1991approximation}. However, from a training/numerical perspective, the previous formulations are more natural (see the next section).
\end{remark}
\section{Bastiani--Sobolev Training in Derivative-Informed Operator Learning}\label{sec:bastiani-sobolev-training}
In the previous section, we have proven powerful UATs in weighted Bastiani--Sobolev spaces on Banach spaces. Supported by these results, in this section, we formulate \textit{Bastiani--Sobolev Training} methods, extending previous DIOL training paradigms based on  Hilbert spaces, Fréchet (or, in the Gaussian case, stochastic Gâteaux/Malliavin) differentiability, $k=1$. In particular, the Bastiani--Sobolev viewpoint naturally supports alternative  formulations, which are compatible with Sobolev training in high, but finite, dimensional spaces~\cite{czarnecki2017}.  For actual implementations, we refer to the references below, which partly cover these methods in some settings, and defer a full Bastiani--Sobolev investigation study to future works.

Let  Banach spaces $X,Y$, let $k\in \mathbb N, p \geq 1,$ let $\nu$ be a probability measure on $X$  and $\eta$ be a probability measure on $X^k$ such that $\|\eta\|_{k,p}<\infty$.  Let $F\in C^{k;p,p}_{B,\nu,\eta,A}(X,Y)$ (or, when possible, let $F$ belong to the completion of this space, e.g. in the Gaussian case in Theorem \ref{th:characterization_bogachev}, therefore including stochastic Gâteaux/Malliavin  derivatives). Via supervised or unsupervised learning, the goal is to learn $F$ and its derivatives $D^iF$, for all $i\leq k$. To this purpose,
choose suitable encoder and decoders $\mathcal E^X,{\mathcal D}^Y$ (e.g. if $X,Y$ satisfy the BAP, Assumption \ref{ass:bounded_approx_pro} and under Theorem \ref{th:relations_sobolve_norms} (v),  it is enough to take $\mathcal E^X=\mathcal E_d^X,{\mathcal D^Y=\mathcal D^Y_m}$ for $d,m$ large) and let  $F^{{\mathcal E}^X,\theta,{\mathcal D}^Y}\in \mathcal {A}_{ED}(X,Y)$, $\theta\in \Theta_{N_d,N_m}$.  We also observe that, in DIOL over specific function spaces, one can also use other OL architectures such as Fourier Neural Operators as in \cite{yao-luo-cao-Kovachki-Roseberry-ghattas}; in this case, as explained in Remark \ref{rem:comparison_UA_sobolev_yao}, although our UATs do not directly cover these architecture, our weighted Bastiani--Sobolev spaces provide the natural analytic framework for UATs for these architectures.

Our UATs in Bastiani--Sobolev spaces, support the simultaneous learning of $F$, $D^iF$, for all $i\leq k$ in the norm \eqref{eq:sobolev_norm_mu0_Lp_bogachev}; i.e. via supervised or unsupervised learning, we aim to minimize
{(with usual notations\footnote{recall in particular that $D^0F=F$})
\begin{align}\label{eq:sobolev_trainig_bastiani_norm}
 \|F-F^{{\mathcal E}^X,\theta,{\mathcal D}^Y}\|_{\mathcal W^{k,p,p}_{B,\nu,\eta}(X,Y)}^p  =\sum_{i=0}^k\int_{X \times X^i} |[D^iF(x)-D^iF^{{\mathcal E}^X,\theta,{\mathcal D}^Y}(x)](h^1,...,h^i)|^p_Yd\nu(x)d\eta^{1:i}(h^1,\ldots,h^i)
    \end{align}
    or in suitable variants of this, see Section \ref{subsec:Sobolev}; see also \cite{cohen-defeo-hebner-sirignano}. 
     We call the family of methods below that aim to minimize these norms (either via supervised or unsupervised learning) \textbf{Bastiani--Sobolev Training} methods.} 
    \subsection{Supervised Bastiani--Sobolev Training}
\subsubsection{Supervised Bastiani--Sobolev Training via Dimensionality Reduction}\label{subsec:sobolev_training}
{Let $X,Y$ separable Hilbert spaces.
The previous DIOL literature typically  uses Fréchet (or, in the Gaussian case, stochastic Gâteaux/Malliavin) derivatives and the weighted Sobolev norm  \eqref{eq:def_sobolev_norm_mu0_Lp_bogachev_HS} under a probability measure $\nu$ on a Hilbert space $X$ and  Hilbert--Schmidt norms in Cameron--Martin spaces/RKHS $X_\gamma=Q^{1/2}(X)$, $Q$ trace-class covariance operator on $X$ (e.g. \cite{Roseberry-peng-villa-ghattas,luo-Roseberry-chen-Ghattas,yao-luo-cao-Kovachki-Roseberry-ghattas}). 
Choosing $\eta:=\otimes_{i=1}^k\gamma$ for a Gaussian measure $\gamma$ on $X$, in view of \eqref{eq:sobolev_norm_mu0_Lp_bogachev_HS_p=2}, we have that \eqref{eq:def_sobolev_norm_mu0_Lp_bogachev_HS} coincides with the Bastiani--Sobolev norm \eqref{eq:sobolev_trainig_bastiani_norm} (or its variants); hence,  our UATs directly support this training procedure.} {Due to the computational  costs (including derivative data generation) of high-dimensional approximations of infinite-dimensional derivatives (already for $k=1$), the DIOL literature compresses information by choosing suitably the encoder and decoders via {Dimensionality Reduction}, such as principal component analysis (PCA) and derivative-informed
subspaces (DIS), which use dominant eigenvectors of the covariance of the data or the derivatives as the reduced bases, respectively.} {In each iteration of these algorithms, sample $(x_l){\sim}\nu$, $l=0,\ldots ,M$;  the empirical loss is  evaluated via Monte Carlo on $X$, i.e.
\begin{align*}
&{\mathcal L}^{k,p}(\theta) \approx \frac 1 M\sum_{l=1}^M\sum_{i=0}^k
|D^iF(x_l)-D^iF^{{\mathcal E}^X,\theta,{\mathcal D}^Y}(x_l)|_{\mathcal L_2^i(X_\gamma,Y)}^p;
\end{align*}
 update parameters by stochastic optimization and  repeat the cycle. 
 
 However, our UATs allow us to consider the  more general framework of Bastiani--Sobolev spaces (including, e.g. Nemytskii/superposition operators, see Lemma \ref{lem:bastiani-nemytskii-L2}), leading to (supervised) \textbf{Bastiani–Sobolev Training} methods via \textbf{Dimensionality Reduction}.
}
\subsubsection{Supervised Bastiani--Sobolev Training}\label{subsec:supervised_bastiani-otf}
Let $X,Y$ be Banach spaces.
Our UATs and \eqref{eq:sobolev_trainig_bastiani_norm} also directly lead to a different form of training, which is  consistent with the Sobolev training in  high, but finite, dimensional spaces \cite[Section 2]{czarnecki2017}: in each iteration, we sample $x_l\sim \nu,$ $(h^1_{l,j},\ldots,h^k_{l,j})\sim \eta,$ $l=1,\ldots ,M, j=1\ldots, Q$;
the empirical loss is evaluated via Monte Carlo on $X\times X^k$, i.e.
\begin{align*}
&{\mathcal L}^{k,p}(\theta) \approx \frac 1 {M Q}\sum_{l=1}^{M}\sum_{j=1}^{Q} \sum_{i=0}^k
|D^iF(x_l)(h^1_{l,j},\ldots,h^i_{l,j})-D^iF^{{\mathcal E}^X,\theta,{\mathcal D}^Y}(x_l)(h^1_{l,j},\ldots,h^i_{l,j})|_Y^p;
\end{align*}
 we update parameters by stochastic optimization and we repeat the cycle.

 We call these methods (supervised)  \textbf{Bastiani--Sobolev Training}. {These types of training addresses the problems of high computational costs in an alternative way with respect to the methods in Subsection \ref{subsec:sobolev_training}:
as already in the high, but finite, dimensional case  \cite{czarnecki2017}, we reduce training complexity by training derivatives against sampled directions $(h^1_{l,j},\ldots,h^k_{l,j})$.}  The paper   \cite{alonso2026}  reports a finance-specific implementation of a related first-order directional-derivative loss (we note that \cite{alonso2026} was posted on arXiv after the initial arXiv version of the present manuscript, as well as after \cite{cohen-defeo-hebner-sirignano}, where similar norms to \eqref{eq:sobolev_trainig_bastiani_norm} were introduced and numerically validated). 

\subsection{Physics-Informed Bastiani--Sobolev Training}
Let $X,Y,V$ be Banach spaces, $k\in\mathbb N$, and  consider, formally, a $V$-valued infinite-dimensional PDE
\begin{equation}\label{eq:PDE_training_valuedW}
    G\bigl(x,F,D^1F,\ldots,D^kF\bigr)=0,\quad x\in X ,
\end{equation}
where $G$ is defined from suitable subsets of $X\times\prod_{i=0}^k \mathcal L^i(X,Y)$ to $V$.  In Physics-Informed Bastiani--Sobolev Training, we {aim to learn the solution $F$ of this PDE} by minimizing a suitable weighted $L^2$-norm of the residual. {This can be seen as DIOL as this loss  is informed by the derivatives $D^iF, i\leq k$.} We now review these contributions  and explain how these UATs support this. The first method of this type (in infinite-dimensions) appeared in \cite{cohen-defeo-hebner-sirignano} and takes the name of Deep Hilbert--Galerkin method.
\subsubsection{Deep Galerkin Methods on Infinite-Dimensional Spaces}\label{subsec:training_DGM}
Consider the  setting of \cite{cohen-defeo-hebner-sirignano} (in \cite{cohen-defeo-hebner-sirignano}, $X$ is Hilbert space) or, in the time-dependent case, the one of Subsection \ref{subsec:deep-banach-galerkin}, i.e. we deal with a PDE  of the form \eqref{eq:PDE_training_valuedW} with values in a finite-dimensional space $V=\mathbb R^n$ and $k=2$. Then, Deep Banach/Hilbert--Galerkin  Methods minimize a suitable weighted $L^2$-norm of the residual
of \eqref{eq:PDE_training_valuedW}, i.e. \eqref{eq:DGM_res} {(recall the literature review; see also Subsection \ref{subsec:deep-banach-galerkin}\footnote{i.e. there we consider an infinite-dimensional PDE \eqref{eq:infinitedim_PDE_motivation} on a separable Hilbert space and we learn simultaneously learn the solution $F$ and its derivatives $DF,D^2F$ simultaneously via HGNOs $F^{{\mathcal E}_d^X,\theta,1},DF^{{\mathcal E}_d^X,\theta,1},D^2F^{{\mathcal E}_d^X,\theta,1}$. Therefore, in particular, we learn $D^2F:H\to L(H)$, the Fréchet derivative of the non-linear operator $DF:H\to H$ under Bastiani--Sobolev type norms.}}). In fact UATs proved there, strongly related to the ones here, support this training procedure. We note that minimizing the weighted $L^2$-norm of the residual does not imply, without further stability estimates, that  Bastiani--Sobolev type norms  are minimized. However, we observed this phenomenon in practice.
\subsubsection{Bastiani--Sobolev Training On-The-Fly}\label{subsec:unsupervised-onthefly}
Consider the sensitivity equation \eqref{eq:sensitivity_eq} and the setting there. In operator form, the unknown is $Z(u)=DF(u)\in \mathcal L(X,Y)$, i.e. this
can be seen as   a $V$-valued PDE of the form \eqref{eq:PDE_training_valuedW} with $V=\mathcal L(X,W)$ (i.e. $D_yG(F,u)DF+D_uG(F,u)=0$). To tackle it, S. Mou, in his presentation at Institute for Pure and Applied
Mathematics \cite{Mou2026} (we note that the presentation of \cite{Mou2026} is dated later than the initial arXiv version of the present manuscript, as well as later than \cite{cohen-defeo-hebner-sirignano}), proposed to view it as a $W$-valued PDE for $F$ over $X\times X$, in the domain variables $(u,h)\in X\times X$. Then, he minimizes a weighted $L^2$-norm of the residual over $X\times X$ (computed via random sampling; plus suitable preconditioning). With respect to supervised methods, here the advantage is that he does not need to generate (derivative) data, but he can directly enforce the sensitivity equation via a PINN-type loss.  {Moreover, we also observe that this loss can also be combined with a $L^2$ type-loss (with respect to $u$) for the original PDE \eqref{eq:G=0}.} It can be expected  that, under suitable conditions, our UATs in Bastiani--Sobolev spaces allow us to minimize this integral residual up to arbitrary accuracy (similarly to Subsection \ref{subsec:training_DGM}, \cite{cohen-defeo-hebner-sirignano}, and with similar considerations), thereby supporting this training procedure. 
\appendix
\section{Basic notations}\label{app:basics}
In this manuscript, we often denote by $C>0$ a generic constant, which may change from line to line. 
\subsection{Moore–Smith sequences and sequences}
In this manuscript we will work with Moore–Smith sequences (also called nets) and sequences in topological spaces. We advise the reader not to confuse nets in topological spaces with  neural nets, as abbreviations of neural networks,  frequently used in the manuscript. Therefore, we prefer to call them Moore–Smith sequences.
\begin{definition}[Directed set]
A directed set is an index set $A$ equipped with $\preceq
$ such that (i)
 is a pre-order relation (i.e. reflexive and transitive binary relation);
    (ii) for all $\alpha, \beta \in A$, there exists $\gamma \in A$ so that $\alpha \preceq
 \gamma$ and $\beta \preceq
 \gamma$.
\end{definition}
\begin{definition}[Moore–Smith sequences and sequences]
A Moore–Smith sequence (or net) in a topological space $X$ is a mapping from a directed set $A$ to $X$; we denote it by $\left\{x_\alpha\right\}_{\alpha \in A}$. When $A=\mathbb N$, a Moore–Smith sequence is called a sequence, and we denote it by  $\left\{x_n\right\}_{n \in \mathbb N}$.  In this manuscript, to distinguish Moore–Smith sequences from sequences we will use Greek letters (typically $\alpha,\beta$) for indexes of Moore–Smith sequences and roman letters (typically $n,d,m$) for indexes of sequences.
\end{definition}

\begin{definition}[Convergence]
A Moore–Smith sequence $\left\{x_\alpha\right\}_{\alpha \in A}$ in a topological space $X$ is said to converge to a point $x \in X$, denoted $x_\alpha \rightarrow x$, if for any neighborhood $N$ of $x$, there is a $\alpha_0 \in A$ so that $x_\alpha \in N$ if $\alpha \succeq \alpha_0$. If $X$ is Hausdorff, the limit is unique.
\end{definition}
\subsection{Locally convex topological vector spaces}
\begin{definition}[Seminorms and norms]
A seminorm on a vector space $W$ is a map $\rho: W \rightarrow[0, \infty)$ such that (i) $\rho(x+y) \leq \rho(x)+\rho(y)$; (ii) $\rho(\lambda x)=|\lambda| \rho(x)$ for $\lambda \in \mathbb{R}$.
A seminorm $|\cdot|$ is a norm if (iii) $|x|=0$ implies $x=0.$
Let $I$ be an index set.  A family of seminorms $\left\{\rho_i\right\}_{i \in I}$ is said to separate points if $\rho_i(x)=0$ for all $i \in I$ implies $x=0$.
\end{definition}
\begin{definition}[Locally convex space]
A locally convex space is a vector space $W$ with a family $\left\{\rho_i\right\}_{i\in I}$ of seminorms. The induced topology is the weakest topology in which all $\rho_i, i\in I$ are continuous and in which the operation of addition is continuous. If the seminorms  separate points the induced topology is Hausdorff. A Moore–Smith sequence $w^\alpha\to w$ if and only if $\rho_{i}(w^\alpha-w)\to 0$ for all $i\in I.$
\end{definition}
  The family  $\left\{\rho_i\right\}_{i \in I}$ is called directed (or fundamental) if, for all $i_1, \ldots, i_n \in I$, $n\in \mathbb N$ there is a $\gamma \in I$ and $C>0$ so that
$
\max(\rho_{i_k}(x),k=1,\ldots,n)\leq C \rho_\gamma(x), $ for all $x \in W$, or equivalently, by induction, restricting $n=2$. The existence of such a family  can be assumed without loss of generality \cite[p. 126]{reed_simon}. If $\left\{\rho_i\right\}_{i \in I}$ is a directed family, then $\left\{\left\{x \in W:\rho_i(x)<\varepsilon\right\} : i \in I, \varepsilon>0\right\}$ is a neighborhood base at $0$ and $\left\{\left\{x\in W :\rho_i(x-w)<\varepsilon\right\} : i \in I, \varepsilon>0\right\}$ is a neighborhood base at $p\in W$.
In this case, a subset $\mathcal G$ is dense in $W$ if and only if  for all $w\in W$, $i\in I$, $\epsilon>0$, there exists $g\in \mathcal G$ such that $\rho_{i}(g-w)<\epsilon.$ Let $k\in \mathbb N$ and $W=W_1\times \ldots W_k$, where each $W_j$, $j\leq k$ is a locally convex topological vector space whose topology is generated by a family of directed seminorms $\{\rho_{i_j}^j\}_{i_j \in I_j}$.  We endow $W$ with the product topology generated by the family of directed seminorms $\{\rho_i\}_{i \in I}$, where $\rho_i(w):=\max(\rho_{i_j}^j(w_j),j\leq k)$, $w=(w_1,\ldots,w_k)$, $i=(i_1,\ldots,i_k)\in I:=I_1\times\ldots \times I_k$.
\begin{definition}[Equivalent seminorms]\label{def:eq_seminorms}
    Two families of seminorms $\left\{\rho_i\right\}_{i \in I}$ and $\left\{\nu_\beta\right\}_{\beta \in B}$ on a vector space $X$ are called equivalent if one of the following equivalent conditions is satisfied:
\begin{enumerate}
    \item[(i)] they generate the same topology.
    \item[(ii)] Each $\rho_i$ is continuous in the $d$-natural topology and each $d_\beta$ is continuous in the $\rho$-natural topology.
\item[(iii)]  For each $i \in I$, there are $\beta_1, \ldots, \beta_n \in B$ and $C>0$ so that 
$
\rho_i(x) \leq C\left(\nu_{\beta_1}(x)+\cdots+\nu_{\beta_n}(x)\right)
$, for all $x \in X$,
and for each $\beta \in B$, there are $i_1, \ldots, i_m \in I$ and $D>0$ so that  
$
\nu_\beta(x) \leq D\left(\rho_{a_1}(x)+\cdots+\rho_{i_m}(x)\right)
$, for all $x \in X$.
\end{enumerate}
\end{definition}

\paragraph{Compact-open topologies.} We will often consider the compact-open-topology for locally convex topological vector spaces, so we give a general definition and then specialize it when needed. For a short introduction to this topology, see \cite[Appendix B.2]{schmeding2022}.
\begin{definition}[Compact-open topology]\label{def:compact-op}Let $X$ be a topological space and $Y$ be a locally convex topological vector space, whose topology is generated by a directed  family of seminorms  $\left\{q_i\right\}_{i \in I}$.
Denote the space of continuous functions from $X$ to $Y$ by $C^0(X;Y)$. 
We define the compact-open topology on $C^0(X;Y)$, as   the locally convex topology generated by the directed family\footnote{To see that $\mathcal P$ is directed, define 
$
K:=K_1 \cup K_2, 
$
for compacts $K_1,K_2.$
Choose $\gamma$ and $C>0$ from directedness of $\left\{q_i\right\}$. Then
$
p_{K_1, i_1}(f)+p_{K_2, i_2}(f)  =\sup _{x \in K_1} q_{i_1}(f(x))+\sup _{x \in K_2} q_{i_2}(f(x)) \leq 2 \sup _{x \in K}\left(q_{i_1}(f(x))+q_{i_2}(f(x))\right)\leq C \sup _{x \in K}  q_\gamma(f(x))  =
C p_{K, \gamma}(f).
$} of seminorms $\mathcal P=\{p_{K,i}:K\subset X\text { compact},i\in I\}$, where 
$p_{K,i}(f):=\sup _{x \in K} q_i(f(x))$, $f\in C^0(X;Y)$. 
A Moore–Smith sequence $f^\alpha\to f$ in the compact-open topology if and only if $p_{K,i}(f^\alpha-f)\to 0$, for all $K\subset X$ compact, for all $i\in I.$ 
A set $\mathcal G$ is dense in the compact-open topology of $C^0(X;Y)$ if and only if,  for all $f\in C^0(X;Y)$,  $K\subset X$ compact, $i\in I$, $\epsilon>0$, there exists $g\in \mathcal G$ such that $p_{K,i}(f-g)<\epsilon.$
\end{definition}

\subsection{Banach spaces}
Let $(X,|\cdot|_X)$ be a Banach space. When there is no chance of confusion we simply write $|\cdot|$ for the norm. {We denote the sphere $S^{X,R}=\{x\in X:|x|_X= R\}, R>0$.}
For $k\in \mathbb N$, denote by $X^k$ the product space endowed with the norm $\left|\left(x_1, \ldots, x_k\right)\right|_{X^k}:=\left(\left|x_1\right|^2+\ldots+\left|x_k\right|^2\right)^{1 / 2}$.

Let $(Y,|\cdot|_Y)$ be another Banach space. Given $F:X\to Y$ we define  $\operatorname{supp} F:=\overline{\{x : F(x) \neq 0\}}$.
We denote by $C^0(X,Y)$ the space of continuous functions from $X$ to $Y.$
We denote by $\mathcal L(X,Y)$ the Banach space of linear bounded  operators from $X$ to $Y$, endowed with the operator norm $\|L\|_{\mathcal L(X,Y)} :=\sup_{|x|_X=1}|Lx|_Y , $ $  L  \in \mathcal L(X,Y).$
We denote by $ X^*=\mathcal L(X,\mathbb R)$ the dual space.
When $Y=X$ we simply write $\mathcal{L}(X)=\mathcal{L}(X,X).$  We say that a linear operator $L:X\to Y$ is compact  if it maps bounded sets of $X$ to relatively compact sets in $Y$ (i.e. subsets with compact closure in $Y$). We denote by $\mathcal K(X,Y)$ the set of compact operators from $X$ to $Y$;  $\mathcal K(X,Y)$ turns out to be a closed subspace of $\mathcal L(X,Y)$; we denote $\mathcal K(X)=\mathcal K(X,X)$.
A linear operator $T:X\to Y$ is finite rank {if its range is finite-dimensional, or, equivalently, if} there exists $N\in \mathbb N$, $\left\{e_i\right\}_{i=1}^{N} \subset Y$,  $\left\{a_i\right\}_{i=1}^{N} \subset X^*$ such that
$T x=\sum_{i=1}^{N} \langle a_i,x\rangle_{X^*,X} \  e_i,$ for all $ x \in X.$ We denote by $\mathcal F(X,Y)$ the space of finite rank operators from $X$ to $Y$; we have that  $\mathcal F(X,Y)$ is a subspace of $\mathcal K(X,Y)$; we denote $\mathcal F(X)=\mathcal F(X,X)$. We denote by $\mathcal A(X,Y):=\overline{\mathcal F(X,Y)}^{\|\cdot\|_{\mathcal L(X,Y)}}$ the space of approximable operators via finite rank operators.

We denote by $\mathcal L^i(X;Y)$ 
the  space of  continuous multilinear operators $T: X^i \rightarrow Y$. We recall the following standard result \cite[(A.2.5)]{Flett_1980}.
\begin{proposition}\label{lemma:multilinear}
    Let $T: X^i \rightarrow Y$ be multilinear, then the following are equivalent (i) $T$ is continuous; (ii) $T$ is continuous at one point; (iii) $\|T\|_{\mathcal L^i(X,Y)}=\sup \left\{\left|T\left(h_1, \ldots, h_i\right)\right |_Y:\left|h_1\right|_X = 1, \ldots,\left|h_i\right|_X = 1\right\}<\infty$; (iv) $\mathcal C=\left\{C \geq 0:\left|T\left(h_1, \ldots, h_i\right)\right|_Y \leq C \ \Pi_{j=1}^i|h_j|_X, \  \forall h_1, \ldots, h_i \in X \right\} \neq \varnothing$.
If one (hence, all) of these conditions hold, then $\|T\|_{\mathcal L^i(X,Y)} \in \mathcal C$ and $\|T\|_{\mathcal L^i(X,Y)}=\inf \mathcal C$. 
\end{proposition}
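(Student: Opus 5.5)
The plan is to prove the cycle (i)$\Rightarrow$(ii)$\Rightarrow$(iv)$\Rightarrow$(iii)$\Rightarrow$(iv)$\Rightarrow$(i), using only multilinearity and the triangle inequality, and then read off the final claims from the steps (iii)$\Rightarrow$(iv) and (iv)$\Rightarrow$(iii). The implication (i)$\Rightarrow$(ii) is trivial.

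\textbf{Step (ii)$\Rightarrow$(iv).} This is the only step needing an idea. Suppose $T$ is continuous at $p=(p_1,\dots,p_i)$. Then there is $\delta>0$ such that $|T(p+h)-T(p)|_Y\le 1$ whenever $|h_j|_X<\delta$ for all $j$. For $S\subset\{1,\dots,i\}$, let $x^S\in X^i$ be defined by $x^S_j=p_j+h_j$ if $j\in S$ and $x^S_j=p_j$ otherwise. Expanding each slot by multilinearity gives the inclusion--exclusion identity
\[
T(h_1,\dots,h_i)=\sum_{S\subset\{1,\dots,i\}}(-1)^{i-|S|}\,T(x^S).
\]
Since $\sum_S(-1)^{i-|S|}=0$, we may subtract $T(p)$ from every term. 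Each $x^S$ lies in the $\delta$-neighbourhood of $p$, so each $|T(x^S)-T(p)|_Y\le 1$. Hence $|T(h)|_Y\le 2^i$ whenever $|h_j|_X<\delta$ for all $j$.

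For arbitrary nonzero $h_j$, apply this bound to $\tfrac{\delta}{2}h_j/|h_j|_X$ and use homogeneity in each slot. This yields $|T(h)|_Y\le 2^i(2/\delta)^i\prod_j|h_j|_X$. If some $h_j=0$, then $T(h)=0$ and the bound holds trivially. So $\mathcal C\neq\varnothing$.

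\textbf{Step (iv)$\Rightarrow$(iii).} Restrict the inequality in (iv) to unit vectors. This gives $\|T\|_{\mathcal L^i(X,Y)}\le C<\infty$ for every $C\in\mathcal C$, hence $\|T\|_{\mathcal L^i(X,Y)}\le\inf\mathcal C$.

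\textbf{Step (iii)$\Rightarrow$(iv).} Take nonzero $h_j$ and normalize: $T(h)=\prod_j|h_j|_X\,T(h_1/|h_1|_X,\dots,h_i/|h_i|_X)$. This gives $|T(h)|_Y\le\|T\|_{\mathcal L^i(X,Y)}\prod_j|h_j|_X$; the case where some $h_j=0$ is again trivial. Thus $\|T\|_{\mathcal L^i(X,Y)}\in\mathcal C$. Combined with the previous step, this yields both final claims: $\|T\|\in\mathcal C$ and $\|T\|=\inf\mathcal C$.

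\textbf{Step (iv)$\Rightarrow$(i).} Fix $x\in X^i$ and telescope, changing one slot at a time:
\[
T(y)-T(x)=\sum_{j=1}^i T(x_1,\dots,x_{j-1},\,y_j-x_j,\,y_{j+1},\dots,y_i).
\]
By (iv), each term is bounded by $C\,|y_j-x_j|_X\prod_{l\neq j}\max(|x_l|_X,|y_l|_X)$. For $y$ in a bounded neighbourhood of $x$, the product factors are uniformly bounded, so the sum tends to $0$ as $y\to x$.

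The main obstacle is Step (ii)$\Rightarrow$(iv). Continuity at an arbitrary point must be transported to a bound near $0$, and the inclusion--exclusion identity is what makes this work; every other step is routine homogeneity or telescoping.
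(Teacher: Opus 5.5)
Your proof is correct and complete. The inclusion--exclusion identity holds: after expanding each $T(x^S)$, the term with $h_j$ in exactly the slots $R$ has coefficient $(1-1)^{i-|R|}$, which vanishes unless $R=\{1,\ldots,i\}$. Subtracting $T(p)$ is legitimate because $\sum_S(-1)^{i-|S|}=0$ for $i\ge 1$. The telescoping in (iv)$\Rightarrow$(i) and the normalization arguments giving $\|T\|_{\mathcal L^i(X,Y)}=\min\mathcal C$ are also right.

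The paper does not prove this proposition. It records it as a standard fact and refers to \cite[(A.2.5)]{Flett_1980}, so the comparison is between a citation and your self-contained argument.

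You correctly isolate the one non-routine point. For $i\ge 2$, continuity at an arbitrary point $p$ cannot be moved to $0$ by translation as in the linear case, because $T(p+h)-T(p)\neq T(h)$. Your identity is the mixed finite difference $\Delta_{h_1}\cdots\Delta_{h_i}T(p)=T(h)$, and it does this transfer in one line, with the explicit constant $(4/\delta)^i\in\mathcal C$.

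Your route also makes visible two facts the citation hides. Only multilinearity and the triangle inequality are used, so completeness of $X$ and $Y$ plays no role and the statement holds for arbitrary normed spaces. Likewise, the particular (equivalent) product norm on $X^i$ is irrelevant. The citation simply buys brevity in an appendix of preliminaries.
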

The space $\mathcal L^i(X;Y)$  is a Banach space when endowed with the norm $\|\cdot\|_{\mathcal L^i(X,Y)}$. 
We use the notation $\mathcal{L}^i(X)$ when $X=Y$. 
\begin{definition}[Weaker topologies on $\mathcal L^i(X,Y)$]\label{rem:compact_open_LX}
   {In parts of this paper, we endow $\mathcal{L}^i(X,Y)$ with the compact-open topology, i.e.~the locally convex Hausdorff topology generated by  the directed family of seminorms $\mathcal{S}^{co}:=\left\{s^{co}_{K}:K \subset X\right.$ compacts$\},$ with $s^{co}_{K}(Z)=\sup _{h^1,\ldots,h^i \in K}|Z (h^1,\ldots,h^i)|_Y$ for $Z\in \mathcal{L}^i(X,Y) $. We denote by $\mathcal{L}^i(X,Y)^{co}$ this topological space. A Moore–Smith sequence $Z_\alpha \xrightarrow{co} Z\in \mathcal L^i(X,Y)$  in the compact-open topology iff $\sup_{h^1,\ldots,h^i\in K}|(Z_\alpha -Z )(h^1,\ldots,h^i)|_Y\to 0$ for all $K\subset X$ compact.}
   
    {We also consider the strong operator topology on $\mathcal{L}^i(X,Y)$, i.e.~the locally convex Hausdorff topology generated by  the  family of seminorms $\mathcal S^{s}=\{s^{so}_{h^1,\ldots,h^i},h^1,\ldots,h^i\in X\},$ with $s^{so}_{h^1,\ldots,h^i}(Z)=|Z (h^1,\ldots,h^i)|_Y$, $Z\in \mathcal{L}^i(X,Y)$.  We denote by $\mathcal{L}^i(X,Y)^{so}$ this topological space. A Moore–Smith sequence $Z_\alpha \xrightarrow{so}  Z$  in the strong operator topology iff $|(Z_\alpha -Z )(h^1,\ldots,h^i)|_Y\to 0$ for all $h^1,\ldots,h^i \in  X$.}

Compared with the strong operator topology,
   the compact-open topology is stronger.  Both  topologies are weaker than the topology induced by the operator norm.
\end{definition}

The following are standard facts of functional analysis.
\begin{lemma}[{\cite[Lemma 3.7, p. 151]{kato1995perturbation}}]\label{lemma1:uniformcompactsfromstrongandequibounded}
    Let $X, Y$ be Banach spaces, $K \subset X$ a compact set, and $T_n, T \in L(X, Y)$. If $T_n \xrightarrow{s} T$, then $T_n \xrightarrow{co} T$. 
\end{lemma}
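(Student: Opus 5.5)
The plan is the classical equicontinuity argument: uniform boundedness of the sequence together with pointwise convergence on a finite net of $K$ gives uniform convergence on $K$. Equicontinuity is exactly what upgrades convergence on a finite set to convergence on a compact set. Note that the hypothesis concerns a \emph{sequence} $\{T_n\}_{n\in\mathbb N}$, not a general Moore--Smith sequence, and this is used in the first step. Throughout, $K\subset X$ is a fixed compact set and $T_n\xrightarrow{s}T$ means $|T_nh-Th|_Y\to 0$ for every $h\in X$.

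\textbf{Step 1: uniform bound on the operator norms.} For each fixed $h\in X$ the sequence $\{T_nh\}_{n\in\mathbb N}$ converges in $Y$, so it is bounded: $\sup_n|T_nh|_Y<\infty$. Since $X$ is Banach, the Banach--Steinhaus (uniform boundedness) principle gives
\[
M:=\sup_n\|T_n\|_{\mathcal L(X,Y)}<\infty .
\]
Set $C:=M+\|T\|_{\mathcal L(X,Y)}$. Then for all $n$ and all $h,g\in X$,
\[
|(T_n-T)h-(T_n-T)g|_Y\le C\,|h-g|_X .
\]
So the differences $T_n-T$ are uniformly Lipschitz. This is the key point, and it is exactly where sequences are needed: for a general Moore--Smith sequence pointwise convergence does not yield pointwise boundedness.

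\textbf{Step 2: reduce to a finite set.} Fix $\epsilon>0$. By compactness, choose a finite $\frac{\epsilon}{3C}$-net $\{h_1,\dots,h_m\}\subset K$, so that every $h\in K$ satisfies $|h-h_j|_X<\frac{\epsilon}{3C}$ for some $j$. By strong convergence at these finitely many points there is $n_0$ such that
\[
|(T_n-T)h_j|_Y<\tfrac{\epsilon}{3}\quad\text{for all } n\ge n_0 \text{ and all } j\le m .
\]

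\textbf{Step 3: conclude.} For $n\ge n_0$ and any $h\in K$, pick $h_j$ with $|h-h_j|_X<\frac{\epsilon}{3C}$. The Lipschitz bound from Step 1 and the estimate from Step 2 give
\[
|(T_n-T)h|_Y\le C\,|h-h_j|_X+|(T_n-T)h_j|_Y<\tfrac{\epsilon}{3}+\tfrac{\epsilon}{3}<\epsilon .
\]
Hence $\sup_{h\in K}|(T_n-T)h|_Y\to0$. Since $K$ was an arbitrary compact set, this is $T_n\xrightarrow{co}T$ in the sense of Definition \ref{rem:compact_open_LX}.

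The only non-routine step is Step 1. If the lemma were later needed for Moore--Smith sequences (for instance the approximation nets of Assumption \ref{ass:approx_pro}), uniform boundedness would have to be imposed as an extra hypothesis, since it no longer follows from Banach--Steinhaus.
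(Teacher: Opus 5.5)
Your proof is correct and is the standard argument behind the cited \cite[Lemma 3.7]{kato1995perturbation}: Banach--Steinhaus gives uniform boundedness of the $T_n$, and strong convergence on a finite net of $K$ then gives uniform convergence on $K$. The paper gives no proof beyond the citation and Remark \ref{rem:Tn_strongly_bounded}, which records exactly your Step~1, so the only cosmetic fix is to take $C:=M+\|T\|_{\mathcal L(X,Y)}+1$ (or treat the trivial case $C=0$ separately) so that the net radius $\epsilon/(3C)$ is always defined.
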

\begin{remark}\label{rem:Tn_strongly_bounded}
     Let $X, Y$ be Banach spaces, $K \subset X$ a compact set, and $T_n, T \in L(X, Y)$. If $T_n \xrightarrow{s} T$, then, by Banach Steinhaus theorem, there exists $C>0$ such that
     $\|T_n\|\leq C.$
\end{remark}
\begin{lemma}\label{rem:relatively-compact_Pd-I_Banach}Let $X$ be a Banach space, $K \subset X$ a compact set, and $T_n \in L(X)$ such that $\sup_{h\in K}|(I-T_n)h|\xrightarrow{n\to \infty}0.$  Then the set $\bigcup_{n \in\mathbb N} T_n(K)={\{T_n(x),x \in K, n \in \mathbb N\}}$ is relatively compact, i.e.~its closure $\tilde K := \overline{\bigcup_{n=1}^\infty T_n(K)}$ is compact.  Since $T_nx\to x$ for all $x \in K$, we have $\tilde K \supset K$.
\end{lemma}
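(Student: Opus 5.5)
Since $\bigcup_n T_n(K)$ lies inside the compact set $\tilde K$, I will prove total boundedness of the set directly in the complete space $X$. Then its closure $\tilde K$ is compact, and the inclusion $\tilde K\supset K$ is immediate: each $x\in K$ is the limit of $T_nx\in\bigcup_m T_m(K)$.

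Set $\delta_n:=\sup_{h\in K}|(I-T_n)h|$. By assumption $\delta_n\to0$. Fix $\varepsilon>0$ and pick $n_0$ with $\delta_n<\varepsilon/2$ for all $n> n_0$. The set splits as
\[
\bigcup_{n\le n_0}T_n(K)\ \cup\ \bigcup_{n>n_0}T_n(K).
\]

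\emph{Finitely many indices.} Each $T_n$ is continuous, so $T_n(K)$ is compact. A finite union of compact sets is compact, hence totally bounded, and can be covered by finitely many $\varepsilon$-balls.

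\emph{Tail indices.} $K$ is compact, so cover it by finitely many balls $B(x_j,\varepsilon/2)$, $j\le J$, with $x_j\in K$. For $n>n_0$ and $x\in K$,
\[
|T_nx-x|\le\delta_n<\varepsilon/2 .
\]
Choosing $j$ with $|x-x_j|<\varepsilon/2$ gives $|T_nx-x_j|<\varepsilon$. So the tail is covered by the finitely many balls $B(x_j,\varepsilon)$.

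Combining the two covers, $\bigcup_n T_n(K)$ is totally bounded, and since $X$ is complete its closure is compact. The only point needing care is handling the tail uniformly in $n$. This is exactly what the uniform convergence $\delta_n\to0$ on $K$ provides: it lets the tail be covered by a single $\varepsilon$-net built from $K$ itself. No uniform bound on $\|T_n\|$ is needed, because the finitely many early indices are handled separately.
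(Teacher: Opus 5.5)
Your argument is correct, but it takes a different route from the paper. The paper checks the two hypotheses of an Arzel\`a--Ascoli-type result, \cite[Lemma 1.1]{anselone-palmer}:
\begin{enumerate}
\item the family $\{T_n|_K\}$ is equicontinuous, via $|T_nx-T_ny|\le|T_nx-x|+|x-y|+|y-T_ny|$;
\item each orbit $\{T_nx:n\in\mathbb N\}$ has compact closure, because $T_nx\to x$.
\end{enumerate}
The lemma then yields relative compactness of $\bigcup_n T_n(K)$.

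You instead prove total boundedness directly. You split the union into a finite head, which is a finite union of compact images, and a tail lying uniformly within $\varepsilon/2$ of $K$, so a single $\varepsilon/2$-net of $K$ covers all large $n$ at once.

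Your route is self-contained. It also makes explicit the head/tail split that the paper's equicontinuity sentence leaves implicit: the triangle inequality only controls large $n$, and the finitely many small $n$ need the uniform continuity of each $T_n$ on $K$. The paper's route is shorter and rests on a more general principle, which needs only equicontinuity and pointwise relatively compact orbits, not convergence to the identity.

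One wording point: your opening sentence says the set lies inside ``the compact set $\tilde K$'', which presupposes the conclusion. What you mean is that $\tilde K$ is its closure, so total boundedness suffices.
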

{\begin{proof} We verify the hypotheses of
\cite[Lemma 1.1]{anselone-palmer}.
Set $ \mathcal F:=\{T_n|_K:n\in\mathbb N\}.$  First, writing, 
$ |T_nx-T_ny|
    \leq |T_nx-x|+|x-y|+|y-T_ny|$ with the first and third terms converge to zero uniformly on $K$,
we notice that $\mathcal F$ is equicontinuous. Second, since  $T_nx\to x$, for every $x\in K$, we have that
\(\overline{\{T_nx:n\in\mathbb N\}}\)
is compact. Then \cite[Lemma 1.1]{anselone-palmer} implies that
$\mathcal F(K)$
has compact closure.
\end{proof}}
\subsection{Encoder, decoder, and partial sum operators}
\begin{definition}\label{def:encoder_decoder}Let  $(X,|\cdot|)$ be a Banach space. 
\begin{itemize}
    \item An encoder (or coordinate) operator on $X$ is ${\mathcal E}^X \in \mathcal L( X , \R^{N})$, for some $N\in \mathbb N$, i.e. there exists  $\left\{a^{X}_i\right\}_{i=1}^{N} \subset X^*$  such that
${\mathcal E}^Xx =  (\langle a^{X}_i,x\rangle_{X^*,X}  )_{i=1}^{N} $.
\item  A decoder (or embedding) operator into $X$ is ${\mathcal D}^{X} \in \mathcal L( \R^{N} , X)$ for some $N \in \mathbb N$, i.e. there is $\left\{e_i^{X}\right\}_{i=1}^{N} \subset X$
${{\mathcal D}}^{X}(x_i)_{i=1}^{N} = \sum_{i=1}^{N} x_i e^{X}_i$.
\end{itemize}
 Then we can define the partial sum operator
 \begin{align}\label{eq:S=DE}
{\mathcal S}^X \in \mathcal L(X),\quad {\mathcal S}^Xx:={{\mathcal D}}^{X}{\mathcal E}^Xx=\sum_{i=1}^{N} \langle a^{X}_i,x\rangle_{X^*,X} \  e^{X}_i.
    \end{align}
\end{definition}

\subsubsection{Approximation Properties, Frames, and Bases in Banach spaces}
We introduce  Approximation Properties, Frames, and Bases in Banach spaces \cite{casazza2001,casazza2008}. These are general properties satisfied by most Banach spaces used in applications. However,  \cite{enflo1973counterexample} provided a counterexample of a separable Banach space failing to have the Approximation Property.

Let  $(X,|\cdot|)$ be a Banach space. 
\begin{assumption}[{Approximation Property \cite[Definition 2.2 and Theorem 2.5]{casazza2001}}]\label{ass:approx_pro}
Assume that $X$ has the  Approximation Property (AP); that is,  for all compact $K\subset X$, $\epsilon>0$ there exists $T_{K,\epsilon}\in  \mathcal F(X)$ such that $\sup_{x\in K}|(I-T_{K,\epsilon})x|\leq \epsilon$. Then, defining the directed set $A:=\{(K,\epsilon):K\subset X \textit{  compact, }\epsilon>0\}$, with $\preceq$ given by $\alpha_1\preceq \alpha_2$, $\alpha_1=(K_1,\epsilon_1),\alpha_2=(K_2,\epsilon_2)$, if $K_1\subset K_2, \epsilon_1\geq \epsilon_2$, and ${\mathcal S}_\alpha^X:=T_{K,\epsilon}$, $\alpha=(K,\epsilon)$, we have  a Moore–Smith sequence $\left\{{\mathcal S}^X_\alpha\right\}_{\alpha \in A }\subset \mathcal L(X)$ such that
\begin{enumerate}[(i)]
\item ${\mathcal S}^X_\alpha \in \mathcal F(X)$, for all $\alpha$, i.e.  there exists $N_\alpha\in \mathbb N$, $\left\{e_i^{X,\alpha}\right\}_{i=1}^{N_\alpha} \subset X$,  $\left\{a^{X,\alpha}_i\right\}_{i=1}^{N_\alpha} \subset X^*$ such that
\begin{equation}\label{eq:sum_op_X_AP}
  {\mathcal S}^X_\alpha x=\sum_{i=1}^{N_\alpha} \langle a^{X,\alpha}_i,x\rangle_{X^*,X} \  e^{X,\alpha}_i, \quad  \forall x \in X.
\end{equation}
    \item  ${\mathcal S}_\alpha^X\xrightarrow{co} I$, i.e.   $\sup_{x\in K}|(I-{\mathcal S}_\alpha^X)x|_X\rightarrow 0,$ for all $ K\subset X $ compact.
\end{enumerate}

In this case, we
define  
    \begin{align}
        &{\mathcal {E}}_\alpha^{X} \in \mathcal L( X , \R^{N_\alpha}) && {\mathcal {E}}_\alpha^{X}(x) =  (\langle a^{X,\alpha}_i,x\rangle_{X^*,X}  )_{i=1}^{N_\alpha} \label{eq:coordinate_op_X},\\
        &{\mathcal {D}}_\alpha^{X} \in \mathcal L( \R^{N_\alpha} , X) && {\mathcal D}_\alpha^{X}((x_i)_{i=1}^{N_\alpha}) = \sum_{i=1}^{N_\alpha} x_i e^{X,\alpha}_i,\label{eq:embedding_op_X}
    \end{align}
    which are  encoder and decoder operators, respectively (see Definition \ref{def:encoder_decoder}) such that ${\mathcal S}^X_\alpha={\mathcal {D}}_\alpha^{X}\circ \mathcal {E}_\alpha^{X}$.
    When there is no ambiguity on the Banach space $X$, we may denote them by ${\mathcal {E}}_\alpha,{\mathcal {D}}_\alpha,{\mathcal S}_\alpha$. 
\end{assumption}
\begin{remark}\label{rem:AP_implies_mathcalK=mathcalA}
If either $X^*$ or $Y$ have AP, then $\mathcal A(X,Y)=\mathcal K(X,Y)$ (see \cite[Theorems 16.36, 16.35]{fabian-etal}).
 \end{remark}
A strictly stronger property is the Bounded Approximation Property \cite{casazza2001}.
\begin{assumption}[{Bounded Approximation Property \cite[Definition 3.1, Theorem 3.3, Corollary 3.4]{casazza2001}}]\label{ass:bounded_approx_pro}
Assume that $X$ has the Bounded Approximation Property (BAP); that it has AP and there exists $\lambda>0$ such that $ \| {\mathcal S}_\alpha^{X}\|_{\mathcal L(X)}\leq \lambda$, for all $\alpha\in A $.  If $X$ is separable,  this requirement is equivalent to the existence of a sequence $\left\{{\mathcal S}^X_d\right\}_{d\in \mathbb N}\subset \mathcal L(X)$ such that
\begin{enumerate}[(i)]
\item ${\mathcal S}^X_d \in \mathcal F(X)$, for all $d$, i.e. there exists $N_d\in \mathbb N$, $\left\{e_i^{X,d}\right\}_{i=1}^{N_d} \subset X$,  $\left\{a^{X,d}_i\right\}_{i=1}^{N_d} \subset X^*$ such that
\begin{equation}\label{eq:sum_op_X}
 {\mathcal S}^X_d\in\mathcal L(X),\quad    {\mathcal S}^X_d x=\sum_{i=1}^{N_d} \langle a^{X,d}_i,x\rangle_{X^*,X} \  e^{X,d}_i, \quad  \forall x \in X.
\end{equation}
    \item  
${\mathcal S}_d^X\xrightarrow{co} I,$ i.e. $  \sup_{x\in K}|(I-{\mathcal S}_d^X)x|_X\rightarrow 0,$ for all $ K\subset X$ compact, or, equivalently, by Lemma \ref{lemma1:uniformcompactsfromstrongandequibounded}, ${\mathcal S}^X_d\xrightarrow{s} I$.
    \item The following bound holds
\begin{equation}\label{eq:SdX_bdd_lambda}
   \| {\mathcal S}_d^{X}\|_{\mathcal L(X)}\leq \lambda, \quad \forall d\in \mathbb N.
\end{equation}
\end{enumerate}
Additionally, it can be assumed ${\mathcal S}^X_m {\mathcal S}^X_d={\mathcal S}^X_d$, for all $d<m$.
The coordinate (or encoder) ${\mathcal E}^X_d$ and embedding (or decoder) ${\mathcal D}^X_d$ operators are defined as in \eqref{eq:coordinate_op_X}, \eqref{eq:embedding_op_X} for $\alpha=d$. 
\end{assumption}
A strictly stronger property of the Bounded Approximation Property is the existence of a Schauder frame \cite{casazza2008}. 
\begin{definition}[Schauder frame]\label{def:schauder_frame}
    A Schauder frame, or quasibasis, for $X$ is a sequence $\left\{\left(e_i^X, a_i^X\right)\right\}_{i \in \mathbb{N}}\subset X \times X^*$ such that 
$
x=\sum_{
i=1}^{\infty} \langle a_i^X,x\rangle_{X^*,X} \  e_i^X,$ for all $ x \in X.
$

In this case,  the coordinate, embedding, and partial sum\footnote{also known as frame operator} operators (notice the difference with respect to \eqref{eq:coordinate_op_X}, \eqref{eq:embedding_op_X}, \eqref{eq:sum_op_X}), become, respectively,
    ${E}_d^{X} \in \mathcal L( X , \R^d), $ $ {E}_d^{X}(x) =  (\langle a_i^X,x\rangle_{X^*,X}  )_{i=1}^d$
        ${\mathcal D}_d^{X} \in \mathcal L( \R^d , X), $ $ {{D}}_d^{X}((x_i)_{i=1}^d) = \sum_{i=1}^d x_i e_i^X$ 
          ${\mathcal S}_d^X \in \mathcal L( X  ), $ $ {\mathcal S}_d^Xx = \sum_{i=1}^d \langle a_i^X,x\rangle_{X^*,X} \    e_i^X.$
   As usual, when there is no ambiguity on the Banach space $X$, we will denote them simply by ${E}_d,{\mathcal D}_d,{\mathcal S}_d$. 
\end{definition}
A particular, but very important case is the Schauder basis \cite{singer1981}.
\begin{definition}[Schauder basis]\label{def:schauder_basis}
    A Schauder basis of $X$ is a sequence of linearly independent elements $\{e_i^X\}\subset  X$ such that for all $x \in X$ there exists a unique sequence $\left\{b_i\right\} \subset \mathbb R$ such that $
x=\sum_{i=1}^{\infty} b_i \  e_i^X.
$ It follows that there exists a unique sequence $\{a_i^X\}\subset X^*$, defined by $\langle a_i^X,x\rangle_{X^*,X}=b_i$, such
    that $
x=\sum_{i=1}^{\infty} \langle a_i^X,x\rangle_{X^*,X} \  e_i^X.
$
We define  ${E}_d^{X} \in \mathcal L( X , \R^d), $ 
        ${\mathcal D}_d^{X} \in \mathcal L( \R^d , X), $ 
          ${\mathcal S}_d^X \in \mathcal L( X  )$  as in Definition \ref{def:schauder_frame}.
\end{definition}

\subsection{Hilbert spaces}\label{subsec:H} If   $(X,\langle \cdot ,\cdot \rangle_X)$ is a Hilbert space, we denote the induced norm by   $ |\cdot|_X:=(\langle \cdot ,\cdot \rangle_X)^{1/2}.$ If $X$ is separable   with orthonormal basis $\{e_i^X\}$, we have  
 $x=\sum_{i\in \mathbb N}\langle x,e_i^X \rangle e_i, $ $ |x|^2=\sum_{i\in \mathbb N}|\langle x,e_i^X \rangle|^2.$
An orthonormal basis on a Hilbert space $X$ is a Schauder basis with $a^X_i=e^X_i$ for all $i\in\mathbb N$, i.e. $\langle a^X_i,x\rangle_{X^*,X}=\langle x,e^X_i\rangle_X$. In particular, the Bounded Approximation Property of $X$ (Assumption \ref{ass:bounded_approx_pro}) is satisfied.
Throughout the whole paper we identify $H$ with its dual $H^*.$ 
We denote by $S(X)$ the space of self-adjoint operators in $\mathcal{L}(X)$. 

Let   $(Y,\langle \cdot ,\cdot \rangle_Y)$ be another  Hilbert space.
If $X,Y$ are separable, we denote by $\mathcal{L}_1(X,Y)$ the Banach space of trace-class operators endowed with the norm  
$\|L\|_{\mathcal{L}_1(X,Y)}:=\inf \left \{\sum_{i \in \mathbb N} |a_i|_X |b_i|_Y : Lx=\sum_{i \in \mathbb N} b_i  \langle a_i,x \rangle, \ a_i \in X, b_i \in  Y , \forall i \in \mathbb N \right \} ,  $ $ L \in \mathcal{L}_1(X,Y).$ 
We denote by  $\mathcal{L}_2(X,Y)$ the Hilbert space of Hilbert-Schmidt operators from $X$ to $Y$. The scalar product in $\mathcal{L}_2(X,Y)$ and its induced norm are respectively given by
$\langle L,T \rangle_{\mathcal{L}_2(X,Y)}:= \sum_{k \in \mathbb N} \langle Le_k,Te_k \rangle_Y, $ $   \|L\|_{\mathcal{L}_2(X,Y)}:=\left (\sum_{k \in \mathbb N} |L  e_k|_Y^2 \right)^{1/2} $ $  L,T \in \mathcal{L}_2(X,Y),$
where  $\{ e_k\}$  is any orthonormal basis of $X$. We will consider the space of $i$-linear Hilbert-Schmidt operators $\mathcal L_2^i\left(X,Y\right)$, with inner product 
$
\langle L, T\rangle_{\mathcal L_2^i\left(X, Y\right)}=\sum_{j_1, \ldots, j_i\in \mathbb N}\left\langle L\left(e_{j_1}, \ldots, e_{j_i}\right), T\left(e_{j_1}, \ldots, e_{j_i}\right)\right\rangle_{Y},
$
and induced Hilbert-Schmidt norm $\|\cdot\|_{\mathcal{L}_2^i\left(X, Y\right)}$. The canonical isometric identification gives $
\mathcal L_2\bigl(X,\mathcal L_2^{i-1}(X,Y)\bigr)=\mathcal L_2^i(X,Y).
$
  When $Y=X$ we simply write $\mathcal{L}_1(X)=\mathcal{L}_1(X,X)$, $\mathcal{L}_2(X)=\mathcal{L}_2(X,X)$, $\mathcal{L}^i_2(X)=\mathcal{L}_2^i(X,X)$.
  We denote by $\mathcal{L}^+_1(X)\subset \mathcal{L}_1(X)$ the subspace of positive operators. If $T\in \mathcal{L}_1(X)$ we can define its trace by  $\operatorname{Tr}(T):=\sum_{k \in \mathbb{N}}\left\langle T e_k, e_k\right\rangle_X$, where $\operatorname{Tr}(T)$ is independent of the   orthonormal basis $\{ e_k\}$.

\begin{definition}[Orthonormal basis]\label{def:orthonormal_basis} Let $(X,\langle \cdot ,\cdot \rangle_X)$ be a separable Hilbert space, with orthonormal basis $\{e_i^X\}$.
In the Hilbert space case, the coordinate, embedding, and partial sum operators, respectively, become
   ${E}_d^{X} \in \mathcal L( X , \R^d)$, $ {E}_d^{X}(x) =  (\inprod{x}{e_i^X})_{i=1}^d $, $
        {\mathcal D}_d^{X} \in \mathcal L(\R^d , X) $, $ {\mathcal D}_d^{X}((x_i)_{i=1}^d) = \sum_{i=1}^d x_i e^X_i,$ $
          {\mathcal S}_d^X \in \mathcal L( X) $, $ {\mathcal S}_d^Xx= \sum_{i=1}^d \langle x,e^X_i \rangle e_i^X,$
    where ${\mathcal S}_d^Xx$ is now the orthogonal projection over $\operatorname{span}(e_1,\ldots,e_d)$.  In the Hilbert space case, \eqref{eq:SdX_bdd_lambda} improves to
        $\|{\mathcal S}_d^X\|_{\mathcal L(X)}= 1, $ $ d\geq 1.$
  As usual,  when there is no ambiguity on the Hilbert space $X$, we will denote them simply by ${\mathcal S}_d,{E}_d,{\mathcal D}_d$.
\end{definition}

\subsection{Measures and spaces $L^p$}\label{subsubsec:borel_meas}
\paragraph{Spaces $L^p$.} Let $(\Omega, \mathcal F, \mu)$ be a measure space with positive measure $\mu$ and  $p \in[1,+\infty)$. We denote by $L^p(\Omega; \mathbb R^N;\mu)$  the standard space of measurable functions $ f: \Omega \rightarrow \mathbb R^N$ such that $\int_\Omega |f|^p d \mu<+\infty$. When $\mu$ is the Lebesgue measure on $\Omega\subset \mathbb R^n$ we suppress the dependence, as well as when $N=1$.
$L^p(\Omega;\mathbb R^N;\mu)$ is a Banach space with norm
$\|f\|_{L^p}:=\left[\int_{\Omega}|f|^p d \mu\right]^{1 / p}.$
In particular,  $L^2(\Omega; \mathbb R^N;\mu)$ is a Hilbert space with scalar product
$\langle f,g \rangle_{L^2}=\int_\Omega \langle f,g \rangle d \mu.
$ When $\Omega=\mathbb{N}$ endowed with the counting measure $\#$, we denote $\ell^p=L^p\left(\Omega ; \mathbb{R};\#\right)$.

\paragraph{Borel measures.} Let $X,Y$ be Banach spaces. We say that a (positive) Borel measure $\mu$ on $X$ is finite, or, simply, $\mu$ is a finite measure on $X$ (dropping Borel for brevity), if $\mu(X)<\infty$; if $\mu(X)=1$ it is called  a (Borel) probability measure. Let $T:X\to Y$ be Borel measurable. We define the pushforward measure $T_\#\mu$ on $Y$ by
$
(T_\#\mu)(A):=\mu\big(T^{-1}(A)\big),$ $ A\in \mathcal B(Y).$
Then, for every measurable function $\varphi:Y\to\mathbb R$,
$
\int_Y \varphi(y)\,d(T_\#\mu)(y)
=
\int_X \varphi\big(T(x)\big)\,d\mu(x).$ We define $\operatorname{supp}(\mu):=\left\{x \in X : \forall N_x :\left(x \in N_x \Rightarrow \mu\left(N_x\right)>0\right)\right\}$, where $N_x$ denotes a neighborhood of $x\in X.$

Let $\lambda,\mu$ be finite measures on $X$. We say that $\lambda$ is absolutely continuous with respect to $\mu$ and write
$
\lambda \ll \mu
$
if $\lambda(A)=0$ for every $A \subset X$ Borel for which $\mu(A)=0$. If $
\lambda \ll \mu
$,  by the Radon-Nikodym  Theorem, there exists  a unique (up to a.e. equivalence relation) $\frac{d\lambda}{d\mu}: X \rightarrow[0, \infty)$ Borel-measurable and $\int_X \frac{d\lambda}{d\mu} d\mu<\infty$, called Radon-Nikodym derivative, such that 
$
\lambda(A)=\int_A \frac{d\lambda}{d\mu} d \mu,
$
for any  $A  \subset X$ Borel.

\begin{notation}\label{not:marginals}
Let $k\in \mathbb N$. Let $\mu$ be a finite measure on $X\times X^k$.  We denote by $\mu^{0:i}$ the marginal of $\mu$ over $X\times X^i$, $0\leq i\leq k$. We denote $\mu^0:=\mu^{0:0}$; that is, given $(x,h_1,\ldots,h_k)\in X \times X^k$ we interpret $x$ as the $0$-th variable and $h_i$ as the $i$-th variable (appearing in $D^iF(x)$ for $F:X\to Y$ smooth; see, e.g., Definition \ref{def:Gâteaux}). Coherently, if  $\eta$ is  a finite measure on $X^k$, we denote by $\eta^{1:i}$ the marginal of $\eta$ over $X^i$, $i\leq k$. Moreover, for $q\geq 0,p\geq 1$, we set
\begin{equation}\label{eq:moments_mu}
\begin{aligned}
   & \|\mu\|_{k,q,p} =\int_{X\times X^k} (1+|x|_X^q)\prod_{j=1}^k(1+|h_j|_X^p)d\mu(x,h_1,\ldots,h_k).\\
   & \|\eta\|_{k,p} =\int_{X^k} \prod_{j=1}^k(1+|h_j|_X^p)d\eta(h_1,\ldots,h_k).
\end{aligned}
\end{equation}

Define measures $\hat \mu^{0:i}$, $0\leq i\leq k$ on $X\times X^i$  by (dropping dependence on $p$)
$$
\hat \mu^{0:i}(U):=\int_{U} \prod_{r=1}^i\left|h^r\right|_X^p d \mu^{0:i}\left(x, h^1, \ldots, h^i\right), \quad U \subset X\times X^i \text { Borel  measurable, }
$$
If $\int_{X\times X^i} \prod_{r=1}^i\left|h^r\right|_X^p d \mu^{0:i}\left(x, h^1, \ldots, h^i\right)<\infty$, then, by \cite[Corollary 3.6]{folland1999}, $\hat\mu^{0:i} $  is a finite measure such that $\hat\mu^{0:i} \ll \mu^{0:i}$ with $d\hat\mu^{0:i}/d\mu^{0:i}(x,h^1,\dots,h^i)=\prod_{r=1}^i\left|h^r\right|_X^p$.  
We also define measures $\bar \mu^{0:i}$, $0\leq i\leq k$ on $X$  by
$$
\bar \mu^{0:i}(A):=\hat\mu^{0:i}(A\times X^i)=\int_{A \times X^i} \prod_{r=1}^i\left|h^r\right|_X^p d \mu^{0:i}\left(x, h^1, \ldots, h^i\right), \quad A \subset X \text { Borel  measurable, }
$$
If $\int_{X\times X^i} \prod_{r=1}^i\left|h^r\right|_X^p d \mu^{0:i}\left(x, h^1, \ldots, h^i\right)<\infty$, then $\bar \mu^{0:i}$  is finite and $\bar \mu^{0:i}\ll \mu^{0}$ (since, using $\hat\mu^{0:i} \ll \mu^{0:i}$, if $0=\mu^0(A)=\mu^{0:i}(A\times X^i)$, we have $\bar \mu^{0:i}(A)=\hat \mu^{0:i}(A\times X^i)=0$).
Note that
\begin{equation}\label{eq:mu_marginals_bound}
\begin{aligned}
    &\|\mu^{0:i}\|_{i,q,p} \leq \|\mu\|_{k,q,p},\quad \forall 0\leq i\leq k,\quad \textit{Therefore, }\|\mu^{0}\|_{0,q,p} \leq \|\mu\|_{k,q,p}.\\
    & \|\eta^{1:i}\|_{i,p} \leq \|\eta\|_{k,p},\quad \forall 1\leq i\leq k
\end{aligned}
\end{equation}

Let ${\mathcal E}^X \in \mathcal L( X , \R^{N}),$ ${\mathcal D}^{X} \in \mathcal L( \R^{N} , X)$ for some $ N\in \mathbb N$ be encoders and decoders (see Definition \ref{def:encoder_decoder}).
Given a finite measure $\nu$ on $X$ we denote the pushforward measure on $\mathbb R^N$ by $$\nu^{{\mathcal E}^X}:=\left({\mathcal E}^X\right)_{\#}\nu.$$
When $\nu=\mu^0$ we set $\mu^{0,{\mathcal E}^X}:=(\mu^0)^{{\mathcal E}^X}$.
\end{notation}
Note that, by \eqref{eq:S=DE}, we have (as measures on $X$)
\begin{equation}\label{eq:push_back_D}
\begin{aligned}
   ({\mathcal S}^X)_{\#} \nu&=({\mathcal D}^X \circ {\mathcal E}^X)_{\#} \nu= \nu\{({\mathcal D}^X \circ {\mathcal E}^X)^{-1}(\cdot)\}=\nu\{( {\mathcal E}^X)^{-1}[({\mathcal D}^X)^{-1}(\cdot)]\}\\
   &=\nu^{{\mathcal E}^X}\{({\mathcal D}^X)^{-1}(\cdot)\}=\left({\mathcal D}^X\right)_{\#} \nu^{{\mathcal E}^X}.
\end{aligned}
\end{equation}
\subsection{$\gamma$-radonifying operators}
{Let $H$ be a Hilbert space and $Y$ a Banach space. Identify the algebraic
tensor product $H\otimes Y$ with the finite-rank operators from $H$ to $Y$ by
$(h\otimes y)h':=[h',h]_H y$. If $h_1,\ldots,h_N$ are orthonormal in $H$,
$y_1,\ldots,y_N\in Y$, and $(g_n)$ are independent standard Gaussian
variables, set
$\left\|\sum_{n=1}^N h_n\otimes y_n\right\|_{\gamma(H,Y)}^2
  :=\E\left|\sum_{n=1}^N g_ny_n\right|_Y^2.$
This norm is independent of the orthonormal representation. Following
\cite[Definition~3.7]{van-neerven2010}, $\gamma(H,Y)$ is the completion
of $H\otimes Y$ in this norm. The canonical inclusion into $\mathcal L(H,Y)$
extends to an injective contraction $\gamma(H,Y)\hookrightarrow\mathcal L(H,Y)$;
after this identification, $T\in\mathcal L(H,Y)$ is called $\gamma$-radonifying
if $T\in\gamma(H,Y)$ \cite[Definition~3.13]{van-neerven2010}.
For $i\geq1$, define the iterated spaces by
$\gamma^1(H,Y):=\gamma(H,Y)$ and, recursively,
$\gamma^i(H,Y)
  :=\gamma\bigl(H,\gamma^{i-1}(H,Y)\bigr),
 i\geq2.$ If $H,Y$ are a separable Hilbert space, we have $\mathcal L_2^i(H,Y)=\gamma^i(H,Y)$.}
\subsection{$L^p_\xi$-norms on spaces of bounded multilinear operators}\label{subsec:norm_Lp-multilinear}
{Let $X,Y$ be Banach spaces. We introduce the following integral norm/seminorm on $\mathcal L^i(X;Y)$ for a measure $\xi$ on $X^i$ with finite $r$-moments. We are not aware that it has been introduced in the literature in this form. The main difference with the standard Bochner $L^r_\xi$-norm is that, thanks to linearity and continuity, if the support of the measure contains the cartesian product of the unit sphere of $X$, it is automatically a norm on $\mathcal L^i(X,Y)$ without the need to take a quotient. }
\begin{definition}[Norm $ \|\cdot\|_{L^p(X^i,Y,\xi)}$ on $\mathcal L^i(X,Y)$]\label{def:Lpnorm}
Let $\xi$ be a measure on $X^i$  such that $\int_{X^i} \prod_{t=1}^i |h^t|_X^pd\xi(h^1,\ldots,h^i)<\infty$ for  $p\geq 1.$ Define
 \begin{equation}\label{norm:Lp_operators}
\|L\|_{L^p(X^i,Y,\xi)}:=\left(\int_{X^i} |L(h^1,...,h^i)|^p_Yd\xi(h^1,\ldots,h^i)\right)^{1/p},\quad L\in \mathcal L^i(X,Y).
\end{equation}
This is a seminorm on $\mathcal L^i(X;Y)$, in general. 
However, for instance, if {$\operatorname{supp}\xi\supset \Pi_{i=1}^i S^{X,1}$, then (see below) it is a norm on $\mathcal L^i(X;Y)$ (without the need of taking the quotient with respect to equivalence relations  as in standard $L^r$-spaces)}.
It is weaker than $\|\cdot\|_{\mathcal L^i(X,Y)}$, since
\begin{equation}\label{eq:normr_bounded_opnorm}
    \|L\|_{L^p(X^i,Y,\xi)}\leq \|L\|_{\mathcal L^i(X,Y)}\left(\int_{X^i} \prod_{t=1}^i |h^t|_X^pd\xi(h^1,\ldots,h^i)\right)^{1/p}.
\end{equation}
When $X,Y$ are finite dimensional, whenever $ \|\cdot\|_{L^p(X^i,Y,\xi)}$ is a norm, it is equivalent to  $\|\cdot\|_{\mathcal L^i(X,Y)}$.
\end{definition}
{To check that $ \|\cdot\|_{L^p(X^i,Y,\xi)}$ is a norm when $\operatorname{supp}\xi\supset \Pi_{i=1}^i S^{X,1}$, it is enough to check that $\|L\|_{L^p(X^i,Y,\xi)}=0$ implies $L=0$. Indeed,  assume $\|L\|_{L^p(X^i,Y,\xi)}0$. 
Denote $g(h^1,\ldots,h^i):=|L(h^1,\ldots,h^i)|_Y^p$. Since $L\in \mathcal L^i(X,Y)$, $g$ is continuous and nonnegative. 
If $g(h^1_0,\ldots,h^i_0)>0$  at some $(h^1_0,\ldots,h^i_0) \in \operatorname{supp} \xi$, continuity would give an open neighborhood $U\subset X^i$ such that $(h^1_0,\ldots,h^i_0) \in U$ and $g>g\left(h^1_0,\ldots,h^i_0\right) / 2$ on $U$; by the definition of support, $\xi(U)>0$, from which $\int_{X^i} g d \xi\geq \int_U g d \xi \geq \xi(U) g\left(h^1_0,\ldots,h^i_0\right) / 2$, contradicting $\int_{X^i} g d \xi=0$. Hence $g=0$ on $\operatorname{supp}\xi\supset \Pi_{i=1}^i S^{X,1}$. Then $\|L\|_{\mathcal L^i(X,Y)}=0$, i.e. $L=0$.}
\begin{proposition}
Let $i\in \mathbb N,$ $p\geq 1$.
There exists a Borel measure $\xi$ on $X^i$ such that
$\int_{X^i}\prod_{t=1}^i|h_t|_X^p\,
 d\xi(h_1,\ldots,h_i)<\infty$
and
$
 \|L\|_{L^p(X^i,Y,\xi)}$
is a norm on $\mathcal L^i(X;Y)$ if and only if $X$ is separable.  Consequently, if $X$ is
nonseparable, then $\|\cdot\|_{L^p(X^i,Y,\xi)}$ is a proper seminorm for
every Borel measure $\xi$ on $X$ with finite $r$-moment.
\end{proposition}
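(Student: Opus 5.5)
The plan is to prove the two implications separately. Separability gives an explicit atomic measure whose support contains $\Pi_{t=1}^i S^{X,1}$. Nonseparability lets us build, via Hahn--Banach, a nonzero multilinear operator that vanishes on a separable set carrying all of $\xi$.

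\emph{($\Leftarrow$) $X$ separable.} Pick a countable dense subset $\{d_n\}_{n\in\mathbb N}$ of the unit sphere $S^{X,1}$; one exists because subsets of separable metric spaces are separable. Enumerate the countable family of $i$-tuples $(d_{n_1},\ldots,d_{n_i})$ as $\{\mathbf d_m\}_{m\in\mathbb N}$ and set
\begin{equation*}
\xi:=\sum_{m\in\mathbb N}2^{-m}\delta_{\mathbf d_m}.
\end{equation*}
This is a finite Borel measure on $X^i$. Each atom has $\prod_t|h^t|_X^p=1$, so $\int_{X^i}\prod_{t=1}^i|h^t|_X^p\,d\xi=\sum_m 2^{-m}<\infty$. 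The support of $\xi$ is closed and contains every $\mathbf d_m$, hence it contains their closure, which is $\Pi_{t=1}^i S^{X,1}$. The argument given just before the proposition then shows that $\|\cdot\|_{L^p(X^i,Y,\xi)}$ is a norm. One can also see this directly: if $\|L\|_{L^p(X^i,Y,\xi)}=0$, then $L$ vanishes at every atom, so by continuity $L$ vanishes on the product of spheres and therefore $L=0$.

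\emph{($\Rightarrow$) $X$ nonseparable.} Fix any Borel measure $\xi$ on $X^i$ with the stated finite moment. The first step is to show that $\xi$ is concentrated on $Z^i$ for some separable closed subspace $Z\subsetneq X$. For each $t\le i$, let $\xi_t:=(\pi_t)_\#\xi$ be the marginal on the $t$-th factor, where $\pi_t$ is the coordinate projection. If each $\xi_t$ is concentrated on a separable set $Z_t$, take $Z$ to be the closed linear span of $\bigcup_t Z_t$. Then $Z$ is separable, so $Z\neq X$, and $\xi(X^i\setminus Z^i)\le\sum_t\xi_t(X\setminus Z)=0$.

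Given such a $Z$, Hahn--Banach supplies $a\in X^*$ with $a\neq 0$ and $a|_Z=0$. Fix $y_0\in Y\setminus\{0\}$ and define
\begin{equation*}
L(h^1,\ldots,h^i):=\Big(\prod_{t=1}^i\langle a,h^t\rangle_{X^*,X}\Big)\,y_0 .
\end{equation*}
Then $L\in\mathcal L^i(X,Y)$ and $L\neq0$. Since $L$ vanishes on $Z^i$, which carries all of $\xi$, we get $\|L\|_{L^p(X^i,Y,\xi)}=0$. So $\|\cdot\|_{L^p(X^i,Y,\xi)}$ is only a seminorm, and the stated consequence follows immediately.

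\emph{Main obstacle: the separable-concentration step.} The claim is that a finite Borel measure on a nonseparable Banach space is concentrated on a separable subset. Two routes are available, and I would try the first.
\begin{itemize}
\item[(a)] Interpret ``Borel measure'' as Radon (tight), which is automatic for the measures used elsewhere in the paper. Then $\xi_t$ is carried by a $\sigma$-compact set. Compact metric sets are separable, so that set is separable.
\item[(b)] Invoke the Marczewski--Sikorski theorem: on a metric space whose density character is not real-measurable, every finite Borel measure has separable support.
\end{itemize}
Route (b) needs the set-theoretic hypothesis that no real-measurable cardinal exists. Without some hypothesis of this kind, (a) or (b), the implication is not provable in ZFC alone, and I would state this caveat explicitly in the proof.
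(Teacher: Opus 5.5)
Your argument is correct in substance but organized differently from the paper's.

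For ($\Leftarrow$), the paper takes $\xi=\otimes_{j=1}^i\gamma$ with $\gamma$ a full-support Gaussian measure. Your atomic measure $\sum_m2^{-m}\delta_{\mathbf d_m}$ on a countable dense subset of $\prod_{t=1}^iS^{X,1}$ does the same job using nothing beyond separability, and the moment bound is trivial.

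For ($\Rightarrow$), the paper argues directly and only for $i=1$. It passes to the probability $\hat\xi$ with density proportional to $|h|_X^p$, uses the rank-one operator $x^*(\cdot)y_0$ to show $\hat\xi$ is scalarly non-degenerate, and then cites \cite[Corollary~1, p.~79]{vakhania-tarialadze} for ``scalarly non-degenerate $\Rightarrow$ $X$ separable''. Your contrapositive unpacks that citation: concentration on a separable closed subspace plus Hahn--Banach is exactly what makes non-degeneracy force separability. It also makes explicit the tightness or set-theoretic input that the paper leaves inside the reference. Your operator $\bigl(\prod_t\langle a,h^t\rangle_{X^*,X}\bigr)y_0$ handles all $i$ at once.

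Two points need fixing.

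\emph{The concentration step assumes $\xi$ is finite.} The hypothesis bounds only $\int\prod_t|h^t|_X^p\,d\xi$, not $\xi(X^i)$. Routes (a) and (b) concern finite (or at least $\sigma$-finite) measures, yet you apply them to marginals $\xi_t$ that may be infinite. For $i\geq2$ your intermediate claim that $\xi$ is concentrated on some $Z^i$ with $Z$ separable is then false. Take the counting measure on $D\times\{0\}\subset X^2$, where $D$ is an uncountable $1$-separated set. It is even Radon, has zero weighted moment, and gives infinite mass to $X^2\setminus Z^2$ for every separable $Z$, since $D\cap Z$ is countable.

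The remedy is the paper's weighting. Since $L$ vanishes as soon as some $h^t=0$, only $W:=\{\prod_t|h^t|_X>0\}$ matters. On $W$, $\xi$ is $\sigma$-finite, because $\xi(\{\prod_t|h^t|_X^p>1/n\})\le n\int\prod_t|h^t|_X^p\,d\xi$. Apply your concentration step to each of these finite pieces (equivalently to $\hat\xi$), and let $Z$ be the closed span of the countably many separable sets obtained.

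\emph{The independence claim is unsupported.} You assert that the implication is unprovable in ZFC without (a) or (b), but nothing in the proposal supports this. That your route needs such a hypothesis does not make the statement independent. Present it as a hypothesis of your argument rather than as an independence result.
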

\begin{proof}
Assume first that $X$ is separable. Choose, e.g. a Gaussian  
measure $\gamma$ on $X$ with full support \cite{bogachev1998gaussian} (but other constructions are possible). Such a measure exists on every
separable Banach space, it has finite moments of every order. Hence, we have the claim with
$\xi:=\otimes_{j=1}^i \gamma$.

{In the converse, we assume for simplicity that $i=1$. Suppose that
$\|\cdot\|_{L^p(X^i,Y,\xi)}$ is a norm. By \eqref{eq:normr_bounded_opnorm}, we have $\int_X|h|_X^p\,d\xi(h)>0$. Define the probability measure $\widehat\xi$ on $X$ by
$d\widehat\xi:=\left[\int_X|h|_X^p\,d\xi(h)\right]^{-1}|h|_X^p\,d\xi(h)$. We claim that
$\widehat\xi$ is scalarly non-degenerate (see \cite[I.5.3]{vakhania-tarialadze}). Otherwise, by \cite[Proposition 5.1, p.76]{vakhania-tarialadze}, there would exist $0\neq x^*\in X^*$ such that
$(x^*)_\#\widehat\xi=\hat \xi[(x^*)^{-1}(\cdot)]=\delta_0$; since $\left(x^*\right)^{-1}(\{0\})=\operatorname{ker} x^*$, we have $\hat{\xi}\left(\operatorname{ker} x^*\right)=1$. Fix $0\neq y_0\in Y$ and define  $Lh:=x^*(h)y_0$; then  $L=0$ $\hat \xi$-a.e.;
since the measures $\widehat\xi$ and
$\xi$ have the same null sets on $X\setminus\{0\}$, we also have $L=0$ $\xi$-a.e.; thus
$\|L\|_{L^p(X^i,Y,\xi)}=0$, a contradiction.}

Thus $\widehat\xi$ is scalarly non-degenerate. Then, \cite[Corollary~1, p.~79]{vakhania-tarialadze} now implies that $X$ is separable.
\end{proof}

For $p=2$ and $Y$ separable Hilbert space, we have the following characterization:
\begin{example}[Hilbert--Schmidt norms on Reproducing Kernel Hilbert Spaces]\label{ex:HS_norms_RKHS}
{Let $X$ be a separable Banach space, let $Y$ be a separable
Hilbert space, and let $\nu$ be a (Borel) probability measure on
$X$ (hence, Radon) with $\int_X| h|_X^2\,d\nu(h)<\infty$. For simplicity, assume that $\nu$ is centered (i.e. with zero mean).
Denote the   covariance kernel by $K_\nu :X^*\times X^* \to \mathbb R$ and the covariance operator by $R_\nu:X^*\to X$,  characterized 
by  \cite[Section III.2.1]{vakhania-tarialadze}
\[
\langle y^*,R_\nu x^*\rangle_{X^*,X}=\int_X\langle x^*,h\rangle_{X^*,X}\langle y^*,h\rangle_{X^*,X}\,d\nu(h)=K_\nu(x^*,y^*).
\]
We define the associated  Reproducing Kernel Hilbert Space (RKHS) $X_\nu$ (equivalently identified with the Hilbert space associated with $R_\nu$ via $R_\nu x^* \longleftrightarrow K_\nu\left(x^*, \cdot\right)$)  as the completion of $R_\nu(X^*)$ under the scalar product $\langle R_\nu y^*,R_\nu x^*\rangle_{X_\nu}:=\langle y^*,R_\nu x^*\rangle_{X^*,X}=K_\nu(x^*,y^*)$; the reproducing identity holds $\langle z,R_\nu x^*\rangle_{X_\nu}=\langle z,K_\nu(x^*,\cdot)\rangle_{X_\nu}=\langle x^*,z\rangle_{X^*,X}$, for all $z\in X_\nu,x^*\in X^*$; we denote by
$\iota_\nu:X_\nu\hookrightarrow X$ the canonical embedding (see \cite[Section III.1.2-3]{vakhania-tarialadze}). Then
$R_\nu=\iota_\nu\iota_\nu^*$. Since $X$ is separable, $X_\nu$ is separable; let an
orthonormal basis $\{e_n^\nu\}$. Then, denoting $\otimes_{j=1}^i\nu$  the product measure and, denoting the restriction of $L\in \mathcal L^i(X,Y)$ to $X^i_\nu$ by  $L_\nu :=L\circ(\iota_\nu,\ldots,\iota_\nu)$, we have
\begin{align}\label{eq:norm2=HS}
    \|L\|_{L^2(X^i,Y,\otimes_{j=1}^i\nu)}=\left\| L_\nu\right\|_{\mathcal L_2^i(X_\nu,Y)},\quad \forall  L\in \mathcal L^i(X,Y).
\end{align} }

{Indeed, let $i=1$. For
$L\in\mathcal{L}(X;Y)$, set $\lambda=\lambda_L:=L_{\#}\nu$. Then $\lambda$ is
centered finite measure on $Y$ with finite second moment. By \cite[Section III.2.1 (e)]{vakhania-tarialadze}, we have $R_\lambda=LR_\nu L^*=L\iota_\nu(L\iota_\nu)^*$; by \cite[Section III.2.3, Remark]{vakhania-tarialadze} (using Riesz identification $Y^*\sim Y$), 
\begin{align*}
 \|L\|_{L^2( X, Y, \nu)}^2=\int_X| Lh|_Y^2\,d\nu(h)
 &=\int_Y| y|_Y^2\,d\lambda(y)
 =\operatorname{Tr}_Y(R_\lambda)=\operatorname{Tr}_Y\!\bigl((L\iota_\nu)(L\iota_\nu)^*\bigr) =\lVert (L_\nu)^*\rVert_{\mathcal{L}_2(Y,X_\nu)}^2
 =\lVert L_\nu\rVert_{\mathcal{L}_2(X_\nu,Y)}^2.
\end{align*}
For $i\geq2$, assume that the assertion holds for $i-1$; define
$A:X\to \mathcal L_2^{i-1}(X_\nu,Y)$, 
$
A(h^1)(h^2,\ldots,h^i)
:=L(h^1,\iota_\nu h^2,\ldots,\iota_\nu h^i).
$
By the induction hypothesis,
\[
\begin{aligned}
\|A(h^1)\|_{\mathcal L_2^{i-1}(X_\nu,Y)}^2
&=\int_{X^{i-1}}
   |L(h^1,h^2,\ldots,h^i)|_Y^2\,
   d\nu(h^2)\cdots d\nu(h^i)\leq
\|L\|_{\mathcal L^i(X,Y)}^2 |h^1|_X^2
\left(\int_X |h|_X^2\,d\nu(h)\right)^{i-1}.
\end{aligned}
\]
Thus $A$ is well defined and bounded. Moreover, under the
isometric identification
$
\mathcal L_2\bigl(X_\nu,\mathcal L_2^{i-1}(X_\nu,Y)\bigr)=\mathcal L_2^i(X_\nu,Y),
$
we have $\left(A \iota_\nu\right)\left(h^1\right)\left(h^2, \ldots, h^i\right)=L_\nu\left(h^1, \ldots, h^i\right)$. Thus, applying the
case $i=1$ to $A$, with Hilbert target
$\mathcal L_2^{i-1}(X_\nu,Y)$, and the induction hypothesis, we have
\[
\begin{aligned}
\|L\|_{L^2(X^i,Y,\otimes_{j=1}^i\nu)}^2&=\int_{X}\int_{X^{i-1}}|L(h^1,\ldots,h^i)|_Y^2\,
   d\nu(h^1)d\nu(h^2)\cdots d\nu(h^i)\\
   &=\int_X
   \|A(h^1)\|_{\mathcal L_2^{i-1}(X_\nu,Y)}^2\,d\nu(h^1)=\|A\iota_\nu\|_{
   \mathcal L_2(X_\nu,\mathcal L_2^{i-1}(X_\nu,Y))}^2=\|L_\nu\|_{\mathcal L_2^i(X_\nu,Y)}^2.
\end{aligned}
\]}
\end{example}
\begin{example}[Gaussian measures: $\gamma$-radonifying operator norms]\label{ex:gaussian_measures_radonifying}Let $X$, $Y$ be Banach spaces, and let $\gamma$ be a centered Gaussian
Radon measure on $X$ (for the definition, see \cite{bogachev1998gaussian}). Its RKHS $X_\gamma$ coincides with the
Cameron--Martin space (therefore,  we will use the same notation $X_\gamma$ to indicate both) and is separable. Then, for all $L\in\mathcal L^i(X;Y)$, we have $L_\gamma\in\gamma^i(X_\gamma,Y)$ with\footnote{If $Y$ is Hilbert, then
$\gamma^i(X_\gamma,Y)=\mathcal L_2^i(X_\gamma,Y)$ isometrically, so this
reduces to~\eqref{eq:norm2=HS}}
\[
  \|L\|_{L^2(X^i,Y,\otimes_{j=1}^i\gamma)}
  =\|L_\gamma\|_{\gamma^i(X_\gamma,Y)}.
\]

Indeed, let  $i=1$; define the  
 Gaussian measure $\lambda:=L_\#\gamma$ on $Y$.  Its covariance is
$R_\lambda=LR_\gamma L^*=TT^*$, where we have used that
$R_\gamma=\iota_\gamma\iota_\gamma^*$ and we have set $T:=L\iota_\gamma$. 
Since $\iota_\gamma \in \gamma(X_\gamma,X)$ (see \cite[Proposition 8.6]{van-neerven2010}), by \cite[Theorem 6.2]{van-neerven2010}, we have $T\in\gamma(X_\gamma,Y)$.
Thus, by  \cite[Theorem~7.4]{van-neerven2010}, we have 
\[
  \|T\|_{\gamma(X_\gamma,Y)}^2
  =\int_Y|y|_Y^2d\lambda(y)
  =\int_X|Lh|_Y^2d\gamma(h)
  =\|L\|_{L^2(X,Y,\gamma)}^2.
\]
For $i\geq2$, we proceed by induction similarly to Example \ref{ex:HS_norms_RKHS} to prove the claim.
\end{example}
\section{Spaces of differentiable maps on Banach spaces}\label{app:smooth}
Let $(X,|\cdot|_X),(Y,|\cdot|_Y)$   be Banach spaces. In this work, typically, we denote maps between infinite-dimensional Banach spaces by capital letters, and maps between finite-dimensional spaces by lowercase letters.
   \subsection{Spaces of continuously differentiable maps}
\begin{definition}[Gâteaux derivative]\label{def:Gâteaux}
    We define the  Gâteaux derivative of $F: X \rightarrow Y$ at $x\in X$, denoted by $DF(x)h$, provided that the following limit exists
$$DF(x)h:=\lim_{t\to 0} \frac{F(x+th)-F(x)}{t},$$
for all $h\in X$ and, in addition, $DF(x)\in \mathcal L(X,Y)$. Recursively, for $k\ge2$, we define the $k$-th order Gâteaux derivative
of $F$ at $x$, denoted by $D^k F(x)(h_1,\ldots,h_k)$,  provided that the following limit exists  
\begin{equation}\label{eq:gaeaux}
  D^k F(x)(h_1,\ldots,h_k)
:=
\lim_{t\to0}
\frac{
D^{k-1}F(x+t h_k)(h_1,\ldots,h_{k-1})
-
D^{k-1}F(x)(h_1,\ldots,h_{k-1})
}{t}  ,
\end{equation}for all $h_1,\dots,h_k\in X$, and, in addition, $D^kF(x)\in \mathcal L^k(X,Y)$.
   \end{definition}
We introduce the notion of continuous Bastiani differentiability which will be  natural for the universal approximation. We refer to \cite{bastiani1964,keller2006,glockner2002infinite,schmeding2022} for a complete introduction to the so-called Bastiani calculus, giving, e.g. chain rule, Schwarz theorem, etc. 
   \begin{definition}[$C^k_B(X;Y)$]\label{def:CkB}
    We say that $F: X \rightarrow Y$ {is continuously Bastiani differentiable  on $X$} if $F\in C^0( X ; Y)$; for any $x\in X$, $F$ has Gâteaux derivative $DF$;  and  $DF(\cdot)(\cdot) \in C^0(X \times X; Y)$ {(or, equivalently, by \cite{keller2006}, since $X,Y$ are Banach spaces, if $DF(\cdot) \in C^0(X; \mathcal L(X,Y)^{so})$, or equivalently, if  $DF(\cdot) \in C^0(X; \mathcal L(X,Y)^{co})$).}
    We denote by $C^1_B(X;Y)$ the space of these maps. 

Let $k\ge2$. We say that $F$ {is $k$-times continuously Bastiani differentiable} if $F\in C^{k-1}_B(X;Y)$; for any $x\in X$, $F$ has $k$-th order Gâteaux derivative $D^kF(x)$; and
$D^kF (\cdot)(\cdot,\ldots,\cdot)\in C^0(X \times X^k; Y)$ {(or, equivalently, by \cite{keller2006}, if $D^kF(\cdot) \in C^0(X; \mathcal L^k(X,Y)^{so})$, or equivalently, if  $D^kF(\cdot)\in C^0(X; \mathcal L^k(X,Y)^{co})$).}
We denote by $C^k_B(X;Y)$ the space of these maps. If $Y=\mathbb R$ we write $C^k_B(X):=C^k_B(X;\mathbb R)$. 
   \end{definition}
The class $C^k_B$ provides calculus beyond Banach spaces, i.e. in general locally convex spaces, and it is widely used, e.g in infinite-dimensional differential geometry, e.g. see \cite{bastiani1964,glockner2002infinite,schmeding2022}.   However,  it is not a popular choice in functional analysis in Banach or Hilbert spaces, where the following stronger notion of continuous Fréchet differentiability is the standard choice. 
   \begin{definition}[$C^k(X;Y)$]
    {We say that $F: X \rightarrow Y$ is continuously Fréchet differentiable on $X$ if $F\in C^0( X ; Y)$; for any $x\in X$, $F$ has Gâteaux derivative $DF(x);$  $DF\in C^0(X;\mathcal L(X,Y))$. 
    We denote by $C^1(X;Y)$ the space these maps.}

Let  $k\ge2$. We say that $F$ is continuously $k$-times Fréchet differentiable on $ X$ if $F\in C^{k-1}(X;Y)$; for any $x\in X$, $F$ has $k$-th order Gâteaux derivative $D^kF(x)$; $D^kF\in C^0(X;\mathcal L^k(X,Y))$. We denote by $C^k(X;Y)$ the space of these maps. If $Y=\mathbb R$ we write $C^k(X)=C^k(X;\mathbb R)$.
   \end{definition}

\begin{remark}\label{rem:Bastiani}
\begin{enumerate}
\item[(i)] We have $C^k(X; Y)\subset C_B^k(X; Y)$. However, viceversa is false, making the inclusion strict, in general, e.g. see \cite{walther2021} (see also \cite{krylov-rockner-zabczyk}).
\item[(ii)] Although continuous Bastiani differentiability is weaker than continuous Fréchet differentiability, we have the following partial converse result \cite[Lemma A.3.3]{walter2014}: if $F\in C^{k}_B(X;Y)$ then $F\in C^{k-1}(X;Y)$. In particular $C^{\infty}_B(X;Y)=C^{\infty}(X;Y)$. Moreover, if $F\in C^{k}_B(X;Y)$ and $D^kF\in C^0(X;\mathcal L^k(X,Y))$, then $F\in C^{k}(X;Y);$
    \item[(iii)] If $X,Y$ are finite dimensional, then  $C^k_B(X; Y)=C^k(X; Y)$ \cite[Proposition A.3.5]{walter2014}.
\end{enumerate}
\end{remark}
\begin{remark}\label{rem:borel_measu_opnorms_CB}For all $F\in C^{k}_B(X;Y)$, we have that
$x\mapsto \|D^iF(x)\|_{\mathcal L^i(X,Y)}$ is lower-semicontinuous, therefore Borel measurable. Indeed, $x\mapsto \|D^iF(x)\|_{\mathcal L^i(X,Y)}$ is the supremum of the continuous map $(x,h^1,\ldots,h^i)\mapsto |D^iF(x)(h_1,\ldots,h_i)|_Y$ over $|h_1|_X,\ldots,|h_i|_X\leq 1$; the claim follows. 
\end{remark}
\begin{notation}\label{not:notation_D0f}
Throughout the paper, when $i=0$ we will use the convention   $\mathcal L^0(X,Y):=Y$, with
$
\|y\|_{\mathcal L^0(X,Y)}:=|y|_Y, y\in Y,$
and we use the convention $D^0F:=F$. Thus
$
\|D^0F(x)\|_{\mathcal L^0(X,Y)}=|F(x)|_Y.
$
\end{notation}
\subsection{Cylindrical maps on Banach spaces}
Let $(X,|\cdot|_X),(Y,|\cdot|_Y)$ be Banach spaces. Recall Definition \ref{def:encoder_decoder}.
\begin{definition}[Cylindrical map]\label{def:cyl_map}
 A cylindrical map  is a map $F^{{\mathcal E}^X,{\mathcal D}^Y}:X\to Y$ of the form 
 \begin{equation}\label{eq:cylindrical_map}
   F^{{\mathcal E}^X,{\mathcal D}^Y}(x)={\mathcal D}^Yf^{N,N'}({\mathcal E}^Xx)=\sum_{j=1}^{N'} f^{N,N'}_j\left[ (\langle a^{X}_i,x\rangle_{X^*,X}  )_{i=1}^{N}\right] e^{Y}_j,  
 \end{equation}
for some encoder operator on $X$, ${\mathcal E}^X \in \mathcal L( X , \R^{N}),{\mathcal E}^X(x) =  (\langle a^{X}_i,x\rangle_{X^*,X}  )_{i=1}^{N}$, for some $N\in \mathbb N$; for some decoder operator into $Y$, ${\mathcal D^{Y}} \in \mathcal L( \R^{N'} , Y),  {\mathcal D}^{Y}((x_j)_{j=1}^{N'}) = \sum_{j=1}^{N'} x_j e^{Y}_j$, for some $N'\in \mathbb N$; function
   $f^{N,N'}:\mathbb R^N\to \mathbb R^{N'}$, where $f^{N,N'}_j , j\leq N'$ denote its components.
We denote by $Cyl(X,Y):=\{F^{{\mathcal E}^X,{\mathcal D}^Y}:X\to Y \textit{ of the form }\eqref{eq:cylindrical_map} \textit{ for some } N,N' \in \mathbb N,{\mathcal E}^X\in \mathcal L(X,\mathbb R^N),{\mathcal D}^Y\in \mathcal L(\mathbb R^{N'},Y)\}$ the space of cylindrical maps from $X$ to $Y$. We define $C^k_{Cyl}(X,Y):=\{F^{{\mathcal E}^X,{\mathcal D}^Y}\in Cyl(X,Y):f^{N,N'}\in C^k(\mathbb R^{N}, \mathbb R^{N'})\}$. We set $C_{b,Cyl}^\infty(X,Y):=\left\{F^{{\mathcal E}^X,{\mathcal D}^Y}\in C^{\infty}_{Cyl}(X,Y):f^{N,N'}\in C_b^{\infty}\left(\mathbb{R}^N;\mathbb R^{N'}\right)\right\}$ (the subscript ``b'' stands for bounded derivatives of all orders). 
    \end{definition}

If $F\in C^k_{Cyl}(X,Y)$, then $F\in C^k(X,Y)$ with, by chain rule, for every $1 \leq i \leq k$, 
\begin{align}\label{eq:derivativeDrv_dh^theta_dY_Banach_cyl}
    D^i F^{{\mathcal E}^X,{\mathcal D}^Y}(x)\left[h^1, \ldots, h^i\right]&={\mathcal D}^Y\left(D^i f\left({\mathcal E}^X x\right)\left[{\mathcal E}^X h^1, \ldots, {\mathcal E}^X h^i\right]\right).
\end{align}
    \subsection{Cylindrical approximations on Banach spaces}\label{subsec:cylindrical_approx}
   Let $(X,|\cdot|_X),(Y,|\cdot|_Y)$ be Banach spaces  with the Approximation Property, i.e. we are given $\left\{{\mathcal S}^X_\alpha\right\}_{\alpha \in A }\subset \mathcal L(X)$, $\left\{{\mathcal S}^Y_\beta\right\}_{\beta \in B }\subset \mathcal L(Y)$ satisfying Assumption \ref{ass:approx_pro}. We use the notations defined there.
    \begin{definition}[Cylindrical approximations]\label{def:cyl_approx}
        Let $k\in \mathbb N$. Let $F \in C^k_B(X;Y)$.
 For   $\alpha \in A, \beta \in B $,  define the finite dimensional function   
$${f^{N_\alpha ,N_\beta}} : \mathbb R^{N_\alpha} \to \mathbb R^{N_\beta}, \quad f^{N_\alpha ,N_\beta}(y):={\mathcal E}^Y _{\beta}\left[ F( {\mathcal D}^X _{\alpha}(y))\right],$$
which is $C^k(\mathbb R^{N_\alpha} ; \mathbb R^{N_\beta})$  (recall Remark \ref{rem:Bastiani}) with derivatives computed by chain rule,
for every $1 \leq i \leq k$, as
\begin{equation}\label{eq:derivativeDf_Banach}
D^i f^{N_\alpha ,N_\beta}(y)\left[\eta^1, \ldots, \eta^i\right]= {\mathcal E}^Y _{\beta}\left(D^i F\left({\mathcal D}^X _{\alpha} y\right)\left[{\mathcal D}^X _{\alpha}\eta^1, \ldots, {\mathcal D}^X _{\alpha} \eta^i\right]\right) ,\quad \forall y , \eta^1, \ldots, \eta^i \in \mathbb{R}^{N_\alpha},
\end{equation}
We define the cylindrical approximation of $F$ as (recall also \eqref{eq:S=DE})
\begin{equation}\label{eq:cyl_approx_is_cyl}
    F_{\alpha ,\beta} \colon X \to Y, \quad   F_{\alpha ,\beta}(x):={\mathcal S}^Y_{\beta} F({\mathcal S}^X_{\alpha}x)={\mathcal D}^Y_\beta  {\mathcal E}^Y _{\beta}\left[ F( {\mathcal D}^X _{\alpha}{\mathcal E}^X_\alpha x)\right]={\mathcal D}^Y_\beta f^{N_\alpha ,N_\beta}({\mathcal E}^X_\alpha x)=: F^{{\mathcal E}^X_\alpha ,{\mathcal D}^Y_\beta }(x),
\end{equation}
so that $F_{\alpha ,\beta} \in Cyl(X,Y)$.
Then, $F_{\alpha ,\beta} \in C^k(X;Y)$  with derivatives computed by chain rule (and \eqref{eq:S=DE}), for every $1\leq i \leq k$,  as
\begin{equation}\label{eq:derivativeDrv_dh_dY_Banach}
\begin{aligned}
    D^i F_{\alpha ,\beta}(x)\left[h^1, \ldots, h^i\right]\equiv D^i F^{{\mathcal E}^X_\alpha ,{\mathcal D}^Y_\beta }(x)\left[h^1, \ldots, h^i\right]&={\mathcal S}^Y_{\beta}\left(D^i F\left({\mathcal S}^X_{\alpha} x\right)\left[{\mathcal S}^X_{\alpha} h^1, \ldots, {\mathcal S}^X_{\alpha} h^i\right]\right)\\
    &={\mathcal D}^Y _{\beta}\left(D^i f^{N_\alpha ,N_\beta}\left({\mathcal E}^X _{\alpha} x\right)\left[{\mathcal E}^X _{\alpha} h^1, \ldots, {\mathcal E}^X _{\alpha} h^i\right]\right)
\end{aligned}
\end{equation}

When $X,Y$ are separable Banach spaces satisfying the BAP (Assumption \ref{ass:bounded_approx_pro}), we  replace $\left \{{\mathcal S}^X_\alpha\right\}_{\alpha \in A }$, $\left\{{\mathcal S}^Y_\beta\right\}_{\beta \in B }$, by sequences $\left \{{\mathcal S}^X_d\right\}_{d \in \mathbb N }, \left \{{\mathcal S}^Y_m\right\}_{m \in \mathbb N }$  and we use notations there.
    \end{definition}

\subsection{Differentiable bump functions on Banach spaces}\label{sec:bump}
For some of our results we will need smooth bump functions on Banach spaces. We refer to \cite{fry-mcmanus2002} for a complete survey.

Let $(X,|\cdot|)$ be a Banach space.
\begin{definition}[Bump function]
We say that $X$ admits a $C^k_B$ bump function if there exists $b \in C^k_B(X)$ with non-empty and bounded support.
\end{definition}
The problem of proving existence of Fréchet differentiable bump functions is an open problem in functional analysis, see e.g. \cite{fry-mcmanus2002}. To be consistent with our analysis in the paper (where we require $C^k_B$), with respect to the \cite[Definition 3]{fry-mcmanus2002}, we relax the condition by requiring the bump function $b$ to be $C^k_B$ (and not Fréchet $C^k$).

In the following we will discuss differentiable norms $|\cdot|$ on Banach spaces. Notice that norms cannot be differentiable at $x=0$, but only away from $x=0$. However, only differentiable away from $x=0$ is required in the subsequent analysis.  In the following, we denote $g_r:X\to \mathbb R, g_r(x):=|x|_X^r$, $r\geq 1$.
Note that  $\|Dg_1(x)\|_{\mathcal L(X,\mathbb R)}\leq 1$ for any $x\in X$ where $D|\cdot|_X$ is well defined. Define $\psi \in C_0^{\infty}\left(\mathbb{R}\right)$ by
$$
 \psi(y):=\frac{\chi(2-|y|)}{\chi(|y|-1)+\chi(2-|y|)} ,\quad \textit{where } \chi(y):=\left\{\begin{array}{ccc}
e^{-1 / y^2} & \text { if } & y>0 \\
0 & \text { if } & y \leq 0.
\end{array} \right.
$$
Then  $\psi(y)=1$ if $|y| \leq 1$,  $\psi(y)=0$ if $|y| \geq 2$. Moreover,    there exists $C>0$ such that
  $\left|D^j \psi(y)\right| \leq C$  for all $y\in \mathbb R$, $0 \leq j \leq k$. Defining $$\psi_{\eta}(y)=\psi(\eta y)$$ for $\eta>0$, we have $\psi_{\eta}(y)=1$ if $|y| \leq 1 / \eta$, $\operatorname{supp}\psi_\eta \subset \{0\leq |y| \leq 2/\eta\}$ and  
  \begin{equation}\label{eq:bound_Di_psi_eta}
      \left|D^i \psi_{\eta}(y)\right|=\eta^i | D^i \psi(\eta y)|  \leq C \eta^i,\quad \forall \eta >0.
  \end{equation}
This construction leads us to the following proposition. 
\begin{proposition}\label{prop:bump_rescaled}
    Assume that $X$ admits an equivalent norm, still denoted by $|\cdot|_X$, for which $g_r \in C_B^k(X\setminus\{0\})$ for some $r \geq 1$. Then 
    \begin{enumerate}
        \item[(i)] $X$ admits a $C^k_B$ bump function $b\in C^k_B(X)$, given by $b(x):=\psi(g_r(x))=\psi(|x|_X^r)$ with $\operatorname{supp}b \subset \{|x|_X\leq 2^{1/r}\}$.
        \item[(ii)]  Moreover, assume  that there exists $C>0$ such that 
        \begin{equation}\label{eq:Dig_bound}
            \|D^ig_r(x)\|_{\mathcal L^i(X,\mathbb R)}\leq C |x|_X^{r-i},\quad \forall 0\neq x\in X,0\leq i\leq k.
        \end{equation}
   Then, denoting the rescaled bump function 
   $$b_{\eta}(x):=b(\eta x)=\psi(g_r(\eta x))=\psi(\eta^r|x|_X^r)=\psi_{\eta^r}(|x|_X^r)=\psi_{\eta^r}(g_r(x)),\quad \eta>0,\quad \operatorname{supp}b_\eta \subset \{|x|_X\leq 2^{1/r}/\eta\},$$ we have
    $b_{\eta}(x)\equiv 1$ if $|x|_X \leq 1 / \eta$ and there exists $C>0$ such that  $$\left\|D^i b_{\eta}(x)\right\|_{\mathcal L^i(X,\mathbb R)} \leq C \eta^{i}\leq C,\quad \forall x\in X, \forall \eta \leq 1,0\leq i\leq k.$$ 
    \end{enumerate}
    \end{proposition}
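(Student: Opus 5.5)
The plan is to realize the bump function as the composition $b=\psi\circ g_r$ and to obtain its properties from the one-dimensional properties of $\psi$ (and $\psi_\eta$) together with the chain rule for Bastiani calculus, using a Faà di Bruno type expansion for the derivative bounds.

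\emph{Part (i).} Regularity of $b$ away from the origin is immediate. Since $\psi\in C^\infty(\mathbb R)$ and $g_r\in C^k_B(X\setminus\{0\})$, the chain rule for Bastiani maps (see \cite{keller2006,schmeding2022}) gives $b=\psi\circ g_r\in C^k_B(X\setminus\{0\})$.

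Near the origin no differentiability of the norm is needed. Indeed, $\psi(y)=1$ for $|y|\le 1$, so $b\equiv 1$ on the open ball $\{|x|_X<1\}$, and $b$ is trivially $C^\infty$ there. Because $C^k_B$ is a local notion (Gâteaux derivatives and their joint continuity only involve neighborhoods), the two open sets $X\setminus\{0\}$ and $\{|x|_X<1\}$ cover $X$, and this yields $b\in C^k_B(X)$.

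For the support, $\psi(y)=0$ for $|y|\ge 2$ gives $\operatorname{supp}b\subset\{|x|_X^r\le 2\}=\{|x|_X\le 2^{1/r}\}$. This set is bounded, and the support is nonempty because $b(0)=1$.

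\emph{Part (ii).} The identity $b_\eta=\psi_{\eta^r}\circ g_r$ is just the homogeneity $g_r(\eta x)=\eta^r g_r(x)$. The constancy claim and the support inclusion then follow exactly as in (i), with the ball radii rescaled by $1/\eta$.

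For the derivative bound, $b_\eta$ is locally constant on $\{|x|_X<1/\eta\}$, so all derivatives of order $i\ge1$ vanish there. It therefore suffices to work on the annulus $1/\eta\le|x|_X\le 2^{1/r}/\eta$; outside it $b_\eta\equiv 0$ and the derivatives vanish as well. On this annulus I expand with the Faà di Bruno formula:
\[
D^ib_\eta(x)(h^1,\dots,h^i)=\sum_{\pi}\psi_{\eta^r}^{(|\pi|)}(g_r(x))\prod_{B\in\pi}D^{|B|}g_r(x)(h^B),
\]
where $\pi$ ranges over the partitions of $\{1,\dots,i\}$ and $h^B$ denotes the tuple $(h^j)_{j\in B}$. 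The formula holds for Gâteaux/Bastiani derivatives by iterating the chain rule. Taking operator norms and using \eqref{eq:bound_Di_psi_eta} (with $\eta^r$ in place of $\eta$) together with \eqref{eq:Dig_bound}, a partition with $m=|\pi|$ blocks contributes at most
\[
C\eta^{rm}\prod_{B\in\pi}|x|_X^{r-|B|}=C\eta^{rm}|x|_X^{rm-i}.
\]
On the annulus $|x|_X\sim 1/\eta$, up to the factor $2^{1/r}$, so this bound is $\le C\eta^{rm}\eta^{-(rm-i)}=C\eta^{i}$. Summing over the finitely many partitions gives $\|D^ib_\eta(x)\|\le C\eta^i$, and since $\eta\le1$ this is at most $C$. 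The case $i=0$ is simply $|\psi|\le 1$.

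\emph{Main obstacle.} The step that needs care is justifying the chain rule and the Faà di Bruno formula in the Bastiani setting, and checking that the operator-norm estimate is legitimate. The formula is an identity of multilinear maps at each fixed $x$, so bounding its norm only requires the pointwise bound \eqref{eq:Dig_bound} and no continuity in operator norm; I would invoke the Bastiani chain rule from \cite{keller2006} inductively to establish it. A secondary detail is keeping the powers of $|x|_X$ consistent: when $rm-i<0$, the lower endpoint $|x|_X\ge1/\eta$ gives the bound, and otherwise the upper endpoint $|x|_X\le 2^{1/r}/\eta$ does.
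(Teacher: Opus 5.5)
Your proposal is correct and follows essentially the same route as the paper: write $b_\eta=\psi_{\eta^r}\circ g_r$, expand $D^ib_\eta$ with the Bastiani chain rule (Fa\`a di Bruno formula), note that all derivatives of order $i\ge1$ vanish off the annulus $1/\eta\le|x|_X\le 2^{1/r}/\eta$, and bound each partition term there by $C\eta^{r|\pi|}|x|_X^{r|\pi|-i}\le C\eta^i$. You also make explicit two points the paper's proof passes over: at the origin, where $g_r$ need not be differentiable, it is the local constancy of $b$ on $\{|x|_X<1\}$ that gives $b\in C^k_B(X)$; and the final bound needs a case split on the sign of $r|\pi|-i$, using the inner radius $1/\eta$ when it is negative and the outer radius $2^{1/r}/\eta$ otherwise.
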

    \begin{proof}
    (i) This is immediate by chain rule.  (ii) 
    We have  $b_{\eta}(x)\equiv 0$ if $|x|_X^r \geq 2 / {\eta}^r$, or, equivalently,  if $|x|_X \geq 2^{1/r} / {\eta}$. Similarly,  $b_{\eta}(x)\equiv 1$ if $|x|_X \leq 1 / {\eta}$.  For $x \neq 0$, by chain rule (\cite[Chapter 3, Theorem 2.1 and its Proof]{bastiani1964}), for every $i \leq k$, we have
$$
D^i b_\eta(x)\left(h_1, \ldots, h_i\right)=\sum_{\pi \in \Pi_i}D^{|\pi|}\psi_{\eta^r}(g_r(x))\prod_{A \in \pi}D^{|A|} g_r(x)\left(\left(h_j\right)_{j \in A}\right),
$$
where $\Pi_i$ denotes the set of partitions of $\{1, \ldots, i\}$. 
Then $D^i b_\eta(x)\equiv 0,$ for all $x\in  \{|x|\leq 1/\eta \} \cup\{ |x|_X\geq 2^{1/r}/\eta\}$, $i \geq 1$.  Moreover, there exists $C>0$ such that for all $\eta \leq 1$, $x\in \{1/\eta\leq |x|\leq 2^{1/r}/\eta\}$,
\begin{align*}
    \|D^i b_\eta(x)\|_{\mathcal L^i(X,\mathbb R)}&\leq \sum_{\pi \in \Pi_i}|D^{|\pi|}\psi_{_{\eta^r}}(g_r(x))|\prod_{A \in \pi}\|D^{|A|} g_r(x)\|_{\mathcal L^{|A|}(X,\mathbb R)}\leq C\sum_{\pi \in \Pi_i}\eta^{r|\pi|}\prod_{A \in \pi}|x|_X^{r-|A|}\\
    &\leq C\sum_{\pi \in \Pi_i}\eta^{r|\pi|} |x|_X^{\sum_{A\in \pi}(r-|A|)}\leq C\sum_{\pi \in \Pi_i}\eta^{r|\pi|} |x|_X^{|\pi|r-i}\leq C \eta^{i}\leq C,
\end{align*}
where we have used \eqref{eq:bound_Di_psi_eta}, \eqref{eq:Dig_bound},  and the fact that, for $\pi\in\Pi_i$, we have  $\sum_{A\in \pi}|A|=i$.
    \end{proof}
\begin{example}\label{ex:norm_Fréchet}
\begin{enumerate}[(i)]
    \item If $X=H$ is a Hilbert space, then, for the  norm $|\cdot|_H$ induced by the scalar product, we have $g_2:=|\cdot|_H^2\in C^\infty(X)$ with $\langle Dg_2(x),h^1\rangle_H=2\langle x,h^1\rangle_H,  D^2g_2(x)(h^1,h^2)=2\langle h^1,h^2\rangle_H,D^ig_2(x)\equiv 0$, $i\geq 3$ and Proposition  \ref{prop:bump_rescaled} applies (in particular, this implies that for the standard finite-dimensional case $X=\mathbb R^N$ Proposition  \ref{prop:bump_rescaled} applies).
    \item Let $X=L^r(\Omega, \mathcal{F}, \nu)$ or $X=\ell^r$, with $1<r<\infty$ (and usual norms $|\cdot|_X$), and set $g_r(x)=|x|_X^r$. By \cite[Proposition 5]{bonic-frampton}, if $r$ is an even integer then $g_r \in C^{\infty}(X)$; if $r$ is an odd integer then $g_r \in C^{r-1}(X)$ and $D^{r-1} g_r$ is Lipschitz; if $r \notin \mathbb{N}$, then $g_r \in C^{\lfloor r\rfloor}(X)$ and $D^{\lfloor r\rfloor} g_r$ is $(r-\lfloor r\rfloor)$-Hölder continuous. Moreover, by \cite[Proof of Proposition 5]{bonic-frampton}, for every admissible $i$, $\left\|D^i g_r(x)\right\|_{\mathcal L^i(X,\mathbb R)} \leq C|x|_X^{r-i},$ $ x \neq 0 .$
Hence Proposition  \ref{prop:bump_rescaled} applies with $r$, for the corresponding range of $k$.
\item For more examples of Banach spaces admitting differentiable bump functions, we refer to \cite{fry-mcmanus2002}.
\end{enumerate}
\end{example}
\subsection{Compact-open topologies on spaces of differentiable maps} \label{subsec:compact-open}
In the following recall Notation \ref{not:notation_D0f}. We introduce the standard $C^k$ compact-open topology.
\begin{definition}[$C^k(X,Y)^{co}$]\label{def:compact_open_C2}
The usual compact-open topology on $C^k(X,Y)$ is generated by  the directed family of seminorms $ {\mathbf {\mathcal {\mathbf P}}}^{k,co}=\{{\mathbf p}_{K}=\max( p^i_{K}:0\leq i\leq k): K \subset X \textit{ compact} \}$, where $ p^i_{K}(F)=\sup _{x \in K}\|D^iF(x)\|_{\mathcal L^i(X,Y)}$. {Equivalently, this is the initial topology with respect to the maps $D^i: C^k(X,Y)\to C^0(X,\mathcal L^i(X,Y))^{co},$ $ F\mapsto  DF^i$, $\forall 0 i\leq k$.}
We denote by $C^k(X,Y)^{co}$ this topological space.
\end{definition}
The previous $C^k$ compact-open topology will be too strong for our purposes (and the class $C^k$ too restrictive). Therefore, in a similar way to \cite{schmeding2022} (see also \cite{llavona1986}), we introduce the  compact-open topology of $C_B^k(X,Y)$ as follows.
\begin{definition}[$C_B^k(X,Y)^{co}$]\label{def:compact_open_C2B}
We define the compact-open topology of $C_B^k(X,Y)$ as the locally convex Hausdorff topology generated  by one of the following equivalent directed family of separating seminorms  
\begin{itemize}
    \item $\mathcal P_{B}^{k,co,3}=\{p_{\mathcal  K_0,\ldots,\mathcal K_k}^3=\max(p^{i,3}_{\mathcal  K_i}:0\leq i\leq k): \mathcal K_i \subset X^{i+1} \textit{ compact, } 0\leq i\leq k\},$\\
where $p^{i,3}_{\mathcal  K_i}(F):=\sup_{(x,h^1,...,h^i) \in \mathcal  K_i}  |D^iF(x)(h^1,...,h^i)|_Y;$
\item ${\mathcal P}_{B}^{k,co,2}=\{ p^2_{\mathcal  K}=\max( p^{i,2}_{\mathcal  K}:0\leq i\leq k): \mathcal K \subset X^{k+1} \textit{ compact} \}$,\\
where $ p^{i,2}_{\mathcal  K}(F):=\sup_{x \in \pi^0(\mathcal  K),h^1 \in \pi^1(\mathcal  K)...,h^i\in \pi^i(\mathcal  K)}  |D^iF(x)(h^1,...,h^i)|_Y,$ with $\pi^i:X^{k+1}\to X, $ $\pi^i(y^0,\ldots,y^k)=y^i$, $0\leq i\leq k$;
\item ${\mathcal P}_{B}^{k,co,1}=\{ p_{K,K'}^1=\max( p^{i,1}_{K,K'}:0\leq i\leq k): K,K' \subset X \textit{ compact} \}$,\\
where $ p^{i,1}_{K,K'}(F):=\sup_{x \in K,h^1 \in K'...,h^i\in K'}  |D^iF(x)(h^1,...,h^i)|_Y.$
\item ${\mathcal P}_{B}^{k,co,0}=\{ p^0_{K}=\max( p^{i,0}_{ K}:0\leq i\leq k): K \subset X \textit{ compact} \},$\\
where $ p^{i,0}_{K}(F):=\sup_{x \in K,h^1 \in K,...,h^i\in K}  |D^iF(x)(h^1,...,h^i)|_Y.$
\end{itemize}
{Equivalently, this is the initial topology with respect to the maps $D^i: C^k_B(X,Y)\to C^0(X\times X^i ; Y)^{co}, F\mapsto  D^iF$, $\forall 0\leq i\leq k$ or, equivalently, with respect to the maps $D^i: C^k_B(X,Y)\to C^0(X,\mathcal L^i(X,Y)^{co})^{co},$ $ F\mapsto  DF^i$, $ 0\leq i\leq k$.
We denote by $C_B^k(X,Y)^{co}$ this topological space. We endow $C^\infty(X,Y)$ with the locally convex Hausdorff topology generated by
\(
\mathcal P_B^{\infty,co,1}:=\bigcup_{k\in\mathbb N}\mathcal P_B^{k,co,1}.
\)
We denote the resulting topological space by $C_B^\infty(X,Y)^{co}$.}
\end{definition}
To show equivalence (recall Definition \ref{def:eq_seminorms}) of ${\mathcal P}_{B}^{k,co,i}$, $0\leq i\leq  3,$ we first notice that (identifying two seminorms when they define the same function on $C^k_B(X,Y)$) 
${\mathcal P}_{B}^{k,co,i}\supset {\mathcal P}_{B}^{k,co,i-1}$, for all $1\leq i\leq 3$. Then, it is enough to show that given a seminorm $p^3_{\mathcal  K_0,\ldots,\mathcal K_k}$ we can find $K\subset X$ compact and $C>0$ such that $p^3_{\mathcal  K_0,\ldots,\mathcal K_k}(F)\leq C p^0_{K}(F)$ for all $F\in C^k_B(X,Y)$. To this purpose we can choose $C=1$ and $K:=\bigcup_{0\leq j\leq i \leq k}\pi^j(\mathcal K_i)\subset X$, which is compact as finite union of compact sets. Indeed, this choice implies that 
$\mathcal K_i\subset K^{i+1}$, for all $i\leq k$, from which the claim follows.
\begin{remark}\label{rem:finite_dimensional_CO}
\begin{enumerate}
    \item The restriction of the topological space $C_B^k(X,Y)^{co}$ to $C^k(X,Y)\subset C_B^k(X,Y)$  is weaker than $C^k(X,Y)^{co}$.
    \item However, if $X$ and $Y$ are finite dimensional, then (recall also Remark \ref{rem:Bastiani}) $C^k(X,Y)^{co}=C_B^k(X,Y)^{co}$. 
\end{enumerate}
   
\end{remark}
\subsection{Continuous Bastiani differentiability of Nemytskii operators}\label{subsec:nemistkii-bastiani}
It is well known that Nemytskii/superposition operators are not Fréchet differentiable in general (e.g. for $X=Y=L^2$, except for trivial cases) \cite[Theorem 3.12]{appell-zabrejko}, raising a severe limitation of the concept of Fréchet differentiability in infinite-dimensions. However, they are continuous Bastiani differentiable, e.g., on $L^2(\mathbb T^d)$ (see also, e.g., \cite{krylov-rockner-zabczyk}).  
{\begin{lemma}[Bastiani differentiability of Nemytskii/superposition operators on $L^2(\mathbb T^d)$]\label{lemma:Nemytskii_bastiani}
\label{lem:bastiani-nemytskii-L2}
Let $X:=L^2(\mathbb T^d)$ and
$Y:=L^2(\mathbb T^d)$. Assume that
$\sigma\in C^1(\mathbb R)$ with  
$
|D\sigma(z)|\leq M<\infty .$
Define  $
N:X\to Y, N(x)(\xi):=\sigma(x(\xi)).
$
Then $N$  is Lipschitz continuous, $N\in C^1_B(X,Y)$ with
$
DN(u;h)(\xi)=D\sigma(u(\xi))h(\xi).$
\end{lemma}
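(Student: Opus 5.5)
The plan is to verify the three claims of Lemma \ref{lemma:Nemytskii_bastiani} in order: well-definedness and Lipschitz continuity of $N$, existence of the Gâteaux derivative with the stated formula (and its membership in $\mathcal L(X,Y)$), and finally joint continuity of $(u,h)\mapsto DN(u)h$ from $X\times X$ to $Y$. This last property is exactly what Definition \ref{def:CkB} requires for $C^1_B$. Throughout, $\mathbb T^d$ has finite Lebesgue measure, which is used implicitly.

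\emph{Step 1 (Lipschitz and well-posedness).} By the mean value theorem, $|\sigma(a)-\sigma(b)|\le M|a-b|$. Hence $|N(u)-N(v)|_{L^2}\le M|u-v|_{L^2}$. Since $|\sigma(a)|\le|\sigma(0)|+M|a|$ and $\mathbb T^d$ has finite measure, $N(u)\in L^2$. This gives continuity of $N$.

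\emph{Step 2 (Gâteaux derivative).} Fix $u,h$. The difference quotient $q_t(\xi):=\bigl(\sigma(u(\xi)+th(\xi))-\sigma(u(\xi))\bigr)/t$ converges pointwise to $D\sigma(u(\xi))h(\xi)$. It is dominated by $M|h(\xi)|\in L^2$. Dominated convergence applied to $|q_t-D\sigma(u)h|^2\le 4M^2|h|^2$ yields $L^2$ convergence. The map $h\mapsto D\sigma(u)h$ is linear with norm at most $M$, so $DN(u)\in\mathcal L(X,Y)$.

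\emph{Step 3 (joint continuity, the main point).} Let $(u_n,h_n)\to(u,h)$ in $X\times X$. Split
\[
DN(u_n)h_n-DN(u)h=D\sigma(u_n)(h_n-h)+\bigl(D\sigma(u_n)-D\sigma(u)\bigr)h .
\]
The first term is bounded by $M|h_n-h|_{L^2}\to0$. For the second term we argue by subsequences. Given any subsequence, extract a further subsequence with $u_n\to u$ a.e. Then, by continuity of $D\sigma$, the integrand $|D\sigma(u_n)-D\sigma(u)|^2|h|^2$ tends to $0$ a.e. and is dominated by $4M^2|h|^2$. Dominated convergence gives convergence to $0$ along that subsequence. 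Since every subsequence has a further subsequence converging to $0$, the whole sequence converges. Because $X\times X$ is metrizable, sequential continuity suffices.

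This step is the expected obstacle. The operator-norm continuity $u\mapsto DN(u)\in\mathcal L(X,Y)$ fails in general, and here the direction $h$ is held in $L^2$ only through the fixed-$h$ domination. The argument therefore works only because we test a single direction at a time (in the strong/compact-open sense), which is exactly the Bastiani notion.
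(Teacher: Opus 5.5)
Your proof is correct and follows the same route as the paper: the Lipschitz bound from $|D\sigma|\le M$, dominated convergence for the Gâteaux difference quotient, and dominated convergence for continuity of the derivative (you use a.e.-convergent subsequences, which the paper phrases as convergence in measure). The only difference is in the last step. The paper proves continuity of $u\mapsto DN(u)h$ for each fixed $h$ and then appeals to the Keller equivalence built into Definition \ref{def:CkB}. You instead obtain joint continuity directly by absorbing the $h_n-h$ term with the uniform bound $\|DN(u_n)\|_{\mathcal L(X,Y)}\le M$, which makes your argument self-contained.
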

\begin{proof}
Since $D\sigma$ is bounded, $\sigma$ has at most linear growth.
Then $N(u)\in Y$ for
every $u\in X$. By assumption,
$
|\sigma(z_1)-\sigma(z_2)|\le M|z_1-z_2|,
$
so that
$
|N(x)-N(y)|_Y\le M|x-y|_X .
$
We now show that the Gâteaux derivative is $
DN(x)h=D\sigma(x)h 
$. Indeed, for $t\neq0$, by Jensen's
inequality,
\begin{align*}
    \left|
\frac{N(x+th)-N(x)}{t}-D\sigma(x)h
\right|_Y^2&=\int_{\mathbb T^d}\left|
\frac{\sigma(x(\xi)+th(\xi))-\sigma(x(\xi))}{t}-D\sigma(x(\xi))]h(\xi)\right|^2 d\xi\\
&=\int_{\mathbb T^d}\left|\int_0^1[
D\sigma(x(\xi)+\theta th(\xi))-D\sigma(x(\xi))]h(\xi)d\theta \right|^2 d\xi\\
&
\le
\int_{\mathbb T^d}\int_0^1
|D\sigma(x(\xi)+\theta th(\xi))-D\sigma(x(\xi))|^2 |h(\xi)|^2d\theta d\xi,
\end{align*}
where the right-hand-side goes to zero by the dominated convergence theorem as $t\to 0$. The claim follows.
It remains to prove $N\in C^1_B(X,Y)$. First fix $h\in X$ and let $x_n\to x\in X$, so convergence in measure holds. By the dominated convergence theorem
$$
|(DN(x_n)-DN(x))h|_Y^2
=|(D\sigma(x_n)-D\sigma(x))h|_Y^2
=
\int_{\mathbb T^d}
|D\sigma(x_n(\xi))-D\sigma(x(\xi))|^2 |h(\xi)|^2\,d\xi\xrightarrow{n\to \infty}0 .
$$
Thus $x\mapsto DN(x)$ is continuous in the strong operator topology. Then $C^1_B$ follows by Definition \ref{def:CkB}.
\end{proof}}
\section{Weighted Bastiani--Sobolev spaces on Banach spaces}\label{subsec:Sobolev} 
In this section, we construct Bastiani--Sobolev spaces on Banach spaces, designed ad-hoc for the universal approximation. We will use notations from Subsection \ref{subsubsec:borel_meas}.
\subsection{Motivation of the  construction}\label{subsec:motivation_sobolev}
Let $(X,|\cdot|_X),(Y,|\cdot|_Y)$ be  Banach spaces.
Our goal, here, is to construct a Sobolev space for the universal approximation of $k$-times differentiable (in some suitable sense) nonlinear operators $F:X\to Y$ via classical OL architectures available on Banach spaces, i.e. Encoder-Decoder Architectures (recall Definition \ref{def:EDN}), generalizing the classical finite-dimensional result \cite[Theorem 4]{hornik1991approximation}. We refer to Subsections \ref{subsec:sobolev_norms}, \ref{subsec:sobolev_sp} for the  precise definitions.

\paragraph{Remarks on the finite-dimensional case.}The classical paper \cite{hornik1991approximation} considers the space of functions $ f\in C^k\left(\mathbb R^N\right)$ with finite weighted Sobolev norm (up to quotient) $\left[\sum_{|\alpha|\leq k} \int_{\mathbb R^N}\left|D^\alpha f\right|^p d \nu\right]^{1/p}<\infty$, where $\nu$ is a finite  (Borel) input measure on $X=\mathbb R^N$.  
The resulting space need not be complete. {Standard constructions of Sobolev spaces under the Lebesgue measure take the completion  of this space and the Meyers-Serrin theorem states that the resulting space is the space of $k$-times weakly differentiable functions with weak derivatives in $L^p$ \cite{adams2003sobolev}. However,  for a  finite measure $\nu,$  a completion might not behave as well. More precisely, derivative operators need not be closable:
\begin{equation}\label{eq:closure_fail_derivatives_measure}
    \textit{if }f_n\to f \textit{ and } D^\alpha f_n\to g_\alpha \in L^p(\mathbb R^N,\nu),\textit{ the limit }g_\alpha\textit{ need not be determined uniquely by }f,
\end{equation}
see, e.g. \cite[Example 2.6.1]{bogachev2010}.}
Standard ways to prove this are via integration by parts formulas, which, however, are not available for general finite measures. In some cases and under suitable conditions on $\nu$ (such as $\nu$ being defined by suitable weight functions \cite{kufner-opic}; see also \cite{alberti-bate-marhcese}),  one can recover  properties of classical Sobolev spaces. However, for the goals of \cite[Theorem 4]{hornik1991approximation} these would be unnecessary restrictions and this normed space, still called weighted Sobolev space in \cite{hornik1991approximation} with a slight abuse of terminology, is enough to \textit{prove density results of sets of neural networks for any finite input measure $\nu$.}

\paragraph{Choice of the weighted Sobolev norm.} First, we need to choose a suitable Sobolev norm. We reason as follows.
\begin{enumerate}[(A)]
    \item A direct generalization of the  finite-dimensional case would lead to consider Sobolev norms induced by operator norm $ \|\cdot\|_{\mathcal W^{k,p}_{\nu}(X,Y)}$ given by \eqref{eq:sobolev_norm}. However, Theorem \ref{th:counterexample} shows immediately that Universal approximation via EDAs fails even for basic maps with this norm. 
    \item  Instead, inspired by Theorem \ref{th:UAT_K_Banach}, it is  natural to consider  $F\in C^k_B(X,Y)$ and view $D^iF$ as maps $(x,h^1,\ldots,h^i)\mapsto D^iF(x)(h^1,\ldots,h^i)$, leading naturally to the (weaker) norm $\|\cdot\|_{\mathcal W^{k,p}_{B,\mu}(X,Y)}$ defined in \eqref{eq:sobolev_seminorm} for a general finite measure $\mu$ on $X^{k+1}$. Similarly, we consider the norm  $\|\cdot\|_{\mathcal W^{k,p,r}_{B,\nu,\eta}(X,Y)}$ for a finite input measure $\nu$ on $X$ and a suitable auxiliary probability measure $\eta$ on $X^k$ (measuring errors in the $h^j,j\leq k$ variables). We call these \textbf{weighted Bastiani--Sobolev norms}.
    \item A fundamental theoretical benchmark, also in view of numerical applications (e.g. see \cite{luo-Roseberry-chen-Ghattas}), is the case of classical Gaussian Sobolev spaces \cite[Section 5.2]{bogachev1998gaussian}. Indeed, in the Gaussian case, the choice of the Sobolev norm allows to construct the spaces via completions of sets of smooth cylindrical maps. 
    A preliminary calculation shows consistency in the Gaussian case, i.e. if $\gamma$ is Gaussian, and setting $\nu:=\gamma,\eta:=\otimes_{i=1}^k\gamma$,  $\mu:=\otimes_{i=0}^k\gamma$:
\begin{align}\label{eq:sobolev_norm_mu0_Lp_bogachev_HS_p=2_intro}
   &\|\cdot\|_{\mathcal W^{k,p}_{\gamma,\mathcal L_2(X_\gamma,Y)}(X,Y)}=\|\cdot\|_{\mathcal W^{k,p,2}_{B,\gamma,\eta}(X,Y)} 
    ,\quad \|\cdot\|_{\mathcal W^{k,2}_{\gamma,\mathcal L_2(X_\gamma,Y)} (X,Y)}=\|\cdot\|_{\mathcal W^{k,2}_{B,\mu}(X,Y)}\equiv \|\cdot\|_{\mathcal W^{k,2,2}_{B,\gamma,\eta}(X,Y)},
\end{align}
where $\|\cdot\|_{\mathcal W^{k,p}_{\gamma,\mathcal L_2(X_\gamma,Y)}(X,Y)}$ (see \eqref{eq:def_sobolev_norm_mu0_Lp_bogachev_HS}) is used in \cite[Section 5.2]{bogachev1998gaussian} to define the Gaussian Sobolev space. Hence, \textit{our new Sobolev norms {generalize} the ones in the classical Gaussian case}.
\end{enumerate}
\paragraph{Construction of the weighted Bastiani--Sobolev space.} To construct the space, we reason as follows. Since our goal is to achieve UA via EDAs, we want to be able to approximate $F$  by cylindrical maps in $C^{k;p}_{\mu^0,Cyl}(X,Y)$, $C^{k;p}_{\nu,Cyl}(X,Y)$ via the norms $\|\cdot\|_{\mathcal W^{k,p}_{B,\mu}(X,Y)}$, $\|\cdot\|_{\mathcal W^{k,p,r}_{B,\nu,\eta}(X,Y)}$, respectively (so that we can apply the classical finite-dimensional result \cite{hornik1991approximation}, in the form of Assumption \ref{ass:finite_dim_sobolev_approx}).  To this purpose, we make the following observations:
\begin{enumerate}
    \item[(D)] a classical construction of Sobolev spaces in infinite dimensions  under Gaussian measures \cite[Section 5.2]{bogachev1998gaussian} (see \eqref{eq:gaussian_sobolev})\footnote{see also \cite{maas,pronk-veraar}} is obtained via  completion of classes of smooth cylindrical maps and this space turns out to be equal to the space of $k$-times stochastically Gâteaux differentiable maps with $\mathcal L^p$-derivatives (i.e.  \cite[Proposition 5.4.6 (iii)]{bogachev1998gaussian}; see also  \eqref{eq:Wp=Dp}). Since we are dealing with a general finite measure $\mu$, for similar reasons as in \eqref{eq:closure_fail_derivatives_measure}, we cannot simply take this completion.  {However, this will be an important inspiration for us, especially, since we want to retain consistency with the classical Gaussian case.}
   \item[(E)]  More generally, an extension of the construction in (D) is provided in \cite[Chapter 8]{bogachev2010} for differentiable measures.  However,  even in these  settings, from   \cite[Theorem 8.7.2 (iii)]{bogachev2010} (and its proof), we see that approximating differentiable (in a suitable sense) maps and their derivatives via cylindrical maps may not be guaranteed  (unlike in the Gaussian case). Indeed,  these statements there are  provided only for $k=1,p>1$  and use technical conditions on the measure (which we do not have). \item[(F)] Furthermore, Example \ref{ex:strict_inclusions} (in particular, \eqref{eq:cantapproximate-in-bastiani-sobolev} for $F$ constructed there) shows that there exist smooth maps with finite Bastiani--Sobolev norm which cannot be approximated via cylindrical maps in the Bastiani--Sobolev norm under suitable measures.
  \item[(G)]  In our setting,  natural approximation methods would be (G)-(H) as follows. We might assume that $X,Y$ are separable and satisfy BAP (Assumption \ref{ass:bounded_approx_pro}). Then, we would
  approximate $F$ by its cylindrical approximation $F^{{\mathcal E}^X_d ,{\mathcal D}^Y_m }$ (Definition \ref{def:cyl_map}), and then try to pass to the limit inside the integrals. A standard tool is, e.g., the dominated convergence theorem; however, this also typically requires additional conditions.
\item[(H)] Inspired by standard arguments for  Sobolev spaces in $\mathbb R^N$ (e.g. \cite{adams2003sobolev}) we might also try to use smooth bump functions (Subsection \ref{sec:bump}) to first approximate $F$ by bounded maps  with bounded derivatives; and later, approximate these via cylindrical approximations, under BAP, possibly relaxing other conditions in (b). However, smooth bump functions are not always available in Banach spaces \cite{fry-mcmanus2002}. 
\item[(I)] Finally, we observe that there are interesting noncylindrical maps that can be approximated by cylindrical maps in $C^{k;p}_{\mu^0,Cyl}(X,Y)$, $C^{k;p}_{\nu,Cyl}(X,Y)$, even for Banach spaces without AP (see Example \ref{ex:strict_inclusions}; in particular, \eqref{eq:cantapproximate-in-bastiani-sobolev} for $\tilde F$ constructed there).  
\end{enumerate}
{An extension of the Sobolev space used for UA in the finite dimensional case \cite{hornik1991approximation} to our setting (with our new Sobolev norms) would lead us to consider maps $F\in C^k_B(X,Y)$ with $\|F\|_{\mathcal W^{k,p}_{B,\mu}(X,Y)}<\infty$ (resp. $\|F\|_{\mathcal W^{k,p,r}_{B,\nu,\eta}(X,Y)}<\infty$), i.e. in the space $C^{k;p}_{B,\mu}(X,Y)$ (resp. $C^{k;p,r}_{B,\nu,\eta}(X,Y)$).} However, observations (E), (F) suggest that, in general, we cannot  approximate all maps  in $C^{k;p}_{B,\mu}(X,Y) $, $C^{k;p,r}_{B,\nu,\eta}(X,Y)$,  by cylindrical maps in $C^{k;p}_{\mu^0,Cyl}(X,Y)$, $C^{k;p}_{\nu,Cyl}(X,Y)$, respectively. 
Therefore, we use $C^{k;p}_{B,\mu}(X,Y)$, $C^{k;p,r}_{B,\nu,\eta}(X,Y)$ (which may be too large for our purposes)  as ambient spaces. Then, inspired from (D)-(E),  we define the \textbf{weighted--Bastiani Sobolev space} $C^{k;p}_{B,\mu,A}(X,Y)$ as the closure of $C^{k;p}_{\mu^0,Cyl}(X,Y)$ in $C^{k;p}_{B,\mu}(X,Y)$ (hence, maps in $C^{k;p}_{B,\mu,A}(X,Y)$ can be approximated by these cylindrical maps from which the subscript ``A'') with finite norm $\|\cdot\|_{\mathcal W^{k,p}_{\mu^0}(X,Y)}$; i.e.  what we need to apply \cite{hornik1991approximation} to the cylindrical maps in the proof of UATs.
 Note that, since cylindrical maps are effective finite dimensional maps,  Sobolev norms induced by operator norm $ \|\cdot\|_{\mathcal W^{k,p}_{\mu^0}(X,Y)}$ are natural. Similarly, we construct the space $C^{k;p,r}_{B,\nu,\eta,A}(X,Y)$. 

The resulting spaces might not be complete  and, similarly to the remarks above,  a completion might not be well behaved like standard  Sobolev spaces. However, as in the finite dimensional case \cite{hornik1991approximation} discussed above, this will not be our concern, as our goal is to prove UATs for as general finite input measures. Therefore,  normed spaces are enough to our purposes and the  constructions of $C^{k;p}_{B,\mu,A}(X,Y)$, $C^{k;p,r}_{B,\nu,\eta,A}(X,Y)$ are natural.
\paragraph{Goals achieved via Bastiani--Sobolev spaces.} Bastiani Sobolev spaces allow us to achieve simultaneously the following goals:
\begin{enumerate}[(i)]
    \item \textbf{Universal Approximation.} \textit{The spaces $C^{k;p}_{B,\mu,A}(X,Y), $ $C^{k;p,r}_{B,\nu,\eta,A}(X,Y)$ are  abstract constructions, where UA is  achieved naturally} (Theorem \ref{th:UAT_Sobolev_Banach}).
However, we are then faced with the challenge of justifying that our space contains interesting maps. We address this by giving concrete \textit{criteria for membership}: proceeding as in (G)-(H) above, we construct the space  $C^{k;p}_{B,\mu,CC}(X,Y)$ and the (concrete) spaces $C^{k;p,q}_{B}(X,Y),$  $\widetilde C^{k;p}_{B,\mu}(X,Y)$ (and the corresponding ones for $C^{k;p,r}_{B,\nu,\eta,A}(X,Y)$). Indeed, we will show in Theorem \ref{th:relations_sobolve_norms} that all these spaces are contained in $C^{k;p}_{B,\mu,A}(X,Y)$, $C^{k;p,r}_{B,\nu,\eta,A}(X,Y)$, respectively, under suitable conditions.  Finally, in view of (D), (E) (see (iii)), (I) our spaces may contain interesting maps even when BAP fails.
\item \textbf{The finite-dimensional case is recovered.}  When $X,Y$ are finite dimensional, we recover the weighted Sobolev space used in the classical finite-dimensional UATs in \cite{hornik1991approximation} (see Proposition \ref{prop:relations_sobolev_norms_1}).
\item \textbf{Gaussian Sobolev spaces on Banach spaces are recovered.}  When the underlying measures are regular enough for closing derivative operators, we can complete the weighted Bastiani--Sobolev space. For instance, thanks to (C), \textit{our construction of these Sobolev spaces generalizes the one of classical Gaussian Sobolev spaces \cite{bogachev1998gaussian} to general finite measures on $X^{k+1}$}, in the following sense: we prove that, when $\nu=\gamma, \eta=\otimes_{i=1}^k\gamma, \mu=\otimes_{i=0}^k\gamma$, with $\gamma$ Gaussian measure in $X$ \textit{we recover the classical Gaussian Sobolev space in \cite[Section 5.2]{bogachev1998gaussian} via completion} (see  Theorem \ref{th:characterization_bogachev}).
\item \textbf{Bastiani--Sobolev training in DIOL.} The UATs in Bastiani--Sobolev spaces naturally lead us to Bastiani--Sobolev training, i.e. a natural infinite-dimensional extension of the Sobolev training in high, but finite, dimensional spaces \cite{czarnecki2017}, while also recovering the Sobolev training studied in DIOL (e.g. \cite{luo-oleary-chen-ghattas}), see Section \ref{sec:bastiani-sobolev-training}.
\end{enumerate}
{\paragraph{Open completion problem.} An open question posed to us by Giuseppe Savaré and Giacomo Sodini is whether, at least when \(k=1\), the completions of the Bastiani–-Sobolev spaces under a general finite input measure can be characterized by Cheeger-type energies, in the spirit of the scalar-valued results \cite{fornasier-savare-sodini2023}.}
\subsection{Weighted Bastiani--Sobolev norms}\label{subsec:sobolev_norms}
Let $(X,|\cdot|_X),(Y,|\cdot|_Y)$ be  Banach spaces; let $\mu$ be a finite   measure on $X\times X^k$, $k\in \mathbb N,$ $\nu$ be a finite  measure on $X$, and $p\geq 1$. For $F \in C_B^k(X,Y)$, we define the following quantities  which will define weighted Sobolev norms for  the weighted Sobolev spaces in Section \ref{subsec:sobolev_sp} (under suitable conditions).
\begin{itemize}[leftmargin=*]  
    \item We define the standard weighted Sobolev norm on $C^k(X,Y)$ (recall  Notation \ref{not:notation_D0f}):
\begin{equation}\label{eq:sobolev_norm}
    \|F\|_{\mathcal W^{k,p}_{\nu}(X,Y)}:=\left(\sum_{i=0}^k\left[\int_{X } \|D^iF(x)\|_{\mathcal L^i(X,Y)}^pd\nu (x)\right] \right)^{1/p}.
\end{equation}
The above quantity \eqref{eq:sobolev_norm} is well-defined (but possibly infinite) for all $F \in C_B^k(X,Y)$ by Remark \ref{rem:borel_measu_opnorms_CB}. 

Recalling Notation \ref{not:marginals} for $\nu^{{\mathcal E}^X}$, by \eqref{eq:derivativeDrv_dh^theta_dY_Banach_cyl}, for $F\in C^k_{Cyl}(X,Y)$, we have
\begin{equation}\label{eq:sobolev-norm-ineq-cyl}
    \begin{aligned}
\|F^{{\mathcal E}^X,{\mathcal D}^Y}\|_{\mathcal W^{k,p}_{ \nu}(X,Y)}&=\left(\sum_{i=0}^k\left[\int_{X } \|D^iF^{{\mathcal E}^X,{\mathcal D}^Y}(x)\|_{\mathcal L^i(X,Y)}^pd \nu (x)\right] \right)^{1/p}\\
&=\left(\sum_{i=0}^k\left[\int_{X } \left \|{\mathcal D}^Y \left[D^i  f^{N ,N'}\left({\mathcal E}^X  x\right)\right]\left[{\mathcal E}^X (\cdot), \ldots, {\mathcal E}^X  (\cdot)\right]\right \|_{\mathcal L^i(X,Y)}^pd \nu (x)\right] \right)^{1/p}\\
&\leq C_{{\mathcal E}^X,{\mathcal D}^Y}\left(\sum_{i=0}^k\left[\int_{X } \left \|D^i  f^{N ,N'}\left({\mathcal E}^X  x\right)\right \|_{\mathcal L^i(\mathbb R^N,\mathbb R^{N'})}^pd\nu (x)\right] \right)^{1/p}\\
&= C_{{\mathcal E}^X,{\mathcal D}^Y}\left(\sum_{i=0}^k\left[\int_{\mathbb R^{N_d} } \left\|D^i  f^{N ,N'}\left(y\right)\right\|_{\mathcal L^i(\mathbb R^N,\mathbb R^{N'})}^pd \nu^{{\mathcal E}^X} (y)\right] \right)^{1/p}\\
&= C_{{\mathcal E}^X,{\mathcal D}^Y}  \  \| f^{N,N'}\|_{\mathcal  W^{k,p}_{ \nu^{{\mathcal E}^X}}(\mathbb R^{N},\mathbb R^{N'})}.
    \end{aligned}
\end{equation}
\item The above norm \eqref{eq:sobolev_norm} will be too strong for most of our results. Therefore, we introduce weaker  norms, which we call weighted Bastiani--Sobolev norms.  For $A\subset X$ Borel measurable, $F \in C^k_B(X,Y)$  we define (recall  Notations \ref{not:notation_D0f}, \ref{not:marginals})
\begin{align}
    &\|F\|_{\mathcal W^{k,p}_{B,\mu}(A,X,Y)}:=\left(\sum_{i=0}^k\left[\int_{A \times X^i} |D^iF(x)(h^1,...,h^i)|^p_Yd\mu^{0:i}(x,h^1,\ldots,h^i)\right] \right)^{1/p},\label{eq:sobolev_seminorm_O}\\
   & \|F\|_{\mathcal W^{k,p}_{B,\mu}(X,Y)}:=\|F\|_{\mathcal W^{k,p}_{B,\mu}(X,X,Y)}\equiv \left(\sum_{i=0}^k\left[\int_{X \times X^i} |D^iF(x)(h^1,...,h^i)|^p_Yd\mu^{0:i}(x,h^1,\ldots,h^i)\right] \right)^{1/p}.\label{eq:sobolev_seminorm}
\end{align}
\item 
To extend Sobolev bump-type arguments to these infinite-dimensional settings (see Proposition \ref{pro:ass_density_holds}), we introduce the following additional  norms.
For $A\subset X$ Borel, we set (recall Notations \ref{not:notation_D0f}, \ref{not:marginals}; here, we denote $J^c:=\{1,\ldots,i\}\setminus J$)
\begin{align}
   & \|F\|_{\widetilde{\mathcal W}^{k,p}_{B,\mu}(A,X,Y)}:=
   \left(\sum_{0\leq i \leq k}\sum_{J\subset \{1,\ldots,i\}}\left[\int_{A \times X^i} \left|D^{|J|} F(x)((h^j)_{j\in J})\right|_Y^p\prod_{t \in J^c}|h^t|_X^pd\mu^{0:i}(x,h^1,\ldots,h^i)\right] \right)^{1/p},\label{eq:sobolev_seminorm_2_O}\\
    & \|F\|_{\widetilde{\mathcal W}^{k,p}_{B,\mu}(X,Y)}:= \|F\|_{\widetilde{\mathcal W}^{k,p}_{B,\mu}(X,X,Y)}.\label{eq:sobolev_seminorm_2_O_X}
\end{align}
Note that 
\begin{align}\label{eq:sobolev_seminorm_2_O_geq}
    \|F\|_{\mathcal W^{k,p}_{B,\mu}(A,X,Y)}\leq \|F\|_{\widetilde{\mathcal W}^{k,p}_{B,\mu}(A,X,Y)},\quad \textit{Therefore, } \|F\|_{\mathcal W^{k,p}_{B,\mu}(X,Y)}\leq \|F\|_{\widetilde{\mathcal W}^{k,p}_{B,\mu}(X,Y)}.
\end{align}

\begin{remark}\label{rem:seminorm_sobolev_bastiani_tilde_reverse-ineq}
{We observe that the reverse inequality in \eqref{eq:sobolev_seminorm_2_O_geq}
holds, e.g., when
\(
\mu=\nu\otimes\xi^{\otimes k},
\)
where \(\xi\) is a probability measure such that $ M_{p,\xi}:=\int_X|h|_X^p\,d\xi(h)<\infty$. Indeed, if
\(J\subseteq\{1,\ldots,i\}\) and \(j:=|J|\), then, after permuting the
direction coordinates, Tonelli's theorem gives
\begin{align*}
&\int_{A\times X^i}
 \bigl|D^jF(x)((h_s)_{s\in J})\bigr|_Y^p
 \prod_{t\in J^c}|h_t|_X^p
 \,d\mu^{0:i}(x,h_1,\ldots,h_i)
\\
&=\int_A
 \left(\int_{X^j}
  \bigl|D^jF(x)(u_1,\ldots,u_j)\bigr|_Y^p
  \,d\xi^{\otimes j}(u_1,\ldots,u_j)
 \right)
 \left(
  \int_{X^{i-j}}
  \prod_{\ell=1}^{i-j}|v_\ell|_X^p
  \,d\xi^{\otimes(i-j)}(v_1,\ldots,v_{i-j})
 \right)d\nu(x)
\\
&=\int_A
 \left( \int_{X^j}
  \bigl|D^jF(x)(u_1,\ldots,u_j)\bigr|_Y^p
  \,d\xi^{\otimes j} (u_1,\ldots,u_j)
 \right)
 \prod_{\ell=1}^{i-j}
 \left(
  \int_X|v_\ell|_X^p \,d\xi(v_\ell)
 \right)d\nu(x)
\\
&=M_{p,\xi}^{\,i-j}
 \int_{A\times X^j}
 \bigl|D^jF(x)(u_1,\ldots,u_j)\bigr|_Y^p
 \,d(\nu\otimes\xi^{\otimes j})(x,u_1,\ldots,u_j).
\end{align*}
Summing the preceding identity,  for some $C>0$, we have
\begin{align*}
\|F\|_{\widetilde{\mathcal W}^{k,p}_{B,\mu}(A,X,Y)}^p
&\leq C
 \sum_{j=0}^k
 \int_{A\times X^j}
 \bigl|D^jF(x)(u_1,\ldots,u_j)\bigr|_Y^p
 \,d(\nu\otimes\xi^{\otimes j})
=C
 \|F\|_{\mathcal W^{k,p}_{B,\mu}(A,X,Y)}^p.
\end{align*}
Combining this inequality with
\eqref{eq:sobolev_seminorm_2_O_geq}, we conclude that the two
seminorms are equivalent.}
\end{remark}
\item 
Important cases will be when $\mu$ is defined via a product measure $\mu=\nu\otimes\eta$, where $\nu$ is a finite measure on $X$ and $\eta$ is a probability measure on $X^k$. Since $F \in C^k_B(X,Y)$, we apply Tonelli's theorem (recall Definition \ref{def:Lpnorm}; here, when $i=0$ we use the convention $\|D^iF(x)\|_{L^p(X^i,Y,\eta^{1:i})}=|F(x)|_Y$), to have 
\begin{equation}\label{eq:sobolev_norm_mu0_Lp}
\begin{aligned}
    \|F\|_{\mathcal W^{k,p}_{B,\mu}(X,Y)}&:=\left(\sum_{i=0}^k\left[\int_{ X}\int_{X^i } |D^iF(x)(h^1,...,h^i)|^p_Yd\eta^{1:i}(h^1,\ldots,h^i)d\nu(x)\right] \right)^{1/p}\\
    &=\left(\sum_{i=0}^k\left[\int_{X} \|D^iF(x)\|_{L^p(X^i,Y,\eta^{1:i})}^pd\nu(x)\right] \right)^{1/p}.
\end{aligned}
\end{equation}
\item For $r\geq 1,$ we will also consider the following generalization of \eqref{eq:sobolev_norm_mu0_Lp} (note that $\|\cdot\|_{\mathcal W^{k,p}_{B,\mu}(X,Y)}=\|\cdot\|_{\mathcal W^{k,p,p}_{B,\nu,\eta}(X,Y)}$):
\begin{equation}\label{eq:sobolev_norm_mu0_Lp_bogachev}
\begin{aligned}
    \|F\|_{\mathcal W^{k,p,r}_{B,\nu,\eta}(X,Y)}
    :=\left(\sum_{i=0}^k\left[\int_{X} \|D^iF(x)\|_{L^r(X^i,Y,\eta^{1:i})}^pd\nu(x)\right]. \right)^{1/p}.
\end{aligned}
\end{equation}
The Sobolev norm $\|\cdot\|_{\widetilde{\mathcal W}^{k,p,r}_{B,\nu,\eta}(X,Y)}$ is obtained by modifying \eqref{eq:sobolev_seminorm_2_O_X} accordingly, i.e. 
for $A\subset X$ Borel,
\begin{align*}
   & \|F\|_{\widetilde{\mathcal W}^{k,p,r}_{B,\nu,\eta}(A,X,Y)}:=
   \left(\sum_{0\leq i \leq k}\sum_{J\subset \{1,\ldots,i\}}\left[\int_{A}\left( \int_{X^i}  \left|D^{|J|} F(x)((h^j)_{j\in J})\right|_Y^r\prod_{t \in J^c}|h^t|_X^rd\eta^{1:i}(h^1,\ldots,h^i)\right)^{p/r}d\nu(x)\right] \right)^{1/p},\\
    & \|F\|_{\widetilde{\mathcal W}^{k,p,r}_{B,\nu,\eta}(X,Y)}:= \|F\|_{\widetilde{\mathcal W}^{k,p,r}_{B,\nu,\eta}(X,X,Y)}.
\end{align*}
\item 
{Let $X$ be a separable Banach space, $Y$ be separable Hilbert space, $\nu$ finite measure on $X$, and   $\gamma$ be a centered \textbf{Gaussian} measure on $X$ (recall Examples \ref{ex:HS_norms_RKHS}, \ref{ex:gaussian_measures_radonifying}).  We define
\begin{align}\label{eq:def_sobolev_norm_mu0_Lp_bogachev_HS}
   & \|F\|_{\mathcal W^{k,p}_{\nu,\mathcal L_2(X_\gamma,Y)}(X,Y)}
   : =\left(\sum_{i=0}^k\left[\int_{X} \|D_{{\gamma}}^iF(x)\|_{\mathcal L^i_2(X_{\gamma},Y)}^pd\nu(x)\right] \right)^{1/p},
\end{align}
where in the above we have denoted by $D_{{\gamma}}^iF(x)=D^iF(x)\circ(\iota_{\gamma},\ldots,\iota_{\gamma}),i\geq 1$ the restriction of $D^iF(x)$ to the Cameron--Martin space/RKHS $X_{\gamma}$.}

We preliminary note that, setting  $\eta:=\otimes_{i=1}^k\gamma$ (so that $\mu:=\nu\otimes(\otimes_{i=1}^k\gamma)$), by \eqref{eq:norm2=HS}, we have 
\begin{align}\label{eq:sobolev_norm_mu0_Lp_bogachev_HS_p=2}
   &\|\cdot\|_{\mathcal W^{k,p}_{\nu,\mathcal L_2(X_\gamma,Y)}(X,Y)}=\|\cdot\|_{\mathcal W^{k,p,2}_{B,\nu,\eta}(X,Y)} 
    ,\quad \|\cdot\|_{\mathcal W^{k,2}_{\nu,\mathcal L_2(X_\gamma,Y)} (X,Y)}=\|\cdot\|_{\mathcal W^{k,2}_{B,\mu}(X,Y)}\equiv \|\cdot\|_{\mathcal W^{k,2,2}_{B,\nu,\eta}(X,Y)}
   .
\end{align}
\end{itemize}

\subsection{Weighted Bastiani--Sobolev spaces}\label{subsec:sobolev_sp}
In this subsection, motivated by Subsection \ref{subsec:motivation_sobolev}, we construct the weighted Bastiani--Sobolev spaces. Let $(X,|\cdot|_X),(Y,|\cdot|_Y)$ be  Banach spaces; let $\mu$ be a finite   measure on $X\times X^k$, $k\in \mathbb N,$ $\nu$ be a finite  measure on $X$, and $p\geq 1$.
\begin{itemize}[leftmargin=*]  
\item We  define the space
\begin{align*} C^{k;p}_{B,\mu}(X,Y):=\{F\in C^k_B(X,Y):\|F\|_{\mathcal  W^{k,p}_{B,\mu}(X,Y)}<\infty\},\end{align*}
and we endow it with the Sobolev norm $\|\cdot\|_{\mathcal  W^{k,p}_{B,\mu}(X,Y)}$.  This is a norm up to a quotient with respect to the equivalence relation $F\sim F'$ defined by $|D^iF(\cdot)(\cdot,...,\cdot)-D^iF'(\cdot)(\cdot,...,\cdot)|_Y=0$, $\mu^{0:i}$-a.e., for all $0\leq i\leq k$. However, if $\mu$ has full support, then, by continuity of all maps, $\|\cdot\|_{\mathcal  W^{k,p}_{B,\mu}(X,Y)}$ is a norm on $C^{k;p}_{B,\mu}(X,Y)$; clearly, the resulting normed space is not complete. \textit{This will act as our ambient space}.
\item Define the space of cylindrical functions with finite Sobolev norm $\|\cdot\|_{\mathcal  W^{k,p}_{ \nu^{{\mathcal E}^X}}(\mathbb R^{N},\mathbb R^{N'})}$ (recall Notation \ref{not:marginals}) by
\begin{align*}C^{k;p}_{\nu,Cyl}(X,Y):=\{&F^{{\mathcal E}^X,{\mathcal D}^Y}\in C^k_{Cyl}(X,Y): \|f^{N,N'}\|_{\mathcal  W^{k,p}_{ \nu^{{\mathcal E}^X}}(\mathbb R^{N},\mathbb R^{N'})}<\infty \}.\end{align*}
Note that, by \eqref{eq:sobolev-norm-ineq-cyl}, we have 
 $\|F^{{\mathcal E}^X,{\mathcal D}^Y}\|_{\mathcal W^{k,p}_{ \nu}(X,Y)}<\infty,$ for all $  F^{{\mathcal E}^X,{\mathcal D}^Y}\in C^{k;p}_{\nu,Cyl}(X,Y).$ Under Assumption \ref{ass:Radon-nik}, by \eqref{eq:bound_Wmu_barmu}, we will have that $C^{k;p}_{\mu^0,Cyl}(X,Y)$ is a subspace of $ C^{k;p}_{B,\mu}(X,Y)$.
    \item Define the weighted Bastiani--Sobolev  space 
\begin{align*} 
C^{k;p}_{B,\mu,A}(X,Y)&:={\overline{C^{k;p}_{\mu^0,Cyl}(X,Y)}}=\{F\in C^k_B(X,Y): \|F\|_{\mathcal W^{k,p}_{B,\mu}(X,Y)}<\infty, \exists \left\{ F_n \right\}_{n\in \mathbb N}\subset  C^{k;p}_{\mu^0,Cyl}(X,Y) : \\
&\quad\quad\quad\quad\quad\quad\quad\quad\quad\quad \|F-F_n\|_{\mathcal W^{k,p}_{B,\mu}(X,Y)}\xrightarrow{n\to \infty}0\},
\end{align*}
{{where the closure is taken in $C^{k,p}_{B,\mu}(X,Y)$. As a subspace of $C^{k;p}_{B,\mu}(X,Y)$, it is a normed space.} It needs not be complete and a completion  encounters, in general, similar issues as in \eqref{eq:closure_fail_derivatives_measure}, as we are dealing with a general finite measure $\mu$.} However, as mentioned in Subsection \ref{subsec:motivation_sobolev}, our goal here is to state density results in universal approximation theorems for as general measures as possible.
To these purposes, the space $C^{k;p}_{B,\mu,A}(X,Y)$ is the space of $C^k_B$ maps with finite $\|\cdot\|_{\mathcal W^{k,p}_{B,\mu}(X,Y)}$-norm that can be approximated (from which the subscript ``A'') via cylindrical maps in $C^{k;p}_{\mu^0,Cyl}(X,Y)$. 
We will state the Sobolev universal approximation theorem for this space (Theorem \ref{th:UAT_Sobolev_Banach}). 

{When the underlying measures are regular enough for closing derivative operators, we can complete the weighted Bastiani--Sobolev space. For instance, \textit{in the Gaussian case, we will   recover the classical Gaussian Sobolev spaces} in \cite[Chapter 5]{bogachev1998gaussian} via a completion (Subsection \ref{subsec:gaussian_recover}).}
\end{itemize} 
\emph{The Sobolev space $C^{k;p}_{B,\mu,A}(X,Y)$ is an abstract construction, where UA is  achieved naturally} (Theorem \ref{th:UAT_Sobolev_Banach}). However, we want to show that  $C^{k;p}_{B,\mu,A}(X,Y)$ contains many interesting maps. To this purpose, we construct the following concrete Sobolev spaces, leading to  Theorem \ref{th:relations_sobolve_norms}, in particular, \eqref{eq:Ckwq_subset_Ckwc}, \eqref{eq:tildeC_subset_C}.
\begin{itemize}[leftmargin=*] 
  \item Let $X,Y$ be separable Banach spaces with BAP, so that we are given $\left\{{\mathcal S}^X_d\right\}_{d\in \mathbb N }$, $\left\{{\mathcal S}^Y_m\right\}_{m \in \mathbb N}$ satisfying Assumption \ref{ass:bounded_approx_pro}.  We define (recall cylindrical approximations $F_{d,m}$ Definition \ref{def:cyl_approx})
\begin{align*}C^{k;p}_{B,\mu,CC}(X,Y):=\{&F\in C^k_B(X,Y):\|F\|_{\mathcal  W^{k,p}_{B,\mu}(X,Y)}<\infty; \|f^{N_d,N_m}\|_{\mathcal  W^{k,p}_{ \mu^{0,{\mathcal E}^X_d}}(\mathbb R^{N_d},\mathbb R^{N_m})}<\infty,\forall d,m; \\
&\|F-F_{d,m}\|_{\mathcal W^{k,p}_{B,\mu}(X,Y)}\xrightarrow{d,m \to \infty}0 \}.\end{align*}
Note that the space depends on $\left\{{\mathcal S}^X_d\right\}_{d\in \mathbb N }$, $\left\{{\mathcal S}^Y_m\right\}_{m \in \mathbb N}$, even if not emphasized in the notation.
The subscript ``CC'' stands for the consistency  of cylindrical approximations $F_{d,m}$ with respect to the norm $\|\cdot\|_{\mathcal W^{k,p}_{B,\mu}(X,Y)}$ (i.e.   $\|F-F_{d,m}\|_{\mathcal W^{k,p}_{B,\mu}(X,Y)}\xrightarrow{d,m \to \infty}0$).   Here, we choose to work  with the Bounded Approximation Property in separable Banach spaces, in place of the weaker Approximation Property in Banach spaces, as it is a lot easier to deal with sequences of operators $\left\{{\mathcal S}^X_d\right\}_{d\in \mathbb N }$, $\left\{{\mathcal S}^Y_m\right\}_{m \in \mathbb N}$, in place of Moore–Smith sequences, when dealing with convergence of integrals.  
  \item
Under $\|\mu\|_{k,q,p}  < \infty$, we  define  the space (notice that it is a subspace of $C^{k;p}_{B,\mu}(X,Y)$; see also \eqref{eq:bound_norm_for_Ckpq})
\begin{align*} C^{k;p,q}_{B}(X,Y):=\{F\in C^k_B(X,Y):\exists C=C(F)>0 :\|D^iF(x)\|_{\mathcal L^i(X,Y)}^p\leq C(1+|x|_X^q) , \forall i\leq k,x \in X\}\end{align*}
 and we endow it with the  norm $\|\cdot\|_{\mathcal W^{k,p}_{B,\mu}(X,Y)}$ (up to the quotienting and similar considerations as above). Here, the notation with the double index $p,q$ (rather than the single growth bound $a=q/p$) is meant to emphasize $L^p$-integrability of $|D^iF(x)(h_1,\ldots,h_i)|_Y^p$,  $i\leq k,$ for $\mu$ such that $\|\mu\|_{k,q,p}  < \infty$ (recall \eqref{eq:moments_mu}). 
 \item We define the space (notice that it is a subspace of $C^{k;p}_{B,\mu}(X,Y)$ by  \eqref{eq:sobolev_seminorm_2_O_geq})
\begin{equation*}
\begin{aligned}\widetilde C^{k;p}_{B,\mu}(X,Y)&:=\{F\in C^k_B(X,Y):\|F\|_{\widetilde{\mathcal  W}^{k,p}_{B,\mu}(X,Y)}<\infty, D^iF \textit{ bounded on bounded subsets of }X, \forall i\leq k\},
\end{aligned}
\end{equation*}
endowed with the norm $ \|\cdot\|_{\mathcal W^{k,p}_{B,\mu}(X,Y)}$ (up to quotienting and similar considerations as above). This space will be particularly important, because we will be able to state the universal approximation theorem Corollary \ref{th:UAT_Sobolev_K_Banach_bump} under  Assumption \ref{ass:density_CB} (see Subsection \ref{subsubsec:ass_density_CB} for its validity), where we have no growth conditions or consistency conditions in the definition of the space.
\item Let $r\geq 1,$  $\nu$ be a finite measure on $X$ and $\eta$ be a probability measure on $X^k$.   Then,  we  define the spaces  $C^{k;p,r}_{B,\nu,\eta}(X,Y),$ $C^{k;p,r}_{B,\nu,\eta,A}(X,Y)$, $C^{k;p,r}_{B,\nu,\eta,CC}(X,Y)$, $C^{k;p,q}_{B}(X,Y),$ $\widetilde C^{k;p,r}_{B,\nu,\eta}(X,Y)$ corresponding to $ C^{k;p}_{B,\mu}(X,Y),$ $C^{k;p}_{B,\mu,A}(X,Y)$, $C^{k;p}_{B,\mu,CC}(X,Y)$, $C^{k;p,q}_{B}(X,Y),$  $\widetilde C^{k;p}_{B,\mu}(X,Y)$, respectively, by  replacing  $ \|\cdot\|_{\mathcal W^{k,p}_{B,\mu}(X,Y)}$, $\|\cdot\|_{\widetilde{\mathcal W}^{k,p}_{B,\mu}(X,Y)}$,  with  $\|\cdot\|_{\mathcal W^{k,p,r}_{B,\nu,\eta}(X,Y)}$, $\|\cdot\|_{\widetilde{\mathcal W}^{k,p,r}_{B,\nu,\eta}(X,Y)}$, respectively and replacing the condition $\|\mu\|_{k,q,p}  < \infty$ with $\int_X(1+|x|^q)d\nu(x)<\infty$ (or equivalently, $\int_X|x|^qd\nu(x)<\infty$) and $\|\eta\|_{k,r}<\infty$.  Recall also that for $p=r$ and $\mu=\nu\otimes\eta$, we have $\|\cdot\|_{\mathcal W^{k,p}_{B,\mu}(X,Y)}=\|\cdot\|_{\mathcal W^{k,p,p}_{B,\nu,\eta}(X,Y)}$, $\|\cdot\|_{\widetilde{\mathcal W}^{k,p}_{B,\mu}(X,Y)}=\|\cdot\|_{\widetilde{\mathcal W}^{k,p,p}_{B,\nu,\eta}(X,Y)}$.
\item {Let $\gamma$ be a centered Gaussian on a separable Banach space $X$.} Let $Y$ be a separable Hilbert space.  Define, as in \cite[Section 5.2]{bogachev1998gaussian} (recall Definition \ref{def:cyl_map}),
\begin{equation}\label{eq:gaussian_sobolev}
   W_{\gamma}^{k,p}(X,Y) :=\widehat{C_{b,Cyl}^\infty(X,Y)} ,
\end{equation}
where the completion is taken under $\|\cdot\|_{\mathcal W^{k,p}_{\gamma,\mathcal L_2(X_{\gamma},Y)}(X,Y)}.$
 {We denote by $D^{1,p}_{\gamma}(X,Y)$ the space of  stochastic Gâteaux differentiable, ray absolutely continuous  maps  $F:X\to Y$, in the sense of \cite[Definitions 5.2.3, 5.2.4]{bogachev1998gaussian} with Borel measurable $D^i_{\gamma}F:X\to \mathcal L_2^i(X_\gamma,Y)$ and finite norm \eqref{eq:def_sobolev_norm_mu0_Lp_bogachev_HS} (for $k=1$). Then, the definition is extended by induction for $k\in \mathbb N$, i.e. $D^{k,p}_{\gamma}(X,Y):=\{F\in D^{k-1,p}_{\gamma}(X,Y): D_{\gamma}F\in D^{k-1,p}_{\gamma}(X,\mathcal L_2(X_{\gamma},Y)\}$ and corresponding norm \eqref{eq:def_sobolev_norm_mu0_Lp_bogachev_HS}.
 We recall that, by \cite[Proposition 5.4.6]{bogachev1998gaussian} for all $p\geq 1,$  
\begin{equation}\label{eq:Wp=Dp}
  W^{k,p}_{\gamma}(X,Y)=  D^{k,p}_{\gamma}(X,Y).
\end{equation}
}
\item We define the space 
\begin{align}\label{eq:growth_v_UATsobolev-op_norm} C^{k;p}_{\nu}(X,Y):=\{F\in C^k(X,Y):\|F\|_{\mathcal  W^{k,p}_{\nu}(X,Y)}<\infty\},\end{align}
and we endow it with the norm $\|\cdot\|_{\mathcal  W^{k,p}_{\nu}(X,Y)}$ (up to quotienting with respect to the equivalence relation $F\sim F'$ defined by $\|D^iF(\cdot)-D^iF'(\cdot)\|_{\mathcal L^i(X,Y)}=0$, $\nu$-a.e., for all $0\leq i\leq k$ and similar considerations as above).  \emph{This is a direct extension of the Sobolev space considered in \cite[Theorem 4]{hornik1991approximation} to Banach spaces.}
\item Finally, we  define  the space 
\begin{align*} C^{k;1}(X,Y):=\{F\in C^k(X,Y):\exists C=C(F)>0 :\|D^iF(x)\|_{\mathcal L^i(X,Y)}\leq C(1+|x|_X) , \forall i\leq k,x \in X\}\end{align*}
 and we endow it with the Sobolev norm $\|\cdot\|_{\mathcal W^{k,p}_{\nu}(X,Y)}$ (up to the quotienting and similar considerations as above), for $\nu$ such that $\int_{X} |x|_X^pd\nu(x)<\infty$.
\end{itemize}
\subsection{Properties of  weighted Bastiani--Sobolev spaces}
In this subsection, we study the Sobolev spaces introduced before. Let $(X,|\cdot|_X),(Y,|\cdot|_Y)$ be  Banach spaces; let $\mu$ be a finite   measure on $X\times X^k$, $k\in \mathbb N,$  $p\geq 1$.

The universal approximation theorem in Sobolev spaces (Theorem \ref{th:UAT_Sobolev_Banach}) will be stated under the following assumption.
 \begin{assumption}\label{ass:Radon-nik}
We assume that $\int_{X\times X^i} \prod_{r=1}^i\left|h^r\right|_X^p d \mu^{0:i}\left(x, h^1, \ldots, h^i\right)<\infty$ for all $1\leq i\leq k$ (so that $\bar \mu^{0:i}$ is a finite measure such that $\bar \mu^{0:i}\ll \mu^{0}$; recall Notation \ref{not:marginals}) and 
$
\frac{d \bar \mu^{0:i}}{d \mu^{0}} \in L^{\infty}\left(X;\mu^{0}\right)
$.
\end{assumption}
\begin{example}\label{ex:ass_radon-nik}
 Assumption \ref{ass:Radon-nik} is  satisfied, e.g., if
    \begin{enumerate}
        \item[(i)] $\mu=\nu\otimes\eta$, where $\nu$ is a finite measure on $X$ and $\eta$ is a  probability measure on $X^k$ such that $\|\eta\|_{k,p}<\infty$; this is the simplest case. In this case, $\mu^0=\nu,$
        $
\bar \mu^{0:i}(A)=\mu^0(A)\int_{X^i} \prod_{r=1}^i\left|h^r\right|_X^p d \eta^{1:i}(h^1,\ldots,h^i),
$
    so that $ \bar \mu^{0:i}\ll\mu^0$ with $\frac{d \bar \mu^{0:i}}{d \mu^0}(x)=\int_{X^i} \prod_{r=1}^i\left|h^r\right|^p d \eta^{1:i}(h^1,\ldots,h^i)$ constant and bounded by $\|\eta\|_{k,p}$ for all $i\leq k$ (recall Notation \ref{not:marginals}). 
   A natural choice for $\eta$ that also respects the symmetric nature of derivatives is to take  $\eta=\otimes_{i=1}^k\hat \eta$, where $\hat \eta$ is a suitable probability measure on $X$ with $\|\hat \eta\|_{1,p}<\infty$. In this case, we interpret $\nu$ as the reference measure in $X$ to measure norms in the input variable $x$ and  $\eta$ or $\hat \eta$ as auxiliary measures  to measure norms in the additional variables $h^j,j\leq k$ appearing from derivatives $D^iF$. 
    
        \item[(ii)] more general conditions for Assumption \ref{ass:Radon-nik} to hold can be derived via disintegration of $\mu$ with respect to $\mu^0$. 
    \end{enumerate}
\end{example}
\begin{remark}\label{rem:ass_RN}
\begin{enumerate}[(i)]
    \item  Assumption \ref{ass:Radon-nik} is a natural condition that allows  us to apply standard results  in finite dimensions \cite{hornik1991approximation} (i.e. Assumption \ref{ass:finite_dim_sobolev_approx}) in the proof of Theorem \ref{th:UAT_Sobolev_Banach}. In particular, Example \ref{ex:ass_radon-nik} (i) shows that (recall also the natural equality \eqref{eq:sobolev_norm_mu0_Lp} in this case),
        \textit{as in the classical case \cite[Theorem 4]{hornik1991approximation}, we allow any finite measure $\nu$ in the input variable $x$.} To measure norms of $D^iF$ is the variables $h^i$ instead it is natural to impose some moment conditions.
    \item Assumption \ref{ass:Radon-nik} can be replaced by the following weaker condition:   
$$\textit{for all $1\leq i\leq k$, $\bar \mu^{0:i}$  is a finite measure on $X$.}$$
In this case one needs to adapt Assumption \ref{ass:finite_dim_sobolev_approx}; to justify this then one  can prove a suitable extension of \cite[Theorem 4]{hornik1991approximation} to the case where different measures are used in the definition of the Sobolev norm. In view of the naturality of Example \ref{ex:ass_radon-nik} (i) above we prefer to state the stronger condition Assumption \ref{ass:Radon-nik}. 
\item Another way to weaken Assumption \ref{ass:Radon-nik} is to replace it by a weaker version for pushforward measures with respect to all encoders. However, for practical purposes, it is easier to directly check the validity of Assumption \ref{ass:Radon-nik} directly. 
\end{enumerate}
\end{remark}
\begin{proposition}\label{prop:relations_sobolev_norms_1}Let Assumption \ref{ass:Radon-nik} hold. 
\begin{enumerate}
    \item[(i)] It holds  
    \begin{equation}\label{eq:bound_Wmu_barmu}
        \|F\|_{\mathcal W^{k,p}_{B,\mu}(X,Y)}\leq C  \  \|F\|_{\mathcal  W^{k,p}_{\mu^{0}}(X,Y)},\quad \forall F\in C^{k}_B(X,Y).
    \end{equation}
    \item[(ii)] Let  $X=\mathbb R^N,Y=\mathbb R^{N'}$ be finite dimensional. Let $\mu=\nu\otimes\eta$, where $\nu=\mu^0$ is a finite measure on $X$ and $\eta$ is a probability measure on $X^k$ such that $\|\eta\|_{k,p}<\infty$ (hence,   Example \ref{ex:ass_radon-nik} (i) holds) and  such that $\|\cdot\|_{p,X^i,Y,\eta^{1:i}}$ defines a norm on $\mathcal L^i(X;Y)$, for all $1\leq i\leq k$ (recall Definition \ref{def:Lpnorm}). Then we have the reverse inequality, i.e. there exists $ C_{N,N'}>0$ such that 
    \begin{equation}\label{eq:bound_revers_Wkmu}
         \|F\|_{\mathcal  W^{k,p}_{\mu^0}(X,Y)}\leq C_{N,N'} \|F\|_{\mathcal W^{k,p}_{B,\mu}(X,Y)},\quad \forall F\in C^{k}_B(X,Y)\equiv C^{k}(X,Y).
    \end{equation}
    Moreover, 
        \begin{equation}\label{eq:CkBmuc(X,Y)=Ckw-eq}
   C^{k;p}_{B,\mu}(X,Y)= C^{k;p}_{B,\mu,A}(X,Y)=C^{k;p}_{\mu^0,Cyl}(X,Y)=C^{k;p}_{B,\mu,CC}(X,Y)=C^{k;p}_{\mu^0}(X,Y).
        \end{equation}
\end{enumerate}
\end{proposition}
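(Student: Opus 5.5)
For (i), I would bound each summand of $\|F\|_{\mathcal W^{k,p}_{B,\mu}(X,Y)}^p$ using multilinearity, $|D^iF(x)(h^1,\ldots,h^i)|_Y\le \|D^iF(x)\|_{\mathcal L^i(X,Y)}\prod_{r=1}^i|h^r|_X$. Raising to the $p$-th power and integrating against $\mu^{0:i}$ gives
\begin{align*}
\int_{X\times X^i}|D^iF(x)(h^1,\ldots,h^i)|_Y^p\,d\mu^{0:i}
&\le \int_{X\times X^i}\|D^iF(x)\|^p_{\mathcal L^i(X,Y)}\prod_{r=1}^i|h^r|_X^p\,d\mu^{0:i}
=\int_X \|D^iF(x)\|^p_{\mathcal L^i(X,Y)}\,d\bar\mu^{0:i}(x).
\end{align*}
The equality holds because the integrand depends only on $x$, so integrating against $\hat\mu^{0:i}$ reduces to integrating against its $x$-marginal $\bar\mu^{0:i}$. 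The map $x\mapsto\|D^iF(x)\|$ is Borel by Remark \ref{rem:borel_measu_opnorms_CB}, so Tonelli applies. By Assumption \ref{ass:Radon-nik}, $d\bar\mu^{0:i}=\rho_i\,d\mu^0$ with $\rho_i\in L^\infty(\mu^0)$. The last integral is therefore at most $\|\rho_i\|_\infty\int_X\|D^iF\|^p\,d\mu^0$. For $i=0$ the term is identical on both sides. Summing over $i$ gives \eqref{eq:bound_Wmu_barmu} with $C=\max(1,\max_i\|\rho_i\|_\infty)^{1/p}$.

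For (ii), in finite dimensions $\|\cdot\|_{L^p(X^i,Y,\eta^{1:i})}$ is by hypothesis a norm on the finite-dimensional space $\mathcal L^i(\mathbb R^N,\mathbb R^{N'})$. It is therefore equivalent to the operator norm, with a constant $c_i$ such that $\|L\|_{\mathcal L^i}\le c_i\|L\|_{L^p(X^i,Y,\eta^{1:i})}$. By the product-measure identity \eqref{eq:sobolev_norm_mu0_Lp}, $\|F\|^p_{\mathcal W^{k,p}_{B,\mu}}=\sum_i\int_X\|D^iF(x)\|^p_{L^p(\eta^{1:i})}\,d\nu$. Applying the equivalence pointwise in $x$ yields \eqref{eq:bound_revers_Wkmu}, and $C^k_B=C^k$ here by Remark \ref{rem:Bastiani} (iii). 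Combining (i) and (ii) shows that the two norms are equivalent, so $C^{k;p}_{B,\mu}(X,Y)=C^{k;p}_{\mu^0}(X,Y)$ as normed spaces up to equivalent norms.

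For the chain \eqref{eq:CkBmuc(X,Y)=Ckw-eq}, I would argue that every map is cylindrical. Take $\mathcal E^X=I_{\mathbb R^N}$ and $\mathcal D^Y=I_{\mathbb R^{N'}}$ with $f=F$, so that $\nu^{\mathcal E^X}=\mu^0$. Then $C^{k;p}_{\mu^0,Cyl}\supseteq C^{k;p}_{\mu^0}$. Conversely, any cylindrical $F=\mathcal D^Yf(\mathcal E^X\cdot)$ with finite $\|f\|_{\mathcal W^{k,p}_{\nu^{\mathcal E^X}}}$ has finite $\mathcal W^{k,p}_{\mu^0}$ norm by \eqref{eq:sobolev-norm-ineq-cyl}. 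Hence $C^{k;p}_{\mu^0,Cyl}=C^{k;p}_{\mu^0}=C^{k;p}_{B,\mu}$.

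The closure $C^{k;p}_{B,\mu,A}$ of the cylindrical class inside the ambient space $C^{k;p}_{B,\mu}$ is then the whole ambient space. For $C^{k;p}_{B,\mu,CC}$, I would choose the trivial BAP sequences $\mathcal S^X_d=I$ and $\mathcal S^Y_m=I$, which are admissible in finite dimensions with $\lambda=1$. With this choice $F_{d,m}=F$ and $f^{N_d,N_m}=F$, so every $F\in C^{k;p}_{B,\mu}$ trivially satisfies the consistency condition with finite norms.

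The main delicate point is this last identification. If the paper intends arbitrary BAP sequences rather than the trivial one, I would instead note that in finite dimensions $\mathcal S_d\to I$ in operator norm. I would then use dominated convergence with the bounds $\|\mathcal S_d\|\le\lambda$ to get $\|F-F_{d,m}\|\to0$. This seems to require an integrable majorant of $\|D^iF(\mathcal S_dx)\|$, which is not automatic for a general $\nu$, so I would fix the trivial choice.
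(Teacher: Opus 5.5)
Your proposal is correct and follows the paper's route. For (i), both use the pointwise bound $|D^iF(x)(h^1,\ldots,h^i)|_Y\le\|D^iF(x)\|_{\mathcal L^i(X,Y)}\prod_r|h^r|_X$, pass to $\bar\mu^{0:i}$, and use the bounded Radon--Nikodym density. For (ii), both use the equivalence of norms on the finite-dimensional space $\mathcal L^i(\mathbb R^N,\mathbb R^{N'})$ and collapse all the spaces via the trivial choice $\mathcal E^X=I$, $\mathcal D^Y=I$ (and trivial approximating sequences). Your caveat that $C^{k;p}_{B,\mu,CC}(X,Y)$ depends on the chosen BAP sequences, so the identity choice must be fixed, is exactly what the paper's ``taking $\mathcal E^X=I$, $\mathcal D^Y=I$ all spaces trivially collapse'' does implicitly.
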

\begin{proof}
(i) Using Notation \ref{not:marginals} (in particular, $d\hat\mu^{0:i}/d\mu^{0:i}(x,h^1,\dots,h^i)=\prod_{r=1}^i\left|h^r\right|_X^p$), we have 
    \begin{align*}
   &\int_{X \times X^i} |D^iF(x)(h^1,...,h^i)|^p_Yd\mu^{0:i}(x,h^1,\ldots,h^i)
  \leq \int_{X \times X^i}  \|D^iF(x)\|_{\mathcal L^i(X,Y)}^p\prod_{r=1}^i |h^r|_X^pd\mu^{0:i}(x,h^1,\ldots,h^i)\\
  & = \int_{X \times X^i}  \|D^iF(x)\|_{\mathcal L^i(X,Y)}^pd\hat \mu^{0:i}(x,h^1,\ldots,h^i)=\int_{X }  \|D^iF(x)\|_{\mathcal L^i(X,Y)}^pd\bar \mu^{0:i}(x)\\
  &=\int_X  \|D^iF(x)\|_{\mathcal L^i(X,Y)}^p\frac{d \bar \mu^{0:i}}{d \mu^{0}}(x)d\mu^0(x)\leq  C\int_X  \|D^iF(x)\|_{\mathcal L^i(X,Y)}^pd\mu^0(x),
    \end{align*}  
    where we have used  Assumption  \ref{ass:Radon-nik}, to find  $C>0$ such that, for all $1\leq i\leq k$ the above holds.
Summing over $i$, the claim follows.

 (ii) Since $X,Y$ are finite dimensional,  $\|\cdot\|_{L^p(X^i,Y,\eta^{1:i})}$ is equivalent to the operator norm $\|\cdot\|$ on $\mathcal L^i(X;Y)$, for all $i\leq k$. Therefore, there exists $C_{N,N'}>0$, such that (recall also \eqref{eq:sobolev_norm_mu0_Lp})
 \begin{align*}
 \|F\|_{\mathcal W^{k,p}_{\mu^0}(X,Y)}^p&=\sum_{i=0}^k\left[\int_{X } \|D^iF(x)\|_{\mathcal L^i(X,Y)}^pd\mu^0 (x)\right]\\& \leq C_{N,N'} \sum_{i=0}^k\left[\int_{X } \|D^iF(x)\|_{L^p(X^i,Y,\eta^{1:i})}^pd\mu^0(x)\right]=C_{N,N'} \|F\|_{\mathcal W^{k,p}_{B,\mu}(X,Y)}^p.
\end{align*}
Finally we prove \eqref{eq:CkBmuc(X,Y)=Ckw-eq}. By Remark  \ref{rem:Bastiani} we have $C^k_B(X; Y)=C^k(X; Y)$. Moreover, taking ${\mathcal E}^X=I$ (on $X$), ${\mathcal D}^Y=I$ (on $Y$) all spaces trivially collapse in the finite-dimensional case, proving \eqref{eq:CkBmuc(X,Y)=Ckw-eq}. Then the result follows from \eqref{eq:bound_revers_Wkmu}.
\end{proof}

Next we give  criteria ensuring membership in the weighted Bastiani--Sobolev space. To prove one of them (and Corollary \ref{th:UAT_Sobolev_K_Banach_bump}), we will  use the following assumption. The validity of this assumption will be discussed in Subsection \ref{subsubsec:ass_density_CB}.
\begin{assumption}\label{ass:density_CB}
Let $\mu$ be a finite measure on $X\times X^k$ such that $\|\mu\|_{k,0,p}<\infty$.   Assume that $C^{k;1,0}_{B}(X,Y)$ is dense in $  \widetilde C^{k;p}_{B,\mu}(X,Y)$ (notice also that $C^{k;1,0}_{B}(X,Y)\equiv C^{k;p,0}_{B}(X,Y)$).
\end{assumption}
\begin{remark}
 Assumption \ref{ass:density_CB} is well-posed by \eqref{eq:CBkw0_in_CBkw} below.
\end{remark}
\begin{theorem}[Criteria for membership in the weighted Bastiani--Sobolev space]\label{th:relations_sobolve_norms}Let $\mu$ be a finite   measure on $X\times X^k$. 

(i) We have the following inclusion, possibly, strict:
\begin{align}\label{eq:A_in_ambient}
& C^{k;p}_{B,\mu,A}(X,Y)\subset C^{k;p}_{B,\mu}(X,Y).
\end{align}

(ii) Let Assumption \ref{ass:Radon-nik} hold. Then, we have the following inclusion, possibly, strict:
\begin{align}\label{eq:cyl_in_A}
&C^{k;p}_{\mu^0,Cyl}(X,Y)\subset C^{k;p}_{B,\mu,A}(X,Y).
\end{align}

(iii) It holds
\begin{align}
& C^{k;p}_{B,\mu,CC}(X,Y)\subset C^{k;p}_{B,\mu,A}(X,Y).
\end{align}

(iv)  Assume that   $\|\mu\|_{k,0,p}<\infty$. Then 
  \begin{align}\label{eq:CBkw0_in_CBkw}
   C^{k;p,0}_{B}(X,Y)\subset   \widetilde C^{k;p}_{B,\mu}(X,Y).
   \end{align}

For (v), (vi) below, assume that  $(X,|\cdot|_X),(Y,|\cdot|_Y)$ are separable Banach spaces  with the BAP, i.e. we are given $\left\{{\mathcal S}^X_d\right\}_{d\in \mathbb N }\subset \mathcal L(X)$, $\left\{{\mathcal S}^Y_m\right\}_{m \in \mathbb N}\subset \mathcal L(Y)$ satisfying Assumption \ref{ass:bounded_approx_pro}.\\
 (v) If, in addition,  $\|\mu\|_{k,q,p}  < \infty$, for $q\geq 0$, we have   
   \begin{align}
      & C^{k;p,q}_{B}(X,Y)\subset C^{k;p}_{B,\mu,CC}(X,Y)\left(\subset C^{k;p}_{B,\mu,A}(X,Y)\right).\label{eq:Ckwq_subset_Ckwc}
   \end{align}
   (vi) If, in addition,  Assumption \ref{ass:density_CB} holds, we have
   \begin{equation}\label{eq:tildeC_subset_C}
       \widetilde C^{k;p}_{B,\mu}(X,Y)\subset C^{k;p}_{B,\mu,A}(X,Y).
   \end{equation}
   \end{theorem}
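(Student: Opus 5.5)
Most items follow from the definitions together with \eqref{eq:sobolev-norm-ineq-cyl} and Proposition \ref{prop:relations_sobolev_norms_1}; only (v) needs a real limiting argument, and (vi) is a density/closure argument built on (v).

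\textbf{Items (i)--(iv).}
\begin{itemize}
\item For (i), the inclusion holds by definition: $C^{k;p}_{B,\mu,A}(X,Y)$ is a closure taken inside $C^{k;p}_{B,\mu}(X,Y)$.
\item For (ii), take $F^{{\mathcal E}^X,{\mathcal D}^Y}\in C^{k;p}_{\mu^0,Cyl}(X,Y)$. By \eqref{eq:sobolev-norm-ineq-cyl}, $\|F^{{\mathcal E}^X,{\mathcal D}^Y}\|_{\mathcal W^{k,p}_{\mu^0}(X,Y)}\le C\|f^{N,N'}\|_{\mathcal W^{k,p}_{\mu^{0,{\mathcal E}^X}}}<\infty$. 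Then \eqref{eq:bound_Wmu_barmu} gives $\|F^{{\mathcal E}^X,{\mathcal D}^Y}\|_{\mathcal W^{k,p}_{B,\mu}(X,Y)}<\infty$. The constant sequence $F_n:=F^{{\mathcal E}^X,{\mathcal D}^Y}$ then witnesses membership in the closure.
\item For (iii), let $F\in C^{k;p}_{B,\mu,CC}(X,Y)$. Each cylindrical approximation $F_{d,m}={\mathcal D}^Y_m f^{N_d,N_m}({\mathcal E}^X_d\,\cdot)$ has $f^{N_d,N_m}\in C^k$ by Remark \ref{rem:Bastiani}(iii), and its finite-dimensional Sobolev norm is finite by the definition of the CC space. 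Hence $F_{d,m}\in C^{k;p}_{\mu^0,Cyl}(X,Y)$, and $\|F-F_{d,m}\|_{\mathcal W^{k,p}_{B,\mu}}\to0$ along any sequence $d,m\to\infty$ gives $F\in C^{k;p}_{B,\mu,A}(X,Y)$.
\item For (iv), let $F\in C^{k;p,0}_B(X,Y)$, so that $\|D^jF(x)\|_{\mathcal L^j(X,Y)}\le C$ uniformly in $x$ and $j\le k$. Each integrand in \eqref{eq:sobolev_seminorm_2_O} is then bounded by $C\prod_{t=1}^i|h^t|_X^p$, whose integral is controlled by $\|\mu\|_{k,0,p}<\infty$. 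Boundedness of $D^iF$ on bounded sets is immediate.
\end{itemize}

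\textbf{Item (v), the main step.} Fix $F\in C^{k;p,q}_B(X,Y)$.
\begin{itemize}
\item \emph{Finiteness of the finite-dimensional norms.} By \eqref{eq:derivativeDf_Banach} and \eqref{eq:SdX_bdd_lambda},
\[
\|D^if^{N_d,N_m}({\mathcal E}^X_dx)\|\le C_{d,m}\|D^iF({\mathcal S}^X_dx)\|\le C_{d,m}\bigl(1+\lambda^q|x|_X^q\bigr)^{1/p}.
\]
Changing variables through the pushforward $\mu^{0,{\mathcal E}^X_d}$, finiteness follows from $\|\mu^0\|_{0,q,p}\le\|\mu\|_{k,q,p}<\infty$. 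The same growth bound gives $\|F\|_{\mathcal W^{k,p}_{B,\mu}}<\infty$.
\item \emph{Convergence $\|F-F_{d,m}\|_{\mathcal W^{k,p}_{B,\mu}}\to0$ by dominated convergence.} By \eqref{eq:derivativeDrv_dh_dY_Banach},
\[
D^iF_{d,m}(x)(h^1,\ldots,h^i)={\mathcal S}^Y_m\,D^iF({\mathcal S}^X_dx)({\mathcal S}^X_dh^1,\ldots,{\mathcal S}^X_dh^i).
\]
\item \emph{Pointwise limit.} ${\mathcal S}^X_d\to I$ strongly, so $D^iF({\mathcal S}^X_dx)({\mathcal S}^X_dh)\to D^iF(x)(h)$ by the joint continuity that defines $C^k_B$. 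Writing $y_d:=D^iF({\mathcal S}^X_dx)({\mathcal S}^X_dh)\to y:=D^iF(x)(h)$, we have
\[
|{\mathcal S}^Y_my_d-y|_Y\le\lambda|y_d-y|_Y+|{\mathcal S}^Y_my-y|_Y\to0 .
\]
\item \emph{Domination.} The integrand is dominated by $C(1+\lambda^q|x|_X^q)\prod_{r=1}^i\lambda^p|h^r|_X^p$, which is $\mu^{0:i}$-integrable because $\|\mu\|_{k,q,p}<\infty$.
\end{itemize}
This gives $F\in C^{k;p}_{B,\mu,CC}(X,Y)$. The step to check carefully is the simultaneous limit in $(d,m)$: the two-term estimate above is uniform in $m$, so the double limit is legitimate.

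\textbf{Item (vi).} Let $F\in\widetilde C^{k;p}_{B,\mu}(X,Y)$ and $\epsilon>0$.
\begin{itemize}
\item By Assumption \ref{ass:density_CB}, there is $G\in C^{k;1,0}_B(X,Y)=C^{k;p,0}_B(X,Y)$ with $\|F-G\|_{\mathcal W^{k,p}_{B,\mu}}<\epsilon/2$.
\item Since the assumption supplies $\|\mu\|_{k,0,p}<\infty$, item (v) with $q=0$ gives $G\in C^{k;p}_{B,\mu,A}(X,Y)$.
\item Pick a cylindrical $G'\in C^{k;p}_{\mu^0,Cyl}(X,Y)$ with $\|G-G'\|_{\mathcal W^{k,p}_{B,\mu}}<\epsilon/2$.
\item Then $\|F-G'\|_{\mathcal W^{k,p}_{B,\mu}}<\epsilon$. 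Since $F\in C^{k;p}_{B,\mu}(X,Y)$ by \eqref{eq:sobolev_seminorm_2_O_geq}, $F$ lies in the closure, i.e.\ in $C^{k;p}_{B,\mu,A}(X,Y)$.
\end{itemize}
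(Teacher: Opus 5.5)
Your proposal is correct and follows the paper's proof item by item. Items (i)--(iv) come directly from the definitions, \eqref{eq:bound_Wmu_barmu} and the moment bound; (v) uses the growth bound, the change of variables through $\mu^{0,{\mathcal E}^X_d}$ and dominated convergence, and your explicit two-term estimate for the joint limit in $(d,m)$ is a useful detail the paper leaves implicit; (vi) uses Assumption \ref{ass:density_CB} together with (v) for $q=0$ and a triangle inequality. The only omission is the ``possibly strict'' part of (i)--(ii), which the paper justifies by pointing to Example \ref{ex:strict_inclusions}.
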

\begin{proof}(i) The inclusion is immediate from the definition.  It can be strict in view of Example \ref{ex:strict_inclusions}.
(ii) The  inclusion follows  from \eqref{eq:bound_Wmu_barmu} and the definitions of the spaces. It can be strict in view of Example \ref{ex:strict_inclusions}.
 (iii)  This follows from the definitions of the spaces once you recall that, by \eqref{eq:cyl_approx_is_cyl}, cylindrical approximations $F_{d,m}=F^{{\mathcal E}^X_d,{\mathcal D}^Y_m}$ are cylindrical maps.

(iv) Let $F\in C^{k;p,0}_{B}(X,Y)$. Then, using  \eqref{eq:mu_marginals_bound}, we have
    {\begin{align*}
    \|F\|^p_{\widetilde{\mathcal W}^{k,p}_{B,\mu}(X,Y)}&\leq C\sum_{i=0}^k \sum_{J\subset \{1,\ldots,i\}}\left[\int_{X \times X^i} \|D^{|J|}F(x)\|_{\mathcal L^{|J|}(X,Y)}^p \prod_{j\in J}|h^j|_X^p\prod_{ t\in J^c}|h^t|_X^pd\mu^{0:i}(x,h^1,\ldots,h^i)\right]\\
    &\leq C\sum_{i=0}^k\left[\int_{X \times X^i}  \prod_{j=1}^i(1+|h^j|_X^p) d\mu^{0:i}(x,h^1,\ldots,h^i)\right]=C\sum_{i\leq k}\left[\|\mu^{0:i}\|_{i,0,p}\right]\leq C\|\mu\|_{k,0,p}<\infty.
\end{align*}}

(v) Let  $F \in C^{k;p,q}_{B}(X,Y)$. Using \eqref{eq:mu_marginals_bound}, we have  
\begin{equation}\label{eq:bound_norm_for_Ckpq}
\begin{aligned}
   & \|F\|_{\mathcal  W^{k,p}_{B,\mu}(X,Y)}^p\leq C\sum_{i=0}^k\left[\int_{X \times X^i} (1+|x|_X^q) \prod_{j=1}^i(1+|h_j|_X^p)d\mu^{0:i}(x,h^1,\ldots,h^i)\right]\\
   &\quad\quad \quad\quad\quad\quad =C\sum_{i=0}^k\|\mu^{0:i}\|_{i,q,p}\leq C \|\mu\|_{k,q,p}<\infty.
\end{aligned}
\end{equation}
Using \eqref{eq:derivativeDf_Banach} and \eqref{eq:push_back_D}, for all $d,m,$ we have for $C>0$ (possibly depending on $d,m$ and changing from line to line)
    \begin{align*}
        \| f^{N_d,N_m}\|_{\mathcal W^{k,p}_{\mu^{0,{\mathcal E}^X_d}}(\mathbb R^{N_d},\mathbb R^{N_m})}^p
       &= \sum_{i=0}^k\left[\int_{\mathbb R^{N_d} } \left\|D^i  f^{N_d,N_m}\left(y\right)\right\|_{\mathcal L^i(\mathbb R^{N_d},\mathbb R^{N_m})}^pd\mu^{0,{\mathcal E}^X_d} (y)\right]\\
       &= \sum_{i=0}^k\left[\int_{\mathbb R^{N_d} } \left\|{\mathcal E}^Y _{m}\left(D^i  F\left({\mathcal D}^X _{d} y\right)\left[{\mathcal D}^X _{d}(\cdot), \ldots, {\mathcal D}^X _{d} (\cdot)\right]\right)\right\|_{\mathcal L^i(\mathbb R^{N_d},\mathbb R^{N_m})}^pd\mu^{0,{\mathcal E}^X_d} (y)\right]\\
       &\leq C\sum_{i=0}^k\left[\int_{\mathbb R^{N_d} } \left\|D^i  F\left({\mathcal D}^X _{d} y\right)\right\|_{\mathcal L^i(X,Y)}^pd\mu^{0,{\mathcal E}^X_d} (y)\right] \\
       &=C\sum_{i=0}^k\left[\int_{X } \left\|D^i  F\left(x\right)\right\|_{\mathcal L^i(X,Y)}^pd\left[\left({\mathcal D}_d^X\right)_{\#} \mu^{0,{\mathcal E}^X_d}\right](x)\right]\\
       &= C\sum_{i=0}^k\left[\int_{X} \left\|D^i  F\left(x\right)\right\|_{\mathcal L^i(X,Y)}^pd\left[({\mathcal S}_d^X)_{\#} \mu^{0} \right](x)\right] \\&=C\sum_{i=0}^k\left[\int_{X } \left\|D^i  F\left({\mathcal S}_d^Xx\right)\right\|_{\mathcal L^i(X,Y)}^pd \mu^{0} (x)\right]\leq C\sum_{i=0}^k\left[\int_{X } (1+|{\mathcal S}_d^X x|_X^q)d\mu^{0} (x) \right]\\&\leq C\sum_{i=0}^k\left[\int_{X } (1+| x|_X^q)d\mu^{0} (x) \right]\leq C \|\mu\|_{k,q,p}<\infty.
    \end{align*}
 By \eqref{eq:derivativeDrv_dh_dY_Banach} and exploiting  $C^k_B$ and BAP of $X,Y$, for all $0\leq i\leq k$, for all $x,h^1,...,h^i\in X,$ we have (recall also Notation \ref{not:notation_D0f})
 \begin{small}
 \begin{equation}\label{eq:conv_DiDdm}
 \begin{aligned}
    & |D^iF_{d,m}(x)(h^1,...,h^i)|_Y^p\leq C(1+|S_{d}^Xx|_X^q)\prod_{j=1}^i|S_{d}^Xh_j|_X^p\leq C(1+|x|_X^q)\prod_{j=1}^i|h_j|_X^p,\quad (\textit{and similarly for the term }D^iF)\\
    &|[D^iF-D^iF_{d,m}](x)(h^1,...,h^i)|_Y^p=|D^iF(x)(h^1,...,h^i)-{\mathcal S}^Y_{m}\left(D^i F\left({\mathcal S}^X_{d} x\right)\left[{\mathcal S}^X_{d} h^1, \ldots, {\mathcal S}^X_{d} h^i\right]\right)|_Y^p\xrightarrow{d,m \to \infty}0.
 \end{aligned}
  \end{equation}
   \end{small}
Then, since $\|\mu\|_{k,q,p}  < \infty$, we can apply the dominated convergence theorem, to have $\|F-F_{d,m}\|_{\mathcal W^{k,p}_{B,\mu}(X,Y)}\xrightarrow{d,m \to \infty}0$. Then $F\in C^{k;p}_{B,\mu,CC}(X,Y)$.  The claim  follows. 

(vi) Let $F\in \widetilde C^{k;p}_{B,\mu}(X,Y)$. By \eqref{eq:sobolev_seminorm_2_O_geq}, we have $\|F\|_{\mathcal W^{k,p}_{B,\mu}(X,Y)}<\infty$. Let $\epsilon>0$. By  Assumption \ref{ass:density_CB}, there exists $\hat F \in C^{k,1,0}_{B}(X,Y)\equiv C^{k;p,0}_{B}(X,Y)\subset C^{k;p}_{B,\mu,CC}(X,Y)$ (recall also \eqref{eq:Ckwq_subset_Ckwc} with $q=0$)  such that 
    $$ \|F- \hat F\|_{\mathcal W^{k,p}_{B,\mu}(X,Y)}<\epsilon/2.$$ Then, by definition of the space $C^{k;p}_{B,\mu,CC}(X,Y)$, there exists $d,m$ such that $$\|\hat F-\hat F^{{\mathcal E}_d^X,{\mathcal D}^Y_m}\|_{\mathcal W^{k,p}_{B,\mu}(X,Y)}= \|\hat F-\hat F_{d,m}\|_{\mathcal W^{k,p}_{B,\mu}(X,Y)}<\epsilon/2.$$Therefore, we can construct a sequence $ \left\{ F_n \right\}_{n\in \mathbb N}\subset  C^{k;p}_{\mu^0,Cyl}(X,Y)$ such that $\|F-F_n\|_{\mathcal W^{k,p}_{B,\mu}(X,Y)}\xrightarrow{n\to \infty}0$.
\end{proof}
   \begin{remark}\label{rem:Sobole_sp_norm_modified}
Let $r\geq 1,$ $\nu$ be a finite measure on $X$ and $\eta$ be a probability measure on $X^k$ with $\|\eta\|_{k,r}<\infty$.  Then the corresponding statements to the ones in Propositions \ref{prop:relations_sobolev_norms_1},  \ref{pro:ass_density_holds} and Theorem \ref{th:relations_sobolve_norms} hold for  $C^{k;p,r}_{B,\nu,\eta,A}(X,Y)$, $C^{k;p,r}_{B,\nu,\eta,CC}(X,Y)$, $C^{k;p,q}_{B}(X,Y)$, $ C^{k;p,r}_{B,\nu,\eta}(X,Y),$ $\widetilde C^{k;p,r}_{B,\nu,\eta}(X,Y)$. The proofs follow the same steps. For instance, to prove \eqref{eq:bound_Wmu_barmu}, we adapt the calculation in the proof there as follows:
        \begin{align*}
  & \int_{X }\left(\int_{ X^i}  |D^iF(x)(h^1,...,h^i)|^r_Yd\eta^{1:i}(h^1,\ldots,h^i)\right)^{p/r}d\nu(x)\\
  \leq & \int_{X }  \|D^iF(x)\|_{\mathcal L^i(X,Y)}^p\left(\int_{ X^i} \prod_{t=1}^i |h^t|_X^rd\eta^{1:i}(h^1,\ldots,h^i)\right)^{p/r}d\nu(x)
   \leq  \|\eta\|_{k,r}^{p/r}\int_{X }  \|D^iF(x)\|_{\mathcal L^i(X,Y)}^pd \nu(x).
    \end{align*}  
   To prove $\|F-F_{d,m}\|_{\mathcal W^{k,p,r}_{B,\nu,\eta}(X,Y)}\xrightarrow{d,m \to \infty}0$ in the corresponding part of the proof of Theorem \ref{th:relations_sobolve_norms} (v), we proceed as follows: by \eqref{eq:conv_DiDdm} with $p$ replaced by $r$, we have
    $\int_{X^i}|[D^iF-D^iF_{d,m}](x)(h^1,...,h^i)|_Y^r d\eta^{1:i}(h^1,\ldots,h^i) \xrightarrow{d,m \to \infty}0,$ for all $x \in X$. Then, again by dominated convergence, $\|F-F_{d,m}\|_{\mathcal W^{k,p,r}_{B,\nu,\eta}(X,Y)}\xrightarrow{d,m \to \infty}0$.
   \end{remark}
   \subsection{Consistency with Gaussian Sobolev spaces}\label{subsec:gaussian_recover}
   {When the underlying measures are regular enough for closing derivative operators, we can complete the weighted Bastiani--Sobolev space. As an example, we show consistency with the classical Gaussian Sobolev spaces \cite{bogachev1998gaussian}.}
\begin{theorem}[Consistency with  Gaussian Sobolev spaces]\label{th:characterization_bogachev}
Let $\gamma$ be a centered Gaussian measure on a separable Banach space $X$,  $\mu:=\otimes_{i=0}^k\gamma$, $\eta:=\otimes_{i=1}^k\gamma$, $r=2$, and $p\geq 1.$ Let $Y$ be a separable Hilbert space. Then:
    \begin{align}
  &W_{\gamma}^{k,p}(X,Y) =\widehat{C_{b,Cyl}^\infty(X,Y)} =\widehat{C^{k;p}_{\gamma,\mathrm{Cyl}}(X,Y) }=\widehat{C^{k;p,2}_{B,\gamma,\eta,A}(X,Y) }=\widehat{C^{k;p,2}_{B,\gamma,\eta}(X,Y) }=D^{k,p}_{\gamma}(X,Y)\label{eq:gaussian_sobolev_characterization}\\
  & W_{\gamma}^{k,2}(X,Y) =\widehat{C_{b,Cyl}^\infty(X,Y) }=\widehat{C^{k;2}_{\gamma,\mathrm{Cyl}}(X,Y) }=\widehat{C^{k;2}_{B,\mu,A}(X,Y)}=\widehat{C^{k;2}_{B,\mu}(X,Y)}=D^{k,2}_{\gamma}(X,Y),\label{eq:gaussian_sobolev_characterization_p=2}
\end{align}
where the completions in \eqref{eq:gaussian_sobolev_characterization}, \eqref{eq:gaussian_sobolev_characterization_p=2} are taken, respectively, under $\|\cdot\|_{\mathcal W^{k,p,2}_{B,\gamma,\eta}(X,Y)}$,  $\|\cdot\|_{\mathcal W^{k,2}_{B,\mu}(X,Y)}$. In particular:
    \begin{align}\label{eq:A=ambient_gauss}
C^{k;p,2}_{B,\gamma,\eta,A}(X,Y) =C^{k;p,2}_{B,\gamma,\eta}(X,Y),\quad   C^{k;2}_{B,\mu,A}(X,Y)=C^{k;2}_{B,\mu}(X,Y)
\end{align}
\end{theorem}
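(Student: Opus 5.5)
The plan is a sandwich argument. By \eqref{eq:sobolev_norm_mu0_Lp_bogachev_HS_p=2}, for $\nu=\gamma$ and $\eta=\otimes_{i=1}^k\gamma$ the norm $\|\cdot\|_{\mathcal W^{k,p,2}_{B,\gamma,\eta}(X,Y)}$ coincides with Bogachev's norm $\|\cdot\|_{\mathcal W^{k,p}_{\gamma,\mathcal L_2(X_\gamma,Y)}(X,Y)}$ on every space involved. For $p=2$ it also equals $\|\cdot\|_{\mathcal W^{k,2}_{B,\mu}(X,Y)}$. All completions in \eqref{eq:gaussian_sobolev_characterization}--\eqref{eq:gaussian_sobolev_characterization_p=2} are therefore taken in one norm, and \eqref{eq:gaussian_sobolev_characterization_p=2} is the special case $p=2$ of \eqref{eq:gaussian_sobolev_characterization}. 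I will use the chain
\[
C_{b,Cyl}^\infty(X,Y)\subset C^{k;p}_{\gamma,Cyl}(X,Y)\subset C^{k;p,2}_{B,\gamma,\eta,A}(X,Y)\subset C^{k;p,2}_{B,\gamma,\eta}(X,Y).
\]
The first inclusion holds because bounded derivatives and finiteness of $\gamma$ give finite $\mathcal W^{k,p}_{\gamma^{\mathcal E^X}}$ norms. The second is Theorem \ref{th:relations_sobolve_norms}(ii), in the variant of Remark \ref{rem:Sobole_sp_norm_modified}; Assumption \ref{ass:Radon-nik} holds by Example \ref{ex:ass_radon-nik}(i), since Gaussian measures have all moments. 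The third is Theorem \ref{th:relations_sobolve_norms}(i). Taking completions gives $W^{k,p}_\gamma=\widehat{C_{b,Cyl}^\infty}\subset\cdots\subset\widehat{C^{k;p,2}_{B,\gamma,\eta}}$. To identify the completions with subsets of a single function space, I will realize them inside $D^{k,p}_\gamma(X,Y)$ through the closability of Gaussian Sobolev derivatives, which is exactly where the Gaussian structure enters.

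The key step is the reverse inclusion $C^{k;p,2}_{B,\gamma,\eta}(X,Y)\subset D^{k,p}_\gamma(X,Y)$, with the two norms equal. Take $F\in C^k_B(X,Y)$ with finite Bastiani--Sobolev norm. Continuous Gâteaux differentiability along each $h\in X_\gamma$ gives ray absolute continuity, since $t\mapsto F(x+th)$ is $C^1$. The stochastic Gâteaux derivative then coincides with the classical $D^iF(x)\circ(\iota_\gamma,\dots,\iota_\gamma)$; this is the content of Bogachev's Definitions 5.2.3--5.2.4, with difference quotients converging pointwise and hence in measure. Measurability of $x\mapsto D^i_\gamma F(x)\in\mathcal L^i_2(X_\gamma,Y)$ follows from continuity of $x\mapsto D^iF(x)(h^1,\ldots,h^i)$ and separability of $X_\gamma$. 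The required integrability is exactly finiteness of the norm, by \eqref{eq:norm2=HS}. Iterating this in $i$ gives $F\in D^{k,p}_\gamma$. By \eqref{eq:Wp=Dp}, $D^{k,p}_\gamma=W^{k,p}_\gamma=\widehat{C^\infty_{b,Cyl}}$, so every $F\in C^{k;p,2}_{B,\gamma,\eta}$ is a norm-limit of smooth bounded cylindrical maps.

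The last observation yields \eqref{eq:A=ambient_gauss} directly: every $F$ in the ambient space is approximable by elements of $C^\infty_{b,Cyl}\subset C^{k;p}_{\gamma,Cyl}$, so $F\in C^{k;p,2}_{B,\gamma,\eta,A}$. The chain of completions then collapses, because $C_{b,Cyl}^\infty$ is dense in every space of the chain. All completions therefore equal $\widehat{C_{b,Cyl}^\infty}=W^{k,p}_\gamma=D^{k,p}_\gamma$. The main obstacle is the rigorous matching of $C^k_B$ derivatives with Bogachev's stochastic derivatives for $i\ge2$. The iterative definition of $D^{k,p}_\gamma$ requires $D_\gamma F$ to lie in $D^{k-1,p}_\gamma(X,\mathcal L_2(X_\gamma,Y))$, which means differentiating a Hilbert--Schmidt-valued map. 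I would handle this by checking Gâteaux differentiability of $x\mapsto D_\gamma F(x)$ in $\mathcal L_2(X_\gamma,Y)$ along $X_\gamma$ via dominated convergence on orthonormal-basis expansions. If that fails, I would fall back on the alternative route: approximate $F$ through finite-dimensional conditional expectations (Bogachev's $\mathbb E_n$ smoothing), which commute with $D_\gamma$.
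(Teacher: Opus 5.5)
Your proposal is correct and is essentially the paper's proof. It uses the same inclusion chain, ending in $C^{k;p,2}_{B,\gamma,\eta}(X,Y)\subset D^{k,p}_\gamma(X,Y)=W^{k,p}_\gamma(X,Y)$ under the norm identification \eqref{eq:sobolev_norm_mu0_Lp_bogachev_HS_p=2}; the same checks of ray absolute continuity and measurability for that last inclusion; and the same appeal to \eqref{eq:Wp=Dp} to collapse the completions and obtain \eqref{eq:A=ambient_gauss}.

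For the higher-order iteration you flag, which the paper only asserts, avoid dominating termwise on an orthonormal-basis expansion, since that need not work when $X$ is not a Hilbert space. Instead run dominated convergence in $\gamma(dz)$ through $\|L\iota_\gamma\|_{\mathcal L_2(X_\gamma,Y)}^2=\int_X|Lz|_Y^2\,d\gamma(z)$ from \eqref{eq:norm2=HS}, with the domination $C|z|_X^2$ supplied by Banach--Steinhaus on compact segments and by Fernique's theorem.
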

\begin{remark}
{We refer to \cite[Proposition 5.4.5, 5.4.6]{bogachev1998gaussian} for methods based on conditional expectations with respect to finite-dimensional sigma-fields generated by the first
Cameron--Martin coordinates providing cylindrical approximations 
for maps in $D^{k,p}_{\gamma}(X,Y)(=W_{\gamma}^{k,p}(X,Y))$  converging in Gaussian Sobolev norms.}
\end{remark}
\begin{remark}
   {In view of Example \ref{ex:gaussian_measures_radonifying}, when $Y$ is also a Banach space (possibly UMD), one can try to extend Theorem \ref{th:characterization_bogachev}, by showing consistency with the Gaussian Sobolev space defined in \cite{pronk-veraar}.}
\end{remark}
\begin{proof}{Recall first the norm identification \eqref{eq:sobolev_norm_mu0_Lp_bogachev_HS_p=2}, as well as \eqref{eq:cyl_in_A} (notice that Assumption \ref{ass:Radon-nik} holds by Example \ref{ex:ass_radon-nik}) and  \eqref{eq:Wp=Dp}, and consider the inclusions 
\begin{equation}\label{eq:inclusions_gaussian_proof}
C_{b,\mathrm{Cyl}}^\infty(X,Y) \subset C^{k;p}_{\gamma,\mathrm{Cyl}}(X,Y)\subset   C^{k;p,2}_{B,\gamma,\eta,A}(X,Y) \subset C^{k;p,2}_{B,\gamma,\eta}(X,Y) \subset D^{k,p}_{\gamma}(X,Y)=W^{k,p}_{\gamma}(X,Y).
\end{equation}
Taking the completion and using \eqref{eq:gaussian_sobolev}, we have  \eqref{eq:gaussian_sobolev_characterization}. Notice that, to get inclusion $C^{k;p,2}_{B,\gamma,\eta}(X,Y) \subset D_{\gamma}^{k,p}(X,Y)$, we proceed as follows: for $k=1$,  Bochner measurability\footnote{where as usual we denote by $D_{\gamma}F(x)$ the restriction of $DF(x)\in \mathcal L(X,Y)$ to $\mathcal L_2(X_{\gamma},Y)$} of $D_{\gamma}F:X\to  \mathcal L_2(X_{\gamma},Y)$  is obtained by considering an orthonormal basis $\{e^{X_{\gamma}}_i\}$ of $X_{\gamma}$ and by writing (note that for $A\in \mathcal L_2(X_{\gamma},Y)$, $Ah=A\sum_{\mathbb N} \langle  h,e^{X_{\gamma}}_i \rangle_{X_{\gamma}}e^{X_{\gamma}}_i=\sum_{\mathbb N} \langle  h,e^{X_{\gamma}}_i \rangle_{X_{\gamma}}Ae^{X_{\gamma}}_i$),  for all $x\in X$,
$$D_{\gamma}F(x)=\lim_NG^N(x)\quad \textit{in }\mathcal L_2(X_{\gamma},Y),\quad \textit{where } G^N(x)h:=\sum_{i=1}^N\langle h,e^{X_{\gamma}}_i\rangle_{X_{\gamma}}D_{\gamma}F(x)e^{X_{\gamma}}_i.$$ Since $X\ni x\mapsto  DF(x)e^{X_{\gamma}}_i\in Y$ is continuous for all $i$, we have that $G^N$ is continuous from $X$ to $\mathcal L_2(X_{\gamma},Y)$ (hence, Borel measurable) for each $N$.
This implies Borel measurability of $D_{\gamma}F$. Since $\mathcal L_2(X_{\gamma},Y)$ is separable, we have Bochner measurability. Moreover, using the fundamental theorem of calculus, if $F\in C_B^1(X,Y)$ then we have that it is ray absolutely continuous.  Hence, the case $k=1$ follows. 
According to definitions of spaces $D_{\gamma}^{k,p}(X,Y)$ in \cite{bogachev1998gaussian}, we iterate the argument to have the claim for arbitrary $k$. Then, \eqref{eq:gaussian_sobolev_characterization} follows.}

Equation \eqref{eq:gaussian_sobolev_characterization_p=2} is a particular case of  \eqref{eq:gaussian_sobolev_characterization} by \eqref{eq:sobolev_norm_mu0_Lp_bogachev_HS_p=2} and the equality $C^{k;2}_{B,\mu,A}(X,Y)\equiv C^{k;2,2}_{B,\gamma,\eta,A}(X,Y)$. Finally \eqref{eq:A=ambient_gauss} follows from the inclusion $C^{k;p,2}_{B,\gamma,\eta}(X,Y)\subset  W^{k,p}_{\gamma}(X,Y)$  (recall \eqref{eq:inclusions_gaussian_proof}) and the definition of $W^{k,p}_{\gamma}(X,Y)$.
\end{proof}
\subsection{Validity of Assumption \ref{ass:density_CB}}\label{subsubsec:ass_density_CB}
Assumption \ref{ass:density_CB} is a suitable generalization to our Sobolev spaces in infinite dimensions of the following condition used in \cite[Proof of Theorem 4]{hornik1991approximation} when $X=\mathbb R^N$: \textit{the space of functions  
 $f \in C^k(\mathbb R^N)$, which, together with all their derivatives up to order $k$,  are bounded, is dense in $C^{k;p}_\nu(\mathbb R^{N},\mathbb R)$.  } Here $\nu$ is a finite measure on $\mathbb R^N.$ This condition holds true in the finite dimensional case and similar conditions are typically used  in the approximation theory of standard Sobolev spaces in $\mathbb R^N$ \cite[Section 3.19]{adams2003sobolev}. We notice that typical  proofs of these conditions make use of smooth bump functions (recall Appendix \ref{sec:bump}), which  are  easily  available for $X=\mathbb R^N$.

Unfortunately, when $X,Y$ are infinite-dimensional and with our new Sobolev spaces (and norms), this is no longer automatically guaranteed. However, under the assumption that the Banach space $X$ has suitable $k$-times differentiable bump functions (recall Appendix \ref{sec:bump}), we can  construct relevant cases where Assumption \ref{ass:density_CB} holds.
\begin{assumption}\label{ass:bump}
    Assume that there exists a bump function $b \in C^k_B(X)$  such that, denoting the rescaled bump function $b_{\eta}(x):=b(\eta x)$, $\eta>0,$ we have
    $b_{\eta}(x)\equiv 1$ if $|x|_X \leq 1 / \eta$ and there exists $C>0$ such that 
    \begin{equation}\label{eq:Db_eta_bdd}
        \left\|D^i b_{\eta}(x)\right\|_{\mathcal L^i(X,\mathbb R)} \leq C,\quad \forall x\in X, \forall \eta \leq 1,i\leq k.
    \end{equation}
\end{assumption}
\begin{remark}\label{rem:prop_rescaledd_bump}
    Recall Proposition \ref{prop:bump_rescaled} and Example \ref{ex:norm_Fréchet} for relevant cases under which Assumption \ref{ass:bump} is satisfied.
\end{remark}
\begin{proposition}\label{pro:ass_density_holds}
    Let Assumption \ref{ass:bump} hold. Let $\mu$  be a finite measure on $X\times X^k$ and assume that $\|\mu\|_{k,0,p}<\infty$. Then Assumption \ref{ass:density_CB} holds.
\end{proposition}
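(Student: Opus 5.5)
Given $F\in\widetilde C^{k;p}_{B,\mu}(X,Y)$, I will use the truncations $F_\eta:=b_\eta F$, $\eta\in(0,1]$, with $b_\eta$ the rescaled bump of Assumption \ref{ass:bump}, and show that $F_\eta\in C^{k;1,0}_B(X,Y)$ and $\|F-F_\eta\|_{\mathcal W^{k,p}_{B,\mu}(X,Y)}\to0$ as $\eta\to0$. This is the same strategy as the finite-dimensional density argument in \cite[Proof of Theorem 4]{hornik1991approximation}. The bump $b_\eta$ equals $1$ on $\{|x|_X\le 1/\eta\}$ and vanishes outside a bounded set, contained in $\{|x|_X\le R/\eta\}$ for some $R$ (as in Proposition \ref{prop:bump_rescaled}).

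\textbf{Step 1: $F_\eta\in C^{k;1,0}_B(X,Y)$.} I apply the Leibniz rule for Bastiani calculus, which gives
\[
D^iF_\eta(x)(h^1,\ldots,h^i)=\sum_{J\subset\{1,\ldots,i\}} D^{|J^c|}b_\eta(x)\bigl((h^t)_{t\in J^c}\bigr)\,D^{|J|}F(x)\bigl((h^j)_{j\in J}\bigr).
\]
Each term is jointly continuous in $(x,h^1,\ldots,h^i)$, so $F_\eta\in C^k_B(X,Y)$. The support of $b_\eta$ is bounded, and $D^jF$ is bounded on bounded sets by the definition of $\widetilde C^{k;p}_{B,\mu}$. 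Together with \eqref{eq:Db_eta_bdd} this yields $\sup_x\|D^iF_\eta(x)\|_{\mathcal L^i(X,Y)}<\infty$ for all $i\le k$. Hence $F_\eta\in C^{k;1,0}_B(X,Y)$.

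\textbf{Step 2: pointwise bound and convergence.} In the Leibniz expansion the term with $J=\{1,\ldots,i\}$ is $b_\eta D^iF$, and every other term contains a derivative of $b_\eta$ of order at least one. For $|x|_X\le1/\eta$ we have $b_\eta\equiv1$ and all its derivatives vanish, since $b_\eta$ is constant on a neighbourhood of such points (boundary points of the ball can be handled by continuity). Thus
\[
[D^iF-D^iF_\eta](x)(h^1,\ldots,h^i)=0\qquad\text{for }|x|_X<1/\eta .
\]
This gives pointwise convergence to $0$ as $\eta\to0$, for every $(x,h)$. For a dominating function I use \eqref{eq:Db_eta_bdd}, $|b_\eta|\le C$, and $|D^{|J^c|}b_\eta(x)((h^t)_{t\in J^c})|\le C\prod_{t\in J^c}|h^t|_X$. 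These give
\[
\bigl|[D^iF-D^iF_\eta](x)(h^1,\ldots,h^i)\bigr|_Y^p\le C\sum_{J\subset\{1,\ldots,i\}}\bigl|D^{|J|}F(x)((h^j)_{j\in J})\bigr|_Y^p\prod_{t\in J^c}|h^t|_X^p ,
\]
uniformly in $\eta\le1$. The right-hand side is integrable with respect to $\mu^{0:i}$ exactly because $\|F\|_{\widetilde{\mathcal W}^{k,p}_{B,\mu}(X,Y)}<\infty$; the norm $\widetilde{\mathcal W}$ was designed for this purpose. Dominated convergence then gives $\|F-F_\eta\|_{\mathcal W^{k,p}_{B,\mu}(X,Y)}\to0$.

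\textbf{Step 3: closing, and the main obstacle.} By \eqref{eq:CBkw0_in_CBkw}, using $\|\mu\|_{k,0,p}<\infty$, the approximants $F_\eta$ lie in the ambient space $\widetilde C^{k;p}_{B,\mu}(X,Y)$. This proves the density asserted in Assumption \ref{ass:density_CB}.

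The main obstacle is justifying the Leibniz formula for a product of a scalar $C^k_B$ map with a $Y$-valued $C^k_B$ map, together with joint continuity of the result, in Bastiani calculus. I would derive it from the chain rule \cite{bastiani1964} applied to $x\mapsto(b_\eta(x),F(x))$ composed with the continuous bilinear map $\mathbb R\times Y\to Y$. The domination also needs care, since it relies on \eqref{eq:Db_eta_bdd} holding uniformly for $\eta\le1$.
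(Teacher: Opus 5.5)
Your proposal is correct and follows the paper's proof essentially step for step. You truncate by $F_\eta=b_\eta F$, use the Bastiani Leibniz rule with \eqref{eq:Db_eta_bdd} and local boundedness of $D^iF$ to get $F_\eta\in C^{k;1,0}_B(X,Y)$, and conclude by dominated convergence with the $\widetilde{\mathcal W}^{k,p}_{B,\mu}$-integrand as dominating function. The only cosmetic difference is that the paper packages the domination as the localized bound $\|F_\eta\|_{\mathcal W^{k,p}_{B,\mu}(A,X,Y)}\le C\|F\|_{\widetilde{\mathcal W}^{k,p}_{B,\mu}(A,X,Y)}$ applied on $A=\{|x|_X>1/\eta\}$, whereas you dominate $D^iF-D^iF_\eta$ pointwise directly.
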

\begin{proof} We adapt the classical finite-dimensional arguments in \cite{adams2003sobolev} to our  Bastiani--Sobolev spaces on Banach spaces.

\emph{Step 1.} Let $F\in  \widetilde C^{k;p}_{B,\mu}(X,Y)$. We define and study suitable approximations 
    $F_{\eta}\in C^{k;1,0}_{B}(X,Y),$ for all $ \eta>0.$

Define $F_{\eta}:= F\cdot b_{\eta} \in C^{k}_{B}(X,Y)$, $\eta>0$.  By \cite[Chapter 3, Theorem 2.2]{bastiani1964} and Leibniz rule, we have
   $$D^i F_{\eta}(x)(h^1,...,h^i)=\sum_{J\subset \{1,\ldots,i\}}D^{|J|} F(x)((h^j)_{j\in J})D^{i-|J|}b_{\eta}(x)((h^t)_{ t\in J^c}), \quad i\leq k, $$
   $\operatorname{supp} F_\eta$  is bounded, and $F=F_\eta$ on $A_{\eta}:=\{x \in X:|x|_X\leq 1 / \eta\}$.
    Since $D^iF$ is  bounded on bounded subsets of $X$, $D^iF$ is bounded on $\operatorname{supp} b_\eta;$ then, for all $\eta >0$ there exists $C_\eta>0$ such that $\|D^i F_{\eta}(x)\|_{\mathcal L^i(X,Y)}\leq C_\eta$, for all $x\in X$, $i\leq k$. Then $F_{\eta}\in C^{k;1,0}_{B}(X,Y),$ for all $  \eta>0$. Moreover,  there exists $C>0$ such that, for all $A\subset X$ Borel, for all $0<\eta \leq 1$, 
    \begin{equation}\label{eq:veta-let-tildev}
        \left\|F_{\eta}\right\|_{\mathcal W^{k,p}_{B,\mu}(A,X,Y)} \leq C\left\|F\right\|_{\widetilde{\mathcal W}^{k,p}_{B,\mu}(A,X,Y)}.
    \end{equation} 
    Indeed, by \eqref{eq:Db_eta_bdd}, for all $0<\eta \leq 1$ and $i \leq k$, we have 
    \begin{equation*}
\begin{aligned}
    \left|D^i F_{\eta}(x)(h^1,...,h^i)\right|_Y&=\left|\sum_{J\subset \{1,\ldots,i\}}D^{|J|} F(x)((h^j)_{j\in J})D^{i-|J|}b_{\eta}(x)((h^t)_{t \in J^c})  \right|_Y\\
    &\leq C \sum_{J\subset \{1,\ldots,i\}}\left|D^{|J|} F(x)((h^j)_{j\in J})\right|_Y\prod_{t \in J^c}|h^t|_X.
\end{aligned}
  \end{equation*} Taking the $p$-th power  and integrating over $A\times X^i$, we have
  \begin{align*}
   &\int_{A \times X^i} \left|D^i F_{\eta}(x)(h^1,...,h^i)\right|_Y^pd\mu^{0:i}(x,h^1,\ldots,h^i)\\
   \leq &C \sum_{J\subset \{1,\ldots,i\}}\int_{A \times X^i} \left|D^{|J|} F(x)((h^j)_{j\in J})\right|_Y^p\prod_{t \in J^c}|h^t|_X^pd\mu^{0:i}(x,h^1,\ldots,h^i)\\
   \leq & C  \sum_{J\subset \{1,\ldots,i\}}  \int_{A \times X^i}\left[\left|D^{|J|} F(x)((h^j)_{j\in J})\right|_Y^p\prod_{t \in J^c}|h^t|_X^p\right]d\mu^{0:i}(x,h^1,\ldots,h^i),
\end{align*}
Summing over $i$, we have \eqref{eq:veta-let-tildev}.

 \emph{Step 2.} We prove the statement, as follows: by  \eqref{eq:sobolev_seminorm_2_O_geq}, \eqref{eq:veta-let-tildev}, we have
$$
\begin{aligned}
\left\|F-F_{\eta}\right\|_{\mathcal W^{k,p}_{B,\mu}(X,Y)}^p & =\left\|F-F_{\eta}\right\|_{\mathcal W^{k,p}_{B,\mu}(A_\eta^c,X,Y)}^p  \leq C (\left\|F\right\|_{\mathcal W^{k,p}_{B,\mu}(A_\eta^c,X,Y)}^p+\left\|F_{\eta}\right\|_{\mathcal W^{k,p}_{B,\mu}(A_\eta^c,X,Y)}^p)\leq C\left\|F\right\|_{\widetilde{\mathcal W}^{k,p}_{B,\mu}(A_\eta^c,X,Y)}^p\\
&=C\sum_{0\leq i \leq k}\sum_{J\subset \{1,\ldots,i\}}\left[\int_{X \times X^i}I_{A_\eta^c}(x)  \left|D^{|J|} F(x)((h^j)_{j\in J})\right|_Y^p\prod_{t \in J^c}|h^t|_X^pd\mu^{0:i}(x,h^1,\ldots,h^i)\right]\xrightarrow{\eta\to 0}0 ,
\end{aligned}
$$
where we have used the dominated convergence theorem. We observe that the dominated convergence theorem can be applied, for all $J\subset \{1,\ldots,i\},0\leq  i \leq k$, since $A_{\eta}^c=\{x \in X:|x|> 1 / \eta\}$, we have
\begin{align*}
&I_{A^c_\eta}(x)  \left|D^{|J|} F(x)((h^j)_{j\in J})\right|_Y^p\prod_{t \in J^c}|h^t|_X^p\xrightarrow{\eta\to 0}0, &\forall x,h^1,...,h^i\in X,\\
   & I_{A^c_\eta}(x)  \left|D^{|J|} F(x)((h^j)_{j\in J})\right|_Y^p\prod_{t \in J^c}|h^t|_X^p\leq \left|D^{|J|} F(x)((h^j)_{j\in J})\right|_Y^p\prod_{t \in J^c}|h^t|_X^p,& \forall x,h^1,...,h^i\in X,
\end{align*}
where the right-hand-side of the second inequality is $\mu^{0:i}$-integrable,  since $F\in  \widetilde C^{k;p}_{B,\mu}(X,Y)$.
\end{proof}
\subsection{Strict inclusions for Bastiani--Sobolev spaces}\label{subsec:ambient-not-approximable}
The following straightforward example shows that we can construct maps that are in the ambient space but not in the Bastiani--Sobolev space, while the Bastiani--Sobolev space does not only contain cylindrical maps.
\begin{example}\label{ex:strict_inclusions}
Let $X$ be a Banach space without the approximation property (Assumption \ref{ass:approx_pro}), and let $k\geq1$, $p\geq1${, and let $\nu\neq0$ be a finite Borel measure on $X$ with $\int_X|x|^p_X\,d\nu(x)<\infty$}. Then there exist a probability measure $\eta$ on $X^k${, with $\|\eta\|_{k,p}<\infty$, such that the finite measure} $\mu=\nu\otimes\eta$ on $X\times X^k$ {satisfies} Assumption \ref{ass:Radon-nik}, such that
setting   $ F\in C^\infty(X,X)$, $F(x):=x$, and $\tilde F(x)=\tilde L x$, for a compact infinite-rank operator $\tilde L\in\mathcal A(X)\cap\mathcal F(X)^c$, we have
\begin{equation}\label{eq:cantapproximate-in-bastiani-sobolev}
 \begin{aligned}
&\| F\|_{\mathcal W^{k,p}_{B,\mu}(X,X)}<\infty, \quad \inf_{G\in C^{k;p}_{\nu,Cyl}(X,X)}
 \|F-G\|_{\mathcal W^{k,p}_{B,\mu}(X,X)}>0,\quad  \\
 & \|\tilde F\|_{\mathcal W^{k,p}_{B,\mu}(X,X)}<\infty,\quad \inf_{G\in C^{k;p}_{\nu,Cyl}(X,X)}
 \|\tilde F-G\|_{\mathcal W^{k,p}_{B,\mu}(X,X)}=0.
 \end{aligned}
\end{equation}
Consequently, $F\in C^{k;p}_{B,\mu}(X,X) \setminus C^{k;p}_{B,\mu,A}(X,X)$, whereas  $ \tilde F\in C^{k;p}_{B,\mu,A}(X,X)\setminus C^{k;p}_{\nu,Cyl}(X,X)$ {(the latter without taking a quotient in Bastiani--Sobolev spaces; otherwise, we ask non-degeneracy of the measures)}, so that the following inclusions are strict:
$$C^{k;p}_{\nu,Cyl}(X,X)\subsetneq C^{k;p}_{B,\mu,A}(X,X)\subsetneq C^{k;p}_{B,\mu}(X,X).$$
The same statements hold for $C^{k;p,r}_{B,\nu,\eta,A}(X,X), C^{k;p,r}_{B,\nu,\eta}(X,X)$.

We first show \eqref{eq:cantapproximate-in-bastiani-sobolev}.
By the failure of AP, there exist a compact $K\subset X$, $\delta>0$ such that $\sup_{h\in K}|(I-T)h|_X\geq\delta$ for every $T\in\mathcal F(X)$. Setting $Z:=C(K,X)$ and $\mathcal T:=\overline{\{T|_K:T\in\mathcal F(X)\}}^{Z}${, a closed subspace of $Z$,} we have {$\operatorname{dist}(I|_K,\mathcal T):=\inf _{G \in \mathcal T}| I|_K-G |_{Z}\geq\delta$, whence $I|_K\notin \mathcal T$}. By a corollary of the Hahn--Banach theorem \cite[Proposition~2.7, p.~57]{fabian-etal}, there exists $\Lambda\in Z^*$ such that $\|\Lambda\|_{Z^*}=1$, $\Lambda|_\mathcal T=0$, and $\Lambda(I|_K)=\operatorname{dist}(I|_K,\mathcal T)\geq \delta$.
By Singer's representation theorem \cite{Dinculeanu1967} there exists a regular $X^*$-valued Borel measure $m$ on $K$, of bounded variation, such that
\begin{equation}\label{eq:singer-domination}
 \Lambda(H)=\int_K H\,dm,\qquad
 |\Lambda(H)|\leq\int_K|H(h)|_X\,d\rho(h),\qquad
 \rho(K)=\|\Lambda\|_{Z^*}=1,
\end{equation}
where its total-variation measure $\rho:=|m|$ is a probability measure on $X$ supported on $K$. Define
$\eta:=\otimes_{i=1}^k\rho$ {and $\mu:=\nu\otimes\eta$; since $\rho$ is supported on the compact    set $K$,  we have  $\|\eta\|_{k,p}<\infty$ and, $\eta$ being a probability measure on $X^k$, Assumption \ref{ass:Radon-nik} holds. We also set $M:=\int_X|x|^p_X\,d\nu(x)+\nu(X)\int_X|h|^p_X\,d\rho(h)<\infty$.}
Since $F(x)=x$, $DF(x)=I$, and $D^iF=0$ for $i\geq2$, we have $\|F\|_{\mathcal W^{k,p}_{B,\mu}(X,X)}^p={M}<\infty$. Hence ${F}\in C^{k;p}_{B,\mu}(X,X)$. Let $G\in C^{k;p}_{\nu,Cyl}(X,X)$ and put {$T_G(x):=DG(x)\in\mathcal F(X)$, $x\in X$; recall that $T_G(x)$ has finite rank for every $x\in X$, $G$ being cylindrical}. Since {$T_G(x)|_K$}$\in \mathcal T$, we have  {$|\Lambda((I-T_G(x))|_K)|=|\Lambda(I|_K)-\Lambda(T_G(x)|_K)|=|\Lambda(I|_K)|\geq \delta$, for every $x\in X$}. Therefore, {retaining only the term $i=1$ in $\|\cdot\|_{\mathcal W^{k,p}_{B,\mu}(X,X)}$ and using Jensen's inequality ($\rho$ being a probability measure  and  $p\geq1$)  together with} \eqref{eq:singer-domination}, we have
{
\begin{align*}
\|F-G\|_{\mathcal W^{k,p}_{B,\mu}(X,X)}^p&\geq \int_X\left(\int_K|(I-T_G(x))h|_X^p\,d\rho(h)\right)d\nu(x)\geq \int_X\left(\int_K|(I-T_G(x))h|_X\,d\rho(h)\right)^p d\nu(x)\\
&\geq\int_X|\Lambda((I-T_G(x))|_K)|^p\,d\nu(x)\geq \delta^p\,\nu(X)>0.
\end{align*}
}
It remains to construct $\tilde F$. First, \cite[Theorem 4.19, p. 191]{fabian-etal} gives a basic sequence $\{e_n\}\subset X$, i.e. $\{e_n\}$ is a Schauder basis for $X_0:=\overline{\operatorname{span}}\{e_n:n\geq1\}$. Let $\{\varphi_n^0\}\subset X_0^*$ be its coordinate functionals, and choose ${C>0}$ such that $\|\varphi_n^0\|_{X_0^*} |e_n|_{X} \leq C$ for every $n$ (see \cite[Theorem 3.1, p.20]{singer1981}). By  the Hahn--Banach  theorem, extend each $\varphi_n^0$ to $\varphi_n\in X^*${, with $\|\varphi_n\|_{X^*}=\|\varphi_n^0\|_{X_0^*}$}. Following standard constructions (e.g. \cite[Ex. 5.75]{fabian-etal}), we  define the  linear maps
 $\tilde L(x):=\sum_{n=1}^\infty2^{-n}\varphi_n(x)e_n,$ $
 \tilde L_N(x):=\sum_{n=1}^N2^{-n}\varphi_n(x)e_n$, to have
$$|[\tilde L-\tilde L_N](x)|_{X}\leq \sum_{n>N}2^{-n}|\varphi_n(x)| |e_n|_X \leq  \sum_{n>N}2^{-n}\|\varphi_n\|_{X^*} |e_n|_X |x|_X \leq C|x|_X\sum_{n>N}2^{-n},$$
where we have used that  {$\|\varphi_n\|_{X^*} |e_n|_X=\|\varphi_n^0\|_{X_0^*} |e_n|_X\leq C$}
so that $\|\tilde L-\tilde L_N\|_{\mathcal L(X)}\leq C\sum_{n>N}2^{-n}\xrightarrow{N\to \infty}0$ {and $\tilde L\in\mathcal A(X)$; in particular, $\tilde L$ is compact. Moreover, $\varphi_n(e_m)=\varphi_n^0(e_m)=\delta_{nm}$, so that $\tilde Le_m=2^{-m}e_m$ for every $m\in\mathbb N$ and $\tilde L$ has infinite rank; hence $\tilde L\in\mathcal A(X)\cap\mathcal F(X)^c$ and $\tilde F(x):=\tilde L x$ is not a cylindrical map from $X$ to $X$,  cylindrical maps  taking values in a finite-dimensional subspace of $X$}. Moreover, $\tilde F\in C_B^\infty(X,X)$, and $\|\tilde F\|_{\mathcal W^{k,p}_{B,\mu}(X,X)}^p{{}\leq \|\tilde L\|_{\mathcal L(X)}^pM}<\infty$. Instead $\tilde L_N$ is finite rank, so that, setting $\tilde F_N(x)=\tilde L_N x$, we have $\tilde F_N\in C^{k;p}_{\nu,Cyl}(X,X)$ {(here $\int_X|x|^p_X\,d\nu(x)<\infty$ is used)}. Since {$\tilde F(x)-\tilde F_N(x)=(\tilde L-\tilde L_N)x$}, $D\tilde F=\tilde L,D\tilde F_N=\tilde L_N$, $D^i\tilde F=D^i\tilde F_N=0$, $i\geq 2,$ we have
{
\begin{equation}\label{eq:compact-map-sobolev-convergence}
 \|\tilde F-\tilde F_N\|_{\mathcal W^{k,p}_{B,\mu}(X,X)}^p
 \leq \|\tilde L-\tilde L_N\|_{\mathcal L(X)}^p\,M\xrightarrow{N\to\infty}0.
\end{equation}
}
Therefore $ \tilde F\in C^{k;p}_{B,\mu,A}(X,X)\setminus C^{k;p}_{\nu,Cyl}(X,X)$. 
\end{example}
\section{Proofs of Universal Approximation Theorems}\label{sec:proofs}
\begin{proof}[Proof of Theorem \ref{th:counterexample}]
    Let $L\in \mathcal L(X,Y) \cap \mathcal A(X,Y)^c$ and let $F\in C^{\infty}(X,Y),F(x):=Lx$ (hence $DF(x)\equiv L, D^iF(x)\equiv 0,$ for all $i\geq 2$). Let {$F^{{\mathcal E}^X,{\mathcal D}^Y}\in C^k_{Cyl}(X,Y)$}. Fix $x\in X$; then $DF^{{\mathcal E}^X,{\mathcal D}^Y}(x)={\mathcal D}^Y\left(D f^{N, N'}\left({\mathcal E}^X x\right)\left({\mathcal E}^X  (\cdot)\right)\right)\in \mathcal F(X,Y)\subset \mathcal A(X,Y)$, so that 
    $$\left \|DF(x)-DF^{{\mathcal E}^X,{\mathcal D}^Y}( x)\right\|_{\mathcal L(X,Y)}=\left\|L-{\mathcal D}^Y\left[D f^{N, N'}\left({\mathcal E}^X  x\right)\left({\mathcal E}^X  (\cdot)\right)\right]\right\|_{\mathcal L(X,Y)}\geq \inf_{T\in \mathcal A(X,Y)}\left\|L-T\right\|_{\mathcal L(X,Y)}\geq \delta,$$
     where we have used the fact  that
     $\mathcal A(X,Y)$ is a closed subspace of $\mathcal L(X,Y)$ to find $\delta>0$. Then (i) follows. To show (ii), let  $p\geq 1$ and let $\nu\neq 0$ be a finite measure on $X$ such that $\int_{X} |x|_X^pd\nu(x)<\infty$; it follows that $\int_{X} |Lx|_Y^pd\nu(x)<\infty$.
     Then
 $F\in C^{k;1}(X,Y)$ (since $\|F\|_{\mathcal W^{k,p}_{\nu}(X,Y)}^p=\left(\int_{X } |Lx|_Y^pd\nu (x)+\int_{X } \|L\|_{\mathcal L(X,Y)}^pd\nu\right)<\infty$). 
However:  $$\|F-F^{{\mathcal E}^X,{\mathcal D}^Y}\|^p_{\mathcal W^{k,p}_{\nu}(X,Y)}\geq \int_X \left \|DF(x)-DF^{{\mathcal E}^X,{\mathcal D}^Y}( x)\right\|_{\mathcal L(X,Y)}^p d\nu(x) \geq \delta^p \nu(X).$$ 
The claim follows by the arbitrariness of $F^{{\mathcal E}^X,{\mathcal D}^Y}\in C^k_{Cyl}(X,Y)$.
\end{proof}
\begin{proof}[Proof of Theorem \ref{th:UAT_K_Banach}]We consider equivalent family of  seminorms ${\mathcal P}_{B}^{k,co,0}$ in Definition \ref{def:compact_open_C2B}. 

(ii)  Fix a compact $K\subset X$ and $\epsilon>0$. Since ${\mathcal S}^X_\alpha\xrightarrow{co} I$, we can define a sequence $\{{\mathcal S}^X_{\alpha(d)}\}_{d\in \mathbb N}$, such that 
\begin{equation}\label{eq:proof_sequance_strongly_conv}
    \sup_{x \in K}|({\mathcal S}^X_{\alpha(d)}-I)x|\xrightarrow{d\to\infty} 0.
\end{equation}
We denote the compact set $\tilde K := \overline{\bigcup_{d=1}^\infty {\mathcal S}^X_{\alpha(d)}(K)}\subset X$ (recall  Lemma \ref{rem:relatively-compact_Pd-I_Banach}). For all $i\leq k,$ we have
    \begin{align*}&\sup_{x \in K, h^1 \in K,...,h^i \in K}|[D^iF(x)-D^iF^{{\mathcal E}^X_{\alpha(d)},\theta,{\mathcal D}^Y_\beta}(x)](h^1,...,h^i)|\\
    \leq &\sup_{x \in K, h^1 \in K,...,h^i \in K}|D^i F(x)\left[ h^1, \ldots,  h^i\right] -{\mathcal S}_{\beta}^YD^i F\left({\mathcal S}_{{\alpha(d)}}^X x\right)\left[{\mathcal S}_{{\alpha(d)}}^X h^1, \ldots, {\mathcal S}_{{\alpha(d)}}^X h^i\right] |\\
    &+\sup_{x \in K, h^1,...,h^i \in K}|[D^iF_{{\alpha(d)},\beta}\left(x\right)-D^iF^{{\mathcal E}^X_{\alpha(d)},\theta,{\mathcal D}^Y_\beta}(x)]\left[ h^1, \ldots,  h^i\right]|\\
     \leq& \sup_{x \in K, h^1 \in K,...,h^i \in K}|[I-{\mathcal S}_{\beta}^Y]D^i F\left(x\right)\left[ h^1, \ldots,  h^i\right]|\\
     & +\sup_{x \in K, h^1 \in K,...,h^i \in K}|{\mathcal S}_{\beta}^Y\left[D^i F(x)\left[ h^1, \ldots,  h^i\right] -D^i F\left({\mathcal S}_{{\alpha(d)}}^X x\right)\left[{\mathcal S}_{{\alpha(d)}}^X h^1, \ldots, {\mathcal S}_{{\alpha(d)}}^X h^i\right]\right ]|\\
    &+\sup_{x \in K, h^1,...,h^i \in K}|[D^iF^{{\mathcal E}^X_{\alpha(d)},{\mathcal D}^Y_\beta}\left(x\right)-D^iF^{{\mathcal E}^X_{\alpha(d)},\theta,{\mathcal D}^Y_\beta}(x)]\left[ h^1, \ldots,  h^i\right]|\\
=&:I_1^{\beta,i}+I_2^{{\alpha(d)},\beta,i}+I_3^{{\alpha(d)},\beta,\theta,i}.
    \end{align*}
\begin{itemize}
  \item For  $I_{1}^{\beta,i}$ we have 
            \begin{align*}
                I_{1}^{\beta,i}= \sup_{y \in Z_K^i}|[I-{\mathcal S}_{\beta}^Y]y|\leq \sup_{y \in Z_K}|[I-{\mathcal S}_{\beta}^Y]y|,
            \end{align*}
            where, exploiting $C^k_B$, $Z^i_{K}:=D^i F\left(K\right)\left[ K, \ldots,  K\right]\subset Y$ is a compact set as the continuous image of compact sets and $Z_K:=\bigcup_{i\leq k}Z_K^i$ is also compact. 
            \item For $I_{2}^{{\alpha(d)},\beta,i},$ we have
            \begin{align*}
                I_{2}^{{\alpha(d)},\beta,i}&\leq \|{\mathcal S}_{\beta}^Y\| \sup_{x \in K, h^1 \in K,...,h^i \in K}|\left[D^i F(x)\left[ h^1, \ldots,  h^i\right] -D^i F\left({\mathcal S}_{{\alpha(d)}}^X x\right)\left[{\mathcal S}_{{\alpha(d)}}^X h^1, \ldots, {\mathcal S}_{{\alpha(d)}}^X h^i\right]\right ]|\\
                &\leq \|{\mathcal S}_{\beta}^Y\| \sup_{x \in K, h^1 \in K,...,h^i \in K} \omega_{\tilde K}^i\left \{|(I-{\mathcal S}_{{\alpha(d)}}^X)x|^2+\sum^i_{j=1}|(I-{\mathcal S}_{{\alpha(d)}}^X)h^j|^2\right \}\\
                &\leq \|{\mathcal S}_{\beta}^Y\| \omega_{\tilde K}\left \{\sup_{x \in K, h^1 \in K,...,h^i \in K} \left[|(I-{\mathcal S}_{{\alpha(d)}}^X)x|^2+\sum^i_{j=1}|(I-{\mathcal S}_{{\alpha(d)}}^X)h^j|^2\right]\right \},
            \end{align*}
            where, exploiting $C^k_B$, $\omega_{\tilde K}^i$ is the modulus of continuity of $D^iF(\cdot)(\cdot,\ldots,\cdot)$ over the compact set $\tilde K^{i+1}$ and $\omega_{\tilde K}(r)=\max_{ i\leq k}\omega_{\tilde K^{i+1}}^i(r)$, $r\geq 0$ is again a modulus of continuity. 
    \item For $I_3^{{\alpha(d)},\beta,\theta,i},$ by \eqref{eq:derivativeDrv_dh_dY_Banach}, \eqref{eq:derivativeDrv_dh^theta_dY_Banach}, and exploiting Assumption \ref{ass:approx_pro}, for some $C_{K,{\alpha(d)},\beta}>0,$ we have 
    \begin{align*}
        I_3^{{\alpha(d)},\beta,\theta,i}
        &= \sup_{x \in K, h^1 \in K,...,h^i \in K}|{{\mathcal D}^Y _{\beta}}[D^if^{{N_{\alpha(d)}},N_\beta}\left({\mathcal E}^X _{{\alpha(d)}}(x)\right)-D^if^{N_{\alpha(d)},\theta,N_\beta}({\mathcal E}^X _{{\alpha(d)}}(x))]\left[{\mathcal E}^X _{{\alpha(d)}} h^1, \ldots, {\mathcal E}^X _{{\alpha(d)}} h^i\right]|\\
        &\leq C_{K,{\alpha(d)},\beta} \sup_{x \in K}\|D^if^{{N_{\alpha(d)}},N_\beta}\left({\mathcal E}^X _{{\alpha(d)}}(x)\right)-D^if^{N_{\alpha(d)},\theta,N_\beta}({\mathcal E}^X _{{\alpha(d)}}(x))\|_{\mathcal L^i(\mathbb R^{N_{\alpha(d)}},\mathbb R^{N_\beta})}\\
        &=C_{K,{\alpha(d)},\beta} \sup_{y \in {\mathcal E}^X _{{\alpha(d)}}(K)}\|D^if^{{N_{\alpha(d)}},N_\beta}\left(y\right)-D^if^{N_{\alpha(d)},\theta,N_\beta}(y)\|_{\mathcal L^i(\mathbb R^{N_{\alpha(d)}},\mathbb R^{N_\beta})}.
    \end{align*}
Notice that ${\mathcal E}^X_{{\alpha(d)}}(K)\subset \mathbb R^{N_{\alpha(d)}}$ is compact. 
\end{itemize}
Since ${\mathcal S}^Y_\beta\xrightarrow{co} I$ (recall Assumption \ref{ass:approx_pro}), we can find $\beta \in B$ such that $I_{1}^{\beta,i}<\epsilon/3$. For this $\beta$, using \eqref{eq:proof_sequance_strongly_conv},  we find $d\in \mathbb N $ such that 
$I_{2}^{{\alpha(d)},\beta,i}<\epsilon/3$, for all $i \leq k$. Then for these ${\alpha(d)},\beta $, by  Assumption \ref{ass:finite_dim_approx}, we can find $\theta \in \Theta_{N_{\alpha(d)},N_\beta}$ such that $I_3^{{\alpha(d)},\beta,\theta,i}<\epsilon/3$, for all $i \leq k$. The proof of (ii) follows.

{
(i) The proof follows by a similar finite-dimensional reduction procedure to \cite{llavona1986}.
 Let  $F\in C_B^k(X,Y)$,  $K\subset X$ be compact, and  $\epsilon>0$. Thanks to Assumption \ref{ass:approx_pro} and the joint continuity of $(x,h^1,\ldots,h^i)\mapsto (D^iF(x)(h^1,\ldots,h^i)$, similarly to  the proof of \cite[Lemma 3.1.3]{llavona1986}, we find $\alpha\in A$ such that, for all $0\leq i\leq k$,
\begin{align*}
\sup_{x\in K,\,h^1\in K,\ldots,h^i\in K}
\left|
D^iF(x)[h^1,\ldots,h^i]
-
D^iF({\mathcal S}^X_\alpha x)
[{\mathcal S}^X_\alpha h^1,\ldots,{\mathcal S}^X_\alpha h^i]
\right|_Y
<\epsilon/3.
\end{align*}
Set $g:=F\circ{\mathcal D}^X_\alpha\in C^k(\mathbb R^{N_\alpha},Y)$ and let $N'\in\mathbb N$, $e^Y_1,\ldots,e^Y_{N'}\in Y$, $\varphi_1,\ldots,\varphi_{N'}\in C^k(\mathbb R^{N_\alpha})$, $\varphi:=(\varphi_1,\ldots,\varphi_{N'})$. Then, for every $0\leq i\leq k$, we have
\begin{equation}\label{eq:reduction_compact_open_APX}
\begin{aligned}
&\sup_{x\in K,\,h^1\in K,\ldots,h^i\in K}
\left|
D^iF({\mathcal S}^X_\alpha x)
[{\mathcal S}^X_\alpha h^1,\ldots,{\mathcal S}^X_\alpha h^i]
-
{\mathcal D}^Y
\left(
D^i\varphi({\mathcal E}^X_\alpha x)
[{\mathcal E}^X_\alpha h^1,\ldots,{\mathcal E}^X_\alpha h^i]
\right)
\right|_Y
\\
&=\sup_{x\in K,\,h^1\in K,\ldots,h^i\in K}\left|
D^iF({\mathcal D}^X_\alpha{\mathcal E}^X_\alpha x)
[{\mathcal D}^X_\alpha{\mathcal E}^X_\alpha h^1,\ldots,{\mathcal D}^X_\alpha{\mathcal E}^X_\alpha h^i]
-
{\mathcal D}^Y
\left(
D^i\varphi({\mathcal E}^X_\alpha x)
[{\mathcal E}^X_\alpha h^1,\ldots,{\mathcal E}^X_\alpha h^i]
\right)
\right|_Y
\\
&=
\sup_{x\in K,\,h^1\in K,\ldots,h^i\in K}
\left|
\left[
D^ig({\mathcal E}^X_\alpha x)
-
D^i\widetilde g({\mathcal E}^X_\alpha x)
\right]
[{\mathcal E}^X_\alpha h^1,\ldots,{\mathcal E}^X_\alpha h^i]
\right|_Y
\\
&\leq
C_{\alpha,K}
\sup_{y\in{\mathcal E}^X_\alpha(K)}
\left\|
D^ig(y)-D^i\widetilde g(y)
\right\|_{\mathcal L^i(\mathbb R^{N_\alpha},Y)}
<\epsilon/3,
\end{aligned}
\end{equation}
where  we have applied \cite[Theorem 3.1.2 (1)]{llavona1986}
(i.e. $C^k(\mathbb R^{N_\alpha})\otimes Y$ is dense in
$C^k(\mathbb R^{N_\alpha},Y)^{co}$) to $g$ on the compact set ${\mathcal E}^X_\alpha(K)$, to find $N'\in\mathbb N$, $e^Y_1,\ldots,e^Y_{N'}\in Y$, and $\varphi_1,\ldots,\varphi_{N'}\in C^k(\mathbb R^{N_\alpha})$ such that, setting
$
\widetilde g(y):=\mathcal D^Y\circ \varphi=\sum_{j=1}^{N'}\varphi_j(y)e_j^Y
$ and 
$
{\mathcal D}^Y:\mathbb R^{N'}\to Y,
$ $
{\mathcal D}^Y(z):=\sum_{j=1}^{N'}z_je_j^Y,
$ 
the inequality \eqref{eq:reduction_compact_open_APX} holds (here, $C_{\alpha,K}
:=
\max_{0\leq i\leq k}
\sup_{h^1\in K,\ldots,h^i\in K}
\prod_{\ell=1}^i
\left|{\mathcal E}^X_\alpha h^\ell\right|_{\mathbb R^{N_\alpha}}$).
Finally, Assumption \ref{ass:finite_dim_approx} gives $\theta\in\Theta_{N_\alpha,N'}$ such that, for every $0\leq i\leq k$,
\begin{align*}
\sup_{x\in K,\,h^1\in K,\ldots,h^i\in K}
\left|
{\mathcal D}^Y
\left[
D^i\varphi({\mathcal E}^X_\alpha x)
-
D^if^{N_\alpha,\theta,N'}({\mathcal E}^X_\alpha x)
\right]
[{\mathcal E}^X_\alpha h^1,\ldots,{\mathcal E}^X_\alpha h^i]
\right|_Y
<\epsilon/3.
\end{align*}
This proves (i).
}
\end{proof}
\begin{proof}[Proof of Theorem \ref{th:UAT_Sobolev_Banach}] (i)
Let  $F \in C^{k;p}_{B,\mu,A}(X,Y)$, $\epsilon>0$.  By  definition, we pick $F^{{\mathcal E}^X,{\mathcal D}^Y}\in C^{k;p}_{\mu^0,Cyl}(X,Y)$ (with ${\mathcal E}^X \in \mathcal L( X , \R^{N}),$ for some $ N\in \mathbb N$ and ${\mathcal D^{Y}} \in \mathcal L( \R^{N'} , Y)$ for some $ N'\in \mathbb N$) such that
\begin{equation}\label{eq:F-FEDleqeps_sobo}
    \|F-F^{{\mathcal E}^X,{\mathcal D}^Y}\|_{\mathcal W^{k,p}_{B,\mu}(X,Y)}<\epsilon/2.
\end{equation}
    Consider $F^{{\mathcal E}^X,\theta,{\mathcal D}^Y}$, for $\theta \in \Theta_{N,N'}$. Then, using \eqref{eq:bound_Wmu_barmu}, \eqref{eq:sobolev-norm-ineq-cyl} (recall also that $\mathcal {A}_{ED}(X,Y)\subset Cyl(X,Y)$), we have
    \begin{align*}
\|F^{{\mathcal E}^X,{\mathcal D}^Y}- F^{{\mathcal E}^X,\theta,{\mathcal D}^Y}\|_{\mathcal W^{k,p}_{B,\mu}(X,Y)}
&\leq  C \|F^{{\mathcal E}^X,{\mathcal D}^Y}- F^{{\mathcal E}^X,\theta,{\mathcal D}^Y}\|_{\mathcal W^{k,p}_{ \mu^{0}}(X,Y)}\\
&\leq  C_{{\mathcal E}^X,{\mathcal D}^Y}  \  \| f^{N,N'}- f^{N,\theta,N'}\|_{\mathcal  W^{k,p}_{ \mu^{0,{\mathcal E}^X}}(\mathbb R^{N},\mathbb R^{N'})}<\epsilon/2,
    \end{align*}
    where we have used Assumption \ref{ass:finite_dim_sobolev_approx} to find $\theta \in \Theta_{N,N'}$ such that
    the above is true (Assumption \ref{ass:finite_dim_sobolev_approx} can be used since, by definition of the space $C^{k;p}_{\mu^0,Cyl}(X,Y)$ we have $\|f^{N,N'}\|_{\mathcal W^{k,p}_{\mu^{0,{\mathcal E}^X}}(\mathbb R^{N},\mathbb R^{N'})}<\infty$).
The claim follows.

{(ii) The proof of (ii) is similar to the proof of (i)  using Remark \ref{rem:Sobole_sp_norm_modified}.}
\end{proof}
\begin{proof}[Proof of Corollary \ref{th:UAT_Sobolev_K_Banach}]
(i) The density claim  follows directly from Theorem \ref{th:UAT_Sobolev_Banach} and \eqref{eq:Ckwq_subset_Ckwc}. We only need to show that UA is achieved taking ${\mathcal E}^X={\mathcal E}^X_d, {\mathcal D}^Y={\mathcal D}^Y_m$ for some $d,m$. 
Indeed, let $F \in C^{k;p,q}_{B}(X,Y)$, $\epsilon>0$. By  \eqref{eq:cyl_approx_is_cyl} and \eqref{eq:Ckwq_subset_Ckwc}, we choose $d,m$ such that 
\begin{equation*}
   \|F-F_{d,m}\|_{\mathcal W^{k,p}_{B,\mu}(X,Y)}\equiv \|F-F^{{\mathcal E}_d^X,{\mathcal D}_m^Y}\|_{\mathcal W^{k,p}_{B,\mu}(X,Y)}<\epsilon/2,
\end{equation*}
i.e. \eqref{eq:F-FEDleqeps_sobo} holds with ${\mathcal E}^X={\mathcal E}^X_d, {\mathcal D}^Y={\mathcal D}^Y_m$.
Then, following the proof of  Theorem \ref{th:UAT_Sobolev_Banach}, we have the claim.

{(ii) The proof of (ii) is similar to the proof of (i)  using Remark \ref{rem:Sobole_sp_norm_modified}.}
\end{proof}
\begin{proof}[Proof of Corollary \ref{th:UAT_Sobolev_K_Banach_bump}] The density claim  follows directly from Theorem \ref{th:UAT_Sobolev_Banach} and \eqref{eq:tildeC_subset_C}. To show that UA is achieved taking ${\mathcal E}^X={\mathcal E}^X_d, {\mathcal D}^Y={\mathcal D}^Y_m$ for some $d,m$, we proceed by combining the proof of \eqref{eq:tildeC_subset_C} with the proof of Corollary \ref{th:UAT_Sobolev_K_Banach}.
\end{proof}
\paragraph{Acknowledgments.} The author is grateful to Davide Addona for useful discussions on Sobolev spaces under Gaussian measures in infinite-dimensions.

The author acknowledges funding by the Deutsche Forschungsgemeinschaft (DFG, German Research Foundation) – CRC/TRR 388 ``Rough Analysis, Stochastic Dynamics and Related Fields'' – Project ID 516748464.
\begin{small}
\bibliography{refs}
\bibliographystyle{plain}
\end{small}
\end{document}